\definecolor{fhcolor}{rgb}{0.523, 0.235, 0.625}
\DeclareMathOperator*{\argmin}{argmin}
\newtheorem{theorem}{Theorem}
\newtheorem{lemma}{Lemma}
\newtheorem{proposition}{Proposition}
\newtheorem{case}{Case}
\newtheorem{assumption}{Assumption}
\definecolor{ocre}{RGB}{243,102,25}
\definecolor{mygray}{RGB}{243,243,244}
\newmdenv[
innertopmargin=0pt,
backgroundcolor=mygray,
linecolor=ocre,
innerleftmargin=0pt,
innerrightmargin=0pt,
leftmargin=10pt
]{mymath}
\title{On the Double Descent of Random Features Models Trained with SGD}
\author{%
	Fanghui Liu\thanks{Most of this work was done when Fanghui was at KU Leuven. Correspondence to: Fanghui Liu \texttt{<fanghui.liu@epfl.ch>}.} \\
	LIONS, EPFL\\
	\texttt{fanghui.liu@epfl.ch}
	\And
	Johan A.K. Suykens \\
		ESAT-STADIUS, KU Leuven \\
	 \texttt{johan.suykens@esat.kuleuven.be} \\
	 \AND
	 Volkan Cevher \\
	LIONS, EPFL\\
	\texttt{volkan.cevher@epfl.ch} \\
}
\begin{document}
	\maketitle
	\begin{abstract}
		We study generalization properties of random features (RF) regression in high dimensions optimized by stochastic gradient descent (SGD) in under-/over-parameterized regime. In this work, we derive precise non-asymptotic error bounds of RF regression under both constant and polynomial-decay step-size SGD setting, and observe the double descent phenomenon both theoretically and empirically. 
		Our analysis shows how to cope with multiple randomness sources of initialization, label noise, and data sampling (as well as stochastic gradients) with no closed-form solution, and also goes beyond the commonly-used Gaussian/spherical data assumption. 
		Our theoretical results demonstrate that, with SGD training, RF regression still generalizes well for interpolation learning, and is able to characterize the double descent behavior by the unimodality of variance and monotonic decrease of bias. Besides, we also prove that the constant step-size SGD setting incurs no loss in convergence rate when compared to the exact minimum-norm interpolator, as a theoretical justification of using SGD in practice.

	\end{abstract}
	
\section{Introduction}

Over-parameterized models, e.g., linear/kernel regression \cite{hastie2019surprises,bartlett2020benign,wu2020optimal,mei2019generalization} and neural networks \cite{nakkiran2019deep,yang2020rethinking,ju2021generalization}, still generalize well even if the labels are pure noise \cite{zhang2016understanding}.
Such high-capacity models have received significant attention recently as they
go against with classical generalization theory. A paradigm for understanding this important phenomenon is \emph{double descent} \cite{belkin2019reconciling}, in which the test error first decreases with increasing number of model parameters in the under-parameterized regime. They large error is yielded until interpolating the data, which is called the interpolation threshold. Finally, the test error decreases again in the over-parameterized regime. 

Our work partakes in this research vein and studies the random features (RF) model  \cite{rahimi2007random}, as a simplified version of neural networks, in the context of double descent phenomenon. Briefly, RF model samples random features $\{ \bm \omega_i \}_{i=1}^m$ from a specific distribution, corresponding to a kernel function. We then construct an explicit map: $\bm x \in \mathbb{R}^d \mapsto \sigma(\bm W \bm x) \in \mathbb{R}^m$, where $\bm W = [\bm \omega_1, \cdots, \bm \omega_m]^{\!\top} \in \mathbb{R}^{m \times d}$ is the random features matrix and $\sigma(\cdot)$ is the nonlinear (activation) function determined by the kernel. As a result, the RF model can be viewed as training a two-layer neural network where the weights in the first layer are chosen randomly and then fixed (a.k.a.\ the random features) and only the output layer is optimized, striking a trade-off between practical performance and accessibility to analysis \cite{mei2019generalization,d2020double}. 
An RF model becomes an over-parameterized model if we take the number of random features $m$ larger than that of training data $n$. The literature on RF under the over-parameterized regime can be split into various camps according to different assumptions on the formulation of target function, data distribution, and activation functions \cite{mei2019generalization,ba2020generalization,d2020double,liao2020random,gerace2020generalisation,lin2020causes} (see comparisons in Table~\ref{Tabsetting} in Appendix~\ref{app:setting}). The existing theoretical results demonstrate that the excess risk curve exhibits double descent. 

Nevertheless, the analysis framework of previous work on RF regression mainly relies on the least-squares closed-form solution, including \emph{minimum-norm} interpolator and ridge regressor.
Besides, they often assume the data with specific distribution, e.g., to be Gaussian or uniformly spread on a sphere.
Such dependency on the analytic solution and relatively strong data assumption in fact mismatches practical neural networks optimized by stochastic gradient descent (SGD) based algorithms.
Our work precisely bridges this gap: We provide a new analysis framework for the generalization properties of RF models trained with SGD and general activation functions, also accommodating adaptive (i.e., polynomial decay) step-size selection, and provide non-asymptotic results in under-/over-parameterized regimes.  We make the following contributions and findings:

First, we characterize statistical properties of covariance operators/matrices in RF, including $\Sigma_m := \frac{1}{m}\mathbb{E}_{\bm x}[\sigma(\bm W \bm x/\sqrt{d}) \sigma(\bm W \bm x/\sqrt{d})^{\!\top} ]$ and its expectation version $\widetilde{\Sigma}_m := \mathbb{E}_{\bm W} [\Sigma_m]$. We demonstrate that, under Gaussian initialization, if the activation function $\sigma(\cdot): \mathbb{R} \mapsto \mathbb{R}$ is Lipschitz continuous, $\mathrm{Tr}(\Sigma_m)$ is a sub-exponential random variable with $\mathcal{O}(1)$ sub-exponential norm; $\widetilde{\Sigma}_m$ has only two distinct eigenvalues at $\mathcal{O}(1)$ and $\mathcal{O}(1/m)$ order, respectively. Such analysis on the spectra of $\Sigma_m$ and $\widetilde{\Sigma}_m$ (without spectral decay assumption) is helpful to obtain sharp error bounds for excess risk.
This is different from the least squares setting based on effective dimension \cite{bartlett2020benign,zou2021benign}.
\vspace{-0.0cm}
 
Second, based on the bias-variance decomposition in stochastic approximation, we take into account multiple randomness sources of initialization, label noise, and data sampling as well as stochastic gradients. We (partly) disentangle these randomness sources and derive non-asymptotic error bounds under the optimization effect: the error bounds for bias and variance as a function of the radio $m/n$ are monotonic decreasing and unimodal, respectively. Importantly, our analysis holds for both constant and polynomial-decay step-size SGD setting, and is valid under sub-Gaussian data and general activation functions.\vspace{-0.0cm}

Third, our non-asymptotic results show that, RF regression trained with SGD still generalizes well for interpolation learning, and is able to capture the double descent behavior. In addition, we  demonstrate that the constant step-size SGD setting incurs no loss on the convergence rate of excess risk when compared to the exact least-squares closed form solution.
	Our empirical evaluations support our theoretical results and findings.  
\vspace{-0.0cm}

Our analysis (technical challenges are discussed in Section~\ref{sec:proofoutline}) sheds light on the effect of SGD on high dimensional RF models in under-/over-parameterized regimes, and bridges the gap between the minimum-norm solution and numerical iteration solution in terms of optimization and generalization on double descent.
It would be helpful for understanding large dimensional machine learning and neural network models more generally.

\vspace{-0.05cm}
\section{Related work and problem setting}\vspace{-1mm}
\label{sec:preli}
\vspace{-0.05cm}

This section reviews relevant works and introduces our problem setting of RF regression with SGD. 

{\bf Notation:} The notation $\bm a \otimes \bm a$ denotes the tensor product of a vector $\bm a$.
For two operators/matrices, $A \preccurlyeq B$ means $B-A$ is positive semi-definite (PSD).
For any two positive sequences $\{a_t\}_{t=1}^s$ and $\{b_t\}_{t=1}^s$, the notation $a_t \lesssim b_t$ means that there exists a positive constant $C$ independent of $s$ such that $a_t \leq C b_t$, and analogously for $\sim$, $\gtrsim$, and $\precsim$.
For any $a, b \in \mathbb{R}$, $a \wedge b$ denotes the minimum of $a$ and $b$.

\vspace{-3mm}
\subsection{Related work}\vspace{-1mm}

A flurry of research papers are devoted to analysis of over-parameterized models on optimization \cite{kawaguchi2019gradient,allen2019convergence,zou2019improved}, generalization (or their combination) under neural tangent kernel \cite{jacot2018neural,arora2019fine,chizat2019lazy} and mean-field analysis regime \cite{mei2019mean,chizat2020implicit}.
We take a unified perspective on optimization and generalization but work in the high-dimensional setting to fully capture the double descent behavior. By high-dimensional setting, we mean that $m$, $n$, and $d$ increase proportionally, large and comparable \cite{mei2019generalization,ba2020generalization,liao2020random,d2020double}. 

{\bf Double descent in random features model:} Characterizing the double descent of the RF model often derives from random matrix theory (RMT) in high dimensional statistics \cite{hastie2019surprises,mei2019generalization,ba2020generalization,liao2020random,li2021towards} and from the replica method \cite{d2020double,rocks2020memorizing,gerace2020generalisation}. Under specific assumptions on data distribution, activation functions, target function, and initialization, these results show that the generalization error/excess risk increase when $m/n < 1$, diverge when $m/n \rightarrow 1$, and then decrease when $m/n > 1$. 
Further, refined results are developed on the \emph{analysis of variance} due to multiple randomness sources \cite{d2020double,adlam2020understanding,lin2020causes}.
We refer to comparisons in Table~\ref{Tabsetting} in Appendix~\ref{app:setting} for further details.
Technically speaking, since RF (least-squares) regression involves with inverse random matrices, these two classes of methods attempt to achieve a similar target: how to disentangle the nonlinear activation function by the Gaussian equivalence conjecture.
RMT utilizes calculus of deterministic equivalents (or resolvents) for random matrices and replica methods focus on some specific scalar parameters that allows for circumventing the expectation computation.
In fact, most of the above methods can be asymptotically equivalent to the Gaussian covariate model \cite{hu2020universality}.

{\bf Non-asymptotic stochastic approximation:} Many papers on linear least-squares regression \cite{bach2013non,jain2018parallelizing}, kernel regression \cite{dieuleveut2016nonparametric,dieuleveut2017harder}, random features \cite{carratino2018learning} with SGD often work in the under-parameterized regime, where $d$ is finite and much smaller than $n$.
In the over-parameterized regime, under GD setting, the excess risk of least squares is controlled by the smallest positive eigenvalue in \cite{kuzborskij2021role} via random matrix theory.
Under the averaged constant step-size SGD setting, 
the excess risk in \cite{chen2020dimension} on least squares in high dimensions can be independent of $d$, and the convergence rate is built in \cite{zou2021benign}.
This convergence rate is also demonstrated under the minimal-iterate \cite{berthier2020tight} or last-iterate \cite{varre2021last} setting in step-size SGD for noiseless least squares.
We also notice a concurrent work \cite{wu2021last} on last-iterate SGD with decaying step-size on least squares.
Besides, the existence of multiple descent \cite{chen2020multiple,liang2020multiple} beyond double descent and SGD as implicit regularizer \cite{neyshabur2017geometry,smith2020origin} can be traced to the above two lines of work. Our work shares some similar technical tools with \cite{dieuleveut2016nonparametric} and \cite{zou2021benign} but differs from them in several aspects. 
We detail the differences in Section~\ref{sec:proofoutline}. 

\vspace{-0.2cm}
\subsection{Problem setting}
\vspace{-0.cm}
We study the standard problem setting for RF least-squares regression and adopt the relevant terminologies from learning theory: \textit{cf.}, \cite{cucker2007learning,dieuleveut2016nonparametric,carratino2018learning,li2021towards} for details.
Let $X \subseteq \mathbb{R}^d$ be a metric space and $Y \subseteq \mathbb{R}$.  The training data $\{  (\bm x_i, y_i) \}_{i=1}^n $ are assumed to be independently drawn from a non-degenerate unknown Borel probability measure $\rho$ on $X \times Y$. The \emph{target function} of $\rho$ is defined by $f_{\rho}(\bm x) = \int_Y y \,\mathrm{d} \rho(y \mid \bm x)$, where $\rho(\cdot\mid\bm x)$ is the conditional distribution of $\rho$ at $\bm x \in X$. 

{\bf RF least squares regression:} We study the RF regression problem with the squared loss as follows:
\begin{equation*}
	\min_{f \in \mathcal{H}} \mathcal{E}(f),\quad \mathcal{E}(f) \!:=\!\! \int (f(\bm x) - y)^2  \mathrm{d} \rho(\bm x, y) \!=\! \| f - f_{\rho} \|^2_{L^2_{\rho_X}} \!\!\,, ~~\mbox{with $f(\bm x) = \langle \bm \theta, \varphi(\bm x) \rangle$}\,,
\end{equation*}
where the optimization vector $\bm \theta \in \mathbb{R}^m$ and the feature mapping $\varphi(\bm x)$ is defined as
\begin{equation}\label{mapping}
	\begin{split}
		\varphi(\bm x) & := 
		\frac{1}{\sqrt{m}} \left[\sigma(\bm \omega^{\!\top}_1 \bm x /\sqrt{d}), \cdots,\sigma(\bm \omega^{\!\top}_m \bm x/\sqrt{d})\right]^{\!\top}  = \frac{1}{\sqrt{m}}\sigma(\bm W \bm x/\sqrt{d}) \in \mathbb{R}^m\,,
	\end{split}
\end{equation}
where $\bm W = [\bm \omega_1, \cdots, \bm \omega_m]^{\!\top} \in \mathbb{R}^{m \times d}$ with $ W_{ij} \sim \mathcal{N}(0,1)$ corresponds to such two-layer neural network initialized with random Gaussian weights.
Then, the corresponding hypothesis space $\mathcal{H}$ is a reproducing kernel Hilbert space
\begin{equation}\label{defrkhs}
	\mathcal{H} := \left\{ f \in {L^2_{\rho_X}} \Big|~~ f(\bm x) = \frac{1}{\sqrt{m}} \langle \bm \theta, \sigma({\bm W \bm x}/{\sqrt{d}}) \rangle \right\}\,,
\end{equation}
with $\| f \|^2_{L^2_{\rho_X}} = \int_{X} | f(\bm x) |^2 \mathrm{d} \rho_X(\bm x)  = \langle f, \Sigma_m f \rangle_{\mathcal{H}}$ with the \emph{covariance} operator $\Sigma_m: \mathbb{R}^m \rightarrow \mathbb{R}^m$
\begin{equation}\label{sigmamdef}
	\Sigma_m = \int_X \varphi(\bm x) \otimes \varphi(\bm x) \mathrm{d} \rho_X(\bm x) \,,
\end{equation}
actually defined in $\mathcal{H}$ that is isomorphic to $\mathbb{R}^m$.
This is the usually (uncentered) covariance matrix in finite dimensions,\footnote{In this paper, we do not distinguish the notations $\Sigma_m$ and $\bm \Sigma_m$. This is also suitable to other operators/matrices, e.g., $\widetilde{\Sigma}_m$.} i.e., $\bm \Sigma_m = \mathbb{E}_{\bm x} [\varphi(\bm x) \otimes \varphi(\bm x)]$. 
Define $J_m: \mathbb{R}^m \rightarrow L^2_{\rho_X}$ such that$(J_m \bm v)(\cdot) = \langle \bm v, \varphi(\cdot) \rangle, \quad \forall \bm v \in \mathbb{R}^m$, we have $\Sigma_m = J_m^*J_m$, where $J_m^*$ denotes the adjoint operator of $J_m$.
Clearly, $\Sigma_m$ is random with respect to $\bm W$, and thus its deterministic version is defined as $	\widetilde{\Sigma}_m = \mathbb{E}_{\bm x, \bm W} [\varphi(\bm x) \otimes \varphi(\bm x)] $.

{\bf SGD with averaging:} Regarding the stochastic approximation, we consider the one pass SGD with iterate averaging and adaptive step-size at each iteration $t$: after a training sample $(\bm x_t, y_t) \sim \rho$ is observed, we update the decision variable as below (initialized at $\bm \theta_0 $)
\begin{equation}\label{eq:sgdf}
	\bm \theta_t = \bm \theta_{t-1} + \gamma_t [ y_t - \langle{\bm \theta_{t-1}, \varphi(\bm x_t)} \rangle ] \varphi(\bm x_t),\qquad t=1,2, \dots ,  n\,,
\end{equation}
where we use the polynomial decay step size $\gamma_t := \gamma_0 t^{-\zeta}$ with $\zeta \in [0,1)$, following \cite{dieuleveut2016nonparametric}.
This setting also holds for the constant step-size case by taking $\zeta=0$. 
Besides, we employ the bath size $=1$ in an online setting style, which is commonly used in theory \cite{dieuleveut2016nonparametric,zou2021benign,nitanda2021optimal} for ease of analysis, which captures the key idea of SGD by combining stochastic gradients and data sampling.

The final output is defined as the average of the iterates: 
$\bar{\bm \theta}_{n} := \frac{1}{n} \sum_{t=0}^{n-1} \bm \theta_t$.
Here we sum up $\{ \theta_t \}_{t=0}^{n-1}$ with $n$ terms for notational simplicity.
The  optimality condition for Eq.~\eqref{eq:sgdf} implies $\mathbb{E}_{(\bm x,y)\sim \rho}[(y-\langle \bm \theta^*,\varphi(\bm x)\rangle)\varphi(\bm x)] = \bm 0$, which corresponds to $f^* = J_m \bm \theta^*$ if we assume that
$f^* = \argmin_{f \in {\mathcal{H}}} \mathcal{E}(f)$ exists (see Assumption~\ref{assexist} in the next section).
Likewise, we have $f_t = J_m \bm \theta_t$ and $\bar{f}_n = J_m \bar{\bm \theta}_{n}$.

In this paper, we study the averaged excess risk $\mathbb{E}\| \bar{f}_n - f^* \|^2_{L^2_{\rho_X}}$ instead of $\mathbb{E}\| \bar{f}_n - f_{\rho} \|^2_{L^2_{\rho_X}}$, that follows \cite{dieuleveut2016nonparametric,Rudi2017Generalization,carratino2018learning,li2021towards}, as $f^*$ is the best possible solution in $\mathcal{H}$ and the mis-specification error $\| f^* - f_{\rho} \|^2_{L^2_{\rho_X}}$ pales into insignificance.
Note that the expectation used here is considered with respect to the random features matrix $\bm W$, and the distribution of the training data $\{ (\bm x_t, y_t) \}_{t=1}^n$ (note that $\| \bar{f}_n - f^* \|^2_{L^2_{\rho_X}}$ is itself a different expectation over $\rho_X$).

\vspace{-0.2cm}
\section{Main results}
\vspace{-0.2cm}

In this section, we present our main theoretical results on the generalization properties employing  error bounds for bias and variance of RF regression in high dimensions optimized by averaged SGD.

\vspace{-0.2cm}
\subsection{Assumptions}
\vspace{-0.2cm}
Before we present our result, we list the assumptions used in this paper, refer to Appendix~\ref{app:assumptions} for more discussions.

\begin{assumption}\label{assumdata} \cite[high dimensional setting]{el2010spectrum,hastie2019surprises}
	We work in the large $d, n, m$ regime with $ c \leqslant \{ d/n, m/n\} \leqslant C$ for some constants $c,C >0$ such that $m,n,d$ are large and comparable. The data point $\bm x \in \mathbb{R}^d$ is assumed to satisfy $\| \bm x \|_2^2 \sim \mathcal{O}(d)$ and the sample covariance operator $ \Sigma_d := \mathbb{E}_{\bm x} [\bm x \otimes \bm x] $ with bounded spectral norm  $\|\Sigma_d\|_2$ (finite and independent of $d$).
\end{assumption}

\begin{assumption}\label{assexist}
	There exists $f^* \in \mathcal{H}$ such that $f^* = \argmin_{f \in \mathcal{H}} \mathcal{E}(f)$ with bounded Hilbert norm.
\end{assumption}
{\bf Remark:} 
This bounded Hilbert norm assumption is commonly used in \cite{liang2020just,liang2020multiple,mei2021generalization} even though $n$ and $d$ tend to infinity. 
It holds true for linear functions with $\| f \|_{\mathcal{H}} \leqslant 4 \pi$ \cite{bach2017breaking}, see Appendix~\ref{app:assumptions} for details.

\begin{assumption}\label{assumact}
	The activation function $\sigma(\cdot)$ is assumed to be Lipschitz continuous.
\end{assumption}
{\bf Remark:} This assumption is quite general to cover commonly-used activation functions used in random features and neural networks, e.g., ReLU, Sigmoid, Logistic, and sine/cosine functions.

Recall $\Sigma_m := \mathbb{E}_{\bm x} [\varphi(\bm x) \otimes \varphi(\bm x)]$ in Eq.~\eqref{sigmamdef} and its expectation $\widetilde{\Sigma}_m := \mathbb{E}_{\bm W} [\Sigma_m]$, we make the following fourth moment assumption that follows \cite{bach2013non,zou2021benign,varre2021last} to analyse SGD for least squares.
\begin{assumption}[Fourth moment condition]\label{assump:bound_fourthmoment}
	Assume there exists some positive constants $r', r \geqslant 1$, such that for any PSD
	operator $A$, it holds that 
	\begin{equation*}\label{assum:fmc}
		\begin{split}
			\mathbb{E}_{\bm W} [\Sigma_m A \Sigma_m]   \! \preccurlyeq \! \mathbb{E}_{\bm W} \bigg( \! \mathbb{E}_{\bm x} \Big(\! [\varphi(\bm x) \otimes \varphi(\bm x) ] A [\varphi(\bm x) \otimes \varphi(\bm x)]  \!\Big) \!\!\bigg) \!\!  \preccurlyeq \!\! r' \mathbb{E}_{\bm W}[\mathrm{Tr}({\Sigma}_m A){\Sigma}_m] \preccurlyeq r \mathrm{Tr}(\widetilde{\Sigma}_m A)\widetilde{\Sigma}_m.
		\end{split}
	\end{equation*}
\end{assumption}
{\bf Remark:} This assumption requires the data are drawn from some not-too-heavy-tailed distribution, e.g., $\Sigma_m^{-\frac{1}{2}}\bm x$ has sub-Gaussian tail, common in high dimensional statistics. This condition is weaker than most previous work on double descent that requires the data to be Gaussian \cite{hastie2019surprises,d2020double,adlam2020understanding,ba2020generalization}, or uniformly spread on a sphere \cite{mei2019generalization,ghorbani2019linearized}, see comparisons in Table~\ref{Tabsetting} in Appendix~\ref{app:setting}.
Note that the assumption for any PSD operator is just for ease of description. In fact some certain PSD operators satisfying this assumption are enough for our proof.
Besides, a special case of this assumption with $A:=I$ is proved by Lemma~\ref{lemma:m2}, and thus this assumption can be regarded as a natural extension, with more discussions in Appendix~\ref{app:assumptions}.

\begin{assumption}[Noise condition]\label{assump:noise}
	There exists $\tau > 0$ such that
	$	\Xi:= \mathbb{E}_{\bm x} [\varepsilon^2 \varphi(\bm x) \otimes \varphi(\bm x)] \preccurlyeq \tau^2 \Sigma_m$,
	where the noise $\varepsilon := y - f^*(\bm x)$.
\end{assumption}
{\bf Remark:} This noise assumption is standard in \cite{dieuleveut2016nonparametric,zou2021benign} and holds for the standard noise model $y = f^*(\bm x)+ \varepsilon$ with $\mathbb{E}[\varepsilon]=0$ and $\mathbb{V}[\varepsilon] < \infty$ \cite{hastie2019surprises}.

\vspace{-0.2cm}
\subsection{Properties of covariance operators}
\label{sec:statcov}
\vspace{-0.1cm}
Before we present the main results, we study statistical properties of $\Sigma_m$ and $\widetilde{\Sigma}_m$ by the following lemmas (with proof deferred to Appendix~\ref{app:rescov}), that will be needed for our main result.
This is different from the least squares setting \cite{bartlett2020benign,zou2021benign} that introduces the effective dimension to separate the entire space into a “head” subspace where the error decays more quickly than the complement “tail” subspace.
Instead, the following lemma shows that $\widetilde{\Sigma}_m$ has only two distinct eigenvalues at $\mathcal{O}(1)$ and $\mathcal{O}(1/m)$ order, respectively.
Such fast eigenvalue decay can avoid extra data spectrum assumption for tight bound.
For description simplicity, we consider the single-output activation function: $\sigma(\cdot): \mathbb{R} \rightarrow \mathbb{R}$.
Our results can be extended to multiple-output activation functions, see Appendix~\ref{sec:example} for details.

\begin{lemma}\label{thmH}
	Under Assumption~\ref{assumdata} and~\ref{assumact}, the expected covariance operator $\widetilde{\Sigma}_m := \mathbb{E}_{{\bm x}, \bm W }[\varphi({\bm x}) \otimes \varphi({\bm x})] \in \mathbb{R}^{m \times m}$ has the same diagonal elements and the same non-diagonal element
	\begin{equation*}
		(\widetilde{\Sigma}_m)_{ii} = \frac{1}{m} \mathbb{E}_{\bm x} \mathbb{E}_{z \sim \mathcal{N}(0,{\| \bm x \|^2_2}/{d})} [\sigma(z)]^2 \!\sim\! \mathcal{O}(1/m) \,, \quad 	(\widetilde{\Sigma}_m)_{ij} \!=\! \frac{1}{m} \mathbb{E}_{\bm x}\!\! \left( \mathbb{E}_{z \sim \mathcal{N}(0,{\| \bm x \|^2_2}/{d})} [\sigma(z)] \right)^2 \!\!\!\sim\! \mathcal{O}(1/m) \,.
	\end{equation*}
	Accordingly, $\widetilde{\Sigma}_m$ has only two distinct eigenvalues
	\begin{equation*}
		\begin{split}
			&	\widetilde{\lambda}_1 =(\widetilde{\Sigma}_m)_{ii} +(m-1) (\widetilde{\Sigma}_m)_{ij}  \sim \mathcal{O}(1)\,, \quad \widetilde{\lambda}_2 \!=\! (\widetilde{\Sigma}_m)_{ii} - (\widetilde{\Sigma}_m)_{ij} \!=\! \frac{1}{m} \mathbb{E}_{\bm x} \mathbb{V}[\sigma(z)] \sim \mathcal{O}(1/m) \!\,.
		\end{split}
	\end{equation*}
\end{lemma}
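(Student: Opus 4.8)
The plan is to compute the entries of $\widetilde{\Sigma}_m = \mathbb{E}_{\bm x,\bm W}[\varphi(\bm x)\otimes\varphi(\bm x)]$ directly and then exploit the resulting structure. Writing $\varphi(\bm x) = \frac{1}{\sqrt m}\sigma(\bm W\bm x/\sqrt d)$, the $(i,j)$ entry is $(\widetilde{\Sigma}_m)_{ij} = \frac{1}{m}\mathbb{E}_{\bm x}\mathbb{E}_{\bm W}[\sigma(\bm\omega_i^\top\bm x/\sqrt d)\sigma(\bm\omega_j^\top\bm x/\sqrt d)]$. Fix $\bm x$ and condition on it. Since the rows $\bm\omega_i$ of $\bm W$ are i.i.d. $\mathcal N(0,I_d)$, for fixed $\bm x$ the scalar $\bm\omega_i^\top\bm x/\sqrt d$ is a centered Gaussian with variance $\|\bm x\|_2^2/d$; call its law $\mathcal N(0,\|\bm x\|_2^2/d)$. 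For the diagonal ($i=j$) we just get $\mathbb{E}_{z\sim\mathcal N(0,\|\bm x\|_2^2/d)}[\sigma(z)^2]$, and averaging over $\bm x$ and dividing by $m$ gives the stated $(\widetilde{\Sigma}_m)_{ii}$. For off-diagonal ($i\neq j$), independence of $\bm\omega_i$ and $\bm\omega_j$ factors the expectation into $(\mathbb{E}_{z}[\sigma(z)])^2$ with $z\sim\mathcal N(0,\|\bm x\|_2^2/d)$, again conditionally on $\bm x$; averaging over $\bm x$ and dividing by $m$ gives $(\widetilde{\Sigma}_m)_{ij}$. The order estimates follow: Lipschitz continuity of $\sigma$ gives $|\sigma(z)| \le |\sigma(0)| + L|z|$, so $\mathbb{E}_z[\sigma(z)^2] \le 2\sigma(0)^2 + 2L^2\|\bm x\|_2^2/d$, which is $\mathcal O(1)$ by Assumption~\ref{assumdata} ($\|\bm x\|_2^2\sim\mathcal O(d)$); hence both entries are $\mathcal O(1/m)$.

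The second half is linear algebra. Since every diagonal entry equals a common value $a := (\widetilde{\Sigma}_m)_{ii}$ and every off-diagonal entry equals a common value $b := (\widetilde{\Sigma}_m)_{ij}$, we can write $\widetilde{\Sigma}_m = (a-b)I_m + b\,\bm 1\bm 1^\top$, where $\bm 1\in\mathbb R^m$ is the all-ones vector. This matrix is diagonalized by the standard eigenbasis of $\bm 1\bm 1^\top$: the vector $\bm 1$ is an eigenvector of $\bm 1\bm 1^\top$ with eigenvalue $m$, giving eigenvalue $\widetilde\lambda_1 = (a-b) + mb = a + (m-1)b$ of $\widetilde{\Sigma}_m$ with multiplicity $1$; any vector orthogonal to $\bm 1$ is in the kernel of $\bm 1\bm 1^\top$, giving eigenvalue $\widetilde\lambda_2 = a-b$ with multiplicity $m-1$. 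Plugging in the formulas, $\widetilde\lambda_2 = a - b = \frac{1}{m}\mathbb{E}_{\bm x}\big(\mathbb{E}_z[\sigma(z)^2] - (\mathbb{E}_z[\sigma(z)])^2\big) = \frac{1}{m}\mathbb{E}_{\bm x}\mathbb{V}[\sigma(z)]$, which is manifestly $\mathcal O(1/m)$.

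For the $\mathcal O(1)$ claim on $\widetilde\lambda_1 = a + (m-1)b$, the point is that $b$ is generically a strictly positive $\Theta(1/m)$ quantity, so $(m-1)b = \Theta(1)$. Concretely $b = \frac{1}{m}\mathbb{E}_{\bm x}(\mathbb{E}_z[\sigma(z)])^2 \ge 0$, and one argues it is bounded below by a positive constant over $m$ — for instance because $\mathbb{E}_{\bm x}(\mathbb{E}_z[\sigma(z)])^2$ is independent of $m$ and nonzero for any non-odd activation and typical data (this is where the "non-degenerate" nature of the problem enters; for purely odd $\sigma$ one instead notes the statement should be read with the convention that $\widetilde\lambda_1$ is at most $\mathcal O(1)$, matching the trace bound). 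Combined with the upper bound $\widetilde\lambda_1 \le m a \le \mathbb{E}_{\bm x}\mathbb{E}_z[\sigma(z)^2] = \mathcal O(1)$ from the Lipschitz estimate above, we get $\widetilde\lambda_1\sim\mathcal O(1)$. I expect the main obstacle to be precisely this lower bound on $\widetilde\lambda_1$: the upper bounds and the exact eigenvalue computation are routine, but pinning down that the top eigenvalue is $\Theta(1)$ rather than $o(1)$ requires either an explicit non-degeneracy hypothesis on $\sigma$ and $\rho_X$ or careful bookkeeping of what "$\sim\mathcal O(1)$" is meant to assert here.
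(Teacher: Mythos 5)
Your proposal is correct and follows essentially the same route as the paper: compute the entries by observing that, conditionally on $\bm x$, $\bm \omega^{\!\top}\bm x/\sqrt{d}$ is a one-dimensional Gaussian $\mathcal{N}(0,\|\bm x\|_2^2/d)$ (the paper does this via an explicit basis rotation), then read both eigenvalues off the structure $\widetilde{\Sigma}_m=(a-b)I_m+b\,\bm 1\bm 1^{\!\top}$ — the paper uses the matrix determinant lemma where you use the eigenbasis of $\bm 1\bm 1^{\!\top}$, and sub-Gaussian concentration of Lipschitz functions of Gaussians where you use the elementary bound $|\sigma(z)|\leqslant|\sigma(0)|+L|z|$, both of which suffice. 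Your caveat about the lower bound on $\widetilde{\lambda}_1$ is fair: the paper's proof likewise establishes only the $\mathcal{O}(1)$ upper bound and tacitly treats $b$ as $\Theta(1/m)$ (true for its worked examples such as ReLU, but not for odd activations with $\mathbb{E}_z[\sigma(z)]=0$), so flagging that non-degeneracy assumption is a point of care rather than a deviation from the paper's argument.
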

{\bf Remark:} Lemma~\ref{thmH} implies $\operatorname{tr}(\widetilde{\Sigma}_m) < \infty$. 
In fact, $ \mathbb{E}_{\bm x} \mathbb{V}[\sigma(z)] > 0$ holds almost surely as $\sigma(\cdot)$ is not a constant, and thus $\widetilde{\Sigma}_m$ is positive definite.

Here we take the ReLU activation $\sigma(x) = \max\{x,0 \}$ as one example, RF actually approximates the first-order arc-cosine kernel \cite{cho2009kernel} with $\varphi(\bm x) \in \mathbb{R}^m$. We have $(\widetilde{\Sigma}_m)_{ii} =  \frac{1}{2md} \mathrm{Tr}(\Sigma_d)$ and $(\widetilde{\Sigma}_m)_{ij} = \frac{1}{2md \pi} \mathrm{Tr}(\Sigma_d)$ by recalling $\Sigma_d := \mathbb{E}_{\bm x} [\bm x \bm x^{\!\top}]$ and $\mathrm{Tr}(\Sigma_d)/d \sim \mathcal{O}(1)$. 
More examples can be found in Appendix~\ref{sec:example}.

\begin{lemma}\label{lemsubexp}
	Under Assumptions~\ref{assumdata} and~\ref{assumact}, random variables $\| {\Sigma}_m \|_2$, $\| {\Sigma}_m - \widetilde{\Sigma}_m \|_2$, and $\mathrm{Tr}(\Sigma_m)$ are sub-exponential, and have sub-exponential norm at $\mathcal{O}(1)$ order.
\end{lemma}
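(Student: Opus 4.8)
The plan is to exploit the Lipschitz structure of $\sigma$ together with Gaussian concentration, reducing every statement to the behavior of $\|\sigma(\bm W \bm x/\sqrt d)\|_2$ as a function of the Gaussian matrix $\bm W$. First I would fix $\bm x$ and observe that the map $\bm W \mapsto \frac{1}{\sqrt m}\|\sigma(\bm W\bm x/\sqrt d)\|_2$ is Lipschitz in $\bm W$ (in Frobenius norm) with constant of order $\|\bm x\|_2/\sqrt{md}$: indeed, if $L_\sigma$ is the Lipschitz constant of $\sigma$, then $|\sigma(\bm\omega_i^\top\bm x/\sqrt d)-\sigma(\tilde{\bm\omega}_i^\top\bm x/\sqrt d)|\le L_\sigma\|\bm x\|_2\|\bm\omega_i-\tilde{\bm\omega}_i\|_2/\sqrt d$, and summing the squares over $i$ gives the claimed Lipschitz bound. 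Since $\|\bm x\|_2^2\sim\mathcal O(d)$ by Assumption~\ref{assumdata}, this Lipschitz constant is $\mathcal O(1/\sqrt m)$. By the Gaussian concentration inequality for Lipschitz functions, $\frac{1}{\sqrt m}\|\sigma(\bm W\bm x/\sqrt d)\|_2$ concentrates around its mean with sub-Gaussian tails of variance proxy $\mathcal O(1/m)$; squaring, $\frac{1}{m}\|\sigma(\bm W\bm x/\sqrt d)\|_2^2$ is sub-exponential with $\mathcal O(1)$ sub-exponential norm after controlling its mean, which is $\mathcal O(1)$ because $\mathbb E\,\sigma(z)^2\lesssim \sigma(0)^2+L_\sigma^2\,\mathbb E z^2$ for $z\sim\mathcal N(0,\|\bm x\|_2^2/d)$.

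Next I would transfer this to the operator quantities. For $\mathrm{Tr}(\Sigma_m)$, note $\mathrm{Tr}(\Sigma_m)=\mathbb E_{\bm x}\|\varphi(\bm x)\|_2^2=\frac1m\mathbb E_{\bm x}\|\sigma(\bm W\bm x/\sqrt d)\|_2^2$, which is an $\bm x$-average of the sub-exponential-in-$\bm W$ quantities just analyzed; since the Lipschitz constant bound in $\bm W$ holds uniformly over $\bm x$ in the high-probability event $\|\bm x\|_2^2\lesssim d$ (and the contribution of the complement is negligible), averaging over $\bm x$ preserves the sub-exponential norm at $\mathcal O(1)$ order — one can either apply Gaussian concentration to the $\bm x$-averaged Lipschitz function $\bm W\mapsto(\mathbb E_{\bm x}\frac1m\|\sigma(\bm W\bm x/\sqrt d)\|_2^2)^{1/2}$ directly, or integrate the tail bound. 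For $\|\Sigma_m\|_2$, use $\|\Sigma_m\|_2\le\mathrm{Tr}(\Sigma_m)$ (as $\Sigma_m$ is PSD), so it is sub-exponential with the same order. For $\|\Sigma_m-\widetilde\Sigma_m\|_2$, write $\|\Sigma_m-\widetilde\Sigma_m\|_2\le\|\Sigma_m\|_2+\|\widetilde\Sigma_m\|_2$; by Lemma~\ref{thmH} $\|\widetilde\Sigma_m\|_2=\widetilde\lambda_1\sim\mathcal O(1)$ is deterministic, so the centered quantity is sub-exponential with $\mathcal O(1)$ norm as well. (A sharper route for the last one is to concentrate $\|\Sigma_m-\widetilde\Sigma_m\|_2$ itself, but the crude triangle-inequality bound already suffices for the stated claim.)

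The main obstacle I anticipate is handling the $\bm x$-dependence cleanly: the Lipschitz constant $\|\bm x\|_2/\sqrt{md}$ of the map $\bm W\mapsto\frac{1}{\sqrt m}\|\sigma(\bm W\bm x/\sqrt d)\|_2$ is random through $\bm x$, so Gaussian concentration in $\bm W$ must be applied either after conditioning on the high-probability event $\{\|\bm x\|_2^2\le C'd\}$ (for a suitable $C'$) or to a supremum/average over $\bm x$ that requires a uniform Lipschitz bound; one must check that Assumption~\ref{assumdata}'s control $\|\bm x\|_2^2\sim\mathcal O(d)$ is strong enough (e.g.\ comes with sub-exponential tails, which follows from the fourth-moment/sub-Gaussian flavor of Assumption~\ref{assump:bound_fourthmoment}) so that the contribution of large $\|\bm x\|_2$ to the relevant expectations is negligible. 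The secondary technical point is the standard squaring step: the square of a sub-Gaussian random variable is sub-exponential, with sub-exponential norm controlled by the square of the sub-Gaussian norm plus the square of the mean — invoking a Vershynin-style lemma makes this routine, and I would cite it rather than reprove it.
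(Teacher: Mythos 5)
Your proposal is correct, but it establishes the key sub-exponentiality by a different mechanism than the paper. You share the paper's outer reduction exactly: bound $\| \Sigma_m - \widetilde{\Sigma}_m \|_2 \leqslant \| \Sigma_m \|_2 + \| \widetilde{\Sigma}_m \|_2$ with $\| \widetilde{\Sigma}_m \|_2 \sim \mathcal{O}(1)$ from Lemma~\ref{thmH}, and control $\|\Sigma_m\|_2$ and $\mathrm{Tr}(\Sigma_m)$ through $\frac{1}{m}\mathbb{E}_{\bm x}\|\sigma(\bm W \bm x/\sqrt{d})\|_2^2$. Where you diverge is in how that last quantity is shown to be sub-exponential in $\bm W$: you apply Gaussian concentration of Lipschitz functions to $\bm W \mapsto \frac{1}{\sqrt{m}}\|\sigma(\bm W \bm x/\sqrt{d})\|_2$ (or its $\bm x$-average) and then square a sub-Gaussian variable, whereas the paper never concentrates around a mean at all — it uses only the growth bound $|\sigma(u)| \lesssim |\sigma(0)| + |u|$ to dominate $\frac{1}{m}\mathbb{E}_{\bm x}\|\sigma(\bm W \bm x/\sqrt{d})\|_2^2 \lesssim \mathcal{O}(1) + \frac{1}{md}\sum_{i=1}^m \bm \omega_i^{\!\top} \Sigma_d \bm \omega_i$ pointwise, and this Gaussian quadratic form is a scaled $\chi^2$-type variable with sub-exponential norm $\mathcal{O}(1)$. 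The paper's route also dissolves your stated "main obstacle" automatically: since $\Sigma_m$ already integrates out $\bm x$, taking $\mathbb{E}_{\bm x}$ first turns all $\bm x$-dependence into $\Sigma_d$, and only $\|\Sigma_d\|_2 < \infty$ from Assumption~\ref{assumdata} is needed — no truncation of $\|\bm x\|_2$ and no appeal to Assumption~\ref{assump:bound_fourthmoment} (which, note, is not among the hypotheses of this lemma, so your conditioning remedy (b) would import an assumption you are not given). Your remedy (a) is the right fix within your framework: by Minkowski's inequality the averaged map is Lipschitz with constant $L_\sigma \sqrt{\|\Sigma_d\|_2}/\sqrt{md}$, again using only Assumption~\ref{assumdata}. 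What each approach buys: yours yields a strictly stronger statement (fluctuations of $\mathrm{Tr}(\Sigma_m)$ around its mean are sub-Gaussian at scale $\mathcal{O}(m^{-1/2})$, far tighter than an $\mathcal{O}(1)$ sub-exponential norm), while the paper's is shorter and uses only elementary domination plus standard $\chi^2$ concentration; both deliver exactly what the lemma claims.
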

{\bf Remark:} This lemma characterizes the sub-exponential property of covariance operator $\Sigma_m$, which is a fundamental result for our proof since the bias and variance involve them.

The following lemma demonstrates that the behavior of the fourth moment can be bounded.
\begin{lemma}\label{lemma:m2}
	Under Assumptions~\ref{assumdata},and~\ref{assumact}, there exists a constant $r>0$ such that
	$\mathbb{E}_{\bm W} \left( \Sigma_m^2 \right) \preccurlyeq \mathbb{E}_{\bm x, \bm W}[\varphi(\bm x) \otimes \varphi(\bm x)\otimes \varphi(\bm x) \otimes \varphi(\bm x)] \preccurlyeq r \mathrm{Tr}(\widetilde{\Sigma}_m) \widetilde{\Sigma}_m $.
\end{lemma}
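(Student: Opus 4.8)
The plan is to treat the two inequalities in the statement separately. The left one, read (as in the remark after Assumption~\ref{assump:bound_fourthmoment}) with $A=I$ as $\mathbb{E}_{\bm W}(\Sigma_m^2)\preccurlyeq\mathbb{E}_{\bm x,\bm W}\big[\|\varphi(\bm x)\|_2^2\,\varphi(\bm x)\varphi(\bm x)^{\!\top}\big]$, is pure Jensen and holds for an arbitrary feature map: abbreviating $\varphi=\varphi(\bm x)$, for fixed $\bm W$ and any $\bm v\in\mathbb{R}^m$,
\[
\bm v^{\!\top}\Sigma_m^2\bm v=\big\|\mathbb{E}_{\bm x}\!\big[\langle\varphi,\bm v\rangle\varphi\big]\big\|_2^2\le\mathbb{E}_{\bm x}\!\big[\langle\varphi,\bm v\rangle^2\|\varphi\|_2^2\big]=\bm v^{\!\top}\Big(\mathbb{E}_{\bm x}\big[\|\varphi\|_2^2\,\varphi\varphi^{\!\top}\big]\Big)\bm v,
\]
by convexity of $\|\cdot\|_2^2$, and averaging over $\bm W$ preserves the order $\preccurlyeq$. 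Hence all the content lies in the right-hand inequality; set $M:=\mathbb{E}_{\bm x,\bm W}\big[\|\varphi\|_2^2\,\varphi\varphi^{\!\top}\big]$.

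To prove $M\preccurlyeq r\,\mathrm{Tr}(\widetilde\Sigma_m)\widetilde\Sigma_m$ I would exploit that $M$ and $\widetilde\Sigma_m$ carry the same two-parameter ``equicorrelation'' structure, so that the operator inequality collapses to two scalar inequalities. Writing $\sigma_k:=\sigma(\bm\omega_k^{\!\top}\bm x/\sqrt d)$ so that $\varphi_k=\sigma_k/\sqrt m$: the rows $\bm\omega_k$ of $\bm W$ are i.i.d., and swapping two of them is a measure-preserving map permuting the coordinates of $\varphi$ while fixing $\|\varphi\|_2^2$; hence $M$ has a constant diagonal $\alpha$ and constant off-diagonal $\beta$, i.e.\ $M=(\alpha-\beta)I+\beta\,\mathbf{1}\mathbf{1}^{\!\top}$, and Lemma~\ref{thmH} gives the same form for $\widetilde\Sigma_m$ (hence for $r\,\mathrm{Tr}(\widetilde\Sigma_m)\widetilde\Sigma_m$). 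All three are then simultaneously diagonalized by $\mathbf{1}$ and $\mathbf{1}^{\perp}$, so it suffices to compare their eigenvalues on each subspace. I would introduce the conditional-on-$\bm x$ moments $p=\mathbb{E}[\sigma_1\mid\bm x]$, $q=\mathbb{E}[\sigma_1^2\mid\bm x]$, $s=\mathbb{E}[\sigma_1^3\mid\bm x]$, $f=\mathbb{E}[\sigma_1^4\mid\bm x]$; since $\sigma$ is Lipschitz (Assumption~\ref{assumact}) and $\|\bm x\|_2^2/d$ is two-sided bounded (Assumption~\ref{assumdata}), all of $p,q,s,f$ are $\Theta(1)$, with $q\ge p^2$ and both $q$ and $\mathbb{V}[\sigma_1\mid\bm x]=q-p^2$ bounded below by positive constants because $\sigma$ is not constant; in particular $\mathrm{Tr}(\widetilde\Sigma_m)=\mathbb{E}_{\bm x}[q]=\Theta(1)$.

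A routine expansion of $M_{ii}=\mathbb{E}_{\bm x,\bm W}[\|\varphi\|_2^2\varphi_i^2]$ and $M_{ij}=\mathbb{E}_{\bm x,\bm W}[\|\varphi\|_2^2\varphi_i\varphi_j]$ ($i\ne j$) — grouping terms by how the summation index meets $i,j$ and using conditional independence — gives $M_{ii}=\frac1{m^2}\mathbb{E}_{\bm x}[f+(m-1)q^2]$ and $M_{ij}=\frac1{m^2}\mathbb{E}_{\bm x}[2sp+(m-2)qp^2]$. Thus the eigenvalues of $M$ are $\mu_2=M_{ii}-M_{ij}=\frac1{m^2}\mathbb{E}_{\bm x}\big[f-2sp+q\big((m-2)\mathbb{V}[\sigma_1\mid\bm x]+q\big)\big]\lesssim 1/m$ and $\mu_1=M_{ii}+(m-1)M_{ij}\lesssim\mathbb{E}_{\bm x}[qp^2]+1/m\lesssim\mathbb{E}_{\bm x}[p^2]+1/m$, whereas by Lemma~\ref{thmH} the eigenvalues of $r\,\mathrm{Tr}(\widetilde\Sigma_m)\widetilde\Sigma_m$ are $r\,\mathbb{E}_{\bm x}[q]\big(\tfrac1m\mathbb{E}_{\bm x}[q]+\tfrac{m-1}m\mathbb{E}_{\bm x}[p^2]\big)\gtrsim r\big(\mathbb{E}_{\bm x}[p^2]+1/m\big)$ and $\tfrac rm\,\mathbb{E}_{\bm x}[q]\,\mathbb{E}_{\bm x}[\mathbb{V}[\sigma_1\mid\bm x]]\gtrsim r/m$. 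Taking $r$ a sufficiently large absolute constant makes both eigenvalue comparisons hold uniformly in $m$, and simultaneous diagonalizability then upgrades this to $M\preccurlyeq r\,\mathrm{Tr}(\widetilde\Sigma_m)\widetilde\Sigma_m$.

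The step I expect to be the main obstacle is the comparison of the \emph{small} eigenvalues $\mu_2$ and $r\,\mathrm{Tr}(\widetilde\Sigma_m)\widetilde\lambda_2$, since both are $\Theta(1/m)$ and there is no slack to waste. Getting $\mu_2\lesssim 1/m$ hinges on the exact algebraic cancellation $(m-1)q^2-(m-2)qp^2=q\big((m-2)\mathbb{V}[\sigma_1\mid\bm x]+q\big)$, which trades an apparent factor $m$ for the bounded quantity $\mathbb{V}[\sigma_1\mid\bm x]$; and the matching lower bound $r\,\mathrm{Tr}(\widetilde\Sigma_m)\widetilde\lambda_2\gtrsim r/m$ needs $\mathbb{E}_{\bm x}[\mathbb{V}[\sigma_1\mid\bm x]]$ bounded \emph{below} by a constant independent of $d$ — precisely where non-constancy of $\sigma$ plus the two-sided control of $\|\bm x\|_2^2/d$ (Assumption~\ref{assumdata}) enters, and which also excludes the degenerate regime in which $\widetilde\Sigma_m$ collapses to a multiple of $I$. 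The rest is bookkeeping.
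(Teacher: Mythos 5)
Your proposal is correct and follows essentially the same route as the paper: compute the entries of the fourth-moment matrix $\mathbb{E}_{\bm x,\bm W}[\|\varphi(\bm x)\|_2^2\,\varphi(\bm x)\varphi(\bm x)^{\!\top}]$ by conditioning on $\bm x$ and using independence of the rows of $\bm W$, then exploit the equicorrelation structure shared with $\widetilde{\Sigma}_m$ so that the PSD comparison collapses to eigenvalue comparisons on $\mathrm{span}(\bm 1)$ and $\bm 1^{\perp}$ (the paper does this by writing $r\,\mathrm{Tr}(\widetilde{\Sigma}_m)\widetilde{\Sigma}_m-\Phi$ as $c I_m + e\,\bm 1\bm 1^{\!\top}$ and checking nonnegativity of its smallest eigenvalue). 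The only differences are minor: you prove the left inequality explicitly via Jensen where the paper simply asserts it, and you settle for a sufficiently large absolute constant $r$, whereas the paper's version of the same calculation extracts the sharper $r=1+\mathcal{O}(1/m)$ quoted in its remark — either choice satisfies the lemma as stated.
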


\begin{lemma}\label{trace1}
	Under Assumptions~\ref{assumdata} and~\ref{assumact}, we have  $\mathrm{Tr}[ \widetilde{\Sigma}_m^{-1} \mathbb{E}_{\bm W} (\Sigma_m^2) ] \sim \mathcal{O}(1)$.
\end{lemma}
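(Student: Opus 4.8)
The plan is to combine the structural fact that $\widetilde{\Sigma}_m$ has only two distinct eigenvalues (Lemma~\ref{thmH}) with the PSD sandwich bound on the fourth moment (Lemma~\ref{lemma:m2}). Write $\mathrm{Tr}[\widetilde{\Sigma}_m^{-1}\mathbb{E}_{\bm W}(\Sigma_m^2)]$ and note that $\widetilde{\Sigma}_m^{-1}$ is itself PSD; hence, since for a PSD operator $B$ the map $A \mapsto \mathrm{Tr}(B^{1/2}AB^{1/2}) = \mathrm{Tr}(BA)$ is monotone on the PSD cone, the upper bound $\mathbb{E}_{\bm W}(\Sigma_m^2) \preccurlyeq r\,\mathrm{Tr}(\widetilde{\Sigma}_m)\widetilde{\Sigma}_m$ from Lemma~\ref{lemma:m2} gives
\begin{equation*}
	\mathrm{Tr}\!\left[\widetilde{\Sigma}_m^{-1}\mathbb{E}_{\bm W}(\Sigma_m^2)\right] \;\leqslant\; r\,\mathrm{Tr}(\widetilde{\Sigma}_m)\,\mathrm{Tr}\!\left(\widetilde{\Sigma}_m^{-1}\widetilde{\Sigma}_m\right) \;=\; r\,m\,\mathrm{Tr}(\widetilde{\Sigma}_m)\,.
\end{equation*}
So the whole claim reduces to showing $\mathrm{Tr}(\widetilde{\Sigma}_m) \sim \mathcal{O}(1/m)$, which is immediate from Lemma~\ref{thmH}: the $m$ diagonal entries are each $(\widetilde{\Sigma}_m)_{ii} = \frac{1}{m}\mathbb{E}_{\bm x}\mathbb{E}_{z}[\sigma(z)]^2 \sim \mathcal{O}(1/m)$, so $\mathrm{Tr}(\widetilde{\Sigma}_m) = m\cdot(\widetilde{\Sigma}_m)_{ii} \sim \mathcal{O}(1)$ — wait, that already gives $\mathcal{O}(1)$, so the product $r\,m\,\mathrm{Tr}(\widetilde{\Sigma}_m) \sim \mathcal{O}(m)$, which is too weak. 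This tells me the crude sandwich is lossy and I need a sharper argument.

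The fix is to not throw away the spectral structure of $\widetilde{\Sigma}_m$. Decompose $\mathbb{R}^m = V_1 \oplus V_2$, where $V_1 = \mathrm{span}(\bm 1)$ is the one-dimensional top eigenspace of $\widetilde{\Sigma}_m$ (eigenvalue $\widetilde\lambda_1 \sim \mathcal{O}(1)$) and $V_2 = \bm 1^\perp$ is the $(m-1)$-dimensional eigenspace with eigenvalue $\widetilde\lambda_2 \sim \mathcal{O}(1/m)$. Let $P_1, P_2$ be the corresponding orthogonal projections. Then
\begin{equation*}
	\mathrm{Tr}\!\left[\widetilde{\Sigma}_m^{-1}\mathbb{E}_{\bm W}(\Sigma_m^2)\right] = \frac{1}{\widetilde\lambda_1}\mathrm{Tr}\!\left[P_1\mathbb{E}_{\bm W}(\Sigma_m^2)\right] + \frac{1}{\widetilde\lambda_2}\mathrm{Tr}\!\left[P_2\mathbb{E}_{\bm W}(\Sigma_m^2)\right].
\end{equation*}
The first term is $\mathcal{O}(1)$ as long as $\mathrm{Tr}[P_1\mathbb{E}_{\bm W}(\Sigma_m^2)] = \bm 1^\top \mathbb{E}_{\bm W}(\Sigma_m^2)\bm 1/m = \mathcal{O}(1)$, which follows from $\|\mathbb{E}_{\bm W}(\Sigma_m^2)\|_2 \leqslant \mathbb{E}_{\bm W}\|\Sigma_m\|_2^2 = \mathcal{O}(1)$ by Lemma~\ref{lemsubexp} (sub-exponentiality of $\|\Sigma_m\|_2$ gives a finite second moment). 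For the second term I need $\mathrm{Tr}[P_2\mathbb{E}_{\bm W}(\Sigma_m^2)] = \mathcal{O}(1/m)$, i.e. the mass of $\mathbb{E}_{\bm W}(\Sigma_m^2)$ on the tail eigenspace is $\mathcal{O}(1/m)$; multiplied by $1/\widetilde\lambda_2 \sim m$ this yields $\mathcal{O}(1)$. To get this I apply Lemma~\ref{lemma:m2} but only after projecting: $P_2\mathbb{E}_{\bm W}(\Sigma_m^2)P_2 \preccurlyeq r\,\mathrm{Tr}(\widetilde{\Sigma}_m)\,P_2\widetilde{\Sigma}_m P_2 = r\,\mathrm{Tr}(\widetilde{\Sigma}_m)\,\widetilde\lambda_2 P_2$, so $\mathrm{Tr}[P_2\mathbb{E}_{\bm W}(\Sigma_m^2)] = \mathrm{Tr}[P_2\mathbb{E}_{\bm W}(\Sigma_m^2)P_2] \leqslant r\,\mathrm{Tr}(\widetilde{\Sigma}_m)\,\widetilde\lambda_2\,\mathrm{rank}(P_2) \leqslant r\,\mathrm{Tr}(\widetilde{\Sigma}_m)\,\widetilde\lambda_2\,m \sim \mathcal{O}(1)\cdot\mathcal{O}(1/m)\cdot m = \mathcal{O}(1)$. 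Hmm — that gives $\mathcal{O}(1)$ for the trace, then times $1/\widetilde\lambda_2 \sim m$ gives $\mathcal{O}(m)$ again.

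So the genuinely delicate point — and what I expect to be the main obstacle — is that the naive projection bound is still not tight enough; I actually need $\mathrm{Tr}[P_2\mathbb{E}_{\bm W}(\Sigma_m^2)P_2] = \mathcal{O}(1/m)$, which requires exploiting that $\mathbb{E}_{\bm W}(\Sigma_m^2)$ itself concentrates near the rank-one direction $\bm 1\bm 1^\top/m$. Concretely, I would write $\mathbb{E}_{\bm W}(\Sigma_m^2) = \widetilde{\Sigma}_m^2 + \mathbb{E}_{\bm W}[(\Sigma_m - \widetilde{\Sigma}_m)^2]$ (the cross terms vanish in expectation), bound $\mathrm{Tr}[\widetilde{\Sigma}_m^{-1}\widetilde{\Sigma}_m^2] = \mathrm{Tr}(\widetilde{\Sigma}_m) = \mathcal{O}(1)$ directly, and then for the fluctuation term use $\mathrm{Tr}[\widetilde{\Sigma}_m^{-1}\mathbb{E}_{\bm W}(\Sigma_m-\widetilde{\Sigma}_m)^2] \leqslant \frac{1}{\widetilde\lambda_2}\mathbb{E}_{\bm W}\|\Sigma_m-\widetilde{\Sigma}_m\|_F^2$ and show $\mathbb{E}_{\bm W}\|\Sigma_m - \widetilde{\Sigma}_m\|_F^2 = \mathcal{O}(1/m)$ — this is the entrywise-variance computation for the covariance matrix, where each of the $m^2$ entries of $\Sigma_m - \widetilde{\Sigma}_m$ fluctuates at scale $\mathcal{O}(1/m)$ in $L^2$ (each entry is $\frac{1}{m}$ times a mean-zero bounded-variance quantity, with weak correlations across the independent rows $\bm\omega_i$), giving $\|\Sigma_m - \widetilde{\Sigma}_m\|_F^2 \sim m^2 \cdot \mathcal{O}(1/m^2) \cdot (\text{correction}) = \mathcal{O}(1/m)$ after accounting for the independence structure; dividing by $\widetilde\lambda_2 \sim 1/m$ then yields $\mathcal{O}(1)$. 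The crux of the whole lemma is therefore this Frobenius-norm second-moment estimate on $\Sigma_m - \widetilde{\Sigma}_m$, which leans on Assumption~\ref{assumact} (Lipschitz $\sigma$, hence bounded per-feature variance) and the independence of the rows of $\bm W$; I would carry it out via a direct moment expansion over pairs $(i,j)$, separating diagonal, same-row, and distinct-row index patterns.
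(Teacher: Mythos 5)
Your overall decomposition is sound, and you correctly diagnosed the central difficulty: the paper itself offers no argument beyond the remark that the lemma is "a direct corollary of Lemma~\ref{lemma:m2}", and as your first two attempts show, the sandwich $\mathbb{E}_{\bm W}(\Sigma_m^2) \preccurlyeq r\,\mathrm{Tr}(\widetilde{\Sigma}_m)\widetilde{\Sigma}_m$ only yields $\mathrm{Tr}[\widetilde{\Sigma}_m^{-1}\mathbb{E}_{\bm W}(\Sigma_m^2)] \lesssim r\,m\,\mathrm{Tr}(\widetilde{\Sigma}_m) \sim \mathcal{O}(m)$, because $\widetilde{\Sigma}_m^{-1}$ has $m-1$ eigenvalues of order $m$ sitting exactly on the $\mathcal{O}(1/m)$ tail of the bound. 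Your final route — $\mathbb{E}_{\bm W}(\Sigma_m^2) = \widetilde{\Sigma}_m^2 + \mathbb{E}_{\bm W}[(\Sigma_m-\widetilde{\Sigma}_m)^2]$, $\mathrm{Tr}[\widetilde{\Sigma}_m^{-1}\widetilde{\Sigma}_m^2]=\mathrm{Tr}(\widetilde{\Sigma}_m)=\mathcal{O}(1)$, and $\mathrm{Tr}[\widetilde{\Sigma}_m^{-1}\mathbb{E}_{\bm W}(\Sigma_m-\widetilde{\Sigma}_m)^2] \leqslant \widetilde{\lambda}_2^{-1}\,\mathbb{E}_{\bm W}\|\Sigma_m-\widetilde{\Sigma}_m\|_F^2$ — is the right way to exploit the fluctuation structure rather than the loose fourth-moment bound.

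However, the crux, $\mathbb{E}_{\bm W}\|\Sigma_m-\widetilde{\Sigma}_m\|_F^2 = \mathcal{O}(1/m)$, is not established by the reasons you give, and this is a genuine gap. Writing $g(\bm\omega,\bm\omega') := \mathbb{E}_{\bm x}[\sigma(\bm\omega^{\!\top}\bm x/\sqrt d)\,\sigma(\bm\omega'^{\!\top}\bm x/\sqrt d)]$, the off-diagonal entries are $\frac1m\bigl(g(\bm\omega_i,\bm\omega_k)-\mathbb{E}g\bigr)$; "each entry is $\frac1m$ times a mean-zero bounded-variance quantity" gives per-entry second moment $\mathcal{O}(1/m^2)$, hence $\mathcal{O}(1)$ after summing the $m^2$ entries — and the Frobenius norm is a sum of nonnegative terms, so independence across the rows of $\bm W$ produces no cancellation and no "correction" factor. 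With only bounded per-entry variance you land back at $\mathcal{O}(m)$, the same shortfall you hit twice before. What actually makes the estimate true is that the per-entry variance is $\mathcal{O}(1/d)$, not $\mathcal{O}(1)$: under Assumption~\ref{assumact} the map $\bm\omega \mapsto g(\bm\omega,\bm\omega')$ is Lipschitz with constant $\mathcal{O}(\sqrt{\|\Sigma_d\|_2\,h(\bm\omega')/d})$ (with $h(\bm\omega'):=g(\bm\omega',\bm\omega')$), so Gaussian–Lipschitz concentration (as in the proof of Lemma~\ref{thmH}) gives $\mathrm{Var}_{\bm\omega,\bm\omega'}(g) \lesssim 1/d$, and the diagonal entries contribute $\frac1m\mathrm{Var}(h) \lesssim 1/m$; only then does $\mathbb{E}_{\bm W}\|\Sigma_m-\widetilde{\Sigma}_m\|_F^2 \lesssim 1/d + 1/m \sim 1/m$ follow, and crucially this uses the high-dimensional scaling $d \asymp m$ of Assumption~\ref{assumdata} (for fixed $d$ and $m\to\infty$ the quantity $\mathrm{Tr}[\widetilde{\Sigma}_m^{-1}\mathbb{E}_{\bm W}(\Sigma_m^2)]$ is genuinely of order $m$, so no argument avoiding this step can succeed). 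Your proposal never identifies this dimensional-concentration mechanism — it attributes the gain to bounded variance plus row independence — so as written the decisive step does not close, even though the skeleton around it is correct.
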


We remark here that Lemma~\ref{lemma:m2} is a special case of Assumption~\ref{assum:fmc} if we take $ A:= I$ and $r:= 1+ \mathcal{O}\left( \frac{1}{m} \right)$; and Lemma~\ref{trace1} is a direct corollary of Lemma~\ref{lemma:m2}.

\subsection{Results for error bounds}
\vspace{-0.2cm}
Recall the definition of the noise $\bm \varepsilon = [\varepsilon_1, \cdots, \varepsilon_n]^{\!\top}$ with $\varepsilon_t = y_t - f^*(\bm x_t)$, $t=1,2,\dots,n$, the averaged excess risk can be expressed as
\begin{equation*}
	\begin{split}
		& \mathbb{E}\| \bar{f}_n - f^* \|^2_{L^2_{\rho_X}} := \mathbb{E}_{\bm X, \bm W, \bm \varepsilon}\| \bar{f}_n - f^* \|^2_{L^2_{\rho_X}}  = \mathbb{E}_{\bm X, \bm W, \bm \varepsilon} \langle \bar{f}_n - f^*, \Sigma_m(\bar{f}_n - f^*)  \rangle \!=\! \mathbb{E}_{\bm X, \bm W, \bm \varepsilon} \langle \bar{\eta}_n, \Sigma_m \bar{\eta}_n  \rangle \!\,,
	\end{split}
\end{equation*}
where $\bar{\eta}_{n} := \frac{1}{n}\sum_{t=0}^{n-1} \eta_t$ with the centered SGD iterate $ \eta_t := f_t - f^*$. 
Following the standard bias-variance decomposition in stochastic approximation \cite{dieuleveut2016nonparametric,jain2018parallelizing,zou2021benign}, it admits
\begin{equation*}
	\begin{split}
		\eta_t & = [I - \gamma_t \varphi(\bm x_t) \otimes \varphi(\bm x_t) ] (f_{t-1} - f^* ) + \gamma_t \varepsilon_t \varphi(\bm x_t) \,,
	\end{split}
\end{equation*}
where the first term corresponds to the bias 
\begin{equation}\label{eq:bias_iterates}
	\eta_t^{{\tt bias}} = [I - \gamma_t \varphi(\bm x_t)\otimes \varphi(\bm x_t) ] \eta_{t-1}^{{\tt bias}} , \quad \eta_0^{{\tt bias}} = f^*\,,
\end{equation}
and the second term corresponds to the variance
\begin{equation}\label{eq:variance_iterates}
	\eta_t^{{\tt var}} = [I - \gamma_t \varphi(\bm x_t) \otimes \varphi(\bm x_t) ] \eta_{t-1}^{{\tt var}} + \gamma_t \varepsilon_t \varphi(\bm x_t) , \quad \eta_0^{{\tt var}} = 0\,.
\end{equation}
Accordingly, we have ${f}_t = \eta_t^{{\tt bias}} + \eta_t^{{\tt var}} + f^*$ due to $\mathbb{E}_{\bm \varepsilon} \bar{f}_n = \bar{\eta}^{{\tt bias}}_n + f^*$ and $\| f \|^2_{L^2_{\rho_X}} = \langle f, \Sigma_m f \rangle$.
\begin{proposition}\label{propdefbiasvar}
	Based on the above setting, the averaged excess risk admits the following bias-variance decomposition
	\begin{equation*}
		\begin{split}
			&	\mathbb{E}\| \bar{f}_n  - f^* \|^2_{L^2_{\rho_X}} 
			\!\!=\! \mathbb{E}_{\bm X, \bm W, \bm \varepsilon}\| \bar{f}_n \!-\! \mathbb{E}_{\bm \varepsilon} \bar{f}_n \!+\! \mathbb{E}_{\bm \varepsilon} \bar{f}_n \!-\! f^* \|^2_{L^2_{\rho_X}}  \!\!\!=\! \underbrace{\mathbb{E}_{\bm X, \bm W} \langle \bar{\eta}^{{\tt bias}}_n, \Sigma_m \bar{\eta}^{{\tt bias}}_n  \rangle }_{:= {\tt Bias}} \!+\! \underbrace{\mathbb{E}_{\bm X, \bm W, \bm \varepsilon} \langle \bar{\eta}^{{\tt var}}_n, \Sigma_m \bar{\eta}^{{\tt var}}_n  \rangle}_{:= {\tt Variance}}\,. 
		\end{split}
	\end{equation*}
\end{proposition}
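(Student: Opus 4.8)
The plan is to unfold the definition of the averaged excess risk, insert and subtract the conditional mean $\mathbb{E}_{\bm\varepsilon}\bar f_n$, and argue that the cross term vanishes. First I would write
$\mathbb{E}\|\bar f_n - f^*\|^2_{L^2_{\rho_X}} = \mathbb{E}_{\bm X,\bm W,\bm\varepsilon}\langle \bar\eta_n, \Sigma_m\bar\eta_n\rangle$, which is already established in the excerpt. Then I would decompose $\bar\eta_n = \bar f_n - f^* = (\bar f_n - \mathbb{E}_{\bm\varepsilon}\bar f_n) + (\mathbb{E}_{\bm\varepsilon}\bar f_n - f^*)$ and, using the identity $\mathbb{E}_{\bm\varepsilon}\bar f_n = \bar\eta^{\tt bias}_n + f^*$ noted just before the statement, identify the two pieces with $\bar\eta^{\tt var}_n := \bar f_n - \mathbb{E}_{\bm\varepsilon}\bar f_n$ and $\bar\eta^{\tt bias}_n = \mathbb{E}_{\bm\varepsilon}\bar f_n - f^*$ respectively. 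This requires checking that the recursions \eqref{eq:bias_iterates} and \eqref{eq:variance_iterates} are consistent with this splitting: since $\eta_t = \eta^{\tt bias}_t + \eta^{\tt var}_t$ solves the same affine recursion as the one defining $\eta_t$, and the split is linear, averaging over $t$ preserves it, so $\bar\eta_n = \bar\eta^{\tt bias}_n + \bar\eta^{\tt var}_n$; taking $\mathbb{E}_{\bm\varepsilon}$ and using $\mathbb{E}_{\bm\varepsilon}[\varepsilon_t \mid \bm X, \bm W] = 0$ (from $\varepsilon_t = y_t - f^*(\bm x_t)$ and the definition of $f^*$ as the population minimizer, i.e.\ Assumption~\ref{assexist} and the optimality condition) kills every noise term in $\eta^{\tt var}_t$ recursively, giving $\mathbb{E}_{\bm\varepsilon}\bar\eta^{\tt var}_n = 0$ and hence $\mathbb{E}_{\bm\varepsilon}\bar\eta_n = \bar\eta^{\tt bias}_n$, consistent with the claimed identity.

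Next I would expand the quadratic form:
$\langle \bar\eta_n, \Sigma_m\bar\eta_n\rangle = \langle \bar\eta^{\tt bias}_n, \Sigma_m\bar\eta^{\tt bias}_n\rangle + 2\langle \bar\eta^{\tt bias}_n, \Sigma_m\bar\eta^{\tt var}_n\rangle + \langle \bar\eta^{\tt var}_n, \Sigma_m\bar\eta^{\tt var}_n\rangle$.
The crux is showing the middle term has zero expectation. Here I would condition on $\bm X$ and $\bm W$: under this conditioning $\bar\eta^{\tt bias}_n$ and $\Sigma_m$ are deterministic (they do not depend on the label noise $\bm\varepsilon$ — the bias recursion \eqref{eq:bias_iterates} involves only $\bm x_t$, not $\varepsilon_t$), so
$\mathbb{E}_{\bm\varepsilon}\big[\langle \bar\eta^{\tt bias}_n, \Sigma_m\bar\eta^{\tt var}_n\rangle \mid \bm X,\bm W\big] = \langle \bar\eta^{\tt bias}_n, \Sigma_m\,\mathbb{E}_{\bm\varepsilon}[\bar\eta^{\tt var}_n \mid \bm X,\bm W]\rangle = 0$
by the previous paragraph. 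Taking the outer expectation over $\bm X,\bm W$ and applying the tower property finishes the cancellation, leaving exactly ${\tt Bias} + {\tt Variance}$ with the stated expectation structures (the bias term needs no $\mathbb{E}_{\bm\varepsilon}$ since it is $\bm\varepsilon$-independent, matching the statement).

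The only subtle point — and the step I would be most careful about — is the claim $\mathbb{E}_{\bm\varepsilon}[\varepsilon_t \mid \bm X,\bm W] = 0$ and more precisely the conditional independence structure needed to push $\mathbb{E}_{\bm\varepsilon}$ through the recursion: one must verify that in the one-pass SGD scheme each $\varepsilon_t$ is mean-zero given the covariates and the features, and that the $\varepsilon_t$ enter $\eta^{\tt var}_t$ only linearly and causally (each $\gamma_t\varepsilon_t\varphi(\bm x_t)$ term multiplied by a product of operators $[I-\gamma_s\varphi(\bm x_s)\otimes\varphi(\bm x_s)]$ for $s>t$ that are $\bm\varepsilon$-measurable-free). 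Both follow directly: linearity and causality are visible from iterating \eqref{eq:variance_iterates}, and $\mathbb{E}_{\bm\varepsilon}[\varepsilon_t\mid\bm X,\bm W]=0$ is exactly the first-order optimality condition $\mathbb{E}_{(\bm x,y)\sim\rho}[(y-\langle\bm\theta^*,\varphi(\bm x)\rangle)\varphi(\bm x)]=\bm 0$ combined with $\varepsilon_t$ depending on $(\bm x_t,y_t)$ alone. Everything else is bookkeeping — no heavy machinery from the later covariance-operator lemmas is needed for this proposition, which is purely a decomposition identity.
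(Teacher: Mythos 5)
Your overall route is the same one the paper has in mind: it presents Proposition~\ref{propdefbiasvar} as the standard stochastic-approximation decomposition, i.e.\ insert $\mathbb{E}_{\bm \varepsilon}\bar f_n$, use linearity of the recursions \eqref{eq:bias_iterates}--\eqref{eq:variance_iterates} to get $\bar\eta_n=\bar\eta^{\tt bias}_n+\bar\eta^{\tt var}_n$, and kill the cross term by conditioning on $(\bm X,\bm W)$, under which $\bar\eta^{\tt bias}_n$ and $\Sigma_m$ are $\bm\varepsilon$-free and $\mathbb{E}_{\bm\varepsilon}[\bar\eta^{\tt var}_n\mid\bm X,\bm W]=0$. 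That bookkeeping is fine.

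The gap is in the step you yourself flag as the crux: you claim $\mathbb{E}_{\bm\varepsilon}[\varepsilon_t\mid\bm X,\bm W]=0$ follows from the first-order optimality condition. It does not. Optimality of $f^*$ in $\mathcal{H}$ only gives the unconditional orthogonality $\mathbb{E}_{(\bm x,y)\sim\rho}[(y-\langle\bm\theta^*,\varphi(\bm x)\rangle)\varphi(\bm x)]=\bm 0$, i.e.\ $\mathbb{E}[\varepsilon\,\varphi(\bm x)]=0$, whereas the conditional mean is $\mathbb{E}[\varepsilon\mid\bm x]=f_\rho(\bm x)-f^*(\bm x)$, which is nonzero whenever the model is misspecified ($f^*\neq f_\rho$). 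With only the orthogonality property the cancellation fails: condition on all the covariates and replace $\varepsilon_k$ by $f_\rho(\bm x_k)-f^*(\bm x_k)$; the cross terms in which the noise injected at step $k$ pairs with a bias iterate $\eta^{\tt bias}_t$ with $t\geqslant k$ contain the additional $\bm x_k$-dependent factor $I-\gamma_k\varphi(\bm x_k)\otimes\varphi(\bm x_k)$, so the expectation over $\bm x_k$ does not factor into $\mathbb{E}[(f_\rho(\bm x_k)-f^*(\bm x_k))\varphi(\bm x_k)]=0$, and the exact equality can break. What actually licenses your step is the standard additive-noise model that the paper adopts implicitly — see the remark after Assumption~\ref{assump:noise} ($y=f^*(\bm x)+\varepsilon$ with $\mathbb{E}[\varepsilon]=0$) and the very notation $\mathbb{E}_{\bm\varepsilon}$, which treats the noise as a randomness source separate from $\bm X$, i.e.\ $\mathbb{E}[\varepsilon\mid\bm x]=0$. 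State that hypothesis (or restrict to the well-specified case $f^*=f_\rho$) and your argument goes through verbatim; without it you only obtain the Minkowski-type upper bound, not the claimed identity.
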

By (partly) decoupling the multiple randomness sources of initialization, label noise, and data sampling (as well as stochastic gradients), we give precise non-asymptotic error bounds for bias and variance as below. 
\begin{theorem} \label{promainba} (Error bound for bias)
	Under Assumptions~\ref{assumdata},~\ref{assexist},~\ref{assumact},~\ref{assump:bound_fourthmoment} with $r' \geqslant 1$, 
	if the step-size $\gamma_t := \gamma_0 t^{-\zeta}$ with $\zeta \in [0,1)$ satisfies $\gamma_0 \lesssim \frac{1}{r'\mathrm{Tr}(\widetilde{\Sigma}_m)} \sim \mathcal{O}(1) $, the ${\tt Bias}$ in Proposition~\ref{propdefbiasvar} holds by
	\begin{equation*}\label{thmbias}
		\begin{split}
			{\tt Bias}
			& \lesssim \gamma_0 r' n^{\zeta-1}  \| f^* \|^2 \sim \mathcal{O}\left( n^{\zeta-1} \right)  \,.
		\end{split}
	\end{equation*}
\end{theorem}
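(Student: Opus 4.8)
The plan is to control the bias recursion \eqref{eq:bias_iterates} in operator form, then pass to the averaged iterate. First I would unroll the recursion: writing $M_t := I - \gamma_t \varphi(\bm x_t)\otimes\varphi(\bm x_t)$, we have $\eta_n^{\tt bias} = \big(\prod_{t=1}^n M_t\big) f^*$, so the quantity ${\tt Bias} = \mathbb{E}_{\bm X,\bm W}\langle \bar\eta_n^{\tt bias}, \Sigma_m \bar\eta_n^{\tt bias}\rangle$ is a quadratic form in products of the $M_t$. The standard approach (following \cite{dieuleveut2016nonparametric,zou2021benign}) is to take the expectation over the data sampling $\bm X$ first, conditionally on $\bm W$: by independence of the samples, $\mathbb{E}_{\bm x_t}[M_t^{\otimes} \cdot]$ acts as the linear operator $\mathcal{T}_t := \mathcal{I} - \gamma_t(\Sigma_m \otimes I + I \otimes \Sigma_m) + \gamma_t^2 \mathcal{M}$ on the space of symmetric operators, where $\mathcal{M}(A) = \mathbb{E}_{\bm x}[(\varphi\otimes\varphi)A(\varphi\otimes\varphi)]$. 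The fourth-moment Assumption~\ref{assump:bound_fourthmoment} is exactly what lets me bound the $\gamma_t^2$ correction term: $\mathbb{E}_{\bm W}[\mathcal{M}(A)] \preccurlyeq r'\, \mathbb{E}_{\bm W}[\mathrm{Tr}(\Sigma_m A)\Sigma_m]$, so with the step-size restriction $\gamma_0 \lesssim 1/(r'\mathrm{Tr}(\widetilde\Sigma_m))$ the contraction $\mathcal{I} - \gamma_t(\Sigma_m\otimes I + I\otimes\Sigma_m)$ dominates and each $\mathcal{T}_t$ remains a contraction in the appropriate sense (the "self-bounding" trick: $\gamma_t \mathcal{M}(A) \preccurlyeq \gamma_t r'\mathrm{Tr}(\Sigma_m A)\Sigma_m \preccurlyeq A$-ish after taking expectations and using $\gamma_t \mathrm{Tr}(\widetilde\Sigma_m) \le \gamma_0\mathrm{Tr}(\widetilde\Sigma_m)\lesssim 1/r'$).

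Next I would bound the averaged bias. The cleanest route is the classical identity relating $\bar\eta_n^{\tt bias}$ to a telescoping sum: from \eqref{eq:bias_iterates}, $\gamma_t \varphi(\bm x_t)\otimes\varphi(\bm x_t)\,\eta_{t-1}^{\tt bias} = \eta_{t-1}^{\tt bias} - \eta_t^{\tt bias}$, and summing gives a handle on $\frac1n\sum_t \eta_t^{\tt bias}$ in terms of $\eta_0^{\tt bias} = f^*$ and $\eta_n^{\tt bias}$. More directly, one shows by induction (or via the operator $\mathcal{T}_t$ bound above) that $\mathbb{E}_{\bm X}[\eta_t^{\tt bias}\otimes \eta_t^{\tt bias}] \preccurlyeq \prod_{s\le t}(I-\gamma_s\Sigma_m) f^*\otimes f^* \prod_{s\le t}(I-\gamma_s\Sigma_m)$ (in the sense of the leading term), and then
\begin{equation*}
	{\tt Bias} = \mathbb{E}_{\bm W}\,\frac{1}{n^2}\sum_{s,t} \mathbb{E}_{\bm X}\langle \eta_s^{\tt bias}, \Sigma_m \eta_t^{\tt bias}\rangle \lesssim \mathbb{E}_{\bm W}\,\frac{1}{n^2}\Big\langle f^*, \Big(\sum_{t=1}^n \prod_{s=1}^t (I-\gamma_s\Sigma_m)\Big)\Sigma_m\Big(\sum_{t=1}^n\prod_{s=1}^t(I-\gamma_s\Sigma_m)\Big) f^*\Big\rangle.
\end{equation*}
The inner operator is bounded spectrally: for each eigenvalue $\lambda$ of $\Sigma_m$, $\sum_{t=1}^n \prod_{s=1}^t(1-\gamma_s\lambda) \le \sum_{t=1}^n e^{-\lambda\sum_{s=1}^t\gamma_s} \lesssim 1/(\gamma_n \lambda) \sim n^{\zeta}/(\gamma_0\lambda)$ using $\sum_{s=1}^t \gamma_s \gtrsim \gamma_0 t^{1-\zeta}/(1-\zeta)$ and comparing the sum to an integral. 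Hence $\frac{1}{n^2}(\cdots)\Sigma_m(\cdots) \preccurlyeq \frac{1}{n^2}\cdot \frac{n^{2\zeta}}{\gamma_0^2} \Sigma_m^{-1} \cdot \Sigma_m \cdot$ — wait, more carefully the bound gives $\frac{1}{\gamma_0^2}\frac{n^{2\zeta}}{\lambda^2}\cdot\lambda = \frac{n^{2\zeta}}{\gamma_0^2\lambda}$ per mode, which overshoots; the correct telescoping keeps one factor of $\gamma$, yielding per-mode contribution $\lesssim \gamma_0 n^{\zeta-1}$ after the $1/n^2$ and using $\sum_t\gamma_t \sim \gamma_0 n^{1-\zeta}$. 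Combining across modes and using $\|f^*\|^2$ boundedness (Assumption~\ref{assexist}) gives ${\tt Bias} \lesssim \gamma_0 r' n^{\zeta-1}\|f^*\|^2$, and Lemma~\ref{thmH}'s $\mathrm{Tr}(\widetilde\Sigma_m) \sim \mathcal{O}(1)$ makes the prefactor $\mathcal{O}(1)$, giving the claimed $\mathcal{O}(n^{\zeta-1})$.

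The main obstacle I anticipate is the interaction between the $\bm W$-expectation and the $\bm X$-expectation: the recursion operators $M_t$ depend on $\bm W$ through $\varphi$, and the fourth-moment bound in Assumption~\ref{assump:bound_fourthmoment} is stated \emph{after} $\mathbb{E}_{\bm W}$, so I cannot simply condition on $\bm W$ and run a clean deterministic contraction argument for a fixed $\Sigma_m$ — I must carry the $\bm W$-expectation through the entire sum-of-products, which is where the chain $\mathbb{E}_{\bm W}[\mathrm{Tr}(\Sigma_m A)\Sigma_m] \preccurlyeq r\,\mathrm{Tr}(\widetilde\Sigma_m A)\widetilde\Sigma_m$ and Lemma~\ref{lemsubexp} (sub-exponential $\|\Sigma_m\|_2$, $\mathrm{Tr}(\Sigma_m)$) become essential to keep the step-size condition $\gamma_0 \lesssim 1/(r'\mathrm{Tr}(\widetilde\Sigma_m))$ sufficient to guarantee contraction in expectation. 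A secondary technical point is justifying the integral comparison $\sum_{s=1}^t \gamma_0 s^{-\zeta} \gtrsim \frac{\gamma_0}{1-\zeta} t^{1-\zeta}$ uniformly in $\zeta\in[0,1)$ and handling the constant-step ($\zeta=0$) boundary case, but that is routine.
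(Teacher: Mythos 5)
Your route—unroll the bias recursion, average over $\bm X$ conditionally on $\bm W$, and reduce everything to the deterministic averaged product $\frac1n\sum_t\prod_{s\le t}(I-\gamma_s\Sigma_m)f^*$—captures only what the paper calls ${\tt B3}$ (and, after the further $\bm W$-average, ${\tt B2}$). The step you use to get there is the genuine gap: the claimed domination $\mathbb{E}_{\bm X}[\eta_t^{\tt bias}\otimes\eta_t^{\tt bias}]\preccurlyeq\prod_{s\le t}(I-\gamma_s\Sigma_m)\,(f^*\otimes f^*)\,\prod_{s\le t}(I-\gamma_s\Sigma_m)$ is false, and in fact the inequality goes the other way. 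One step of the recursion gives $\mathbb{E}_{\bm x_t}[(I-\gamma_t\varphi\otimes\varphi)A(I-\gamma_t\varphi\otimes\varphi)]=(I-\gamma_t\Sigma_m)A(I-\gamma_t\Sigma_m)+\gamma_t^2\bigl(\mathbb{E}_{\bm x}[(\varphi\otimes\varphi)A(\varphi\otimes\varphi)]-\Sigma_m A\Sigma_m\bigr)$, and the $\gamma_t^2$ correction is PSD (this is exactly the paper's $S^{\tt W}\succcurlyeq\widetilde S^{\tt W}$), so the stochastic second moment \emph{dominates} the deterministic product rather than being dominated by it. The entire difficulty of the theorem is showing that this accumulated fluctuation — the paper's ${\tt B1}=\mathbb{E}\langle\bar\eta_n^{\tt bias}-\bar\eta_n^{\tt bX},\Sigma_m(\bar\eta_n^{\tt bias}-\bar\eta_n^{\tt bX})\rangle$ — is itself of order $n^{\zeta-1}$ (it is \emph{not} lower order than the "leading term"). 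The paper does this via a Bach--Moulines-type recursion on $\|\alpha_t^{\tt X}\|^2$ (Lemma~\ref{lemstorecada}), which for decaying steps produces the weights $(1/\gamma_{k+1}-1/\gamma_k)$, and then bounds $\sum_k\mathbb{E}\|\alpha_k^{\tt X}\|^2(1/\gamma_{k+1}-1/\gamma_k)$ and $\sum_t\gamma_{t+1}\mathbb{E}\|H_t\|^2$ (Lemmas~\ref{lemaxtb2ada},~\ref{lemahtb2ada}), the first of which itself requires the nontrivial operator bound of Lemma~\ref{dinfvx} on $\sum_s\prod_{i>s}(I-\gamma_iT^{\tt W})\circ\gamma_s^2B\Sigma_m$ — the place where the fourth-moment Assumption~\ref{assump:bound_fourthmoment} and the non-homogeneous step-size are actually confronted. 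Writing "$\preccurlyeq$ in the sense of the leading term" skips all of this.

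Two secondary issues. First, your spectral estimation of the deterministic part is not carried through: you notice yourself that the naive bound "overshoots" and then assert that "the correct telescoping keeps one factor of $\gamma$" without doing it; the paper instead uses Cauchy--Schwarz to pass from $\frac1{n^2}\bigl(\sum_t\prod_s(1-\gamma_s\lambda)\bigr)^2\lambda$ to $\frac1n\sum_t\prod_s(1-\gamma_s\lambda)^2\lambda$ and the integral estimate Eq.~\eqref{Iexp}, together with the two-eigenvalue structure of $\widetilde\Sigma_m$ from Lemma~\ref{thmH} (the $\widetilde\lambda_2\sim\mathcal{O}(1/m)$ branch needs the $\min$ with $n$). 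Second, the splitting of the $\bm W$-randomness is not just a nuisance you can "carry through": the paper needs the separate ${\tt B2}$ term comparing the $\Sigma_m$-recursion with the $\widetilde\Sigma_m$-recursion (Lemma~\ref{lemintb2}), plus the sub-exponential concentration of $\mathrm{Tr}(\Sigma_m)$ (Lemma~\ref{lemsubexp}) to reconcile the random step-size condition $\gamma_0\lesssim1/(r'\mathrm{Tr}(\Sigma_m))$ with the stated deterministic one. As written, your proposal identifies the right ingredients but does not contain a proof of the bound on the fluctuation term, which is where the theorem's content lies.
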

{\bf Remark:} 
The error bound for ${\tt Bias}$ is monotonically decreasing at $\mathcal{O}(n^{\zeta-1})$ rate.
For the constant step-size setting, it converges at $\mathcal{O}(1/n)$ rate, which is better than  $\mathcal{O}(\sqrt{\log n/n})$ in \cite{li2021towards} relying on closed-form solution under correlated features with polynomial decay on $\Sigma_d$.
Besides, our result on bias matches the exact formulation in \cite{d2020double} under the closed-form solution, i.e., monotonically decreasing bias. One slight difference is, their result on bias tends to a constant under the over-parameterized regime while our bias result can converge to zero.

\begin{theorem} \label{promainvar} (Error bound for variance)
	Under Assumptions~\ref{assumdata},~\ref{assumact},~\ref{assump:bound_fourthmoment} with $r' \geqslant 1$, and Assumption \ref{assump:noise} with $\tau > 0$, if the step-size $\gamma_t := \gamma_0 t^{-\zeta}$ with $\zeta \in [0,1)$ satisfies $\gamma_0 \lesssim \frac{1}{r'\mathrm{Tr}(\widetilde{\Sigma}_m)} \sim \mathcal{O}(1)$, the ${\tt Variance}$ defined in Proposition~\ref{propdefbiasvar}  holds 
	\begin{equation*}\label{thmvar}
		\begin{split}
			{\tt Variance}
			& \lesssim {\gamma_0 r' \tau^2} \left\{ \begin{array}{rcl}
				\begin{split}
					&   mn^{\zeta-1} ,~\mbox{if $m \leqslant n$} \\
					&  1+n^{\zeta-1} + \frac{n}{m}  ,~\mbox{if $m > n$}  \\
				\end{split}
			\end{array} \right. 
		\end{split}
	\end{equation*}
\end{theorem}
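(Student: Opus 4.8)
}
The plan is to condition on $\bm W$ and recognise
${\tt Variance}=\frac{1}{n^2}\sum_{s,t=0}^{n-1}\mathbb{E}_{\bm W}\langle \Sigma_m,\,\mathbb{E}_{\bm X,\bm\varepsilon}[\eta^{\tt var}_s\otimes\eta^{\tt var}_t]\rangle$ as (the $\bm W$-average of) the variance error of averaged one-pass SGD for a \emph{fixed} least-squares problem with population covariance $\Sigma_m$, noise covariance $\Xi\preccurlyeq\tau^2\Sigma_m$ (Assumption~\ref{assump:noise}), and fourth-moment operator $\mathcal{M}(A):=\mathbb{E}_{\bm x}[(\varphi(\bm x)\otimes\varphi(\bm x))A(\varphi(\bm x)\otimes\varphi(\bm x))]$. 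Following the stochastic-approximation machinery of \cite{dieuleveut2016nonparametric,zou2021benign}, I would track the conditional covariance $\Phi_t:=\mathbb{E}_{\bm X,\bm\varepsilon}[\eta^{\tt var}_t\otimes\eta^{\tt var}_t]$, which, using that $(\bm x_t,\varepsilon_t)$ is a fresh sample independent of $\eta^{\tt var}_{t-1}$ and the optimality condition $\mathbb{E}[\varepsilon\varphi(\bm x)]=\bm 0$ (the residual mis-specification cross term being negligible, as in Proposition~\ref{propdefbiasvar}), obeys the operator recursion $\Phi_t=\Phi_{t-1}-\gamma_t(\Sigma_m\Phi_{t-1}+\Phi_{t-1}\Sigma_m)+\gamma_t^2\mathcal{M}(\Phi_{t-1})+\gamma_t^2\Xi$ with $\Phi_0=0$.

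Next I would establish stability of this recursion. Assumption~\ref{assump:bound_fourthmoment} applied conditionally on $\bm W$ gives $\mathcal{M}(A)\preccurlyeq r'\mathrm{Tr}(\Sigma_m A)\Sigma_m$, and since $\mathrm{Tr}(\Sigma_m)$ concentrates around $\mathrm{Tr}(\widetilde{\Sigma}_m)\sim\mathcal{O}(1)$ by Lemmas~\ref{thmH}--\ref{lemsubexp}, the step-size condition $\gamma_0\lesssim 1/(r'\mathrm{Tr}(\widetilde{\Sigma}_m))$ makes each step PSD-order-preserving and contractive, so $\Phi_t$ stays dominated by the fixed point of $\Sigma_m\Phi+\Phi\Sigma_m\preccurlyeq\gamma_0(\mathcal{M}(\Phi)+\Xi)$. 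For the averaged iterate I would use the telescoping identity $\eta^{\tt var}_{t-1}-\eta^{\tt var}_t=\gamma_t(\varphi(\bm x_t)\otimes\varphi(\bm x_t))\eta^{\tt var}_{t-1}-\gamma_t\varepsilon_t\varphi(\bm x_t)$ to express $\bar\eta^{\tt var}_n$ as a weighted sum of inverse-covariance contributions plus a martingale term, and combine it with the bound on $\Phi_t$ (as in \cite{dieuleveut2016nonparametric}) to reduce ${\tt Variance}$, conditionally on $\bm W$, to a sum over the eigenvalues $\{\lambda_i\}$ of $\Sigma_m$ in which each ``converged'' mode (with $n\gamma_0\lambda_i\gtrsim 1$) contributes $\lesssim r'\tau^2\gamma_0\, n^{\zeta-1}$ up to a lower-order $\gamma_0^2\lambda_i$-piece, and each ``non-converged'' mode (with $n\gamma_0\lambda_i\lesssim 1$, which SGD does not equilibrate within $n$ steps) contributes $\lesssim r'\tau^2\gamma_0\, n\gamma_0\lambda_i^2$.

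Then I would insert the spectrum. By Lemma~\ref{thmH}, $\widetilde{\Sigma}_m$ has exactly one eigenvalue of order $1$ and $m-1$ eigenvalues of order $1/m$; Lemma~\ref{lemsubexp} gives $\|\Sigma_m-\widetilde{\Sigma}_m\|_2=o(1)$ with sub-exponential tails, and Lemmas~\ref{lemma:m2}--\ref{trace1} give $\mathrm{Tr}(\Sigma_m^2)\sim\mathcal{O}(1)$ and $\mathrm{Tr}(\widetilde{\Sigma}_m^{-1}\mathbb{E}_{\bm W}\Sigma_m^2)\sim\mathcal{O}(1)$, so on a high-probability event the eigenvalues of $\Sigma_m$ inherit this two-scale structure. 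If $m\leqslant n$, then $n/m\gtrsim 1$ forces $n\gamma_0\lambda_i\gtrsim 1$ for every mode, all $m$ modes are converged, the converged-mode contributions sum to $\lesssim r'\tau^2\gamma_0\, m\,n^{\zeta-1}$ and the non-converged contribution is absent; this gives the first branch. If $m>n$, the top mode ($\lambda\sim\mathcal{O}(1)$) is still converged and contributes the $\mathcal{O}(1)$ piece (its un-averaged stationary noise variance, of order $r'\tau^2\gamma_0$) together with the $n^{\zeta-1}$ piece from its averaged part, while the $m-1$ small modes are not converged and contribute $n\sum_i\gamma_0\lambda_i^2\sim n\cdot(m-1)/m^2\sim n/m$; summing gives the second branch. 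Finally I would take $\mathbb{E}_{\bm W}$, using that the complement of the high-probability events in Lemmas~\ref{lemsubexp}--\ref{trace1} contributes only lower-order terms since the underlying quantities are $\mathcal{O}(1)$-sub-exponential, and that the last inequality of Assumption~\ref{assump:bound_fourthmoment} is precisely what closes the recursion once this $\bm W$-expectation is taken.

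The main obstacle is the $m>n$ analysis: quantifying how much the averaging actually helps on the $(m-1)$-dimensional ``tail'' eigenspace that SGD does not equilibrate within $n$ steps, and carrying this out while $\Sigma_m$ is itself random, so its small eigenvalues are pinned down only up to $o(1)$ fluctuations and only in $\bm W$-expectation. This is where the sub-exponential concentration of $\Sigma_m$ (Lemma~\ref{lemsubexp}) and $\mathrm{Tr}(\widetilde{\Sigma}_m^{-1}\mathbb{E}_{\bm W}\Sigma_m^2)\sim\mathcal{O}(1)$ (Lemma~\ref{trace1}) do the heavy lifting, and where one must be careful not to lose a spurious factor of $m$ when exchanging $\mathbb{E}_{\bm W}$ with the eigenvalue bookkeeping.
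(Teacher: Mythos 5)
Your route is genuinely different from the paper's: instead of the Minkowski split ${\tt Variance}\lesssim{\tt V1}+{\tt V2}+{\tt V3}$ through the semi-stochastic sequences $\eta^{\tt vX}_t,\eta^{\tt vXW}_t$ of Eqs.~\eqref{eq:var_xada}--\eqref{eq:var_xwada}, with separate covariance-operator lemmas (Lemmas~\ref{lemcvtwtada}, \ref{lemcvx-wada}, \ref{lemcv-xada} and the step-size comparison Lemma~\ref{dinfvx}), you condition on $\bm W$, track the full conditional covariance $\Phi_t$, and do eigenvalue bookkeeping for $\Sigma_m$. That plan is attractive (it would avoid the lossy $\mathcal{O}(1)$ interaction terms that the paper's ${\tt V1},{\tt V2}$ produce for $m>n$), but two of its steps are genuine gaps rather than routine details.

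First, the heart of the theorem --- the claim that, conditionally on $\bm W$ and with $\gamma_t=\gamma_0t^{-\zeta}$, each ``converged'' mode of the averaged iterate contributes $\lesssim r'\tau^2\gamma_0\,n^{\zeta-1}$ and each ``non-converged'' mode $\lesssim r'\tau^2\gamma_0^2\,n\lambda_i^2$ --- is asserted by appeal to \cite{dieuleveut2016nonparametric,zou2021benign}, but neither reference supplies it in the form you need: the constant-step-size arguments of \cite{zou2021benign} exploit a homogeneous recursion and do not transfer to $\zeta\in(0,1)$, and the fourth-moment term $\gamma_t^2\mathcal{M}(\Phi_{t-1})$ couples the eigendirections, so a per-eigenvalue account does not follow from a diagonal recursion alone. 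This is precisely where the paper had to build new machinery (the integral estimates of Appendix~\ref{app:int} and the bound $\mathrm{Tr}[D^{{\tt v-X}}_t(\zeta)]\lesssim\mathrm{Tr}[D^{{\tt v-X}}_t(0)]$ of Lemma~\ref{dinfvx}); your proposal needs an analogous lemma, and without it the two displayed branches are not yet proved. Second, your handling of the randomness of $\Sigma_m$ --- restrict to a high-probability event where $\mathrm{Tr}(\Sigma_m)\approx\mathrm{Tr}(\widetilde\Sigma_m)$ and dismiss the complement as lower order --- does not close as stated: on the complement the step-size condition fails, the conditional recursion need not be contractive, and $\mathbb{E}_{\bm X,\bm\varepsilon}\langle\bar\eta^{\tt var}_n,\Sigma_m\bar\eta^{\tt var}_n\rangle$ can grow geometrically in $n$, so the sub-exponential tail of $\mathrm{Tr}(\Sigma_m)$ by itself does not make that contribution negligible. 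You need either an almost-sure a priori bound on the conditional variance on the bad event, or, as the paper does, to keep the random trace inside the expectation and control quantities like $\mathbb{E}_{\bm W}\big[(1-\gamma_0 r'\mathrm{Tr}(\Sigma_m))^{-2}\big]$ via Cauchy--Schwarz together with Lemma~\ref{lemsubexp}. (Attributing the $\mathcal{O}(1)$ term for $m>n$ to the top mode's unaveraged stationary variance is loose but harmless for an upper bound.)
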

{\bf Remark:} We make the following remarks:\\
\textit{i}) The error bound for ${\tt Variance}$ is demonstrated to be unimodal: increasing with $m$ in the under-parameterized regime and decreasing with $m$ in the over-parameterized regime, and finally converge to a constant order (that depends on noise parameter $\tau^2$), which matches recent results relying on closed-form solution for (refined) variance, e.g., \cite{d2020double,adlam2020understanding,lin2020causes}.\\
\textit{ii}) When compared to least squares, our result can degenerate to this setting by choosing $m:=d$. Our upper bound is able to match the lower bound in \cite[Corollary 1]{hastie2019surprises} with the same order, which demonstrates the tightness of our upper bound.
Besides, our results can recover the result of \cite{zou2021benign} by taking the effective dimension $k^* = \min\{n,d\}$ (no data spectrum assumption is required here).
More discussion on our derived results refers to Appendix~\ref{app:setting}.

\begin{figure*}[t]
	\centering
	\tikzset{font=\small,
		edge from parent,
		every node/.style=
		{top color=white,
			bottom color=blue!25,
			rectangle,rounded corners,
			minimum height=10mm,
			draw=blue!75,
			align=center
		},
		edge from parent/.style=
		{draw=black!90,
			thick
	}}
	\tikzset{level 1/.style={level distance=1.5cm}}
	\tikzset{level 2/.style={level distance=2cm}}
	\tikzset{level 3/.style={level distance=1.6cm}}
	\tikzset{level 4/.style={level distance=1.5cm}}
	\tikzset{level 5/.style={level distance=1.2cm}}
	\scalebox{.86}{
		\begin{tikzpicture}
			\Tree [.{excess risk $\mathbb{E}_{\bm X, \bm W, \bm \varepsilon} \langle \bar{\eta}_n, \Sigma_m \bar{\eta}_n  \rangle$}
			[.{${\tt Variance}$ $\mathbb{E}_{\bm X, \bm W, \bm \varepsilon} \langle \bar{\eta}^{{\tt var}}_n, \Sigma_m \bar{\eta}^{{\tt var}}_n  \rangle$}
			[.{${\tt V1}$: $\bar{\eta}^{{\tt var}}_n - \bar{\eta}^{{\tt vX}}_n$ \\
				$\begin{cases}
					\mathcal{O}(n^{\zeta-1}m) & \mbox{if}~m \leqslant n \\
					\mathcal{O}(1) &  \mbox{if}~m > n \\
				\end{cases}$ } 
			[.{$C^{{\tt v-X}}_t $ in Lem. \ref{lemcv-xada}} {$C^{{\tt v-X}}_t$ with $\zeta=0$\\ in Lem.~\ref{dinfvx}}
			]
			]
			[.{${\tt V2}$: $\bar{\eta}^{{\tt vX}}_n - \bar{\eta}^{{\tt vXW}}_n$ \\
				$\begin{cases}
					\mathcal{O}(n^{\zeta-1}m) \\
					\mathcal{O}(1) \\
				\end{cases}$ } 
			[.{$C^{{\tt vX-W}}_t $ in Lem.~\ref{lemcvx-wada}}
			]
			]
			[.{${\tt V3}$: $\bar{\eta}^{{\tt vXW}}_n$ \\
				$\begin{cases}
					\mathcal{O}(n^{\zeta-1}m) \\
					\mathcal{O}(n^{\zeta-1} + \frac{n}{m}) \\
				\end{cases}$}
			[.{$C^{{\tt vXW}}_t $ in Lem.~\ref{lemcvtwtada}}
			]
			] ] 
			[.{${\tt Bias}$ $\mathbb{E}_{\bm X, \bm W} \langle \bar{\eta}^{{\tt bias}}_n, \Sigma_m \bar{\eta}^{{\tt bias}}_n \rangle$}
			[.{${\tt B1}$: $\bar{\eta}^{{\tt bias}}_n - \bar{\eta}^{{\tt bX}}_n$ \\
				$\mathcal{O}(n^{\zeta-1})$ } 
			[.{decomposition\\ in Lem.~\ref{lemstorecada}:\\$\|{\eta}^{{\tt bias}}_t - \bar{\eta}^{{\tt bX}}_t\|_2 + H_t$} [.{$\|{\eta}^{{\tt bias}}_t - \bar{\eta}^{{\tt bX}}_t\|^2_2$\\ in Lem.~\ref{lemaxtb2ada}}  ] [.{$\| H_t\|^2_2$\\ in Lem.~\ref{lemahtb2ada}}  ]
			]
			]
			[.{${\tt B2}$: $\bar{\eta}^{{\tt bX}}_n - \bar{\eta}^{{\tt bXW}}_n$ \\
				$\mathcal{O}(n^{\zeta-1})$} {${\eta}^{{\tt bX}}_t - {\eta}^{{\tt bXW}}_t  $ \\in Lem.~\ref{lemintb2}}  ] 
			[.{${\tt B3}$: $\bar{\eta}^{{\tt bXW}}_n$ \\
				$\mathcal{O}(n^{\zeta-1})$} ]]
			]
		\end{tikzpicture}
	}
	\caption{The roadmap of proofs.}\label{figproof}
\end{figure*}

\vspace{-0.05cm}
\section{Proof outline and discussion}
\vspace{-0.05cm}
\label{sec:proofoutline}

In this section, we first introduce the structure of the proofs with high level ideas, and then discuss our work with previous literature in terms of the used techniques and the obtained results.

\vspace{-0.15cm}
\subsection{Proof outline}
We (partly) disentangle the multiple randomness sources on the data $\bm X$, the random features matrix $\bm W$, the noise $\bm \varepsilon$, make full use of statistical properties of covariance operators $\Sigma_m$ and $\widetilde{\Sigma}_m$ in Section~\ref{sec:statcov}, and provide the respective (bias and variance) upper bounds in terms of multiple randomness sources, as shown in Figure~\ref{figproof}.

{\bf Bias}: To bound ${\tt Bias}$, we need some auxiliary notations. Recall $\Sigma_m = \mathbb{E}_{\bm x} [\varphi(\bm x)\otimes \varphi(\bm x)]$ and $\widetilde{\Sigma}_m = \mathbb{E}_{\bm x, \bm W} [\varphi(\bm x)\otimes \varphi(\bm x)]$, define
\begin{equation}\label{eq:bias_xada}
	\begin{split}
			\eta_t^{{\tt bX}} = (I - \gamma_t \Sigma_m ) \eta_{t-1}^{{\tt bX}} , \quad  \eta_0^{{\tt bX}} = f^*\,, \qquad	\eta_t^{{\tt bXW}} = (I - \gamma_t \widetilde{ \Sigma}_m ) \eta_{t-1}^{{\tt bXW}} , \quad \eta_0^{{\tt bXW}} = f^*\,,
	\end{split}
\end{equation}
with the average $\bar{\eta}_n^{{\tt bX}} := \frac{1}{n} \sum_{t=0}^{n-1} \bar{ \eta}_t^{{\tt bX}}$ and $\bar{\eta}_n^{{\tt bXW}} := \frac{1}{n} \sum_{t=0}^{n-1} \bar{\eta}_t^{{\tt bXW}}$.
Accordingly, $\eta_t^{{\tt bX}}$ can be regarded as a ``deterministic'' version of $\eta_t^{{\tt bias}}$: we omit the randomness on $\bm X$ (data sampling, stochastic gradients) by replacing $[\varphi(\bm x)\varphi(\bm x)^{\!\top}]$ with its expectation ${\Sigma}_m$. Likewise, $\eta_t^{{\tt bXW}}$ is a deterministic version of $\eta_t^{{\tt vX}}$ by replacing $\Sigma_m$ with its expectation $\widetilde{\Sigma}_m$ (randomness on initialization).

By Minkowski inequality, the ${\tt Bias}$ can be decomposed as ${\tt Bias} \lesssim {\tt B1} + {\tt B2} + {\tt B3}$, where ${\tt B1} := \mathbb{E}_{\bm X, \bm W} \big[ \langle \bar{\eta}^{{\tt bias}}_n - \bar{\eta}^{{\tt bX}}_n, \Sigma_m ( \bar{\eta}^{{\tt bias}}_n - \bar{\eta}^{{\tt bX}}_n ) \rangle \big]$ and ${\tt B2} := \mathbb{E}_{\bm W} \big[ \langle \bar{\eta}^{{\tt bX}}_n \!-\! \bar{\eta}^{{\tt bXW}}_n, \Sigma_m ( \bar{\eta}^{{\tt bX}}_n \!-\! \bar{\eta}^{{\tt bXW}}_n ) \rangle \big]$ and ${\tt B3} := \langle \bar{\eta}^{{\tt bXW}}_n, \widetilde{\Sigma}_m \bar{\eta}^{{\tt bXW}}_n \rangle$.
Here ${\tt B3}$ is a deterministic quantity that is closely connected to model (intrinsic) bias without any randomness; while ${\tt B1}$ and ${\tt B2}$ evaluate the effect of randomness from $\bm X$ and $\bm W$ on the bias, respectively.
The error bounds for them can be directly found in Figure~\ref{figproof}. 

To bound ${\tt B3}$, we directly focus on its formulation by virtue of spectrum decomposition and integral estimation.
To bound ${\tt B2}$, we have
${\tt B2} = \frac{1}{n^2} \mathbb{E}_{\bm W} \Big\| \Sigma_m^{\frac{1}{2}}  \sum_{t=0}^{n-1} ({\eta}^{{\tt bX}}_t - {\eta}^{{\tt bXW}}_t) \Big\|^2$, where the key part ${\eta}^{{\tt bX}}_t - {\eta}^{{\tt bXW}}_t$ can be estimated by Lemma~\ref{lemintb2}.
To bound ${\tt B1}$, it can be further decomposed as (here we use inaccurate expression for description simplicity) ${\tt B1} \lesssim \sum_t \|{\eta}^{{\tt bX}}_t - {\eta}^{{\tt bXW}}_t \|^2_2 + \sum_t \mathbb{E}_{\bm X} \| H_t \|^2$ in Lemma~\ref{lemstorecada}, where $H_{t-1} := [{\Sigma}_m - \varphi(\bm x_t) \otimes \varphi(\bm x_t) ] {\eta}^{{\tt bX}}_{t-1}$.
The first term can be upper bounded by $\sum_t \|{\eta}^{{\tt bX}}_t - {\eta}^{{\tt bXW}}_t \|^2_2 \lesssim \mathrm{Tr}(\Sigma_m) n^{\zeta} \| f^*\|^2$ in Lemma~\ref{lemaxtb2ada}, and the second term admits $\sum_t \mathbb{E}_{\bm X} \| H_t \|^2 \lesssim \mathrm{Tr}(\Sigma_m) \| f^*\|^2$ in Lemma~\ref{lemahtb2ada}.

{\bf Variance}: To bound ${\tt Variance}$, we need some auxiliary notations.
\begin{align}
	\eta_t^{{\tt vX}} := (I - \gamma_t \Sigma_m ) \eta_{t-1}^{{\tt vX}} + \gamma_t \varepsilon_t \varphi(\bm x_t) , ~~~ \eta_0^{{\tt vX}} = 0\,,\label{eq:var_xada} \\
	\eta_t^{{\tt vXW}} := (I - \gamma_t \widetilde{ \Sigma}_m ) \eta_{t-1}^{{\tt vXW}} +\gamma_t \varepsilon_t \varphi(\bm x_t) , ~~~ \eta_0^{{\tt vXW}} = 0\,, 		\label{eq:var_xwada}
\end{align}
with the averaged quantities
$\bar{\eta}_n^{{\tt vX}} := \frac{1}{n} \sum_{t=0}^{n-1} \bar{ \eta}_t^{{\tt vX}}$, $ \bar{\eta}_n^{{\tt vXW}} := \frac{1}{n} \sum_{t=0}^{n-1} \bar{\eta}_t^{{\tt vXW}}$.
Accordingly, $\eta_t^{{\tt vX}}$ can be regarded as a ``semi-stochastic'' version of $\eta_t^{{\tt var}}$: we keep the randomness due to the noise $\varepsilon_t$ but omit the randomness on $\bm X$ (data sampling) by replacing $[\varphi(\bm x)\varphi(\bm x)^{\!\top}]$ with its expectation ${\Sigma}_m$. Likewise, $\eta_t^{{\tt vXW}}$ can be regarded as a ``semi-stochastic'' version of $\eta_t^{{\tt vX}}$ by replacing $\Sigma_m$ with its expectation $\widetilde{\Sigma}_m$ (randomness on initialization).

By virtue of Minkowski inequality, the ${\tt Variance}$ can be decomposed as ${\tt Variance} \lesssim {\tt V1} + {\tt V2} + {\tt V3}$, where ${\tt V1} := \mathbb{E}_{\bm X, \bm W,\bm \varepsilon} \big[ \langle \bar{\eta}^{{\tt var}}_n - \bar{\eta}^{{\tt vX}}_n, \Sigma_m ( \bar{\eta}^{{\tt var}}_n - \bar{\eta}^{{\tt vX}}_n ) \rangle \big]$, ${\tt V2} := \mathbb{E}_{\bm X, \bm W, \bm \varepsilon} \big[ \langle \bar{\eta}^{{\tt vX}}_n \!-\! \bar{\eta}^{{\tt vXW}}_n, \Sigma_m ( \bar{\eta}^{{\tt vX}}_n \!-\! \bar{\eta}^{{\tt vXW}}_n ) \rangle \big] $, and ${\tt V3} := \mathbb{E}_{\bm X, \bm W, \bm \varepsilon} \langle   \bar{\eta}^{{\tt vXW}}_n, {\Sigma}_m \bar{\eta}^{{\tt vXW}}_n \rangle$.
Though ${\tt V1}$, ${\tt V2}$, ${\tt V3}$ still interact the multiple randomness, ${\tt V1}$ disentangles some randomness on data sampling, ${\tt V2}$ discards some randomness on initialization, and ${\tt V3}$ focuses on the ``minimal'' interaction between data sampling, label noise, and initialization.  The error bounds for them can be found in Figure~\ref{figproof}. 

To bound ${\tt V3}$, we focus on the formulation of the covariance operator $C^{{\tt vXW}}_t := \mathbb{E}_{\bm X, \bm \varepsilon} [{\eta}^{{\tt vXW}}_t \otimes {\eta}^{{\tt vXW}}_t]$ in Lemma~\ref{lemcvtwtada} and the statistical properties of $\widetilde{\Sigma}_m$ and $\Sigma_m$.
To bound ${\tt V2}$, we need study the covariance operator $C^{{\tt vX-W}}_t := \mathbb{E}_{\bm X, \bm \varepsilon} [({\eta}^{{\tt vX}}_t - {\eta}^{{\tt vXW}}_t) \otimes ({\eta}^{{\tt vX}}_t - {\eta}^{{\tt vXW}}_t)]$ admitting $\|C^{{\tt vX-W}}_t\| \lesssim \| \Sigma_m^2 \|_2 \| \widetilde{\Sigma}_m  \|_2$ in Lemma~\ref{lemcvx-wada}.
To bound ${\tt V1}$, we need study the covariance operator $C^{{\tt v-X}}_t := \mathbb{E}_{\bm X, \bm \varepsilon} [({\eta}^{{\tt var}}_t - {\eta}^{{\tt vX}}_t) \otimes ({\eta}^{{\tt var}}_t - {\eta}^{{\tt vX}}_t)]$, as a function of $\zeta \in [0,1)$, admitting $\mathrm{Tr}[C^{{\tt v-X}}_t(\zeta)] \lesssim \mathrm{Tr}[C^{{\tt v-X}}_t(0)]$ in Lemma~\ref{dinfvx}, and further
$C^{{\tt v-X}}_t  \precsim \mathrm{Tr}(\Sigma_m) I$ in Lemma~\ref{lemcv-xada}.

\vspace{-0.05cm}
\subsection{Discussion on techniques}
\vspace{-0.05cm}

Our proof framework follows \cite{dieuleveut2016nonparametric} that focuses on kernel regression with stochastic approximation in the under-parameterized regimes ($d$ is regarded as finite and much smaller than $n$).
Nevertheless, even in the under-parameterized regime, their results can not be directly extended to random features model due to the extra randomness on $\bm W$.
For instance, their results depend on \cite[Lemma 1]{bach2013non} by taking conditional expectation to bridge the connection between $\mathbb{E}[\| \alpha_t\|_2]$ and $\mathbb{E} \langle \alpha_t, \Sigma_m \alpha_t \rangle$. This is valid for ${\tt B1}$ but expires on other quantities.

Some technical tools used in this paper follow \cite{zou2021benign} that focuses on linear regression with constant step-size SGD for benign overfitting.
However, our results differ from it in
1) tackling multiple randomness, e.g., stochastic gradients, random features (Gaussian initialization), by introducing another type of error decomposition and several deterministic/randomness covariance operators. We prove nice statistical properties of them for proof, which gets rid of data spectrum assumption in \cite{zou2021benign}. 2) tackling non-constant step-size SGD setting by introducing new integral estimation techniques. Original techniques on constant step-size in \cite{zou2021benign} are invalid due to non-homogeneous update rules.
The above two points make our proof relatively more intractable and largely different.
Besides, their results demonstrate that linear regression with SGD generalizes well (converges with $n$) but has few findings on double descent. 
Instead, our result depends on $n$ and $m$ (where $d$ is implicitly included in $m$), and is able to explain double descent.

Here we take the estimation for the variance in \cite{zou2021benign} under the least squares setting as an example to illustrate this.
\begin{equation*}
    \text{Variance} \lesssim \sum_{t=0}^{n-1} \Big\langle  I - (I-\gamma \Sigma_d)^{n-t}, I - (I - \gamma \Sigma_d)^t  \Big\rangle \qquad [\mbox{Eq. (4.10) in \cite{zou2021benign}}]
\end{equation*}
In this setting, the effective dimension to tackle $I - (I-\gamma \Sigma_d)^{n-t}$; while our result is based on fast eigenvalue decay of $\widetilde{\Sigma}_m$ in Lemma~\ref{thmH} can direct to bound this.
Besides, the homogeneous markov chain under the constant step-size setting is employed  \cite{zou2021benign} for $(I-\gamma \Sigma_d)^{n-t}$, which is naturally invalid under our decaying step-size setting. Instead, we introduce integral estimation techniques to tackle adaptive step-size, see Appendix \ref{app:int} for details.

\vspace{-0.15cm}
\section{Numerical Validation}
\label{sec:exp}
\vspace{-0.15cm}

In this section, we provide some numerical experiments in Figure~\ref{fig-gaussiankernel} to support our theoretical results and findings.
Note that our results go beyond Gaussian data assumption and can be empirically validated on real-world datasets. 
More experiments can be found in Appendix~\ref{app:experiment}.

\begin{figure*}[t]
	\centering
	\subfigure[SGD vs. min-norm solution]{\label{fig:opt}
		\includegraphics[width=0.28\linewidth]{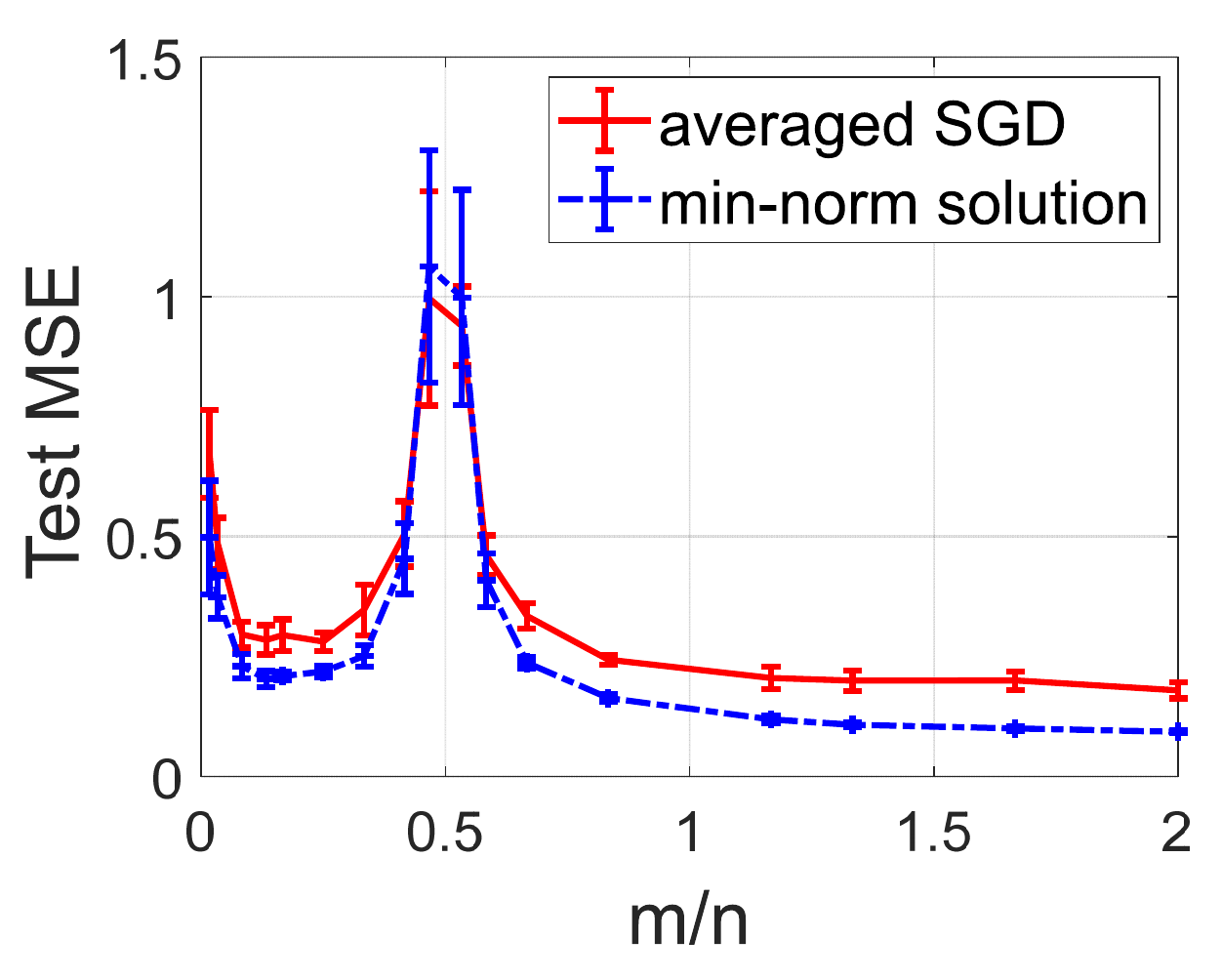}}
	\subfigure[${\tt Bias}$]{\label{fig:bias}
		\includegraphics[width=0.28\linewidth]{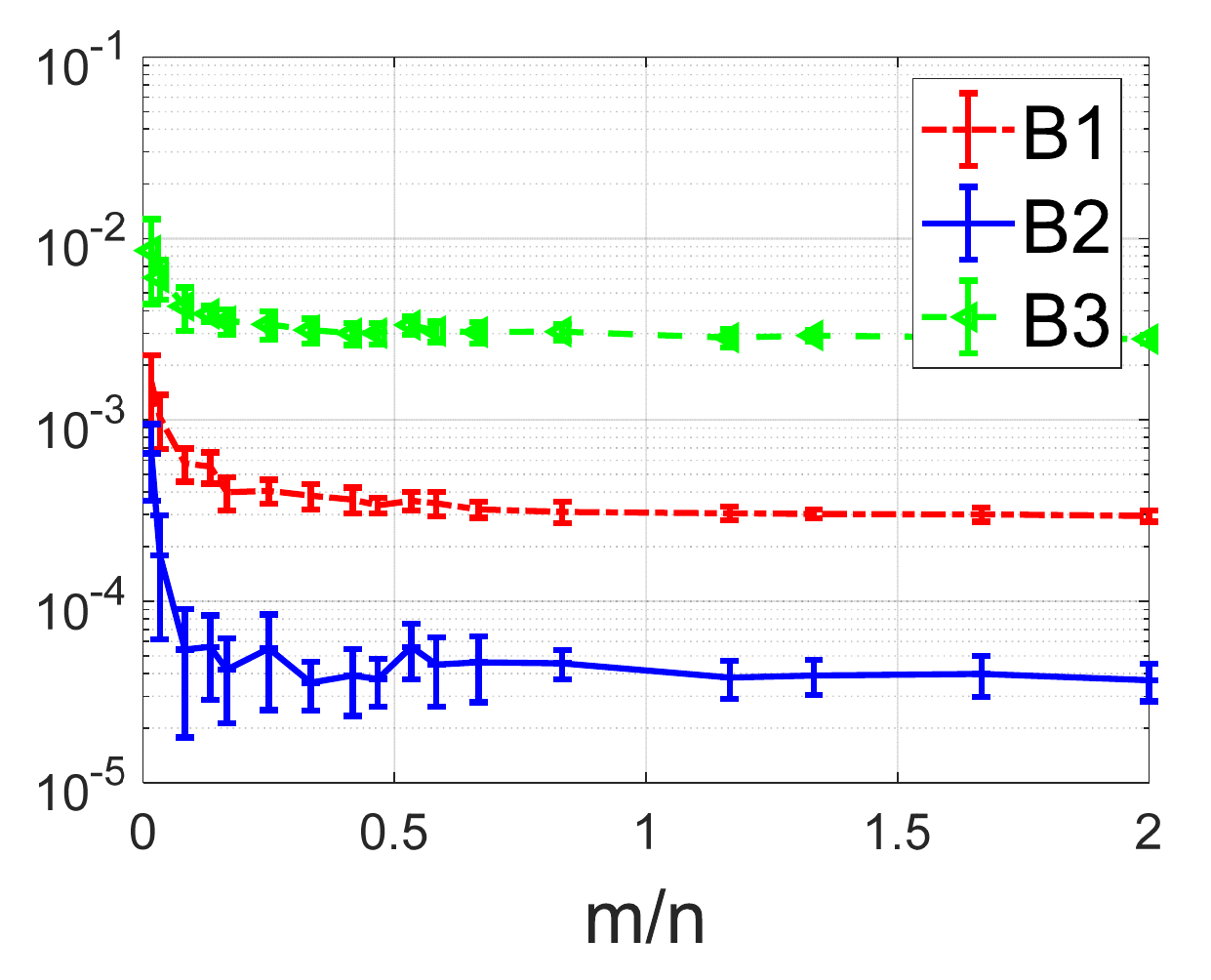}}
	\subfigure[${\tt Variance}$]{\label{fig:variance}
		\includegraphics[width=0.28\linewidth]{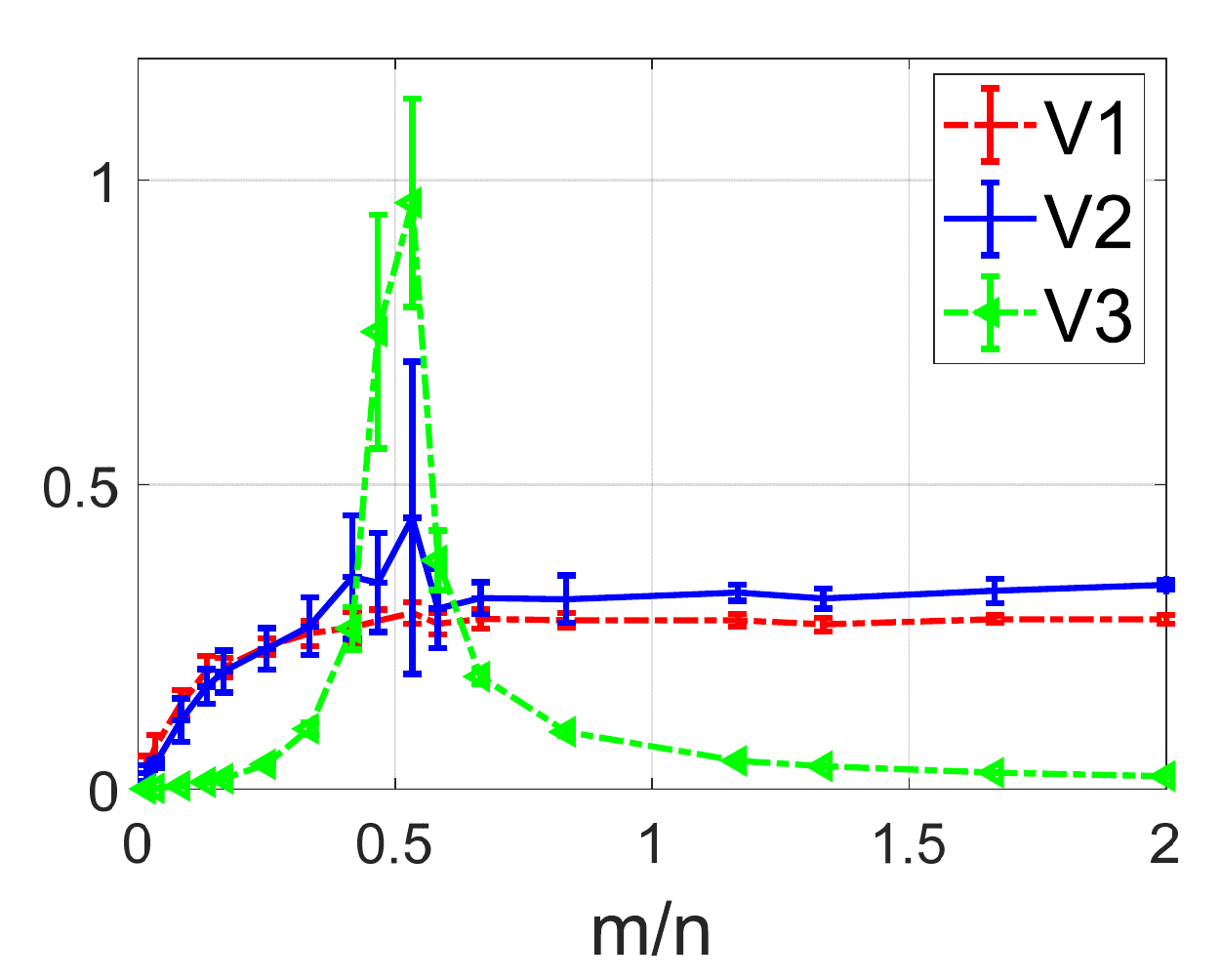}}
	\caption{Test MSE (mean$\pm$std.) of RF regression as a function of the ratio $m/n$ on MNIST data set (digit 3 vs. 7) across the Gaussian kernel, for $d=784$ and $n=600$ in (a). The interpolation threshold occurs at $m/n=0.5$ as the Gaussian kernel outputs the $2m$-feature mapping (instead of $m$), i.e., $\sigma(\bm W \bm x) \in \mathbb{R}^{2m}$. Under this setting, the trends of ${\tt Bias}$ and ${\tt Variance}$ are empirically given in (b) and (c).  }\label{fig-gaussiankernel}
	\vspace{-0.2cm}
\end{figure*}

\vspace{-0.15cm}
\subsection{Behavior of RF for interpolation learning}
\vspace{-0.15cm}
\label{sec:expdd}
Here we evaluate the test mean square error (MSE) of RFF regression on the MNIST data set \cite{L1998Gradient}, following the experimental setting of \cite{liao2020random,derezinski2020exact}, to study the generalization performance of minimum-norm solution, see Figure~\ref{fig:opt}.
More results on regression dataset refer to Appendix~\ref{app:experiment}.

{\bf Experimental settings:} We take digit 3 vs. 7 as an example, and randomly select 300 training data in these two classes, resulting in $n=600$ for training.
Hence, our setting with $n=600$, $d=784$, and tuning $m$ satisfies our realistic high dimensional assumption. The Gaussian kernel $k(\bm x, \bm x') =\exp(-\| \bm x - \bm x'\|_2^2/(2\sigma_0^2))$ is used, where the kernel width $\sigma_0$ is chosen as $\sigma_0^2 = d$ in high dimensional settings such that $\| \bm x \|^2_2/d \sim \mathcal{O}(1)$ in Assumption~\ref{assumdata}. 
In our experiment, the initial step-size is set to $\gamma_0=1$ and we take the initial point $\bm \theta_0$ near the min-norm solution\footnote{In our numerical experiments, we only employ single-pass SGD, and thus the initialization is chosen close to minimum norm solution, with more discussion in Appendix~\ref{app:experiment}.} corrupted with zero-mean, unit-variance Gaussian noise. The experiments are repeated 10 times and 
the test MSE (mean$\pm$std.) can be regarded as a function of the ratio $m/n$ by tuning $m$.
Results on different initialization and more epochs of SGD refer to Appendix~\ref{app:experiment}. 

{\bf SGD vs. minimal-norm solution:} Figure~\ref{fig:opt} shows the test MSE of RF regression with averaged SGD (we take $\zeta = 0.5$ as an example; {\color{red}red} line) and minimal-norm solution ({\color{blue}blue} line). First, we observe the double descent phenomenon: a phase transition on the two sides of the interpolation threshold at $2m = n$ when these two algorithms are employed. 
Second, in terms of test error, RF with averaged SGD is slightly inferior to that with min-norm solution, but still generalizes well.

\vspace{-0.2cm}
\subsection{Behavior of our error bounds}
\label{sec:expbiasvar}
We have experimentally validated the phase transition and corresponding double descent in the previous section, and here we aim to semi-quantitatively assess our derived bounds for ${\tt Bias}$ and ${\tt Variance}$, see Figure~\ref{fig:bias} and \ref{fig:variance}, respectively.
Results of these quantities on different step-size refer to Appendix~\ref{app:experiment}.

{\bf Experimental settings:} Since the target function $f^*$, the covariance operators $\Sigma_d$, $\Sigma_m$, and the noise $\bm \varepsilon$ are unknown on the MNIST data set, our experimental evaluation need some assumptions to calculate ${\tt Bias}$ and ${\tt Variance}$.
First, we assume the label noise $\varepsilon \sim \mathcal{N}(0,1)$, which can in turn obtain $f^*(\bm x)$ on both training and test data due to $f^*(\bm x) = y - \varepsilon$. 
Second, the covariance matrices $\Sigma_d$ and $\Sigma_m$ are estimated by the related sample covariance matrices. 
When using the Gaussian kernel, the covariance matrix $\widetilde{\Sigma}_m$ can be directly computed, see the remark in Lemma~\ref{thmH}, where the expectation on $\bm x$ is approximated by Monte Carlo sampling with $n$ training samples.
Accordingly, based on the above results, we are ready to calculate $\eta_t^{{\tt bias}}$ in Eq.~\eqref{eq:bias_iterates}, $\eta_t^{{\tt bX}}$, and $\eta_t^{{\tt bXW}}$ in Eq.~\eqref{eq:bias_xada}, respectively, which is further used to approximately compute ${\tt B1} := \mathbb{E}_{\bm X, \bm W} \big[ \langle \bar{\eta}^{{\tt bias}}_n - \bar{\eta}^{{\tt bX}}_n, \Sigma_m ( \bar{\eta}^{{\tt bias}}_n - \bar{\eta}^{{\tt bX}}_n ) \rangle \big]$ ({\color{red}red} line) and ${\tt B2} := \mathbb{E}_{\bm W} \big[ \langle \bar{\eta}^{{\tt bX}}_n \!-\! \bar{\eta}^{{\tt bXW}}_n, \Sigma_m ( \bar{\eta}^{{\tt bX}}_n \!-\! \bar{\eta}^{{\tt bXW}}_n ) \rangle \big]$ ({\color{blue}blue} line) and ${\tt B3} := \langle \bar{\eta}^{{\tt bXW}}_n, \widetilde{\Sigma}_m \bar{\eta}^{{\tt bXW}}_n \rangle$ ({\color{green}green} line).
The (approximate) computation for ${\tt Variance}$ can be similar achieved by this process.

{\bf Error bounds for bias:} Figure~\ref{fig:bias} shows the trends of (scaled) ${\tt B1}$, ${\tt B2}$, and ${\tt B3}$.
Recall our error bound: ${\tt B1}$, ${\tt B2}$, ${\tt B3}\sim \mathcal{O}(n^{\zeta-1})$, we find that, all of them monotonically decreases at a certain convergence rate when $m$ increases from the under-parameterized regime to the over-parameterized regime. 
These experimental results coincide with our error bound on them.

{\bf Error bounds for variance:} Figure~\ref{fig:variance} shows the trends of (scaled) ${\tt V1}$, ${\tt V2}$, and ${\tt V3}$.
Recall our error bound: in the under-parameterized regime, ${\tt V1}$, ${\tt V2}$, and ${\tt V3}$ increase with $m$ at a certain $\mathcal{O}(n^{\zeta-1}m)$ rate; and in the over-parameterized regime, ${\tt V1}$ and ${\tt V2}$ are in $\mathcal{O}(1)$ order while ${\tt V3}$ decreases with $m$.
Figure~\ref{fig:variance} shows that, when $2m<n$, ${\tt V1}$ and ${\tt V2}$ monotonically increases with $m$ and then remain unchanged when $2m > n$. 
Besides, ${\tt V3}$ is observed to be unimodal: firstly increasing when $2m<n$, reaching to the peak at $2m=n$, and then decreasing when $2m>n$, which admits the phase transition at $2m = n$.
Accordingly, these findings accord with our theoretical results, and also matches refined results in \cite{d2020double,adlam2020understanding,lin2020causes}: the unimodality of variance is a prevalent phenomenon.

\vspace{-0.3cm}
\section{Conclusion}
\vspace{-0.25cm}
We present non-asymptotic results for RF regression under the averaged SGD setting for understanding double descent under the optimization effect.
Our theoretical and empirical results demonstrate that, the error bounds for variance and bias can be unimodal and monotonically decreasing, respectively, which is able to recover the double descent phenomenon.
Regarding to constant/adaptive step-size setting, there is no difference between the constant step-size case and the exact minimal-norm solution on the convergence rate; while the polynomial-decay step-size case will slow down the learning rate, but does not change the error bound for variance in over-parameterized regime that converges to $\mathcal{O}(1)$ order, that depends on noise parameter(s).

Our work centers around the RF model, which is still a bit far away from practical neural networks. 
Theoretical understanding the generalization properties of over-parameterized neural networks is a fundamental but difficult problem.
We believe that a comprehensive and thorough understanding of shallow neural networks, e.g., the RF model, is a necessary first step.
Besides, we consider the single-pass SGD in our work for simplicity rather than multiple-pass SGD used in practice.
This is also an interesting direction for understanding the optimization effect of SGD in the double descent.

\section*{Acknowledgment}
The research leading to these results has received funding from the European Research Council under the European Union's Horizon 2020 research and innovation program: ERC Advanced Grant E-DUALITY (787960) and grant agreement n$^\circ$ 725594 - time-data. 
This paper reflects only the authors' views and the Union is not liable for any use that may be made of the contained information.
This work was supported by SNF project – Deep Optimisation of the Swiss National Science Foundation (SNSF) under grant number 200021\_205011; Research Council KU Leuven: Optimization frameworks for deep kernel machines C14/18/068; Flemish Government: FWO projects: GOA4917N (Deep Restricted Kernel Machines: Methods and Foundations), PhD/Postdoc grant. This research received funding from the Flemish Government (AI Research Program). 
This work was supported in part by Ford KU Leuven Research Alliance Project KUL0076 (Stability analysis and performance improvement of deep reinforcement learning algorithms), EU H2020 ICT-48 Network TAILOR (Foundations of Trustworthy AI - Integrating Reasoning, Learning and Optimization), Leuven.AI Institute.\\
We also thank Zhenyu Liao and Leello Dadi for their helpful discussions on this work.

\bibliographystyle{unsrt}
\bibliography{refs}

\newpage

\section*{Checklist}

\begin{enumerate}

	\item For all authors...
	\begin{enumerate}
		\item Do the main claims made in the abstract and introduction accurately reflect the paper's contributions and scope?
		\answerYes
		\item Did you describe the limitations of your work?
		\answerYes{We clearly discuss the limitation of this work in Conclusion.}
		\item Did you discuss any potential negative societal impacts of your work?
		\answerNo{Our work is theoretical and generally will have no negative societal impacts.}
		\item Have you read the ethics review guidelines and ensured that your paper conforms to them?
		\answerYes{}
	\end{enumerate}

	\item If you are including theoretical results...
	\begin{enumerate}
		\item Did you state the full set of assumptions of all theoretical results?
		\answerYes{The assumptions are clearly stated and well discussed.}
		\item Did you include complete proofs of all theoretical results?
		\answerYes{All of the proofs can be found in the Appendix.}
	\end{enumerate}

	\item If you ran experiments...
	\begin{enumerate}
		\item Did you include the code, data, and instructions needed to reproduce the main experimental results (either in the supplemental material or as a URL)?
		\answerYes
		\item Did you specify all the training details (e.g., data splits, hyperparameters, how they were chosen)?
		\answerYes
		\item Did you report error bars (e.g., with respect to the random seed after running experiments multiple times)?
		\answerYes
		\item Did you include the total amount of compute and the type of resources used (e.g., type of GPUs, internal cluster, or cloud provider)?
		\answerYes{}
	\end{enumerate}

	\item If you are using existing assets (e.g., code, data, models) or curating/releasing new assets...
	\begin{enumerate}
		\item If your work uses existing assets, did you cite the creators?
		\answerNA{}
		\item Did you mention the license of the assets?
		\answerNA{}
		\item Did you include any new assets either in the supplemental material or as a URL?
		\answerNA{}
		\item Did you discuss whether and how consent was obtained from people whose data you're using/curating?
		\answerNA{}
		\item Did you discuss whether the data you are using/curating contains personally identifiable information or offensive content?
		\answerNA{}
	\end{enumerate}

	\item If you used crowdsourcing or conducted research with human subjects...
	\begin{enumerate}
		\item Did you include the full text of instructions given to participants and screenshots, if applicable?
		\answerNA{}
		\item Did you describe any potential participant risks, with links to Institutional Review Board (IRB) approvals, if applicable?
		\answerNA{}
		\item Did you include the estimated hourly wage paid to participants and the total amount spent on participant compensation?
		\answerNA{}
	\end{enumerate}

\end{enumerate}
\newpage

\appendix

The outline of the appendix is stated as follows. 
\begin{itemize}
	\item Appendix~\ref{app:setting} summarizes representative results on random features regarding to double descent under various settings and discusses the tightness of our derived upper bounds.
	\item Appendix~\ref{app:assumptions}
presents more discussion on the used assumptions, especially  Assumptions~\ref{assexist} and~\ref{assump:bound_fourthmoment}, demonstrating the rationale behind these assumptions.
	\item Appendix~\ref{app:rescov} provides the proofs of lemmas in Section~\ref{sec:statcov} on statistical properties of $\Sigma_m$ and $\widetilde{\Sigma}_m$.
	\item Appendix~\ref{app:psd} introduces preliminaries on PSD operators in stochastic approximation.
	\item Appendix~\ref{app:int} provides estimation for several typical integrals that are needed for our proof.
	\item Appendix~\ref{app:bias} gives error bounds for ${\tt Bias}$.
	\item Appendix~\ref{app:variance} provides the error bounds for ${\tt Variance}$. 
	\item Appendix~\ref{app:experiment} provides more experiments including different initialization, step-size on various datasets to support our theory.
 \end{itemize}

\section{Comparisons with previous work}
\label{app:setting}

\subsection{Problem settings}
Here we summarize various representative approaches in Table~\ref{Tabsetting} according to the used data assumption, the type of solution, and the derived results.

\begin{table*}[!htb]
	\fontsize{9}{8}\selectfont
	\begin{threeparttable}
		\caption{Comparison of problem settings on analysis of high dimensional random features on double descent.}
		\label{Tabsetting}
		\begin{tabular}{cccccccccccc}
			\toprule
			&data assumption & \centering{solution} & \centering{result} \cr
			\midrule
			\cite{hastie2019surprises} &Gaussian & closed-form & variance $\nearrow$ $\searrow$  \cr
			\midrule
			\cite{ba2020generalization} &Gaussian  & GD  & variance $\nearrow$ $\searrow$  \cr
			\midrule
			\cite{mei2019generalization}  &i.i.d on sphere & closed-form & variance, bias $\nearrow$ $\searrow$  \cr 
			\midrule
			\cite{d2020double} &Gaussian  & closed-form &  refined~\tnote{2}  \cr
			\midrule
			\cite{gerace2020generalisation}  &Gaussian  & closed-form &  $\nearrow$ $\searrow$ \cr
			\midrule
			\cite{adlam2020understanding}  &Gaussian & closed-form &  refined  \cr
			\midrule
			\cite{dhifallah2020precise}  &Gaussian & closed-form &  $\nearrow$ $\searrow$ \cr
			\midrule
			\cite{hu2020universality} &Gaussian  & closed-form &  $\nearrow$ $\searrow$ \cr
			\midrule
			\cite{liao2020random} &general & closed-form &  $\nearrow$ $\searrow$ \cr
			\midrule
			\cite{lin2020causes} &isotropic features with finite moments & closed form &  refined \cr
			\midrule
			\cite{li2021towards} &correlated features with polynomial decay on $\Sigma_d$ & closed form &  interpolation learning \cr
			\midrule
			Ours & sub-Gaussian data & SGD & variance $\nearrow$ $\searrow$, bias $\searrow$ \cr
			\bottomrule
		\end{tabular}
		\begin{tablenotes}
			\footnotesize
			\item[1] A refined decomposition on variance is conducted by sources of randomness on data sampling, initialization, label noise to possess each term
			\cite{d2020double} or their full decomposition in \cite{adlam2020understanding,lin2020causes}.
		\end{tablenotes}
	\end{threeparttable}
\end{table*}

Here we discuss the used assumption on data distribution and the discussion on other assumptions is deferred to \cref{app:assumptions}. It can be found that, most papers assume the data to be Gaussian or uniformly distributed on the sphere. The following papers admit weaker assumption on data.
Given a correlated features model that is commonly used in high dimensional statistics \cite{hastie2019surprises}
\begin{equation}\label{data:model}
	\bm x = \Sigma_d^{\frac{1}{2}} \bm t\,,\quad \mathbb{E}[t_i] = 0, \mathbb{V}[t_i] = 1, \quad  \mbox{with}~ \Sigma_d := \mathbb{E}_{\bm x} [\bm x \bm x^{\!\top}]\,,
\end{equation}
where $\bm t \in \mathbb{R}^d$ has i.i.d entries $t_i$ ($i=1,2,\dots,d$) with zero mean and unit variance.
In \cite{li2021towards}, they further require that each entry is i.i.d sub-Gaussian and $\Sigma_d$ admits polynomial decay on eigenvalues.
In \cite{lin2020causes}, the authors consider isotropic features with finite moment, i.e., taking $\Sigma_d := I$ in Eq.~\eqref{data:model} and $\mathbb{E}[t_i^{8+\eta}] < \infty$ for any arbitrary positive constant $\eta > 0$.
Our model holds for sub-Gaussian, and thus the used data assumption~\ref{assump:bound_fourthmoment} is weaker than them.
We also remark that, no assumption on data distribution is employed \cite{liao2020random} but they require that test data ``behave'' statistically like the training data by concentrated random vectors. 
Indeed, their data assumption is weaker than ours, but their analysis framework builds on the exact closed-form solution from random matrix theory.
Instead, we focus on the SGD setting and thus take a unified perspective on optimization and generalization.

Here we briefly discuss our result with previous work.
Compared to \cite{ba2020generalization} on RF optimized by gradient descent under the Gaussian data in an asymptotic view, our non-asymptotic result holds for more general data distribution under the SGD setting.
In fact, our data assumption is weaker than most previous work assuming the data to be Gaussian,  uniformly spread on a sphere, or isotropic/correlated features (with spectral decay assumption), except \cite{liao2020random}. 
Nevertheless, we extend their asymptotic results relying on the least-squares closed-form solution to non-asymptotic results under the SGD setting, which takes the effect of optimization into consideration.
Besides, our result coincides several findings with refined variance decomposition in \cite{d2020double,adlam2020understanding,lin2020causes}, e.g., the interaction effect can dominate the variance (between samples and initialization); the unimodality of variance is a prevalent phenomenon.

\subsection{Discussion on the tightness of our results}
\label{app:limitation}

We present the upper bounds of excess risk in this work, and it is natural to ask whether the lower bound can be derived by our proof framework.
Unfortunately, the first step in our proof is based on Minkowski inequality such that ${\tt Bias} \leqslant 3({\tt B1} + {\tt B2} + {\tt B3})$ and ${\tt Variance} \leqslant 3({\tt V1} + {\tt V2} + {\tt V3})$.
This could be a limitation of this work, but our derived results are still tight when compared to
previous work in both under- and over-parameterized regimes.

First, we compare our result with 
classical random features regression with SGD in the under-parameterized regime \cite{carratino2018learning}.
Under the same standard assumptions, e.g., $f^* \in H$ and label noise with bounded variance,
without refined assumptions, e.g., source condition describing the smoothness of $f^*$ and capacity condition describing the ``size'' of the corresponding $\mathcal{H}$ \cite{Rudi2017Generalization}, by taking one-pass over the data (the same setting with our result) and the random features $m=\mathcal{O}(\sqrt{n})$, the excess risk \cite{carratino2018learning} achieves at a certain $\mathcal{O}(1/\sqrt{n})$ rate. 
Under the same setting with the constant-step size, i.e., $\gamma = 0$, we have
\begin{equation*}
    \mathbb{E}\| \bar{f}_n  - f^* \|^2_{L^2_{\rho_X}} = \underbrace{{\tt Bias}}_{\mathcal{O}(\frac{1}{n})} + \underbrace{{\tt Variance}}_{\mathcal{O}(\frac{1}{\sqrt{n}})} \lesssim \frac{1}{\sqrt{n}}\,,
\end{equation*}
which achieves the same learning rate with \cite{carratino2018learning}, and has been proved to be optimal in a minimax sense \cite{caponnetto2007optimal} under the standard assumptions.
That means, the constant step-size SGD setting incurs no loss in convergence rate when compared to the exact kernel ridge regression.

Second, in the over-parameterized regime, previous work using random matrix theory and replica method provide an exact formulation of the excess risk.
Nevertheless, it appears difficult to compare the specific convergence rate due to their complex formulations, and thus we in turn study the tendency.
Here we take \cite{d2020double} as an example for comparison.
They use conditional expectations to split the variance into label noise, initialization, and data sampling, and the first two terms dominates the variance.\\
$(i)$ Our result on bias matches their exact formulation, i.e., monotonically decreasing bias. One slight difference is, their result on bias tends to a constant under the over-parameterized regime while our bias result can converge to zero.\\
$(ii)$ Our result on variance admits the same tendency with their result, leading to unimodal variance, where some part(s) are with phase transition and some part(s) firstly monotonically increase during the under-parameterized regime and then remain unchanged during the over-parameterized regime.
More importantly, both of the above two results demonstrate that, the variance will finally converge to a constant order, that depends on the variance of label noise $\tau^2$.
That means, our (upper bound) result is tight to describe phase transition and the final convergence state (depending on the noise level) when compared to the exact formulation results. 

Third, though convergence rates of random features for double descent is non-easy to compare, results on
least squares \cite{bartlett2020benign,zou2021benign} under the over-parameterized regime or interpolation are possible for comparison.
Here we take our result by choosing $m:=d$ and the constant step-size  for least squares setting, and compare their lower bound results to demonstrate the tightness of our result.
By virtue of Lemma~\ref{thmH}, we can reformulate our result as
	\begin{equation*}
		\begin{split}
		  \mathbb{E}\| \bar{f}_n  - f^* \|^2_{L^2_{\rho_X}}
			& \lesssim \gamma_0 \tau^2 \left\{ \begin{array}{rcl}
				\begin{split}
					&   \frac{1}{n} + \frac{d}{n} ,~\mbox{if $d \leqslant n$} \\
					&  1+\frac{1}{n} + \frac{n}{d} ,~\mbox{if $d > n$}  \\
				\end{split}
			\end{array} \right. 
		\end{split}
	\end{equation*}
	which matches the same order with \cite[Corollary 1]{hastie2019surprises}.

Besides, when compared to \cite{zou2021benign}, if taking the effective dimension $k^* = \min\{n,d\}$ (no data spectrum assumption is required here), we can recover their result.
In fact, our result is able to match their lower bound \cite{bartlett2020benign,zou2021benign}: ${\tt excess~risk} \gtrsim \tau^2 (\frac{1}{n} + \frac{n}{d})$ with only one difference on an extra constant when $d > n$.
	
	Based on the above discussion, our upper bound matches previous work with exact formulation or lower bound under various settings, which demonstrates the tightness of our upper bound, and accordingly, our result is able to recover the double descent phenomenon.

\section{Discussion on the used assumptions}
\label{app:assumptions}

Here we give more discussion on the used assumptions, especially  Assumptions~\ref{assexist} and~\ref{assump:bound_fourthmoment}, which are fair and attainable.

{\bf Discussion on Assumption~\ref{assexist}:} 
$i)$ \emph{bounded Hilbert norm:} In high-dimensional asymptotics, this bounded Hilbert norm assumption is commonly used in kernel regression \cite{liang2020just,liang2020multiple,liu2020kernelreg}, and RF model \cite{mei2021generalization} even though $n$ and $d$ tend to infinity. Here we give an example satisfying this assumption, which is provided by \cite[Proposition 4]{bach2017breaking}, i.e., linear functions on the sphere can have bounded Hilbert norm for all $d$.

To be specific, assume $f: \mathbb{S}^{d} \rightarrow \mathbb{R}$ such that $f(\bm x) = \bm v^{\!\top} \bm x$ for a certain $\bm v \in \mathbb{S}^d$, if we consider the following reproducing kernel
\begin{equation*}
    k(\bm x, \bm x') = \int_{\mathbb{S}^d} 1_{\{\bm \omega^{\!\top} \bm x \geq 0\}} 1_{\{\bm \omega^{\!\top} \bm x' \geq 0\}} \mathrm{d} \mu(\bm \omega)\,,
\end{equation*}
where $\mu$ is the probability measure of $\bm \omega$, leading to a zero-order arc-cosine kernel \cite{cho2009kernel} by taking Gaussian measure.
Then we have 
\begin{equation*}
    \| f \|_{\mathcal{H}} = \frac{2d\pi}{d-1} \leqslant 4 \pi \,,
\end{equation*}
which verifies that our assumption on bounded Hilbert norm is attainable.

We also need to remark that, unbounded Hilbert norm of functions can be achieved \cite{bach2017breaking,donhauser2021rotational} when $d \rightarrow \infty$ in some cases. For example, if we consider the above problem setting but employ the first-order arc-cosine kernel, we have $\| f \|_{\mathcal{H}} \asymp C \sqrt{d}$ for some constant $C$ independent of $d$.

Accordingly, apart from directly regarding it as an assumption, we also give an example such that a function can have bounded Hilbert norm.
In fact, in practice $d$ is fixed (larger or smaller than $n$), and accordingly it is reasonable for a fixed ground truth with bounded Hilbert norm. 

$ii)$ \emph{optimal solution:} We assume that $\mathcal{E}(f)$ admits a unique global optimum. 
If multiple solutions exist, we choose the minimum norm solution of $\mathbb{E}(f)$, i.e., 
\begin{equation*}
f^* = \argmin_{f \in \mathcal{H}} \| f \|_{\mathcal{H}}\quad \mbox{s.t.}~ f \in \argmin_{f \in \mathcal{H}} \mathcal{E}(f)\,,
\end{equation*}
which follows the setting \cite{zou2021benign,wu2021last}.

{\bf Discussion on Assumption~\ref{assump:bound_fourthmoment}:} This assumption follows the spirit of \cite[Assumption 2.2]{zou2021benign}.
According to \cite[Theorem 5.2.15]{vershynin2018high},
assume $\bm x$ is a sub-Gaussian random vector with density of the form $p(\bm x) = \exp(-U(\bm x))$ for the strongly convex function $U: \mathbb{R}^d \rightarrow \mathbb{R}$.
Accordingly, $\Sigma_m^{-\frac{1}{2}}\bm x$ is sub-Gaussian, and then for any fixed $\bm W$ and PSD operator $A$, we have
\begin{equation*}
    \mathbb{E}_{\bm x} [\varphi^{\!\top}(\bm x) A \varphi(\bm x) \Sigma_m] \lesssim \mathrm{Tr}(A\Sigma_m) \Sigma_m \,.
\end{equation*}
The proof is similar to \cite[Lemma A.1]{zou2021benign}, and thus we omit the proof for simplicity.
The sub-Gaussian assumption is common in high dimensional statistics \cite{bartlett2020benign}, which is weaker than most previous work on double descent that requires the data to be Gaussian \cite{hastie2019surprises,d2020double,adlam2020understanding,ba2020generalization}, or uniformly spread on a sphere \cite{mei2019generalization,ghorbani2019linearized}, as discussed in Appendix~\ref{app:setting}.

In fact, our proof only requires Assumption~\ref{assump:bound_fourthmoment} valid to some specific PSD operators, e.g., $S^{\tt W}$, $\mathbb{E}_{\bm X}[{\Sigma}_m - \varphi(\bm x_t) \otimes \varphi(\bm x_t) ]^2$ defined in Appendix~\ref{app:psd}.
For description simplicity, we present the requirement on all PSD operators in Assumption~\ref{assump:bound_fourthmoment}.
Besides, one special case of Assumption~\ref{assump:bound_fourthmoment} by taking $A:=I$ is proved by Lemma~\ref{lemma:m2}, and accordingly this assumption can be regarded as a natural extension.

This assumption is also similar to the bounded fourth moment in stochastic approximation, see \cite{bach2013non,dieuleveut2016nonparametric,jain2018parallelizing,berthier2020tight,varre2021last} for details.

\section{Results on covariance operators}
\label{app:rescov}
In this section, we present the proofs of Lemmas~\ref{thmH},~\ref{lemsubexp},~\ref{lemma:m2},~\ref{trace1} on statistical properties of $\Sigma_m$ and $\widetilde{\Sigma}_m$.
\subsection{Proof of Lemma~\ref{thmH} and examples}
\label{sec:lemthmh}
Here we present the proof of Lemma~\ref{thmH} and then give two examples by taking different activation functions.

\subsubsection{Proof of Lemma~\ref{thmH}} 
\begin{proof}
	
	Recall the definition of $\widetilde{\Sigma}_m$, we have	
	\begin{equation*}
		\widetilde{\Sigma}_m := \mathbb{E}_{{\bm x}, \bm W }[\varphi({\bm x}) \otimes \varphi({\bm x})] = \frac{1}{m} \mathbb{E}_{{\bm x}, W_{ij} \sim \mathcal{N}(0,1) } \left[ \sigma \left(\frac{\bm W \bm x}{\sqrt{d}} \right)\sigma \Big(\frac{\bm W \bm x}{\sqrt{d}} \Big)^{\!\top}   \right] \in \mathbb{R}^{m \times m}\,.
	\end{equation*}
	We consider the diagonal and non-diagonal elements of $\widetilde{\Sigma}_m$ separately.
	
	{\bf Diagonal element:} The diagonal entry $(\widetilde{\Sigma}_m)_{ii} = \frac{1}{m} \mathbb{E}_{\bm x, \bm \omega_i} [\sigma(\frac{\bm \omega_i^{\!\top} \bm x}{\sqrt{d}})\sigma(\frac{\bm \omega_i^{\!\top} \bm x}{\sqrt{d}}) ] = \frac{1}{m}\mathbb{E}_{\bm x} \mathbb{E}_{\bm \omega} [\sigma(\frac{\bm \omega^{\!\top} \bm x}{\sqrt{d}})]^2 $ is the same.
	In fact, $\mathbb{E}_{\bm \omega} \left[\sigma\left(\frac{\bm \omega^{\!\top} \bm x}{\sqrt{d}}\right) \right]^2$ is actually a one-dimensional integration by considering the basis $(e_1, e_2, \cdots, e_d)$ with $\bm e_1 = \bm x/\| \bm x \|_2$, and $\bm e_2, \cdots, \bm e_d$ any completion of the basis. This technique is commonly used in \cite{williams1998computation,louart2018random}.
	The random feature $\bm \omega$ admits the coordinate representation $\bm \omega = \bar{\omega}_1 \bm e_1 + \bar{\omega}_2 \bm e_2 + \cdots + \bar{\omega}_d \bm e_d$, and thus
	\begin{equation*}
		\bm \omega^{\!\top} \bm x = (\bar{\omega}_1 \bm e_1 + \bar{\omega}_2 \bm e_2 + \cdots + \bar{\omega}_d \bm e_d)^{\!\top} ( \| \bm x \| \bm e_1 ) = \| \bm x \| \bar{\omega}_1 \,,
	\end{equation*}
	which implies
	\begin{equation*}
		\begin{split}
			\mathbb{E}_{\bm \omega} \left[\sigma \left(\frac{\bm \omega^{\!\top} \bm x}{\sqrt{d}} \right) \right]^2
			& = (2\pi)^{-\frac{d}{2}} \int_{\mathbb{R}^d} \left[ \sigma \left(\frac{\bm \omega^{\!\top} \bm x}{\sqrt{d}} \right) \right]^2 \exp \left(-\frac{1}{2} \| \bm \omega \|_2^2 \right) \mathrm{d} \bm \omega \\
			& = \frac{1}{\sqrt{2\pi}} \int_{\mathbb{R}} \left[ \sigma \left( \frac{\bar{\omega}_1 \| \bm x \|_2}{\sqrt{d}} \right) \right]^2 \exp\left(-\frac{\bar{\omega}_1^2}{2}\right) \mathrm{d} \bar{\omega}_1 \\
			& =  \frac{1}{\sqrt{2\pi}} \int_{\mathbb{R}} [\sigma(z)]^2 \exp \left( - \frac{z^2}{2 \| \bm x \|^2/d} \right) \frac{\sqrt{d}}{\| \bm x \|_2} \mathrm{d} z \\
			& = \mathbb{E}_{z \sim \mathcal{N}(0,{\| \bm x \|^2_2}/{d})} [\sigma(z)]^2\,,
		\end{split}
	\end{equation*}
	where we change the integral variable $z:=\frac{\bar{\omega}_1 \| \bm x \|_2}{\sqrt{d}}$.
	Hence we have $(\widetilde{\Sigma}_m)_{ii} =\frac{1}{m} \mathbb{E}_{\bm x} \mathbb{E}_{z \sim \mathcal{N}(0,{\| \bm x \|^2_2}/{d})} [\sigma(z)]^2$.
	
	{\bf Non-diagonal element:} The non-diagonal entry $(\widetilde{\Sigma}_m)_{ij} = \frac{1}{m}\mathbb{E}_{\bm x, \bm \omega_i, \bm \omega_j} [\sigma(\frac{\bm \omega_i^{\!\top} \bm x}{\sqrt{d}})\sigma(\frac{\bm \omega_j^{\!\top} \bm x}{\sqrt{d}})^{\!\top} ] = \frac{1}{m}\mathbb{E}_{\bm x} [\mathbb{E}_{\bm \omega} \sigma(\frac{\bm \omega^{\!\top} \bm x}{\sqrt{d}})]^2 $ is the same due to the independence between $\bm \omega_i$ and $\bm \omega_j$. Likewise, it can be represented as a one-dimensional integration
	\begin{equation*}
		(\widetilde{\Sigma}_m)_{ij} = \frac{1}{m}\mathbb{E}_{\bm x} \left[\mathbb{E}_{\bm \omega} \sigma \left(\frac{\bm \omega^{\!\top} \bm x}{\sqrt{d}}\right)\right]^2 = \frac{1}{m}\mathbb{E}_{\bm x} \left[\mathbb{E}_{z \sim \mathcal{N}(0,1)} \sigma \left(\frac{ z \|\bm x\|}{\sqrt{d}} \right)\right]^2 = \frac{1}{m}\mathbb{E}_{\bm x} \left( \mathbb{E}_{z \sim \mathcal{N}(0,{\| \bm x \|^2_2}/{d})} [\sigma(z)] \right)^2\,.
	\end{equation*}
	Accordingly, by denoting $a: = (\widetilde{\Sigma}_m)_{ii}$ and $b := (\widetilde{\Sigma}_m)_{ij}$, the covariance operator $\widetilde{\Sigma}_m$ can be represented as
	\begin{equation}\label{tilesigmamdef}
		\widetilde{\Sigma}_m = (a-b) I_m + b \bm 1 \bm 1^{\!\top} \in \mathbb{R}^{m \times m}\,,
	\end{equation}
	with its determinant $\det(\widetilde{\Sigma}_m) = (1+ \frac{mb}{a-b}) (a-b)^{m}$.
	Hence, the eigenvalues of $\widetilde{\Sigma}_m$ can be naturally obtained by the matrix determinant lemma: $\widetilde{\lambda}_1(\widetilde{\Sigma}_m) = a - b + bm$ and the remaining eigenvalues are $a-b$.
	
	According to \cite[Theorem 2.26]{wainwright2019high}, by virtue of the Lipschitz function $\sigma(\cdot)$ of Gaussian variables, we have
	\begin{equation*}
		\mathbb{P} \left[ \bigg| \sigma\left(\frac{\bm \omega^{\!\top} \bm x}{\sqrt{d}}\right) - \mathbb{E}_{\bm \omega \sim \mathcal{N}(\bm 0, \bm I_d)} \sigma \left(\frac{\bm \omega^{\!\top} \bm x}{\sqrt{d}} \right) \bigg| \geqslant t \right] \leqslant c \exp(-t^2),~~\forall t \geqslant 0\,,
	\end{equation*}
	which implies that $\sigma \big(\frac{\bm \omega^{\!\top} \bm x}{\sqrt{d}} \big) $ is a sub-Gaussian random variable due to its expectation in the $\mathcal{O}(1)$ order.
	Accordingly, for $z \sim \mathcal{N}(0,{\| \bm x \|^2_2}/{d})$, we have  $\mathbb{E}_{\bm x} \mathbb{V}[\sigma(z)] \sim \mathcal{O}(1)$ as $\sigma(z)$ is sub-Gaussian with $\mathcal{O}(1)$ norm and its finite second moment, i.e., $\mathbb{V}[\sigma(z)] \sim \mathcal{O}(1)$, which implies $\widetilde{\lambda}_2 = \frac{1}{m} \mathbb{E}_{\bm x} \mathbb{V}[\sigma(z)] \sim \mathcal{O}(1/m)$.
	Finally, we  conclude the proof.
\end{proof}

\subsubsection{Examples}
\label{sec:example}

In our analysis, we assume $\sigma(\cdot) : \mathbb{R} \rightarrow \mathbb{R}$ with single-output for description simplicity.
In fact, our results can be easily extended to multiple-output cases, e.g., the Gaussian kernel corresponding to $\sigma(x) = [\cos(x), \sin (x)]^{\!\top}$.
Here we give two examples, including single- and multiple-output: arc-cosine kernel that corresponds to the ReLU function $\sigma(x) = \max\{0,x\}$; and the Gaussian kernel.

{\bf Arc-cosine kernel:} 
We begin with calculation of arc-cosine kernels due to its related single-output activation function.
Denote $\widetilde{z} := \max \{ 0, z \}$ with $z \sim \mathcal{N}(0,{\| \bm x \|^2_2}/{d})$, it is subject to the Rectified Gaussian distribution admitting (refer to \cite{li2021towards})
\begin{equation*}
	\mathbb{E}[\widetilde{z}] = \frac{\| \bm x \|_2}{\sqrt{2d \pi} }\,,~\quad \mathbb{E}[\widetilde{z}]^2 =  \frac{\| \bm x \|_2^2}{2d}\,,~\quad  \mathbb{V}[\widetilde{z}] = \frac{\| \bm x \|_2^2}{2d} \left(1 - \frac{1}{\pi} \right).
\end{equation*}
Accordingly, recall the sample covariance operator  $\Sigma_d := \mathbb{E}_{\bm x} [\bm x \bm x^{\!\top}]$, the diagonal elements are the same 
\begin{equation*}
	(\widetilde{\Sigma}_m)_{ii} = \frac{1}{m} \mathbb{E}_{\bm x}\mathbb{E}_{z \sim \mathcal{N}(0,{\| \bm x \|^2_2}/{d})} [\sigma(z)]^2 = \frac{1}{2md} \mathbb{E}_{\bm x} \| \bm x \|_2^2 = \frac{1}{2md} \mathrm{Tr}(\Sigma_d)\,, \quad i=1,2,\dots,m \,,
\end{equation*}
and the non-diagonal elements are the same
\begin{equation*}
	(\widetilde{\Sigma}_m)_{ij} = \frac{1}{m}\mathbb{E}_{\bm x} \left( \mathbb{E}_{z \sim \mathcal{N}(0,{\| \bm x \|^2_2}/{d})} [\sigma(z)] \right)^2 = \frac{1}{2md \pi} \mathrm{Tr}(\Sigma_d)\,, \quad i,j=1,2,\dots,m ~\mbox{with} ~ i \neq j \,.
\end{equation*}

{\bf Gaussian kernels:}
Briefly, if we choose $\sigma(x) = [\cos(x), \sin (x)]^{\!\top}$, a multiple-output version, RF actually approximates the Gaussian kernel with $\varphi(\bm x) \in \mathbb{R}^{2m}$ in Eq.~\eqref{mapping}, resulting in $\widetilde{\Sigma}_m \in \mathbb{R}^{2m \times 2m}$.
In this case, $\widetilde{\Sigma}_m = S_1 \oplus S_2$ is a block diagonal matrix, where $\oplus$ is the direct sum.
By denoting $\vartheta := \| \bm x \|_2^2/d$, the matrix $S_1 \in \mathbb{R}^{m \times m}$ has the same diagonal elements $[S_1]_{ii} = \frac{1}{2m} \mathbb{E}_{\bm x} \left[ 1+ \exp \left(-2 \vartheta \right) \right]$, and the same non-diagonal elements $\frac{1}{m} \mathbb{E}_{\bm x} \left[ \exp \left(- \vartheta \right) \right]$. The matrix $S_2$ is diagonal with $ [S_2]_{ii} =  \frac{1}{2m} \mathbb{E}_{\bm x} \left[ 1- \exp \left(-2 \vartheta \right) \right]$.
In this case, $\widetilde{\Sigma}_m$ admits three distinct eigenvalues: the largest eigenvalue at $\mathcal{O}(1)$ order, and the remaining two eigenvalues at $\mathcal{O}(1/m)$ order.

According to Bochner's theorem \cite{bochner2005harmonic},  we have $\mathbb{E}[\cos(\bm \omega^{\!\top} \bm z)] = \exp(-{z^2}/{2})$ and $\mathbb{E}[\cos^2(\bm \omega^{\!\top} \bm z)] = \frac{1 + \exp(-2z^2)}{2}$ for $\bm \omega \sim \mathcal{N}(\bm 0, \bm I_d)$ and $z:= \| \bm z \|_2$.
In fact, this can be computed by two steps: first transforming the $d$-dimensional integration to a one-dimensional integral as discussed before; and then computing the integral by virtue of the Euler's formula $\exp(-\mathrm{i}x) = \cos x + \mathrm{i} \sin x$. For instance,
\begin{equation*}
	\begin{split}
		\mathbb{E}[\cos(\bm \omega^{\!\top} \bm z)] &= \mathbb{E}_{x \sim \mathcal{N}(0, \| \bm z \|_2^2)} \cos x 
		= \frac{1}{\sqrt{2 \pi} \| \bm z \|_2} \mathrm{Re} \left[ \int_{\mathbb{R}} \exp(-\frac{x^2}{2\| \bm z \|_2^2}) \exp(\mathrm{i}x) \mathrm{d} x \right] \\
		&= \exp \left(  -\frac{\| \bm z \|_2^2}{2} \right) \mathrm{Re} \left[ \frac{1}{\sqrt{2 \pi} \| \bm z \|_2}  \int_{\mathbb{R}} \exp \left( -\frac{(x- \mathrm{i} \| \bm z \|_2^2)}{2 \| \bm z \|_2^2} \right) \mathrm{d}x \right] \\
		&= \exp \left(  -\frac{\| \bm z \|_2^2}{2} \right)\,.
	\end{split}
\end{equation*}
Similarly, we have $\mathbb{E}[\sin(\bm \omega^{\!\top} \bm z)] = 0$ and $\mathbb{E}[\sin^2(\bm \omega^{\!\top} \bm z)] = \frac{1 - \exp(-2z^2)}{2}$ for $\bm \omega \sim \mathcal{N}(\bm 0, \bm I_d)$ and $z:= \| \bm z \|_2$.

Based on the above results, for the Gaussian kernel, the expected covariance operator $\widetilde{\Sigma}_m$ is a block diagonal matrix
\begin{equation*}
	\widetilde{\Sigma}_m =
	\left[
	\begin{array}{c|c}
		\bm S_1 & \bm 0 \\ \hline 
		\bm 0& \bm S_2
	\end{array}
	\right] \in \mathbb{R}^{2m \times 2m}\,,
\end{equation*}
where $\bm S_1 \in \mathbb{R}^{m \times m}$ has the same diagonal elements and the same non-diagonal elements:
\begin{equation*}
	\begin{split}
		[\bm S_1]_{ii} & = \frac{1}{m} \mathbb{E}_{\bm x, \bm \omega} \left[\cos \left( \frac{\bm \omega^{\!\top} \bm x}{\sqrt{d}} \right) \right]^2 = \frac{1}{2m} \mathbb{E}_{\bm x} \left[ 1+ \exp \left(-2 \frac{\| \bm x \|_2^2}{d} \right) \right]\,, \quad i = 1,2,\dots,m \,, \\
		[\bm S_1]_{ij} & = \frac{1}{m} \mathbb{E}_{\bm x} \left[ \mathbb{E}_{\bm \omega} \cos \left( \frac{\bm \omega^{\!\top} \bm x}{\sqrt{d}} \right) \right]^2 = \frac{1}{m} \mathbb{E}_{\bm x} \left[ \exp \left(- \frac{\| \bm x \|_2^2}{d} \right) \right]\,, \quad i,j=1,2,\dots,m ~\mbox{with} ~ i \neq j \,.
	\end{split}
\end{equation*}
The matrix $\bm S_2 \in \mathbb{R}^{m \times m}$ is diagonal with 
\begin{equation*}
	[\bm S_2]_{ii}  = \frac{1}{m} \mathbb{E}_{\bm x, \bm \omega} \left[\sin \left( \frac{\bm \omega^{\!\top} \bm x}{\sqrt{d}} \right) \right]^2 = \frac{1}{2m} \mathbb{E}_{\bm x} \left[ 1- \exp \left(-2 \frac{\| \bm x \|_2^2}{d} \right) \right]\,, \quad i = 1,2,\dots,m \,.
\end{equation*}
Accordingly, $\widetilde{\Sigma}_m$ has three distinct eigenvalues
\begin{equation*}
	\begin{split}
		\widetilde{\lambda}_1 &= \mathbb{E}_{\bm x} \left[ \exp \left(- \frac{\| \bm x \|_2^2}{d} \right) \right] + \frac{1}{2m} \mathbb{E}_{\bm x} \left[ 1 - \exp \left(- \frac{\| \bm x \|_2^2}{d} \right) \right]^2 \sim \mathcal{O}(1) \,, \\
		\widetilde{\lambda}_2 & = \frac{1}{2m} \mathbb{E}_{\bm x}  \left[ 1 - \exp \left(-2 \frac{\| \bm x \|_2^2}{d} \right) \right] \sim \mathcal{O}\left( \frac{1}{m}\right)\,, \\
		\widetilde{\lambda}_3 & = \frac{1}{2m} \mathbb{E}_{\bm x}  \left[ 1 - \exp \left(- \frac{\| \bm x \|_2^2}{d} \right) \right]^2 \sim \mathcal{O}\left( \frac{1}{m}\right)\,.
	\end{split}
\end{equation*}
In this case, we can also get the similar claim on spectra of $\widetilde{\Sigma}_m$ with the single-output version:  $\widetilde{\Sigma}_m$ admits the largest eigenvalue at $\mathcal{O}(1)$ order, and the remaining eigenvalues are at $\mathcal{O}(1/m)$ order.


\subsection{Proof of Lemma~\ref{lemsubexp}}
\begin{proof}
	As discussed before, $\sigma \big(\frac{\bm \omega^{\!\top} \bm x}{\sqrt{d}} \big) $ is a sub-Gaussian random variable with the $\mathcal{O}(1)$ sub-Gaussian norm order.
	Hence, $\| {\Sigma}_m - \widetilde{\Sigma}_m \|_2$ is a sub-exponential random variable with
	\begin{equation*}
		\begin{split}
			\| {\Sigma}_m - \widetilde{\Sigma}_m \|_2 & \leqslant \| {\Sigma}_m \|_2 + \| \widetilde{ \Sigma}_m \|_2  = \frac{1}{m} \left\| \mathbb{E}_{{\bm x} } \Big[ \sigma \left(\frac{\bm W \bm x}{\sqrt{d}} \right) \sigma \left(\frac{\bm W \bm x}{\sqrt{d}} \right)^{\!\top}   \Big] \right\|_2  + \mathcal{O}(1) \quad \mbox{[using Lemma~\ref{thmH}]} \\
			&  \leqslant \frac{1}{m} \mathbb{E}_{{\bm x} } \left\| \sigma \left(\frac{\bm W \bm x}{\sqrt{d}}\right) \right\|_2^2 +  \mathcal{O}(1) \quad \mbox{[Jensen's inequality]} \\
			& \lesssim \frac{1}{m} \left( \mathbb{E}_{{\bm x} } \| \sigma(\bm 0_m) \|^2_2 + \mathbb{E}_{{\bm x} } \left\| \frac{\bm W \bm x}{\sqrt{d}} \right\|^2_2 \right) + \mathcal{O}(1) \quad\mbox{[$\sigma$: Lipschitz continuous]}\\
			& \lesssim \mathcal{O}(1) + \frac{1}{md} \sum_{i=1}^m \bm \omega_i^{\!\top} \mathbb{E}_{\bm x} [\bm x \bm x^{\!\top}] \bm \omega_i \quad \mbox{[using $\|\Sigma_d\|_2 < \infty$]} \\
			& \lesssim \frac{1}{d}\| \bm \omega \|^2_2 \quad \mbox{[here $\bm \omega \sim \mathcal{N}(\bm 0, \bm I_d)$]}\,, 
		\end{split}
	\end{equation*}
	where $\| \bm \omega \|^2_2$ is a $\chi^2(d)$ random variable, and thus $\| {\Sigma}_m - \widetilde{\Sigma}_m \|_2$ has sub-exponential norm at $\mathcal{O}(1)$ order.
	Accordingly, the high moment $\mathbb{E}\| \Sigma_m \|^p_2 < \infty$ holds for finite $p$.
	Following the above derivation, we can also conclude that $\mathrm{Tr}(\Sigma_m)$ has the sub-exponential norm $\mathcal{O}(1)$, i.e.
	\begin{equation*}
		\mathrm{Tr}(\Sigma_m) = \frac{1}{m}  \mathbb{E}_{{\bm x} } \mathrm{Tr}\left[ \sigma \left(\frac{\bm W \bm x}{\sqrt{d}} \right) \sigma \left(\frac{\bm W \bm x}{\sqrt{d}} \right)^{\!\top}   \right] = \frac{1}{m} \mathbb{E}_{{\bm x} } \left\| \sigma \left(\frac{\bm W \bm x}{\sqrt{d}}\right) \right\|_2^2 \lesssim \frac{1}{d}\| \bm \omega \|^2_2\,.
	\end{equation*}
	Likewise, we can derive $\mathrm{Tr}(\Sigma_m^2) < \infty$ in the similar fashion.
\end{proof}

\subsection{Proof of Lemma~\ref{lemma:m2}}
\begin{proof}
	The first inequality naturally holds, and so we focus on the second inequality.
	Denote $\Phi := \mathbb{E}_{\bm x, \bm W}[\varphi(\bm x) \otimes \varphi(\bm x)\otimes \varphi(\bm x) \otimes \varphi(\bm x)]$, its diagonal elements are the same
	\begin{equation*}
		\Phi_{ii} =  \frac{m-1}{m^2}\mathbb{E}_{\bm x} \left( \mathbb{E}_{z \sim \mathcal{N}(0,{\| \bm x \|^2_2}/{d})} [\sigma(z)]^2 \right)^2  + \frac{1}{m^2}\mathbb{E}_{\bm x} \mathbb{E}_{z \sim \mathcal{N}(0,{\| \bm x \|^2_2}/{d})} [\sigma(z)]^4 \sim \mathcal{O}\left( \frac{1}{m}\right)\,.
	\end{equation*}
	Its non-diagonal elements $\Phi_{ij}$ with $i \neq j$ are the same
	\begin{equation*}
		\begin{split}
			\Phi_{ij} & = \frac{m-3}{m^2}\mathbb{E}_{\bm x} \bigg[ \left( \mathbb{E}_{z \sim \mathcal{N}(0,{\| \bm x \|^2_2}/{d})} [\sigma(z)] \right)^2 \mathbb{E}_{z \sim \mathcal{N}(0,{\| \bm x \|^2_2}/{d})} [\sigma(z)]^2 \bigg] \\ 
			& + \frac{2}{m^2} \mathbb{E}_{\bm x} \bigg[ \mathbb{E}_{z \sim \mathcal{N}(0,{\| \bm x \|^2_2}/{d})} [\sigma(z)]^3 \mathbb{E}_{z \sim \mathcal{N}(0,{\| \bm x \|^2_2}/{d})} [\sigma(z)] \bigg]\,,
		\end{split}
	\end{equation*}
	where the first term is in $\mathcal{O}(\frac{1}{m})$ order and the second term is in $\mathcal{O}(\frac{1}{m^2})$ order.
	By denoting $a: = (\widetilde{\Sigma}_m)_{ii}$, $b := (\widetilde{\Sigma}_m)_{ij}$ as given by Lemma~\ref{thmH}, $A:= \Phi_{ii}$, and $B:=\Phi_{ij}$, the operator $r \mathrm{Tr}(\widetilde{\Sigma}_m) \widetilde{\Sigma}_m - \Phi$ can be represented as
	\begin{equation*}
		r \mathrm{Tr}(\widetilde{\Sigma}_m) \Sigma_m - \Phi = \left[ rm(a-b) - A + B \right] I_m + (rmab - B) \bm 1 \bm 1^{\!\top}\,,
	\end{equation*} 
	of which the smallest eigenvalue is $rma(a-b) - A + B$.
	Accordingly, to ensure the positive definiteness of $r \mathrm{Tr}(\widetilde{\Sigma}_m) \widetilde{\Sigma}_m - \Phi$, which implies $\mathbb{E}_{\bm W} \bigg( \mathbb{E}_{\bm x} \Big( [\varphi(\bm x) \otimes \varphi(\bm x) ] A [\varphi(\bm x) \otimes \varphi(\bm x)]  \Big) \bigg) \preccurlyeq r \mathrm{Tr}(\widetilde{\Sigma}_m )\widetilde{\Sigma}_m$, we require its smallest eigenvalue is non-negative, i.e., $	rma(a-b) - A + B \geqslant 0$.
	That means, $r$ should satisfies
	\begin{equation}\label{conditionr}
		r \geqslant \frac{A-B}{ma(a-b)} = \frac{A-B}{\frac{1}{m}\mathbb{E}_{\bm x} \mathbb{E}_{z \sim \mathcal{N}(0,{\| \bm x \|^2_2}/{d})} [\sigma(z)]^2 \mathbb{E}_{\bm x}\mathbb{V}[\sigma(z)]}\,.
	\end{equation}
	Since $A - B$ admits
	\begin{equation*}
		A - B \leqslant \frac{1}{m} \mathbb{E}_{\bm x} \mathbb{E}_{z} [\sigma(z)]^2 \mathbb{E}_{\bm x}\mathbb{V}[\sigma(z)] + \mathcal{O} \left( \frac{1}{m^2} \right) \,,
	\end{equation*}
	then by taking $r:= 1 + \mathcal{O}\left( \frac{1}{m} \right)$, the condition in Eq.~\eqref{conditionr} satisfies, and thus $r \mathrm{Tr}(\widetilde{\Sigma}_m) \widetilde{\Sigma}_m - \Phi$ is positive definite, which concludes the proof.
\end{proof}

\section{Preliminaries on PSD operators}
\label{app:psd}
In this section, we first define some stochastic/deterministic PSD operators that follow \cite{jain2017markov,zou2021benign} in stochastic approximation, and then present Lemma~\ref{dinfvx} that is based on PSD operators and is needed to estimate ${\tt B1}$ and ${\tt V1}$.
Note that, the PSD operators will make the notation in our proof simple and clarity but do not change the proof itself. 

Following \cite{jain2017markov,zou2021benign}, we define several stochastic PSD operators as below.
Given the random features matrix $\bm W$ and any PSD operator $A$, define
\begin{equation*}
	\begin{split}
		& S^{\tt W} := \mathbb{E}_{\bm x} [\varphi(\bm x) \otimes \varphi(\bm x) \otimes \varphi(\bm x) \otimes \varphi(\bm x)], \quad \widetilde{S}^{\tt W} := \Sigma_m \otimes \Sigma_m\,, \\
		& S^{\tt W} \circ A := \mathbb{E}_{\bm x} \left[ \varphi(\bm x)^{\!\top} \varphi(\bm x) A \varphi(\bm x) \otimes \varphi(\bm x) \right], \quad \widetilde{S}^{\tt W} \circ A := {\Sigma}_m A {\Sigma}_m \,,
	\end{split}
\end{equation*}
where the superscript ${\tt W}$ denotes the randomness dependency on the random feature matrix $\bm W$.
Besides, for any $\gamma_i$ ($i=1,2,\dots,n$), define the following operators
\begin{equation*}
	\begin{split}
		& (I - \gamma_i T^{\tt W}) \circ A :=  \mathbb{E}_{\bm x} \left( [I - \gamma_i \varphi(\bm x) \otimes \varphi(\bm x)] A [I - \gamma_i \varphi(\bm x) \otimes \varphi(\bm x)] \right) \\ 
		& (I - \gamma_i \widetilde{T}^{\tt W}) \circ A :=  (I - \gamma_i {\Sigma}_m) A (I - \gamma_i {\Sigma}_m) \,,
	\end{split}
\end{equation*}
associated with two corresponding operators (that depend on $\gamma_i$)
\begin{equation*}
	T^{\tt W} := {\Sigma}_m \otimes I + I \otimes {\Sigma}_m - \gamma_i S^{\tt W}, \quad \widetilde{T}^{\tt W} := {\Sigma}_m \otimes I + I \otimes {\Sigma}_m - \gamma_i \widetilde{S}^{\tt W} \,.
\end{equation*}
Clearly, the above operators $S^{\tt W}$, $\widetilde{S}^{\tt W}$, $(I - \gamma_i T^{\tt W})$, $(I - \gamma_i \widetilde{T}^{\tt W})$, $T^{\tt W}$, and $\widetilde{T}^{\tt W}$ are PSD, and $S^{\tt W} \succcurlyeq \widetilde{S}^{\tt W}$.
The proof is similar to \cite[Lemma~B.1]{zou2021benign} and thus we omit it here. 

Further, if $\gamma_0 < 1/\mathrm{Tr}(\Sigma_m)$, the PSD operator $I - \gamma_i \Sigma_m$ ($i=1,2,\dots,n$) is a contraction map, and thus for any PSD operator $A$ and step-size $\gamma_i$, the following exists
\begin{equation*}
	\sum_{t=0}^{\infty} (I - \gamma_i \widetilde{T}^{\tt W})^t \circ A = \sum_{t=0}^{\infty} (I - \gamma_i \Sigma_m)^t A (I - \gamma_i \Sigma_m)^t\,.
\end{equation*}
Hence, $(\widetilde{T}^{\tt W})^{-1} := \gamma_i \sum_{t=0}^{\infty} (I - \gamma_i \widetilde{T}^{\tt W})^t$ exists and is PSD.
We need to remark that, though $\mathrm{Tr}(\Sigma_m)$ is a random variable, it is with a sub-exponential $\mathcal{O}(1)$ norm. That means, this holds with exponentially high probability.

Based on the above stochastic operators, we define several deterministic PSD ones by taking the expectation over $\bm W$ as below. For any given $\gamma_i$ ($i=1,2,\dots,n$), we have the following PSD operators
\begin{equation*}
	\begin{split}
		& S := \mathbb{E}_{\bm W}[\Sigma_m \otimes \Sigma_m], \quad \widetilde{S} := \widetilde{\Sigma}_m \otimes \widetilde{\Sigma}_m \,,\\
		& T := \widetilde{\Sigma}_m \otimes I + I \otimes \widetilde{\Sigma}_m - \gamma_i S, \quad \widetilde{T} := \widetilde{\Sigma}_m \otimes I + I \otimes \widetilde{\Sigma}_m - \gamma_i \widetilde{S} \,,\\
		& S \circ A := \mathbb{E}_{\bm W} [\Sigma_m A \Sigma_m], \quad \widetilde{S} \circ A := \widetilde{\Sigma}_m A \widetilde{\Sigma}_m\,, \\
		& (I - \gamma_i T) \circ A := \mathbb{E}_{\bm W} [ (I - \gamma_i \Sigma_m) A (I - \gamma_i \Sigma_m) ],\quad (I - \gamma_i \widetilde{T}) \circ A :=  (I - \gamma_i \widetilde{\Sigma}_m) A (I - \gamma_i \widetilde{\Sigma}_m) \,,
	\end{split}
\end{equation*}
which implies $\widetilde{T} - T = \gamma_i (S - \widetilde{S})$.

Based on the above PSD operators, we present a lemma here that is used to estimate ${\tt B1}$ and ${\tt V1}$.\footnote{Our proofs on the remaining quantities including ${\tt V2}$, ${\tt V3}$, ${\tt B2}$, ${\tt B3}$ do not use PSD operators.} 
\begin{lemma}\label{dinfvx}
	Under Assumptions~\ref{assumdata},~\ref{assexist},~\ref{assumact},~\ref{assump:bound_fourthmoment} with $r' \geqslant 1$, denote 
	\begin{equation}\label{defdvxt}
		D^{{\tt v-X}}_t := \sum_{s=1}^{t} \prod_{i=s+1}^t (I - \gamma_i T^{\tt W}) \circ  \gamma_s^2 B \Sigma_m\,,
	\end{equation}
	with a scalar $B$ independent of $k$,
	if the step-size $\gamma_t := \gamma_0 t^{-\zeta}$ with $\zeta \in [0,1)$ satisfies
	\begin{equation*}
		\gamma_0 < \min \left\{ \frac{1}{r'\mathrm{Tr}(\Sigma_m)}, \frac{1}{c'\mathrm{Tr}(\Sigma_m)} \right\} \,,
	\end{equation*}
	where the constant $c'$ is defined as
	\begin{equation}\label{eqconstant}
		\begin{split}
			c':= \left\{ \begin{array}{rcl}
				\begin{split}
					&  1 ,~\mbox{if $\zeta = 0$}\,, \\
					&   \frac{1}{1-2^{-\zeta}} ,~\mbox{if $\zeta \in (0,1)$}   \,.
				\end{split}
			\end{array} \right. 
		\end{split}
	\end{equation}
	Then $D^{{\tt v-X}}_t$ can be upper bounded by
	\begin{equation*}
		D^{{\tt v-X}}_t \preccurlyeq \frac{\gamma_0 B}{1-\gamma_0 r' \mathrm{Tr}(\Sigma_m)} I \,.
	\end{equation*}
\end{lemma}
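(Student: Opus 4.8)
The plan is to recognize $D^{{\tt v-X}}_t$ as the covariance recursion of a variance-type SGD iterate and to prove the Loewner bound by a short induction that never diagonalises $\Sigma_m$. Writing $D_t := D^{{\tt v-X}}_t$, peeling the factor with index $t$ off the product in \eqref{defdvxt} and using linearity of the super-operator $(I-\gamma_t T^{\tt W})$ gives the one-step recursion
\[
D_t = (I-\gamma_t T^{\tt W})\circ D_{t-1} + \gamma_t^2 B\,\Sigma_m\,,\qquad D_0 = 0\,.
\]
The structural fact I would lean on is that, by its definition $(I-\gamma_t T^{\tt W})\circ A = \mathbb{E}_{\bm x}\big[(I-\gamma_t\varphi(\bm x)\otimes\varphi(\bm x))\,A\,(I-\gamma_t\varphi(\bm x)\otimes\varphi(\bm x))\big]$, this map is an average of symmetric congruences and hence order-preserving on PSD operators: $A\preccurlyeq B$ implies $(I-\gamma_t T^{\tt W})\circ A\preccurlyeq(I-\gamma_t T^{\tt W})\circ B$ (no commutativity with $\Sigma_m$ is needed anywhere).

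Next I would set up the induction claim $D_t\preccurlyeq c^\star I$ for all $t\ge 0$, with $c^\star := \gamma_0 B / \big(1-\gamma_0 r'\mathrm{Tr}(\Sigma_m)\big)$, noting that $\gamma_0 r'\mathrm{Tr}(\Sigma_m)<1$ by the step-size hypothesis so $c^\star$ is well defined and $c^\star\ge\gamma_0 B$. The base case $D_0=0\preccurlyeq c^\star I$ is immediate. For the inductive step, order-preservation and the recursion give $D_t\preccurlyeq (I-\gamma_t T^{\tt W})\circ(c^\star I) + \gamma_t^2 B\,\Sigma_m$, and expanding the congruence yields $(I-\gamma_t T^{\tt W})\circ(c^\star I) = c^\star\big(I - 2\gamma_t\Sigma_m + \gamma_t^2\,\mathbb{E}_{\bm x}[(\varphi(\bm x)\otimes\varphi(\bm x))^2]\big)$.

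The crucial estimate is then the fourth-moment control: since $(\varphi(\bm x)\otimes\varphi(\bm x))^2 = \|\varphi(\bm x)\|_2^2\,\varphi(\bm x)\otimes\varphi(\bm x)$, the $A=I$, fixed-$\bm W$ form of Assumption~\ref{assump:bound_fourthmoment} — equivalently the per-$\bm W$ analogue of Lemma~\ref{lemma:m2} — gives $\mathbb{E}_{\bm x}[(\varphi\otimes\varphi)^2]\preccurlyeq r'\mathrm{Tr}(\Sigma_m)\Sigma_m$. Combined with $\gamma_t\le\gamma_0$ this produces $\gamma_t^2\,\mathbb{E}_{\bm x}[(\varphi\otimes\varphi)^2]\preccurlyeq \big(\gamma_0 r'\mathrm{Tr}(\Sigma_m)\big)\gamma_t\Sigma_m\preccurlyeq\gamma_t\Sigma_m$, hence $(I-\gamma_t T^{\tt W})\circ(c^\star I)\preccurlyeq c^\star(I-\gamma_t\Sigma_m)$. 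Plugging back in, $D_t\preccurlyeq c^\star I - \gamma_t\big(c^\star - \gamma_t B\big)\Sigma_m$, and since $c^\star\ge\gamma_0 B\ge\gamma_t B$ the subtracted term is PSD, so $D_t\preccurlyeq c^\star I$. This closes the induction and is exactly the asserted bound.

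I do not expect a genuine obstacle here; the argument is short once the recursion and the positivity of $(I-\gamma_t T^{\tt W})$ are in place. The one point that needs care — and the only place Assumption~\ref{assump:bound_fourthmoment} enters — is that the fourth-moment inequality is applied \emph{pointwise in $\bm W$}, before any expectation over $\bm W$; this is justified by the sub-Gaussianity of $\Sigma_m^{-1/2}\bm x$ discussed in the remark after that assumption. Two secondary remarks worth recording: (i) the polynomial-decay step size enters only through $\gamma_t\le\gamma_0$, so the constant $c'$ in the hypothesis is not actually needed for this particular lemma and is carried along only for uniformity with the integral estimates used in the other parts of the proof; and in fact the bound is slightly loose, since any $c\ge\gamma_0 B$ closes the induction. (ii) The natural temptation to diagonalise $\Sigma_m$ and reduce to scalar recursions fails, because $S^{\tt W}$ need not commute with $\Sigma_m\otimes I + I\otimes\Sigma_m$; the proof above sidesteps this by arguing entirely through the Loewner order and the single scalar $c^\star$.
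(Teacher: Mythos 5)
Your proof is correct, and it takes a genuinely different route from the paper's. You run a one-step Loewner-order induction on the recursion $D_t=(I-\gamma_t T^{\tt W})\circ D_{t-1}+\gamma_t^2 B\Sigma_m$, using only (a) that $(I-\gamma_t T^{\tt W})\circ(\cdot)$ is an average of symmetric congruences and hence order-preserving, (b) the per-$\bm W$, $A=I$ form of Assumption~\ref{assump:bound_fourthmoment}, i.e.\ $S^{\tt W}\circ I=\mathbb{E}_{\bm x}[\|\varphi(\bm x)\|_2^2\,\varphi(\bm x)\otimes\varphi(\bm x)]\preccurlyeq r'\mathrm{Tr}(\Sigma_m)\Sigma_m$, and (c) $\gamma_t\leqslant\gamma_0$ with $\gamma_0 r'\mathrm{Tr}(\Sigma_m)<1$ (you also implicitly use $B\geqslant 0$, which holds for the two choices of $B$ the paper makes). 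The paper instead argues in two stages: first it compares the decaying-step-size quantity with the constant-step-size one, proving $\mathrm{Tr}[D^{{\tt v-X}}_t(\zeta)]\leqslant\mathrm{Tr}[D^{{\tt v-X}}_t(0)]$ — this is the only place the constant $c'$ and the condition $\gamma_0\leqslant(1-2^{-\zeta})/\mathrm{Tr}(\Sigma_m)$ are needed — and then it bounds $D^{{\tt v-X}}_t(0)$ in operator order via the monotone-limit and $(\widetilde{T}^{\tt W})^{-1}$ machinery of Jain et al.\ and Zou et al. Your induction is more elementary: it dispenses with the existence-of-limit argument and the operator-inverse resummation, it does not need the $c'$ condition at all (any non-increasing steps bounded by $\gamma_0$ suffice), and it delivers the operator-order bound directly for general $\zeta$, whereas the paper's reduction step only transfers a trace inequality even though the lemma asserts a Loewner bound — so your route is, if anything, tighter relative to the stated claim. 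What the paper's route buys is the explicit trace comparison $\mathrm{Tr}[C^{{\tt v-X}}_t(\zeta)]\lesssim\mathrm{Tr}[C^{{\tt v-X}}_t(0)]$ advertised in the proof outline, and structural alignment with the prior constant-step-size analyses it builds on. Note that your pointwise-in-$\bm W$ use of the fourth-moment condition is exactly how the paper itself invokes Assumption~\ref{assump:bound_fourthmoment} inside this proof, so no extra hypothesis is being smuggled in.
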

{\bf Remark:} The PSD operator $I - \gamma_i T^{\tt W}$ cannot be guaranteed as a contraction map since we cannot directly choose $\gamma_0 < \frac{1}{\mathrm{Tr}[\varphi(\bm x) \varphi(\bm x)^{\!\top}]}$ for general data $\bm x$. However, its summation in Eq.~\eqref{defdvxt} can be still bounded by our lemma. In our work, we set $B:= r' \mathrm{Tr}(\Sigma_m) $ for estimate ${\tt B1}$, and $B:= \tau^2 r'  \gamma_0  [\mathrm{Tr}(\Sigma_m) + \gamma_0 \mathrm{Tr}(\Sigma_m^2)] $ to bound ${\tt V1}$, respectively.

\begin{proof}
	Our proof can be divided into two parts: one is to prove $\mathrm{Tr}[D^{{\tt v-X}}_t(\zeta)] \leqslant \mathrm{Tr}[D^{{\tt v-X}}_t(0)]$ for any $\zeta \in [0,1)$; the other is to provide the upper bound of $D^{{\tt v-X}}_t(0)$.
	We focus on the first part and the proof in the second part follows \cite[Lemmas 3 and 5]{jain2017markov} and \cite[Lemma B.4]{zou2021benign}. 
	
	The quantity $\mathrm{Tr}[	D^{{\tt v-X}}_t(\zeta)]$ admits the following representation by the definition of $I - \gamma_i T^{\tt W}$
	\begin{equation*}
		\begin{split}
			\mathrm{Tr}[	D^{{\tt v-X}}_t(\zeta)] &= \sum_{s=1}^{t} \prod_{i=s+1}^t \mathrm{Tr} \left[ (I - \gamma_i T^{\tt W}) \circ  \gamma_s^2 B\Sigma_m \right] \\
			& = \sum_{s=1}^{t} B\gamma_s^2 \prod_{i=s+1}^t \mathrm{Tr} \Bigg( \mathbb{E}_{\bm x}[I - \gamma_i \varphi(\bm x) \otimes \varphi(\bm x)] \Sigma_m [I - \gamma_i \varphi(\bm x) \otimes \varphi(\bm x)]  \Bigg) \\
			& = B \sum_{s=1}^{t} \gamma_s^2 \prod_{i=s+1}^t \mathrm{Tr} \Bigg( \Sigma_m - 2 \gamma_i \Sigma^2_m + \gamma_i^2 \Sigma_m \mathbb{E}_{\bm x} \left[ \varphi(\bm x) \otimes \varphi(\bm x) \otimes \varphi(\bm x) \otimes \varphi(\bm x) \right]  \Bigg)  \,.
		\end{split}
	\end{equation*}
	Based on the above results, we have
	\begin{equation*}
		\begin{split}
			\mathrm{Tr}[D^{{\tt v-X}}_t(0)] - \mathrm{Tr}[D^{{\tt v-X}}_t(\zeta)] & = B \sum_{s=1}^{t}  \prod_{i=s+1}^t \mathrm{Tr} \bigg( \Sigma_m \Big[(\gamma_0^2 - \gamma_s^2)I   - 2(\gamma_0^3 - \gamma_s^2 \gamma_i) \Sigma_m \\
			& \quad + (\gamma_0^4 - \gamma_i^2 \gamma_s^2)\mathbb{E}_{\bm x} \left[ \varphi(\bm x) \otimes \varphi(\bm x) \otimes \varphi(\bm x) \otimes \varphi(\bm x) \right] \Big] \bigg) \\
			& \geqslant B \sum_{s=1}^{t}  \prod_{i=s+1}^t \mathrm{Tr} \bigg( \Sigma_m \Big[(\gamma_0^2 - \gamma_s^2)I   - 2(\gamma_0^3 - \gamma_s^2 \gamma_i) \Sigma_m  + (\gamma_0^4 - \gamma_i^2 \gamma_s^2) \Sigma_m^2 \Big]  \bigg)\\
			& = B \sum_{s=1}^{t}  \prod_{i=s+1}^t \sum_{j=1}^m \bigg( \lambda_j \Big[(\gamma_0^2 - \gamma_s^2)   - 2(\gamma_0^3 - \gamma_s^2 \gamma_i) \lambda_j  + (\gamma_0^4 - \gamma_i^2 \gamma_s^2) \lambda_j^2 \Big]  \bigg) \\
			& = B \sum_{s=1}^{t}  \prod_{i=s+1}^t \sum_{j=1}^m \bigg( \lambda_j \Big[ (\gamma_0^4 - \gamma_i^2 \gamma_s^2) \left( \lambda_j - \frac{\gamma_0^3 - \gamma_s^2 \gamma_i}{\gamma_0^4 - \gamma_i^2 \gamma_s^2} \right)^2 -   \frac{\gamma_0^2 \gamma_s^2 (\gamma_0 - \gamma_i)^2}{\gamma_0^4 - \gamma_i^2 \gamma_s^2} \Big] \bigg)\,.
		\end{split}
	\end{equation*}
	Accordingly, $\mathrm{Tr}[D^{{\tt v-X}}_t(0)] - \mathrm{Tr}[D^{{\tt v-X}}_t(\zeta)] \geqslant 0$ naturally holds when $\zeta=0$. When $\zeta \in (0,1)$, it holds if $\lambda_j \leqslant \frac{\gamma_0^3 - \gamma_s^2 \gamma_i - \gamma_0^2 \gamma_s + \gamma_0 \gamma_s \gamma_i}{\gamma_0^4 - \gamma_s^2 \gamma_i^2}$ with $j=1,2,\dots,m$.
	This condition can be satisfied by
	\begin{case}[if $s=1$]
		In this case, $\gamma_1 = \gamma_0$ and we have 
		\begin{equation*}
			\lambda_j \leqslant \mathrm{Tr}(\Sigma_m) \leqslant \frac{1}{2 \gamma_0} \leqslant \frac{1}{\gamma_0 + \gamma_i} = \frac{\gamma_0^3 - \gamma_s^2 \gamma_i - \gamma_0^2 \gamma_s + \gamma_0 \gamma_s^2}{\gamma_0^4 - \gamma_s^2 \gamma_i^2} \,, \quad \mbox{when}~s=1\,.
		\end{equation*}
	\end{case}
	\begin{case}[if $s=2,3,\dots$] In this case, notice  
		\begin{equation*}
			\frac{\gamma_0^4 - \gamma_s^2 \gamma_i^2}{\gamma_0^3 - \gamma_s^2 \gamma_i - \gamma_0^2 \gamma_s + \gamma_0 \gamma_s^2} \leqslant \frac{\gamma_0^4 }{\gamma_0(\gamma_0 - \gamma_s)(\gamma_0^2 + \gamma_s \gamma_i)} \leqslant \frac{\gamma_0^3}{(\gamma_0 - \gamma_2)(\gamma_0^2 + \gamma_2 \gamma_3)} = \frac{1}{1-2^{-\zeta}} \,,
		\end{equation*}
	\end{case}
	Accordingly, we have
	\begin{equation*}
		\lambda_j \leqslant \mathrm{Tr}(\Sigma_m) \leqslant \frac{1-2^{-\zeta}}{ \gamma_0} \leqslant \frac{\gamma_0^3 - \gamma_s^2 \gamma_i - \gamma_0^2 \gamma_s + \gamma_0 \gamma_s^2}{\gamma_0^4 - \gamma_s^2 \gamma_i^2} \,,
	\end{equation*}
	where the second inequality holds by Eq.~\eqref{eqconstant}.
	Accordingly, combining the above two cases, if we choose
	\begin{equation*}
		\gamma_0 \leqslant \frac{1}{ \frac{1}{1-2^{-\zeta}} \mathrm{Tr}(\Sigma_m) } \,, \quad \mbox{for}~~ \zeta \in (0,1)\,,
	\end{equation*}
	we have $\mathrm{Tr}[D^{{\tt v-X}}_t(0)] - \mathrm{Tr}[D^{{\tt v-X}}_t(\zeta)] \geqslant 0$.
	
	In the next, we give the upper bound for $D^{{\tt v-X}}_t(0)$. The proof follows \cite[Lemmas 3 and 5]{jain2017markov} and \cite[Lemma B.4]{zou2021benign}. We just present it here for completeness. 
	We firstly demonstrate that $D^{{\tt v-X}}_t(0)$ is increasing and bounded, which implies that the limit $D^{{\tt v-X}}_{\infty}(0)$ exists, and then we seek for the upper bound of this limit. To be specific, $D^{{\tt v-X}}_t(0)$ admits the following expression
	\begin{equation*}
		D^{{\tt v-X}}_t(0) := \sum_{k=1}^{t} (I - \gamma_0 T^{\tt W})^{k-1} \circ  \gamma_0^2 B \Sigma_m = D^{{\tt v-X}}_{t-1}(0) +  (I - \gamma_0 T^{\tt W})^{t-1} \circ \gamma_0^2 B \Sigma_m \succcurlyeq D^{{\tt v-X}}_{t-1}(0)\,,
	\end{equation*}
	which implies that $D^{{\tt v-X}}_t(0)$ is increasing.
	
	Let $A_t := (I - \gamma_0 T^{\tt W})^{t-1} \circ B\Sigma_m $, and then $A_t = (I - \gamma_0 T^{\tt W}) \circ A_{t-1}$. We have
	\begin{equation*}
		\begin{split}
			\mathrm{Tr}(A_t) & = \mathrm{Tr}[(I - \gamma_0 T^{\tt W})\circ A_{t-1}] = \mathrm{Tr}(A_{t-1}) - 2\gamma_0 \mathrm{Tr}(\Sigma_m A_{t-1}) + \gamma_0^2 \mathrm{Tr}(S^{\tt W} \circ A_{t-1}) \\
			& \leqslant \mathrm{Tr}(A_{t-1}) - 2\gamma_0 \mathrm{Tr}(\Sigma_m A_{t-1}) + \gamma_0^2 r' \mathrm{Tr}(\Sigma_m A_{t-1}) \mathrm{Tr}(\Sigma_m) \quad \mbox{[using Assumption~\ref{assump:bound_fourthmoment}]} \\
			& \leqslant \mathrm{Tr}[(I - \gamma_0 \Sigma_m)A_{t-1}] \leqslant (1-\gamma_0 \lambda_m) \mathrm{Tr}(A_{t-1})\,, \quad \mbox{[using $\gamma_0 \leqslant \frac{1}{r' \mathrm{Tr}(\Sigma_m)}$]}
		\end{split}
	\end{equation*}
	which implies
	\begin{equation*}
		\mathrm{Tr}[D^{{\tt v-X}}_{t}(0)] \leqslant \gamma_0^2 \sum_{t=0}^{\infty} \mathrm{Tr} \left( (I - \gamma_0 {T}^{\tt W})^t \circ B\Sigma_m \right) \leqslant  \mathrm{Tr}(B\Sigma_m) \sum_{t=0}^{\infty} (1-\gamma_0 \lambda_m)^t \leqslant \frac{\gamma_0 \mathrm{Tr}(B\Sigma_m)}{\lambda_m} < \infty \,.
	\end{equation*}
	Accordingly, the monotonicity and boundedness of $\{ D^{{\tt v-X}}_{t}(0) \}_{t=0}^{\infty}$ implies that the limit exists, denoted as $D^{{\tt v-X}}_{\infty}(0)$ with
	\begin{equation*}
		D^{{\tt v-X}}_{\infty}(0) = (I - \gamma_0 {T}^{\tt W}) \circ D^{{\tt v-X}}_{\infty}(0) + \gamma_0^2 B \Sigma_m \,,
	\end{equation*}
	which implies $D^{{\tt v-X}}_{\infty}(0) = \gamma_0({T}^{\tt W})^{-1} \circ B\Sigma_m$
	Further, we have
	\begin{equation}\label{Twdvinf}
		\begin{split}
			\widetilde{T}^{\tt W} \circ D^{{\tt v-X}}_{\infty}(0) &= {T}^{\tt W} \circ D^{{\tt v-X}}_{\infty}(0) + \gamma_0 S^{\tt W} \circ D^{{\tt v-X}}_{\infty}(0) - \gamma_0 \widetilde{S}^{\tt W} \circ D^{{\tt v-X}}_{\infty}(0)  \quad \mbox{[definition of $\widetilde{T}^{\tt W}$]}\\
			& = \gamma_0 B\Sigma_m + \gamma_0 S^{\tt W} \circ D^{{\tt v-X}}_{\infty}(0) - \gamma_0 \widetilde{S}^{\tt W} \circ D^{{\tt v-X}}_{\infty}(0) \\
			& \preccurlyeq \gamma_0 B \Sigma_m + \gamma_0 S^{\tt W} \circ D^{{\tt v-X}}_{\infty}(0)\,. \quad \mbox{[using $S^{\tt W} \succcurlyeq \widetilde{S}^{\tt W}$]} \\
		\end{split}
	\end{equation}
	Besides, $(\widetilde{T}^{\tt W})^{-1} \circ \Sigma_m$ can be bounded by
	\begin{equation}\label{tw-Sigmam}
		\begin{split}
			(\widetilde{T}^{\tt W})^{-1} \circ \Sigma_m & = \gamma_0 \sum_{t=0}^{\infty} (I - \gamma_0 \widetilde{T}^{\tt W}) \circ \Sigma_m = \gamma_0 \sum_{t=0}^{\infty} (I - \gamma_0 \Sigma_m)^t \Sigma_m(I - \gamma_0 \Sigma_m)^t \\
			& \preccurlyeq \gamma_0 \sum_{t=0}^{\infty} (I - \gamma_0 \Sigma_m)^t\Sigma_m = I \,. \quad \mbox{[using $\gamma_0 \leqslant 1/\mathrm{Tr}(\Sigma_m)$]}
		\end{split}
	\end{equation}
	
	Therefore, $D^{{\tt v-X}}_{\infty}(0) $ can be further upper bounded by
	\begin{equation}\label{dtv-xinf}
		\begin{split}
			D^{{\tt v-X}}_{\infty}(0) & \preccurlyeq \gamma_0 ( \widetilde{T}^{\tt W} )^{-1} \circ B \Sigma_m + \gamma_0 ( \widetilde{T}^{\tt W} )^{-1} \circ S^{\tt W}  \circ D^{{\tt v-X}}_{\infty}(0) \quad \mbox{[using Eq.~\eqref{Twdvinf}]} \\
			& \preccurlyeq \gamma_0 B + \gamma_0 ( \widetilde{T}^{\tt W} )^{-1} \circ S^{\tt W}  \circ D^{{\tt v-X}}_{\infty}(0) \quad \mbox{[using Eq.~\eqref{tw-Sigmam}]} \\
			& = \gamma_0 B \sum_{t=0}^{\infty} [\gamma_0 ( \widetilde{T}^{\tt W} )^{-1} \circ S^{\tt W} ]^t  \circ I \quad \mbox{[solving the recursion]} \\
			& \preccurlyeq \gamma_0 B \sum_{t=0}^{\infty} \left( \gamma_0 ( \widetilde{T}^{\tt W} )^{-1} \circ S^{\tt W} \right)^{t-1} \circ \gamma_0 ( \widetilde{T}^{\tt W} )^{-1} \circ S^{\tt W} \circ I \\
			& \preccurlyeq \gamma_0 B \sum_{t=0}^{\infty}  \left( \gamma_0 ( \widetilde{T}^{\tt W} )^{-1} \circ S^{\tt W} \right)^{t-1} \circ  \gamma_0 ( \widetilde{T}^{\tt W} )^{-1} \circ \mathrm{Tr}(\Sigma_m) \Sigma_m \quad \mbox{[using Assumption~\ref{assump:bound_fourthmoment}]} \\
			& \preccurlyeq \gamma_0 B \sum_{t=0}^{\infty}  \left[  \gamma_0 r' \mathrm{Tr}(\Sigma_m) \right]^{t} \circ I \quad \mbox{[using Eq.~\eqref{tw-Sigmam}]} \\
			& \preccurlyeq \frac{\gamma_0 B}{1 - \gamma_0 r' \mathrm{Tr}(\Sigma_m)} I\,. \quad \mbox{[using $\gamma_0 < \frac{1}{r'\mathrm{tr}(\Sigma_m)}$]}
		\end{split}
	\end{equation}
	Hence, based on the above results, $D^{{\tt v-X}}_t(0)$ can be further upper bounded by
	\begin{equation}\label{dvxtfinal}
		\begin{split}
			D^{{\tt v-X}}_t(0)	& = (I - \gamma_0 {T}^{\tt W}) \circ D^{{\tt v-X}}_{t-1}(0)  + \gamma_0^2 B \Sigma_m  \\
			& = (I - \gamma_0 \widetilde{T}^{\tt W}) \circ D^{{\tt v-X}}_{t-1}(0) + \gamma_0^2 ({S}^{\tt W} - \widetilde{S}^{\tt W} ) \circ D^{{\tt v-X}}_{t-1} +\gamma_0^2 B \Sigma_m \\
			& \preccurlyeq (I - \gamma_0 \widetilde{T}^{\tt W}) \circ D^{{\tt v-X}}_{t-1}(0) + \gamma_0^2 {S}^{\tt W} \circ D^{{\tt v-X}}_{\infty}(0) + \gamma_0^2 B\Sigma_m \\
			& \preccurlyeq (I - \gamma_0 \widetilde{T}^{\tt W}) \circ D^{{\tt v-X}}_{t-1}(0) + \gamma_0^2 r' \mathrm{Tr}[D^{{\tt v-X}}_{\infty}(0)] \mathrm{Tr}(\Sigma_m) \Sigma_m + \gamma_0^2 B \Sigma_m \quad \mbox{[using Assumption~\ref{assump:bound_fourthmoment}]} \\
			&  \preccurlyeq (I - \gamma_0 \widetilde{T}^{\tt W}) \circ D^{{\tt v-X}}_{t-1}(0) + \gamma_0^2 B \Sigma_m \left( \frac{\mathrm{Tr}(\Sigma_m)r'\gamma_0}{1-\gamma_0 r' \mathrm{Tr}(\Sigma_m)} + 1 \right) \quad \mbox{[using Eq.~\eqref{dtv-xinf}]} \\
			& \preccurlyeq \gamma_0^2 B \left( \frac{\mathrm{Tr}(\Sigma_m)r'\gamma_0}{1-\gamma_0 r' \mathrm{Tr}(\Sigma_m)} + 1 \right) \sum_{k=0}^{\infty} (I - \gamma_0 \Sigma_m)^k \Sigma_m \\
			& \preccurlyeq \gamma_0 B \left( \frac{\mathrm{Tr}(\Sigma_m)r'\gamma_0}{1-\gamma_0 r' \mathrm{Tr}(\Sigma_m)} + 1 \right) I\,,
		\end{split}
	\end{equation}
	which concludes the proof.
\end{proof}

\section{Some useful integrals estimation}
\label{app:int}
In this section, we present the estimation for the following integrals that will be needed in our proof by denoting $\kappa := 1-\zeta \in (0,1]$.

{\bf Integral 1:}
We consider the following integral admitting an exact estimation
\begin{equation}\label{eq:int1}
    \int_1^{t} u^{-\zeta} \exp \bigg( -c  \frac{u^{1-\zeta} - 1}{1-\zeta} \bigg) \mathrm{d}u \leqslant t\,.
\end{equation}
Besides, we also calculate this integral as below:
by changing the integral variable $v^\kappa := c  \frac{u^{1-\zeta} - 1}{1-\zeta} $ and
\begin{equation*}
	\frac{\mathrm{d}v}{\mathrm{d}u} = u^{1-\kappa} \left( \frac{\kappa}{c} \right)^{\frac{1}{\kappa}} \left( u^\kappa -1 \right)^{\frac{\kappa - 1}{\kappa}} = \frac{1}{c}u^{1-\kappa} \kappa v^{\kappa-1} \,,
\end{equation*}
and then we have
\begin{equation}\label{Iifirst}
	\begin{split}
		\int_1^{t} u^{-\zeta} \exp \bigg( -c  \frac{u^{1-\zeta} - 1}{1-\zeta} \bigg) \mathrm{d}u 
		& = \frac{1}{c} \int_0^{[\frac{c}{\kappa} (t^{\kappa}-1)]^{\frac{1}{\kappa}}} u^{-\zeta} u^{1-\kappa} \kappa v^{\kappa-1} \exp(-v^\kappa) \mathrm{d} v \\
		& \leqslant \frac{1}{c} \int_{0}^{\infty} \exp(-x) \mathrm{d} x = \left( \frac{1}{c} \wedge t \right)\,,
	\end{split}
\end{equation}
where the last equality uses \cref{eq:int1} and takes the smaller one via the notation $\wedge$.
Accordingly, if we take $\zeta = 0$ in Eq.~\eqref{Iifirst}, we have
\begin{equation}\label{Iexp}
	\int_1^{t} \exp \bigg( -c  \frac{u^{1-\zeta} - 1}{1-\zeta} \bigg) \mathrm{d}u \leqslant \left( \frac{1}{c}t^{\zeta} \wedge t \right)\,.
\end{equation}

Similar to Eq.~\eqref{Iexp}, we have
\begin{equation}\label{intut}
	\begin{split}
		&	\int_t^{n} \exp \bigg( -\widetilde{\lambda}_i \gamma_0  \frac{u^{1-\zeta} - t^{1-\zeta}}{1-\zeta} \bigg) \mathrm{d}u  \leqslant (n-t) \wedge \frac{n^\zeta}{\widetilde{\lambda}_i \gamma_0}\,.
	\end{split}
\end{equation}

{\bf Integral 2:} we consider the following integral 
\begin{equation}\label{Iisecond}
	\begin{split}
		&  \int_1^{t} u^{-\zeta} \exp \bigg( -c  \frac{(t+1)^{1-\zeta} - (u+1)^{1-\zeta}}{1-\zeta} \bigg) \mathrm{d}u \\
		& =  \frac{(t+1)^{1-\kappa}}{c} \int_{0}^{C} [(t+1)(1-x)^{\frac{1}{\kappa}} - 1]^{\kappa - 1} (1-x)^{\frac{1-\kappa}{\kappa}} \kappa v^{\kappa - 1} \exp(-v^{\kappa}) \mathrm{d} v \quad \mbox{with}~~x:= (\frac{v}{t+1} )^{\kappa} \frac{\kappa}{c}   \\
		& \leqslant \frac{2^{\zeta}}{c}\int_0^{\infty} \kappa v^{\kappa-1} \exp(-v^\kappa ) \mathrm{d} v  \\
		& = \left( \frac{2^{\zeta}}{c} \wedge t \right)\,,
	\end{split}
\end{equation}
where we change the integral variable $v^{\kappa} := c \frac{(t+1)^{1-\zeta} - (u+1)^{1-\zeta}}{1-\zeta}$ with $\kappa := 1 - \zeta$ such that
\begin{equation*}
	\mathrm{d}u = - \frac{\kappa^{1/\kappa}}{c^{1/\kappa}} \big( \frac{u+1}{t+1} \big)^{1-\kappa} \left[ 1 - \big( \frac{u+1}{t+1}\big)^{\kappa} \right]^{1 - \frac{1}{\kappa}} \mathrm{d} v = - \frac{\kappa}{c} \left[ 1 - \big( \frac{v}{t+1}\big)^{\kappa} \frac{\kappa}{c} \right]^{\frac{1-\kappa}{\kappa}} \left( \frac{v}{t+1} \right)^{\kappa - 1} \mathrm{d} v \,,
\end{equation*}
with $(\frac{u+1}{t+1})^{\kappa} = 1 - (v/(t+1))^{\kappa} \kappa/c$ and the upper limit of integral is $C:= c^{1/\kappa} [(t+1)^\kappa - (u+1)^\kappa]^{1/\kappa}$. Due to $u = (t+1)(1-x)^{\frac{1}{\kappa}}-1 \in [1,t]$, we have $(1-x)^{\frac{1}{\kappa}} \in [2/(t+1),1]$ and accordingly
\begin{equation*}
	g(x):= [(t+1)(1-x)^{\frac{1}{\kappa}} - 1]^{\kappa - 1} (1-x)^{\frac{1-\kappa}{\kappa}} \leqslant 2^{1-\kappa} (t+1)^{\kappa -1} \quad \mbox{with}~~ x \in \left[0, 1- \left(\frac{2}{t+1} \right)^\kappa \right]\,,
\end{equation*}
as an increasing function of $x$.

Similar to Eq.~\eqref{Iisecond}, we have the following estimation
\begin{equation}\label{intu2zetatu}
	\int_1^t \gamma_0^2 u^{-2\zeta} \exp \bigg( -2\widetilde{\lambda}_i \gamma_0  \frac{(t+1)^{1-\zeta} - (u+1)^{1-\zeta}}{1-\zeta} \bigg) \mathrm{d}u \lesssim \left( \frac{\gamma_0}{\widetilde{\lambda}_i} \wedge \gamma_0^2 t \right)\,.
\end{equation}

\section{Proofs for ${\tt Bias}$}
\label{app:bias}
In this section, we present the error bound for ${\tt Bias}$.
By virtue of Minkowski inequality, we have
\begin{equation}\label{biasdecom}
	\begin{split}
		\Big( \mathbb{E}_{\bm X, \bm W} \big[ \langle \bar{\eta}^{{\tt bias}}_n, \Sigma_m \bar{\eta}^{{\tt bias}}_n \rangle \big] \Big)^{\frac{1}{2}} &  \!\!\leqslant \!\! \Big( \underbrace{ \mathbb{E}_{\bm X, \bm W} \big[ \langle \bar{\eta}^{{\tt bias}}_n - \bar{\eta}^{{\tt bX}}_n, \Sigma_m ( \bar{\eta}^{{\tt bias}}_n - \bar{\eta}^{{\tt bX}}_n ) \rangle \big] }_{ \triangleq \tt B1} \Big)^{\frac{1}{2}} + \Big( \mathbb{E}_{\bm W} \big[ \langle \bar{\eta}^{{\tt bX}}_n, \Sigma_m \bar{\eta}^{{\tt bX}}_n \rangle \big] \Big)^{\frac{1}{2}} \\
		& \!\!\leqslant \!\! ({\tt B1})^{\frac{1}{2}} \!+\! \Big( \underbrace{ \mathbb{E}_{\bm W} \big[ \langle \bar{\eta}^{{\tt bX}}_n \!-\! \bar{\eta}^{{\tt bXW}}_n, \Sigma_m ( \bar{\eta}^{{\tt bX}}_n \!-\! \bar{\eta}^{{\tt bXW}}_n ) \rangle \big] }_{ \triangleq \tt B2} \Big)^{\frac{1}{2}} \!\!+\! [ \underbrace{ \langle   \bar{\eta}^{{\tt bXW}}_n, \widetilde{\Sigma}_m \bar{\eta}^{{\tt bXW}}_n \rangle}_{\triangleq {\tt B3}} ]^{\frac{1}{2}} \,.
	\end{split}
\end{equation}

In the next, we give the error bounds for ${\tt B3}$, ${\tt B2}$, and ${\tt B1}$, respectively.

\subsection{Bound for  ${\tt B3}$}
\label{app:b3}
In this section, we aim to bound $ {\tt B3} := \langle \bar{\eta}^{{\tt bXW}}_n, \widetilde{\Sigma}_m \bar{\eta}^{{\tt bXW}}_n \rangle$.
\begin{proposition}\label{propb3}
	Under Assumption~\ref{assumdata},~\ref{assexist},~\ref{assumact}, if the step-size $\gamma_t := \gamma_0 t^{-\zeta}$ with $\zeta \in [0,1)$ satisfies
	$\gamma_0 \leqslant \frac{1}{\mathrm{Tr}(\widetilde{\Sigma}_m)}$, 
	then ${\tt B3}$ can be bounded by
	\begin{equation*}
		{\tt B3} \lesssim  \frac{n^{\zeta-1}}{\gamma_0} \| f^* \|^2 \,.
	\end{equation*}
\end{proposition}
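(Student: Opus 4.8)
The plan is to reduce \texttt{B3} to a scalar sum over the spectrum of $\widetilde{\Sigma}_m$ and control it with the integral estimates of \cref{app:int}. Unrolling the deterministic recursion \eqref{eq:bias_xada} gives $\eta_t^{{\tt bXW}} = \prod_{s=1}^{t}(I-\gamma_s\widetilde{\Sigma}_m)f^*$, so $\bar{\eta}_n^{{\tt bXW}} = \tfrac1n\sum_{t=0}^{n-1}\prod_{s=1}^{t}(I-\gamma_s\widetilde{\Sigma}_m)f^*$ is a polynomial in $\widetilde{\Sigma}_m$ applied to $f^*$. Since $\widetilde{\Sigma}_m$ is symmetric PSD, I would diagonalize it as $\widetilde{\Sigma}_m=\sum_i\widetilde{\lambda}_i\, v_i\otimes v_i$ in an orthonormal eigenbasis, where by \cref{thmH} each $\widetilde{\lambda}_i$ equals $\widetilde{\lambda}_1\sim\mathcal{O}(1)$ or $\widetilde{\lambda}_2\sim\mathcal{O}(1/m)$ (three values in the Gaussian-kernel case, still of the same two orders). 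Writing $f^*=\sum_i a_iv_i$ with $\sum_i a_i^2=\|f^*\|^2$, the quantity to bound becomes
\begin{equation*}
	{\tt B3} = \sum_i \widetilde{\lambda}_i\, a_i^2\,\Big[\tfrac1n\sum_{t=0}^{n-1}\textstyle\prod_{s=1}^{t}(1-\gamma_s\widetilde{\lambda}_i)\Big]^2 .
\end{equation*}
The hypothesis $\gamma_0\le 1/\mathrm{Tr}(\widetilde{\Sigma}_m)\le 1/\widetilde{\lambda}_i$ ensures $0\le 1-\gamma_s\widetilde{\lambda}_i\le 1$ for every $s$, so each partial product lies in $[0,1]$ and the bracketed average is automatically at most $1$.

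Next I would estimate the bracketed average for each fixed $i$. Using $\prod_{s=1}^{t}(1-\gamma_s\widetilde{\lambda}_i)\le\exp\!\big(-\widetilde{\lambda}_i\sum_{s=1}^{t}\gamma_s\big)$ together with $\sum_{s=1}^{t}s^{-\zeta}\ge\frac{t^{1-\zeta}-1}{1-\zeta}$, the average is dominated by a Riemann sum of $u\mapsto\exp\!\big(-\widetilde{\lambda}_i\gamma_0\frac{u^{1-\zeta}-1}{1-\zeta}\big)$; comparing it to the corresponding integral and invoking the estimate \eqref{Iexp} of \cref{app:int} (valid for all $\zeta\in[0,1)$, including the constant case $\zeta=0$) yields
\begin{equation*}
	\tfrac1n\sum_{t=0}^{n-1}\textstyle\prod_{s=1}^{t}(1-\gamma_s\widetilde{\lambda}_i)\;\lesssim\;\frac{n^{\zeta-1}}{\gamma_0\widetilde{\lambda}_i}\wedge 1 .
\end{equation*}

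Finally I would substitute this into the displayed formula for \texttt{B3} and use the elementary inequality $(x\wedge 1)^2\le x$ for $x\ge 0$ to cancel exactly one factor of $\widetilde{\lambda}_i$:
\begin{equation*}
	\widetilde{\lambda}_i\Big(\frac{n^{\zeta-1}}{\gamma_0\widetilde{\lambda}_i}\wedge 1\Big)^2\;\le\;\widetilde{\lambda}_i\cdot\frac{n^{\zeta-1}}{\gamma_0\widetilde{\lambda}_i}\;=\;\frac{n^{\zeta-1}}{\gamma_0},
\end{equation*}
whence ${\tt B3}\lesssim\frac{n^{\zeta-1}}{\gamma_0}\sum_i a_i^2=\frac{n^{\zeta-1}}{\gamma_0}\|f^*\|^2$, which is the claimed bound.

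The main obstacle is this last cancellation. A crude bound such as $\tfrac1n\sum_t\prod(1-\gamma_s\widetilde{\lambda}_i)\lesssim\frac{1}{n\gamma_0\widetilde{\lambda}_i}$ would, after substitution, leave $\sum_i a_i^2/(\gamma_0^2 n^2\widetilde{\lambda}_i)$, and the tiny eigenvalue $\widetilde{\lambda}_2\sim 1/m$ would inflate this by a factor $m$, destroying the rate; it is precisely the truncation ``$\wedge 1$'' (coming from $\prod(1-\gamma_s\widetilde{\lambda}_i)\le 1$, which is where the step-size condition enters) combined with the squaring that produces a bound independent of the spectrum of $\widetilde{\Sigma}_m$. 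A secondary technical point is turning the discrete product and sum under the decaying step size $\gamma_s=\gamma_0 s^{-\zeta}$ into the integral of $\exp(-\widetilde{\lambda}_i\gamma_0\frac{u^{1-\zeta}-1}{1-\zeta})$; this is handled uniformly in $\zeta$ by \cref{app:int}, whereas the homogeneous-chain arguments available for constant step size do not apply.
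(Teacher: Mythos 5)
Your proof is correct and follows essentially the same route as the paper's: unroll the recursion, diagonalize $\widetilde{\Sigma}_m$, bound the partial products by exponentials, invoke the integral estimate \eqref{Iexp}, and cancel one factor of $\widetilde{\lambda}_i$ against the $n^{\zeta-1}/(\gamma_0\widetilde{\lambda}_i)$ term. The only cosmetic difference is that you keep the square of the averaged sum and use $(x\wedge 1)^2\le x$ uniformly over the spectrum, whereas the paper first applies a Cauchy--Schwarz step to move the square inside the time sum and then treats $\widetilde{\lambda}_1\sim\mathcal{O}(1)$ and $\widetilde{\lambda}_2\sim\mathcal{O}(1/m)$ as separate cases -- the same cancellation either way.
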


\begin{proof}
	Due to $\gamma_0 \leqslant \frac{1}{\mathrm{Tr}(\widetilde{\Sigma}_m)}$, the operator $I - \gamma_t \widetilde{\Sigma}_m$ is a contraction map for $t=1,2,\dots,n$.
	Take spectral decomposition $\widetilde{\Sigma}_m = \widetilde{U} \widetilde{\Lambda} \widetilde{U}^{\!\top}$ where $\widetilde{U}$ is an orthogonal matrix and $\widetilde{\Lambda}$ is a diagonal matrix with $(\widetilde{\Lambda})_{11} = \widetilde{\lambda}_1$ and $( \widetilde{\Lambda})_{ii} = \widetilde{\lambda}_2$ ($i=2,3,\dots,m$) as $\widetilde{\Sigma}_m$ has only two distinct eigenvalues in Lemma~\ref{thmH}. Accordingly, we have
	\begin{equation}\label{boundB3ada}
		\begin{split}
			\langle \bar{\eta}^{{\tt bXW}}_n, \widetilde{\Sigma}_m \bar{\eta}^{{\tt bXW}}_n \rangle &  = \frac{1}{n^2} \left \langle  \sum_{t=0}^{n-1} \prod_{i=1}^{t}(I - \gamma_i \widetilde{\Sigma}_m) f^*, \widetilde{\Sigma}_m \sum_{t=0}^{n-1} \prod_{i=1}^{t}(I - \gamma_i \widetilde{\Sigma}_m) f^*  \right \rangle \\
			& = \frac{1}{n^2} \left\| \sum_{t=0}^{n-1} \prod_{i=1}^{t}(I - \gamma_i \widetilde{\Sigma}_m) \widetilde{\Sigma}_m^{\frac{1}{2}} f^* \right\|^2 \\
			& \leqslant \frac{1}{n^2} \left\| \sum_{t=0}^{n-1} \prod_{i=1}^{t}(I - \gamma_i \widetilde{\Lambda}) \widetilde{\Lambda}^{\frac{1}{2}}  \right\|^2_2  \left\| f^* \right\|^2 \quad \mbox{[using $\widetilde{ \Sigma}_m = \widetilde{U} \widetilde{\Lambda} \widetilde{U}^{\!\top}$]} \\
			& \leqslant \frac{1}{n} \max_{k=1,2} \sum\limits_{t=0}^{n-1} \prod_{i=1}^{t}(1-\gamma_i \widetilde{\lambda}_k)^2 \widetilde{\lambda}_k \| f^* \|^2
			\\
			& \leqslant \frac{1}{n} \sum\limits_{t=0}^{n-1} \prod_{i=1}^{t}(1-\gamma_i \widetilde{\lambda}_1)^2 \widetilde{\lambda}_1 \| f^* \|^2 + \frac{1}{n} \sum\limits_{t=0}^{n-1} \prod_{i=1}^{t}(1-\gamma_i \widetilde{\lambda}_2)^2 \widetilde{\lambda}_2 \| f^* \|^2 \,.
		\end{split} 
	\end{equation}
	Note that
	\begin{equation}\label{lambda12exp}
		\begin{split}
			\sum_{t=0}^{n-1} \prod_{i=1}^{t}(1-\gamma_i \widetilde{\lambda}_j)^2 & \leqslant  	\sum_{t=0}^{n-1} \exp \left(-2\gamma_0 \widetilde{\lambda}_j \sum_{i=1}^t i^{-\zeta}\right) \leqslant  \sum_{t=0}^{n-1} \exp \left(-2\gamma_0 \widetilde{\lambda}_j \int_{1}^{t+1} \frac{1}{x^{\zeta}} \mathrm{d} x\right) \\
			& = \sum_{t=0}^{n-1} \exp \left(-2\gamma_0 \widetilde{\lambda}_j \frac{(t+1)^{1-\zeta}-1}{1-\zeta} \right) \\
			& \leqslant 1 + \int_0^n \exp \left(-2\gamma_0 \widetilde{\lambda}_j \frac{(t+1)^{1-\zeta}-1}{1-\zeta} \right) \mathrm{d} x \\
			& \leqslant 1 + \left( \frac{n^{\zeta}}{2\gamma_0 \widetilde{\lambda}_j} \wedge n \right) \,, \quad \mbox{[using Eq.~\eqref{Iexp}]}
		\end{split}
	\end{equation}
	here according to Lemma~\ref{thmH}, for $\widetilde{\lambda}_1$, the upper bound $\frac{n^{\zeta}}{2\gamma_0 \widetilde{\lambda}_1}$ is tighter than $n$ due to $\widetilde{\lambda}_1 \sim \mathcal{O}(1)$; while this conclusion might not hold for $\widetilde{\lambda}_2$ due to $\widetilde{\lambda}_2 \sim \mathcal{O}(1/m)$.
	Then, taking Eq.~\eqref{lambda12exp} back to Eq.~\eqref{boundB3ada}, we have
	\begin{equation}\label{boundB3adafin}
		\begin{split}
			\langle \bar{\eta}^{{\tt bXW}}_n, \widetilde{\Sigma}_m \bar{\eta}^{{\tt bXW}}_n \rangle &
			\lesssim \frac{n^{\zeta-1}}{\gamma_0} \| f^* \|^2 + \frac{\widetilde{\lambda}_2}{n} \left( \frac{n^{\zeta}}{\gamma_0 \widetilde{\lambda}_2} \wedge n \right) \| f^*\|^2  \\
			& \lesssim \frac{n^{\zeta-1}}{\gamma_0} \| f^* \|^2 \sim \mathcal{O}(n^{\zeta -1} ) \,,
		\end{split} 
	\end{equation}
	which concludes the proof.
\end{proof}

\subsection{Bound for  ${\tt B2}$}
\label{app:b2}

Here we aim to bound $ {\tt B2} := \mathbb{E}_{\bm W} \big[ \langle \bar{\eta}^{{\tt bX}}_n \!-\! \bar{\eta}^{{\tt bXW}}_n, \Sigma_m ( \bar{\eta}^{{\tt bX}}_n \!-\! \bar{\eta}^{{\tt bXW}}_n ) \rangle \big] = \mathbb{E}_{\bm W} \big[ \langle \bar{\alpha}^{{\tt W}}_n, \widetilde{\Sigma}_m \bar{\alpha}^{{\tt W}}_n  \rangle \big] + \mathbb{E}_{\bm W} \big[ \langle \bar{\alpha}^{{\tt W}}_n, (\Sigma_m - \widetilde{\Sigma}_m) \bar{\alpha}^{{\tt W}}_n  \rangle \big]$, where 
\begin{equation}\label{bxwdiffada}
	\alpha^{{\tt W}}_t :=	{\eta}^{{\tt bX}}_t - {\eta}^{{\tt bXW}}_{t} = (I - \gamma_t \Sigma_m) ({\eta}^{{\tt bX}}_{t-1} - {\eta}^{{\tt bXW}}_{t-1}) + \gamma_t (\widetilde{\Sigma}_m - \Sigma_m) {\eta}^{{\tt bXW}}_{t-1}\,,
\end{equation}
with $\alpha^{{\tt W}}_0 = 0$. 
Here $\alpha^{{\tt W}}_t$ can be further formulated as 
\begin{equation}\label{awtada}
	\alpha^{{\tt W}}_t = \sum_{k=1}^t \gamma_k \prod_{j=k+1}^{t} (I - \gamma_j \Sigma_m) (\widetilde{\Sigma}_m - \Sigma_m) \prod_{s=1}^{k-1} (I- \gamma_s \widetilde{\Sigma}_m) f^* \,,
\end{equation}
where we use the recursion
\begin{equation*}
	A_t := (I - \gamma_t \Sigma_m) A_{t-1} + B_t = \sum_{s=1}^t \prod_{i=s+1}^t (I - \gamma_i \Sigma_m) B_s\,.
\end{equation*}
Accordingly, ${\tt B2}$ admits
\begin{equation}\label{b2equ}
	{\tt B2} = \mathbb{E}_{\bm W} \big[ \langle \bar{\alpha}^{{\tt W}}_n, {\Sigma}_m \bar{\alpha}^{{\tt W}}_n  \rangle \big] = \frac{1}{n^2} \mathbb{E}_{\bm W} \left\langle \sum_{t=0}^{n-1} \alpha_t^{{\tt W}}, \Sigma_m \sum_{t=0}^{n-1} \alpha_t^{{\tt W}}  \right\rangle = \frac{1}{n^2} \mathbb{E}_{\bm W} \left\| \Sigma_m^{\frac{1}{2}}  \sum_{t=0}^{n-1} \alpha_t^{{\tt W}} \right\|^2\,,
\end{equation}
and we have the following error bound for ${\tt B2}$.
\begin{proposition}\label{propb2}
	Under Assumption~\ref{assumdata},~\ref{assexist},~\ref{assumact}, if the step-size $\gamma_t := \gamma_0 t^{-\zeta}$ with $\zeta \in [0,1)$ satisfies
	\begin{equation*}
		\gamma_0 \leqslant \min \left\{ \frac{1}{\mathrm{Tr}(\Sigma_m)}, \frac{1}{\mathrm{Tr}(\widetilde{\Sigma}_m)} \right\} \,,
	\end{equation*}
	then ${\tt B2}$ can be bounded by
	\begin{equation*}
		\begin{split}
			{\tt B2} \lesssim \frac{ \| f^*\|^2}{\gamma_0} n^{\zeta-1}\,.
		\end{split}
	\end{equation*}
\end{proposition}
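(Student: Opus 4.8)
The starting point is the closed-form expression \eqref{b2equ}, namely ${\tt B2} = \frac{1}{n^2}\mathbb{E}_{\bm W}\bigl\|\Sigma_m^{1/2}\sum_{t=0}^{n-1}\alpha_t^{\tt W}\bigr\|^2$ with $\alpha_t^{\tt W}$ given explicitly by \eqref{awtada}. The plan is to bound the inner sum by splitting $\Sigma_m = \widetilde{\Sigma}_m + (\Sigma_m-\widetilde{\Sigma}_m)$, so that ${\tt B2} \lesssim \mathbb{E}_{\bm W}\langle\bar\alpha_n^{\tt W},\widetilde\Sigma_m\bar\alpha_n^{\tt W}\rangle + \mathbb{E}_{\bm W}\|\Sigma_m-\widetilde\Sigma_m\|_2\,\|\bar\alpha_n^{\tt W}\|_2^2$, and to control each of the two resulting quantities. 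For the first term I would work in the eigenbasis of $\widetilde\Sigma_m$ (only two distinct eigenvalues $\widetilde\lambda_1\sim\mathcal{O}(1)$, $\widetilde\lambda_2\sim\mathcal{O}(1/m)$ by Lemma~\ref{thmH}), and for the $\Sigma_m$-dependent propagators $\prod_{j=k+1}^t(I-\gamma_j\Sigma_m)$ use the operator-norm bound $\|\prod_{j=k+1}^t(I-\gamma_j\Sigma_m)\|_2\le 1$ valid under $\gamma_0\le 1/\mathrm{Tr}(\Sigma_m)$.

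The core estimate is to show $\sum_{t=0}^{n-1}\alpha_t^{\tt W}$, when paired against $\widetilde\Sigma_m$, contributes at the $\mathcal{O}(n^{\zeta-1})$ scale. Substituting \eqref{awtada} and interchanging the order of summation, $\sum_{t=0}^{n-1}\alpha_t^{\tt W} = \sum_{k=1}^{n-1}\gamma_k\Bigl(\sum_{t=k}^{n-1}\prod_{j=k+1}^t(I-\gamma_j\Sigma_m)\Bigr)(\widetilde\Sigma_m-\Sigma_m)\prod_{s=1}^{k-1}(I-\gamma_s\widetilde\Sigma_m)f^*$. The partial sums of propagators $\sum_{t=k}^{n-1}\prod_{j=k+1}^t(I-\gamma_j\Sigma_m)$ are bounded in operator norm by $\gamma_k^{-1}\Sigma_m^{-1}$ (telescoping, as in the standard SGD-averaging argument), and $\prod_{s=1}^{k-1}(I-\gamma_s\widetilde\Sigma_m)$ contracts in the eigendirections of $\widetilde\Sigma_m$. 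The decay factor $\exp(-c\widetilde\lambda_j\gamma_0\frac{k^{1-\zeta}-1}{1-\zeta})$ then gets summed over $k$ via the integral estimates of Appendix~\ref{app:int} (Eqs.~\eqref{Iexp}, \eqref{intut}), which is precisely where the $n^{\zeta}/(\gamma_0\widetilde\lambda_j)$ factors appear; combined with the $1/n^2$ prefactor and the cancellation of one power of $\widetilde\Sigma_m$ against $\Sigma_m^{-1}$ this yields the $\frac{1}{\gamma_0}n^{\zeta-1}\|f^*\|^2$ scaling. The $\widetilde\lambda_2\sim\mathcal{O}(1/m)$ branch needs the same care as in Proposition~\ref{propb3}: take the $\min(n^{\zeta}/(\gamma_0\widetilde\lambda_2),n)$ so that the small eigenvalue does not blow up the bound. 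For the cross term $\mathbb{E}_{\bm W}\|\Sigma_m-\widetilde\Sigma_m\|_2\|\bar\alpha_n^{\tt W}\|_2^2$, I would use Lemma~\ref{lemsubexp} ($\|\Sigma_m-\widetilde\Sigma_m\|_2$ sub-exponential at $\mathcal{O}(1)$ order, hence bounded moments) together with a crude $\ell_2$ bound on $\bar\alpha_n^{\tt W}$ obtained the same way, checking it is of no larger order.

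The main obstacle I anticipate is handling the ``mismatch'' in \eqref{awtada} between the two different propagators — the $\Sigma_m$-driven product on the left of $(\widetilde\Sigma_m-\Sigma_m)$ and the $\widetilde\Sigma_m$-driven product on the right — since they are not simultaneously diagonalizable, so one cannot just pass to a common eigenbasis. The resolution is that the factor $(\widetilde\Sigma_m-\Sigma_m)$ sitting between them is small ($\mathcal{O}(1/\sqrt{d})$ in a suitable sense, with bounded sub-exponential norm by Lemma~\ref{lemsubexp}), so one bounds the $\Sigma_m$-propagator crudely by $1$ in operator norm, keeps only the genuine contraction from $\prod(I-\gamma_s\widetilde\Sigma_m)$, and absorbs $\|\widetilde\Sigma_m-\Sigma_m\|_2$ into the expectation over $\bm W$ at the end. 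A secondary technical nuisance, shared with the other proofs, is that $\mathrm{Tr}(\Sigma_m)$ is random, so all the step-size conditions $\gamma_0\le 1/\mathrm{Tr}(\Sigma_m)$ hold only on a high-probability event; one controls the complementary event using the sub-exponential tail from Lemma~\ref{lemsubexp} and the finiteness of high moments of $\|\Sigma_m\|_2$, which contributes only lower-order terms.
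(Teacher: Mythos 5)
You start from the same place as the paper (the identity \eqref{b2equ} together with the explicit form \eqref{awtada}), and you correctly identify the central difficulty: the $\Sigma_m$-driven product to the left of $(\widetilde{\Sigma}_m-\Sigma_m)$ and the $\widetilde{\Sigma}_m$-driven product to the right are not simultaneously diagonalizable. However, your proposed resolution — bound the $\Sigma_m$-propagators crudely in operator norm (by $1$, or their partial sums by $\gamma_k^{-1}\Sigma_m^{-1}$), keep only the $\widetilde{\Sigma}_m$-contraction, and absorb the middle factor via smallness of $\widetilde{\Sigma}_m-\Sigma_m$ — has a genuine gap. First, the smallness you invoke is not available: Lemma~\ref{lemsubexp} only shows $\|\Sigma_m-\widetilde{\Sigma}_m\|_2$ is sub-exponential with norm of order $\mathcal{O}(1)$; no $\mathcal{O}(1/\sqrt{d})$-type concentration is proved or used anywhere in the paper. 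Second, even granting some concentration, once the $\Sigma_m$-side decay is discarded the partial sums $\sum_{t\geqslant k}\prod_{j=k+1}^{t}(I-\gamma_j\Sigma_m)$ are of size up to $n-k$ in the flat directions of $\Sigma_m$, while the $\widetilde{\lambda}_2\sim 1/m$ direction of the right-hand product contracts too slowly to compensate (its sum is only $\min(\gamma_0 n^{1-\zeta},\,\mathcal{O}(m))$); chasing your bound then yields a quantity of constant order or larger, not $n^{\zeta-1}/\gamma_0$. Third, the telescoping bound $\sum_{t\geqslant k}\prod_{j=k+1}^{t}(I-\gamma_j\Sigma_m)\preccurlyeq\gamma_k^{-1}\Sigma_m^{-1}$ is a constant-step-size identity; for $\gamma_t=\gamma_0 t^{-\zeta}$ the valid analogue carries $\gamma_n^{-1}=n^{\zeta}/\gamma_0$ (cf.\ Eq.~\eqref{intut}), and in any case the subsequent ``cancellation of one power of $\widetilde{\Sigma}_m$ against $\Sigma_m^{-1}$'' is unjustified: the two operators neither commute nor satisfy $\Sigma_m\succcurlyeq c\,\widetilde{\Sigma}_m$, so the $\Sigma_m^{-1}$ you introduce cannot be controlled. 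The cross term $\mathbb{E}_{\bm W}\|\Sigma_m-\widetilde{\Sigma}_m\|_2\,\|\bar{\alpha}_n^{\tt W}\|_2^2$ of your initial split suffers from the same issue, since without the $\Sigma_m^{1/2}$ weight $\|\bar{\alpha}_n^{\tt W}\|_2$ is not $o(1)$.

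The paper's proof uses a different mechanism, which is precisely what your decoupling destroys. It keeps the $\Sigma_m^{1/2}$ weight, reduces (Eq.~\eqref{sigmaawt}) to the scalar quantity $\varUpsilon_i=\sum_{t=0}^{n-1}\sum_{k=1}^{t}\gamma_k(\widetilde{\lambda}_i-\lambda_i)\lambda_i^{1/2}\prod_{j=k+1}^{t}(1-\gamma_j\lambda_i)\prod_{s=1}^{k-1}(1-\gamma_s\widetilde{\lambda}_i)$, and proves Lemma~\ref{lemintb2} by a change of variables mixing both exponential factors, in which the prefactor $(\widetilde{\lambda}_i-\lambda_i)$ is cancelled exactly by the Jacobian — so no smallness of $\Sigma_m-\widetilde{\Sigma}_m$ is ever required. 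The essential point is that the same eigenvalue $\lambda_i$ appears both in the weight $\lambda_i^{1/2}$ and in the propagator estimate $\frac{n^{\zeta}}{\gamma_0\lambda_i}\wedge n$; after squaring and dividing by $n^2$ this coupling gives $\frac{n^{2(\zeta-1)}}{\gamma_0^2\lambda_i}\wedge\lambda_i\leqslant n^{\zeta-1}/\gamma_0$ uniformly in $\lambda_i$, whereas in your argument the direction in which the propagator sum is largest (small $\lambda_i$) is decoupled from the weight that would suppress it. To repair your route you would essentially have to retain the $\lambda$-dependent decay on the left product and re-derive an estimate of the type of Lemma~\ref{lemintb2}.
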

{\bf Remark:} In our paper, we require $I - \gamma_t \Sigma_m$ ($t=1,2,\dots m$) to be a contraction map.
Though $\mathrm{Tr}(\Sigma_m)$ is a random variable, it is with a sub-exponential $\mathcal{O}(1)$ norm, that means, the condition $\gamma_0 < 1/\mathrm{Tr}(\Sigma_m)$ can be equivalently substituted by $\gamma_0 < 1/[c\mathrm{Tr}(\widetilde{\Sigma}_m)]$ for some large $c$ (independent of $n$, $m$, $d$) with exponentially high probability.
This is also used for estimating other quantities.

Before we present the error bounds for ${\tt B2}$, we need the following lemma.
\begin{lemma}\label{lemintb2}
	Under Assumption~\ref{assumdata},~\ref{assexist},~\ref{assumact}, if the step-size $\gamma_t := \gamma_0 t^{-\zeta}$ with $\zeta \in [0,1)$ satisfies
	\begin{equation*}
		\gamma_0 \leqslant \min \left\{ \frac{1}{\mathrm{Tr}(\Sigma_m)}, \frac{1}{\mathrm{Tr}(\widetilde{\Sigma}_m)} \right\} \,,
	\end{equation*}
	denote $\varUpsilon_{i}:=  \sum_{t=0}^{n-1} \sum_{k=1}^{t} \gamma_k ( \widetilde{\lambda}_i - \lambda_i)\lambda_i^{\frac{1}{2}} \prod_{j=k+1}^{t}(1-\gamma_j {\lambda}_i) \prod_{s=1}^{k-1}(1-\gamma_s \widetilde{\lambda}_i)$, $\forall i \in [m]$, we have
	\begin{equation*}
		\begin{split}
			\varUpsilon_{i}  \lesssim \lambda_i^{\frac{1}{2}} \left( \frac{n^{\zeta}}{\gamma_0 \lambda_i} \wedge n \right)\,, \quad \mbox{if}~\lambda_i \neq 0\,; \quad \mbox{and}~ \varUpsilon_{i} = 0 \,, \quad \mbox{if}~\lambda_i = 0\,.
		\end{split}
	\end{equation*}
\end{lemma}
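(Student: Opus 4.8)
\textbf{Proof proposal for Lemma~\ref{lemintb2}.}

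The plan is to estimate the scalar sum $\varUpsilon_i$ directly via the by-now-standard trick of bounding finite products of $(1-\gamma_j\lambda)$ factors by exponentials and then replacing the remaining sums by integrals. First I would dispose of the degenerate case: if $\lambda_i = 0$, then the factor $(\widetilde\lambda_i - \lambda_i)\lambda_i^{1/2} = 0$, so every summand vanishes and $\varUpsilon_i = 0$; this is immediate. For the main case $\lambda_i \neq 0$, I would first pull out $\lambda_i^{1/2}$ and bound $|\widetilde\lambda_i - \lambda_i| \leq \widetilde\lambda_i + \lambda_i \lesssim \mathrm{Tr}(\widetilde\Sigma_m) + \mathrm{Tr}(\Sigma_m) \lesssim 1$ (using Lemma~\ref{thmH} and Lemma~\ref{lemsubexp}, both $\mathcal{O}(1)$ with high probability), so that
\[
\varUpsilon_i \lesssim \lambda_i^{1/2} \sum_{t=0}^{n-1}\sum_{k=1}^{t} \gamma_k \prod_{j=k+1}^{t}(1-\gamma_j\lambda_i)\prod_{s=1}^{k-1}(1-\gamma_s\widetilde\lambda_i).
\]
Since $\gamma_0 \leq 1/\mathrm{Tr}(\Sigma_m)$ and $\gamma_0 \leq 1/\mathrm{Tr}(\widetilde\Sigma_m)$, all factors $(1-\gamma_j\lambda_i)$ and $(1-\gamma_s\widetilde\lambda_i)$ lie in $(0,1)$, so I may simply drop the $\prod_s(1-\gamma_s\widetilde\lambda_i) \leq 1$ factor, leaving $\varUpsilon_i \lesssim \lambda_i^{1/2}\sum_{t=0}^{n-1}\sum_{k=1}^t \gamma_k\prod_{j=k+1}^t (1-\gamma_j\lambda_i)$.

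Next I would handle the inner sum over $k$. Using $1-x \leq e^{-x}$ and $\gamma_j = \gamma_0 j^{-\zeta}$,
\[
\prod_{j=k+1}^t(1-\gamma_j\lambda_i) \leq \exp\!\Big(-\gamma_0\lambda_i\sum_{j=k+1}^t j^{-\zeta}\Big) \leq \exp\!\Big(-\gamma_0\lambda_i\frac{(t+1)^{1-\zeta}-(k+1)^{1-\zeta}}{1-\zeta}\Big),
\]
comparing the sum to $\int_{k+1}^{t+1} x^{-\zeta}\,dx$. Plugging in $\gamma_k = \gamma_0 k^{-\zeta}$ and comparing the sum over $k$ to an integral, the inner sum is bounded (up to constants) by $\int_1^{t} \gamma_0 u^{-\zeta}\exp\big(-\gamma_0\lambda_i\frac{(t+1)^{1-\zeta}-(u+1)^{1-\zeta}}{1-\zeta}\big)\,du$, which is precisely of the form handled by Integral~2, Eq.~\eqref{Iisecond} (with $c := \gamma_0\lambda_i$); this gives $\lesssim \frac{1}{\lambda_i}\wedge \gamma_0 t \lesssim \frac{1}{\lambda_i}$. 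Then the outer sum over $t=0,\dots,n-1$ contributes at most a factor of $n$, but one can do better: more carefully, one keeps the dependence on $t$ and sums $\sum_{t=0}^{n-1}$ of the inner bound, which either telescopes to $\lesssim \frac{1}{\gamma_0\lambda_i}\cdot\frac{n^\zeta}{\lambda_i}$-type terms or, when $\lambda_i$ is very small, is just bounded termwise by $n$ times the trivial bound $\lesssim \gamma_0 n^{1-\zeta}\cdot$(number of terms). Carrying this through and taking the minimum of the two regimes yields $\varUpsilon_i \lesssim \lambda_i^{1/2}\big(\frac{n^\zeta}{\gamma_0\lambda_i}\wedge n\big)$, as claimed. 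The cleanest route is: bound $\sum_{t=0}^{n-1}\sum_{k=1}^t \gamma_k\prod_{j=k+1}^t(1-\gamma_j\lambda_i)$ by swapping the order of summation, $\sum_{k=1}^{n-1}\gamma_k\sum_{t=k}^{n-1}\prod_{j=k+1}^t(1-\gamma_j\lambda_i)$, bound the inner geometric-like sum via Eq.~\eqref{intut} by $(n-k)\wedge \frac{n^\zeta}{\gamma_0\lambda_i}$, and then $\sum_k \gamma_k \lesssim \gamma_0 n^{1-\zeta} = \gamma_0 n/n^\zeta$, so the product is $\lesssim \gamma_0 n^{1-\zeta}\cdot\frac{n^\zeta}{\gamma_0\lambda_i} = \frac{n}{\lambda_i}$ and also $\lesssim \frac{n^\zeta}{\gamma_0\lambda_i}\cdot(\text{bounded})$; taking $\min$ with the crude bound $n\cdot\gamma_0 n^{1-\zeta}\lambda_i^{-1/2}$... the bookkeeping needs care.

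The main obstacle I anticipate is exactly this last bookkeeping step: getting the \emph{sharp} $\frac{n^\zeta}{\gamma_0\lambda_i}\wedge n$ form rather than a looser $\frac{n}{\gamma_0\lambda_i}$ or an extra $\log n$. The two competing bounds come from (i) summing the exponential-decay estimate Eq.~\eqref{Iisecond}/\eqref{intu2zetatu} over $t$, which is efficient when $\gamma_0\lambda_i$ is not too small and produces the $\frac{n^\zeta}{\gamma_0\lambda_i}$ branch, versus (ii) the trivial bound $\prod_{j=k+1}^t(1-\gamma_j\lambda_i)\leq 1$ together with $\sum_k\gamma_k \lesssim \gamma_0 n^{1-\zeta}$, which is better when $\lambda_i$ is tiny (comparable to $1/m$) and gives the $\mathcal{O}(n)$ branch after absorbing $\gamma_0\lambda_i^{1/2}n^{1-\zeta}\cdot$(stuff). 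One must split the range of $\lambda_i$ at the threshold $\lambda_i \sim n^{\zeta-1}/\gamma_0$ and verify the two estimates match up to constants at the crossover; the non-constant step-size makes the integral comparisons slightly delicate near $k=1$, which is why the auxiliary integral lemmas in Appendix~\ref{app:int} were isolated. Everything else is routine: exponentiating products, integral comparison, and reindexing.
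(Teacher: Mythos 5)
Your handling of the degenerate case $\lambda_i=0$ and the general ``exponentiate the products, compare sums to integrals'' strategy do match the paper, but there is a genuine gap at the step where you ``simply drop'' the factor $\prod_{s=1}^{k-1}(1-\gamma_s\widetilde{\lambda}_i)\leqslant 1$ (and, relatedly, bound $\widetilde{\lambda}_i-\lambda_i\lesssim 1$ separately). That factor is the only source of decay in $k$, and once it is discarded the claimed bound is not just harder to reach --- it is false for the reduced quantity. Take constant step size ($\zeta=0$): then $\sum_{k=1}^{t}\gamma_0(1-\gamma_0\lambda_i)^{t-k}=\frac{1-(1-\gamma_0\lambda_i)^{t}}{\lambda_i}$, so $\sum_{t=0}^{n-1}\sum_{k=1}^{t}\gamma_k\prod_{j=k+1}^{t}(1-\gamma_j\lambda_i)\asymp n/\lambda_i$ as soon as $\gamma_0\lambda_i n\gtrsim 1$, which exceeds the target $\frac{n^{\zeta}}{\gamma_0\lambda_i}=\frac{1}{\gamma_0\lambda_i}$ by a factor $\gamma_0 n$ (in general $\gamma_0 n^{1-\zeta}$). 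Your ``cleanest route'' (swap the sums, bound the inner sum by $(n-k)\wedge\frac{n^{\zeta}}{\gamma_0\lambda_i}$ via \eqref{intut}, then use $\sum_k\gamma_k\lesssim\gamma_0 n^{1-\zeta}$) lands exactly on this $n/\lambda_i$ bound, and no bookkeeping or case split at $\lambda_i\sim n^{\zeta-1}/\gamma_0$ can repair it, because the loss occurred when the second product was thrown away. Fed into ${\tt B2}=\frac{1}{n^2}\mathbb{E}_{\bm W}\|\Sigma_m^{1/2}\sum_t\alpha_t^{\tt W}\|^2$ (Proposition~\ref{propb2}) this would only give an $\mathcal{O}(1)$ bound instead of the required $\mathcal{O}(n^{\zeta-1})$.

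The paper's proof keeps both products, turns each into an exponential, and treats the inner $k$-sum as an integral in the variable $u$ with the \emph{combined} exponent $-\gamma_0\big[\lambda_i(t+1)^{1-\zeta}-\lambda_i u^{1-\zeta}+\widetilde{\lambda}_i u^{1-\zeta}-\widetilde{\lambda}_i\big]/(1-\zeta)$. A change of variables in $u$ produces a Jacobian proportional to $1/[\gamma_0(\widetilde{\lambda}_i-\lambda_i)]$, which cancels exactly the prefactor $\gamma_0(\widetilde{\lambda}_i-\lambda_i)$ that you bounded away by $\mathcal{O}(1)$; what remains of the inner integral is $\int e^{-x}\,\mathrm{d}x$ over a range starting at $c_1=\frac{\gamma_0\lambda_i}{1-\zeta}[(t+1)^{1-\zeta}-1]$, hence is $\leqslant \exp\!\big(-\gamma_0\lambda_i\frac{(t+1)^{1-\zeta}-1}{1-\zeta}\big)$ (here the ordering $\widetilde{\lambda}_i\geqslant\lambda_i$ is what keeps the combined exponent decaying in $t$ uniformly in $k$). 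It is precisely this residual exponential decay in $t$ that makes the outer sum of size $\frac{n^{\zeta}}{\gamma_0\lambda_i}\wedge n$ via \eqref{Iexp}/\eqref{intut}, including the $\wedge\, n$ branch. To fix your argument, retain $\prod_{s=1}^{k-1}(1-\gamma_s\widetilde{\lambda}_i)$, exponentiate both products jointly, and either reproduce the change of variables or at least use $\widetilde{\lambda}_i\geqslant\lambda_i$ to extract a $t$-decaying factor before performing the sum over $t$; only then does the sharp $\lambda_i^{1/2}\big(\frac{n^{\zeta}}{\gamma_0\lambda_i}\wedge n\big)$ bound emerge.
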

\begin{proof}
	Following the derivation in \cref{app:int}, we consider the index $i$ with $\lambda_i \neq 0$ such that
	\begin{equation}\label{Iib2}
		\begin{split}
			\varUpsilon_{i} & := \sum_{t=0}^{n-1}\sum_{k=1}^{t} \gamma_k ( \widetilde{\lambda}_i - \lambda_i)\lambda_i^{\frac{1}{2}} \prod_{j=k+1}^{t}(1-\gamma_j {\lambda}_i) \prod_{s=1}^{k-1}(1-\gamma_s \widetilde{\lambda}_i) \\
			&  \leqslant \sum_{t=0}^{n-1}( \widetilde{\lambda}_i - \lambda_i)\lambda_i^{\frac{1}{2}} \sum_{k=1}^{t} \gamma_k \exp \big( - \sum_{j=k+1}^{t} \gamma_j \lambda_i  \big) \exp\big( -\sum_{s=1}^{k-1}\gamma_s \widetilde{\lambda}_i \big) \\
			& \leqslant \sum_{t=0}^{n-1} ( \widetilde{\lambda}_i - \lambda_i)\lambda_i^{\frac{1}{2}} \sum_{k=1}^{t} \gamma_k \exp \bigg( -{\lambda}_i  \int_{k+1}^{t+1} \frac{\gamma_0}{x^{\zeta}} \mathrm{d}x \bigg) \exp \bigg( -\widetilde{\lambda}_i  \int_{1}^{k} \frac{\gamma_0}{x^{\zeta}} \mathrm{d}x \bigg) \\
			& = \sum_{t=0}^{n-1} ( \widetilde{\lambda}_i - \lambda_i)\lambda_i^{\frac{1}{2}} \sum_{k=1}^{t} \gamma_0 k^{-\zeta} \exp \bigg( -{\lambda}_i \gamma_0 \frac{(t+1)^{1-\zeta} - (k+1)^{1-\zeta}}{1-\zeta} \bigg) \exp \bigg( -\widetilde{\lambda}_i \gamma_0 \frac{k^{1-\zeta} - 1}{1-\zeta} \bigg) \\
			& \lesssim \sum_{t=0}^{n-1} \gamma_0( \widetilde{\lambda}_i - \lambda_i)\lambda_i^{\frac{1}{2}} \Bigg[ \int_{1}^{t} u^{-\zeta} \exp \bigg( - \gamma_0 \frac{{\lambda}_i(t+1)^{1-\zeta} - {\lambda}_iu^{1-\zeta} + \widetilde{\lambda}_i u^{1-\zeta} - \widetilde{\lambda}_i}{1-\zeta} \bigg) \mathrm{d}u \\
			& \hspace{3.5cm} + t^{-\zeta} \exp \bigg( -\widetilde{\lambda}_i \gamma_0 \frac{t^{1-\zeta} - 1}{1-\zeta} \bigg) \Bigg]\,,  \\
		\end{split}
	\end{equation}
	Denote $\kappa:= 1- \zeta$ and
	\begin{equation*}
		v^\kappa := \gamma_0 \frac{{\lambda}_i(t+1)^{1-\zeta} - {\lambda}_iu^{1-\zeta} + \widetilde{\lambda}_i u^{1-\zeta} - \widetilde{\lambda}_i}{1-\zeta}\,,
	\end{equation*}
	by changing the integral variable $u$ to $v$, we have 
	\begin{equation*}
		\begin{split}
			\frac{\mathrm{d}u}{\mathrm{d}v} & = \frac{u^{1-\kappa}}{\widetilde{\lambda}_i - \lambda_i} \left( \frac{\gamma_0}{\kappa} \right)^{-1/\kappa} [ (\widetilde{\lambda}_i - \lambda_i)u^{\kappa} + \lambda_i (t+1)^\kappa - \widetilde{\lambda}_i ] = \frac{u^{1-\kappa}}{\widetilde{\lambda}_i - \lambda_i} \frac{\kappa}{\gamma_0} v^{\kappa-1}\,,
		\end{split}
	\end{equation*}
	Accordingly, Eq.~\eqref{Iib2} can be upper bounded by
	\begin{equation*}
		\begin{split}
			\varUpsilon_{i} & \lesssim \sum_{t=0}^{n-1} \gamma_0( \widetilde{\lambda}_i - \lambda_i)\lambda_i^{\frac{1}{2}} \bigg[ \int_{1}^{t} u^{-\zeta} \exp \bigg( - \gamma_0 \frac{{\lambda}_i(t+1)^{1-\zeta} - {\lambda}_iu^{1-\zeta} + \widetilde{\lambda}_i u^{1-\zeta} - \widetilde{\lambda}_i}{1-\zeta} \bigg) \mathrm{d}u \\
			& \qquad \qquad \qquad \qquad \qquad + t^{-\zeta}\exp \bigg( -\widetilde{\lambda}_i \gamma_0 \frac{t^{1-\zeta} - 1}{1-\zeta} \bigg) \bigg] \\
			& \leqslant \sum_{t=0}^{n-1} \gamma_0( \widetilde{\lambda}_i - \lambda_i)\lambda_i^{\frac{1}{2}} \left[ \int_{c_1^{\frac{1}{\kappa}}}^{c_2^{\frac{1}{\kappa}}} u^{-\zeta} \exp(-v^{\kappa}) \frac{1}{\widetilde{\lambda}_i - \lambda_i} u^{1-\kappa} \frac{\kappa}{\gamma_0} v^{\kappa-1} \mathrm{d} v + t^{-\zeta} \exp \bigg( -\widetilde{\lambda}_i \gamma_0 \frac{t^{1-\zeta} - 1}{1-\zeta} \bigg) \right] \\
			& = \sum_{t=0}^{n-1} \left[  \lambda_i^{\frac{1}{2}} \int_{c_1}^{c_2}  \exp(-x) \mathrm{d}x + \gamma_0( \widetilde{\lambda}_i - \lambda_i)\lambda_i^{\frac{1}{2}} t^{-\zeta} \exp \bigg( -\widetilde{\lambda}_i \gamma_0 \frac{t^{1-\zeta} - 1}{1-\zeta} \bigg) \right]  \\
			& \lesssim \lambda_i^{\frac{1}{2}} \int_{0}^n \exp \bigg( -{\lambda}_i \gamma_0 \frac{(u+1)^{1-\zeta} - 1}{1-\zeta} \bigg) \mathrm{d} u \\
			& \leqslant \lambda_i^{\frac{1}{2}} \left( \frac{n^{\zeta}}{\gamma_0 \lambda_i} \wedge n \right)\,, \quad \mbox{[using Eq.~\eqref{intut}]}
		\end{split}
	\end{equation*}
	where $c_1:= \frac{\gamma_0}{\kappa} \lambda_i [(t+1)^{\kappa} - 1]$ and $c_2 :=  \frac{\gamma_0}{\kappa} \widetilde{\lambda}_i [t^{\kappa} - 1]$.
	Finally, we conclude the proof.
\end{proof}

In the next, we are ready to present the error bounds for ${\tt B2}$.
\begin{proof}[Proof of Proposition~\ref{propb2}]
	
	According to Eq.~\eqref{b2equ},  we need estimation for $\left\| \Sigma_m^{\frac{1}{2}} \sum_{t=0}^{n-1} \alpha^{{\tt W}}_t \right\|_2$ for estimating ${\tt B2}$.
	By spectrum decomposition, we have $\prod_{j=k+1}^{t} (I - \gamma_j \Sigma_m) = U \left( \prod_{j=k+1}^{t} (I - \gamma_j \Lambda) \right) U^{\!\top}$ and $\prod_{s=1}^{k-1} (I- \gamma_s \widetilde{\Sigma}_m) = \widetilde{U} \prod_{s=1}^{k-1} (I- \gamma_s \widetilde{\Lambda} ) \widetilde{U}^{\!\top} $.
	Then we have
	\begin{equation}\label{sigmaawt}
		\begin{split}
			\left\| \Sigma_m^{1/2}\sum_{t=0}^{n-1}{\alpha}^{{\tt W}}_t \right\|_2 &=  \left\|\sum_{t=0}^{n-1}  \sum_{k=1}^t \gamma_k \prod_{j=k+1}^{t} (I - \gamma_j \Sigma_m) (\widetilde{\Sigma}_m - \Sigma_m) \prod_{s=1}^{k-1} (I- \gamma_s \widetilde{\Sigma}_m)\Sigma_m^{\frac{1}{2}} f^*  \right\|_2 \\
			& = \left\| \sum_{t=0}^{n-1} \sum_{k=1}^t \gamma_k \prod_{j=k+1}^{t} (I - \gamma_j \Lambda_m) (\widetilde{\Lambda}_m - \Lambda_m) \prod_{s=1}^{k-1} (I- \gamma_s \widetilde{\Lambda}_m)\Lambda_m^{\frac{1}{2}} f^*  \right\|_2 \\
			& \leqslant \max_{i \in \{1,2,\dots, m \}} \sum_{t=0}^{n-1} \sum_{k=1}^{t} \gamma_k ( \widetilde{\lambda}_i - \lambda_i)\lambda_i^{\frac{1}{2}} \prod_{j=k+1}^{t}(1-\gamma_j {\lambda}_i) \prod_{s=1}^{k-1}(1-\gamma_s \widetilde{\lambda}_i) \| f^* \|\,,
		\end{split}
	\end{equation}
	where the second equality holds by $\| \bm A \bm B \|_2 = \| \bm B \bm A\|_2$ for any two PSD matrices.
	
	By Lemma~\ref{lemintb2}, we have
	\begin{equation}
		\begin{split}
			{\tt B2} & = \frac{1}{n^2} \mathbb{E}_{\bm W} \left\| \Sigma_m^{\frac{1}{2}}  \sum_{t=0}^{n-1} \alpha_t^{{\tt W}} \right\|^2 = \frac{1}{n^2} \mathbb{E}_{\bm W} \left\| \max_{i \in \{ 1,2,\dots,m\}}  \varUpsilon_{i} \right\|^2 \lesssim \frac{1}{n^2} \mathbb{E}_{\bm W} \left[ \lambda_i^{\frac{1}{2}} \left( \frac{n^{\zeta}}{\gamma_0 \lambda_i} \wedge n \right)\right]^2 \| f^*\|^2 \\
			& := \| f^*\|^2 \mathbb{E}_{\bm W} \left[\frac{n^{2(1-\zeta)}}{\gamma_0^2 \lambda_{i^*}} \wedge \lambda_{i^*} \right] = \| f^*\|^2 \left\{
			\begin{array}{rcl}
				\begin{split}
					& \mathbb{E}_{\bm W} \left[\frac{n^{2(1-\zeta)}}{\gamma_0^2 \lambda_{i^*}} \right] ,~\quad \mbox{if $\lambda_{i^*} \geqslant \frac{n^{\zeta-1}}{\gamma_0}$} \\
					&   \mathbb{E}_{\bm W} [\lambda_{i^*}] ,~\quad \mbox{if $\lambda_{i^*} \leqslant \frac{n^{\zeta-1}}{\gamma_0}$}\,.
				\end{split}
			\end{array} \right. \\
		&	\lesssim \frac{ \| f^*\|^2}{\gamma_0} n^{\zeta-1}\,.
		\end{split}
	\end{equation}
\end{proof}

\subsection{Bound for ${\tt B1}$}
\label{app:b1}

Here we aim to bound $ {\tt B1} := \mathbb{E}_{\bm X, \bm W} \big[ \langle \bar{\eta}^{{\tt bias}}_n \!-\! \bar{\eta}^{{\tt bX}}_n, \Sigma_m ( \bar{\eta}^{{\tt bias}}_n \!-\! \bar{\eta}^{{\tt bX}}_n ) \rangle \big] $.
Define $\alpha_t^{\tt X} := {\eta}^{{\tt bias}}_t - {\eta}^{{\tt bX}}_{t}$, we have
\begin{equation}\label{axtada}
	\alpha_t^{\tt X} =[I - \gamma_t \varphi(\bm x_t) \otimes \varphi(\bm x_t)] \alpha_{t-1}^{\tt X} + \gamma_t [{\Sigma}_m - \varphi(\bm x_t) \otimes \varphi(\bm x_t) ] {\eta}^{{\tt bX}}_{t-1}\,,
\end{equation}
with $\alpha_0^{\tt X} = 0$ and ${\eta}^{{\tt bX}}_{t-1} = \prod_{j=1}^{t-1}(I - \gamma_j \Sigma_m) f^*$.
Accordingly, we have
\begin{equation*}
	{\tt B1} := \mathbb{E}_{\bm X, \bm W} \big[ \langle \bar{\eta}^{{\tt bias}}_n \!-\! \bar{\eta}^{{\tt bX}}_n, \Sigma_m ( \bar{\eta}^{{\tt bias}}_n \!-\! \bar{\eta}^{{\tt bX}}_n ) \rangle \big] = \mathbb{E}_{\bm W} \left( \mathbb{E}_{\bm X} [\langle \bar{\alpha}_n^{\tt X}, \Sigma_m \bar{\alpha}_n^{\tt X} \rangle ] \right)\,.
\end{equation*}

\begin{proposition}\label{propb1}
	Under Assumption~\ref{assumdata},~\ref{assexist},~\ref{assumact},~\ref{assump:bound_fourthmoment} with $r' \geqslant 1$, if the step-size $\gamma_t := \gamma_0 t^{-\zeta}$ with $\zeta \in [0,1)$ satisfies
	\begin{equation*}
		\gamma_0 < \min \left\{ \frac{1}{r' \mathrm{Tr}(\Sigma_m)}, \frac{1}{c' \mathrm{Tr}(\Sigma_m)} \right\} \,,
	\end{equation*}
	where the constant $c'$ is defined in Eq.~\eqref{eqconstant}.
	Then ${\tt B1}$  can be bounded by
	\begin{equation*}
		\begin{split}
			{\tt B1} & \lesssim \frac{\gamma_0 r' n^{\zeta-1}}{\sqrt{\mathbb{E}[1- \gamma_0 r' \mathrm{Tr}(\Sigma_m)]^4}} \| f^* \|^2  \sim \mathcal{O}\left( n^{\zeta -1} \right)\,.
		\end{split}
	\end{equation*}
\end{proposition}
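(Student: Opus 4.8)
\emph{Proof strategy.} The plan is to track the auxiliary iterate $\alpha^{{\tt X}}_t := {\eta}^{{\tt bias}}_t - {\eta}^{{\tt bX}}_t$ of Eq.~\eqref{axtada}, which obeys the SGD-type recursion $\alpha^{{\tt X}}_t = [I - \gamma_t \varphi(\bm x_t)\otimes\varphi(\bm x_t)]\alpha^{{\tt X}}_{t-1} + \gamma_t H_{t-1}$ with $\alpha^{{\tt X}}_0 = 0$, $H_{t-1} := [\Sigma_m - \varphi(\bm x_t)\otimes\varphi(\bm x_t)]{\eta}^{{\tt bX}}_{t-1}$, and ${\eta}^{{\tt bX}}_{t-1} = \prod_{j=1}^{t-1}(I-\gamma_j\Sigma_m)f^*$ deterministic given $\bm W$. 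The key observation is that, conditionally on $\bm W$, $(H_{t-1})_t$ is a martingale-difference sequence ($\mathbb{E}_{\bm x_t}H_{t-1}=0$), so that ${\tt B1} = \mathbb{E}_{\bm W}\big(\mathbb{E}_{\bm X}[\langle\bar\alpha^{{\tt X}}_n,\Sigma_m\bar\alpha^{{\tt X}}_n\rangle]\big)$ is structurally a ``variance'' quantity, driven by the effective per-step noise covariance $\Xi_{t-1}:=\mathbb{E}_{\bm x_t}[H_{t-1}\otimes H_{t-1}]$.

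First I would apply the averaged-iterate decomposition of \cite[Lemma~1]{bach2013non}, in the decaying-step-size form of \cite{dieuleveut2016nonparametric}; this is the one place in the analysis where the trick of passing between $\mathbb{E}\|\alpha^{{\tt X}}_t\|_2$ and $\mathbb{E}\langle\alpha^{{\tt X}}_t,\Sigma_m\alpha^{{\tt X}}_t\rangle$ by conditioning is available. Summing the recursion and Abel-summing the telescoping part gives $n\Sigma_m\bar\alpha^{{\tt X}}_n = -\gamma_n^{-1}\alpha^{{\tt X}}_n + \sum_{t=1}^{n-1}(\gamma_{t+1}^{-1}-\gamma_t^{-1})\alpha^{{\tt X}}_t + \sum_{t=1}^n[\Sigma_m-\varphi(\bm x_t)\otimes\varphi(\bm x_t)]{\eta}^{{\tt bias}}_{t-1}$; combining $\langle a+b,\Sigma_m(a+b)\rangle\le 2\langle a,\Sigma_m a\rangle + 2\langle b,\Sigma_m b\rangle$ with the martingale orthogonality of the last sum reduces ${\tt B1}$ (this is Lemma~\ref{lemstorecada}, up to the inaccurate display in the outline) to controlling $\sum_t\mathbb{E}_{\bm X}\|\alpha^{{\tt X}}_t\|_2^2$, weighted by the $\zeta$-dependent factors $\gamma_{t+1}^{-1}-\gamma_t^{-1}$ whose partial sums I would estimate with the integrals of Appendix~\ref{app:int}, together with $\sum_t\mathbb{E}_{\bm X}\|H_t\|_2^2$.

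For $\sum_t\mathbb{E}_{\bm X}\|H_{t-1}\|_2^2$ (Lemma~\ref{lemahtb2ada}): expanding $\mathbb{E}_{\bm x_t}\|H_{t-1}\|_2^2 = \langle{\eta}^{{\tt bX}}_{t-1},\mathbb{E}_{\bm x}[(\Sigma_m-\varphi(\bm x)\otimes\varphi(\bm x))^2]\,{\eta}^{{\tt bX}}_{t-1}\rangle$ and invoking Assumption~\ref{assump:bound_fourthmoment} (its per-$\bm W$ sub-Gaussian form, cf. Appendix~\ref{app:assumptions}) yields $\mathbb{E}_{\bm x_t}\|H_{t-1}\|_2^2 \lesssim r'\,\mathrm{Tr}(\Sigma_m)\langle{\eta}^{{\tt bX}}_{t-1},\Sigma_m{\eta}^{{\tt bX}}_{t-1}\rangle$; diagonalizing $\Sigma_m$ and bounding $\sum_t\prod_{j<t}(1-\gamma_j\lambda)^2$ via the integral estimate~\eqref{Iexp} gives $\sum_t\langle{\eta}^{{\tt bX}}_{t-1},\Sigma_m{\eta}^{{\tt bX}}_{t-1}\rangle\lesssim\|f^*\|^2$ (up to $\gamma_0^{-1}$ and $\|\Sigma_m\|_2$ factors), hence $\sum_t\mathbb{E}_{\bm X}\|H_t\|_2^2 \lesssim r'\,\mathrm{Tr}(\Sigma_m)\|f^*\|^2$. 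For $\sum_t\mathbb{E}_{\bm X}\|\alpha^{{\tt X}}_t\|_2^2$ (Lemma~\ref{lemaxtb2ada}): the same fourth-moment bound together with $\|{\eta}^{{\tt bX}}_{t-1}\|_2\le\|f^*\|$ gives $\Xi_{t-1}\preccurlyeq r'\,\mathrm{Tr}(\Sigma_m)\|f^*\|^2\,\Sigma_m$, so the covariance operator dominates as $\mathbb{E}_{\bm X}[\alpha^{{\tt X}}_t\otimes\alpha^{{\tt X}}_t]\preccurlyeq\|f^*\|^2 D^{{\tt v-X}}_t$ with the scalar $B:=r'\mathrm{Tr}(\Sigma_m)$ in the notation of Lemma~\ref{dinfvx}; applying that lemma, and again using the integrals of Appendix~\ref{app:int} to convert the sum over $t$ into an $n^{\zeta}$ factor, yields $\sum_t\mathbb{E}_{\bm X}\|\alpha^{{\tt X}}_t\|_2^2\lesssim\mathrm{Tr}(\Sigma_m)n^{\zeta}\|f^*\|^2$. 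Plugging both bounds back, dividing by $n^2$, and taking $\mathbb{E}_{\bm W}$ with Cauchy--Schwarz — using Lemma~\ref{lemsubexp} so that all resulting $\bm W$-moments, including that of $(1-\gamma_0 r'\mathrm{Tr}(\Sigma_m))^{-1}$ (finite since $\gamma_0\mathrm{Tr}(\widetilde\Sigma_m)\lesssim 1$ and $\mathrm{Tr}(\Sigma_m)$ concentrates at $\mathcal{O}(1)$), are $\mathcal{O}(1)$ — gives ${\tt B1}\lesssim \gamma_0 r' n^{\zeta-1}\|f^*\|^2 \big/ \sqrt{\mathbb{E}[1-\gamma_0 r'\mathrm{Tr}(\Sigma_m)]^4}\sim\mathcal{O}(n^{\zeta-1})$.

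The main obstacle is the simultaneous presence of two difficulties. The contraction operators $I-\gamma_t\varphi(\bm x_t)\otimes\varphi(\bm x_t)$ are random, do not commute, and are not even contractive in operator norm for general (non-Gaussian) $\bm x$, so their products cannot be handled spectrally the way ${\eta}^{{\tt bX}}_t$ is; this is exactly what Assumption~\ref{assump:bound_fourthmoment} is designed to absorb, since it makes $\mathbb{E}_{\bm x}[(I-\gamma\varphi\otimes\varphi)A(I-\gamma\varphi\otimes\varphi)]$ comparable to the ``averaged'' operator acting on $A$ and is precisely what feeds Lemma~\ref{dinfvx}. Separately, the step-size decays, so the homogeneous-Markov-chain / geometric-series shortcuts used in constant-step-size analyses are unavailable, and every telescoping or geometric-progression argument must be replaced by the integral estimates of Appendix~\ref{app:int}. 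A secondary technical point is that $\mathrm{Tr}(\Sigma_m)$ and $\|\Sigma_m\|_2$ are random (sub-exponential with $\mathcal{O}(1)$ norm, by Lemma~\ref{lemsubexp}), so every conditional-on-$\bm W$ estimate must be combined through an outer $\mathbb{E}_{\bm W}$ via Cauchy--Schwarz, which is why the final constant carries the factor $\big(\mathbb{E}[1-\gamma_0 r'\mathrm{Tr}(\Sigma_m)]^4\big)^{-1/2}$.
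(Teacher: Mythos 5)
Your proposal matches the paper's own proof of Proposition~\ref{propb1}: the same reduction of ${\tt B1}$ through the recursion for $\alpha^{{\tt X}}_t$ and the martingale term $H_t$ (Lemma~\ref{lemstorecada}), the same two bounds $\sum_t\mathbb{E}\|\alpha^{{\tt X}}_t\|^2(\gamma_{t+1}^{-1}-\gamma_t^{-1})\lesssim\frac{\gamma_0 r'\mathrm{Tr}(\Sigma_m)}{1-\gamma_0 r'\mathrm{Tr}(\Sigma_m)}\,n^{\zeta}\|f^*\|^2$ and $\sum_t\gamma_{t+1}\mathbb{E}_{\bm X}\|H_t\|^2\lesssim r'\mathrm{Tr}(\Sigma_m)\|f^*\|^2$ obtained from Assumption~\ref{assump:bound_fourthmoment}, Lemma~\ref{dinfvx} with $B:=r'\mathrm{Tr}(\Sigma_m)$ and the integral estimates of Appendix~\ref{app:int}, and the same final step of taking $\mathbb{E}_{\bm W}$ with Cauchy--Schwarz and Lemma~\ref{lemsubexp}, which produces the factor $\big(\mathbb{E}[1-\gamma_0 r'\mathrm{Tr}(\Sigma_m)]^4\big)^{-1/2}$. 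The only immaterial difference is how you reach the reduction step: you Abel-sum the operator identity in the style of Bach--Moulines, while the paper expands $\|\alpha^{{\tt X}}_t\|^2$, takes conditional expectations and telescopes --- the same technique it credits to the same sources.
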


To prove Proposition~\ref{propb1}, we need a lemma on stochastic recursions based on $\mathbb{E}[\alpha_t^{\tt X}| \alpha_{t-1}^{\tt X}] = (I - \gamma_t \Sigma_m) \alpha_{t-1}^{\tt X}$, that shares the similar proof fashion with \cite[Lemma 1]{bach2013non} and \cite[Lemma 11]{dieuleveut2016nonparametric}. 

\begin{lemma}\label{lemstorecada}
	Under Assumption~\ref{assumdata},~\ref{assexist},~\ref{assumact},~\ref{assump:bound_fourthmoment} with $r' \geqslant 1$, denoting $H_{t-1} := [{\Sigma}_m - \varphi(\bm x_t) \otimes \varphi(\bm x_t) ] {\eta}^{{\tt bX}}_{t-1}$, if the step-size $\gamma_t := \gamma_0 t^{-\zeta}$ with $\zeta \in [0,1)$ satisfies
	\begin{equation*}
		\gamma_0 <  \frac{1}{r' \mathrm{Tr}(\Sigma_m)} \,,
	\end{equation*}
	we have
	\begin{equation*}
		\mathbb{E}_{\bm X} [\langle \bar{\alpha}_n^{\tt X}, \Sigma_m \bar{\alpha}_n^{\tt X} \rangle ] \leqslant \frac{1}{2n[1-\gamma_0 r' \mathrm{Tr}(\Sigma_m)]} \left( \sum_{k=1}^{n-1} \mathbb{E} \| {\alpha}_k^{\tt X} \|^2 (\frac{1}{\gamma_{k+1}} - \frac{1}{\gamma_k}) + 2\sum_{t=0}^{n-1} \gamma_{t+1} \mathbb{E}_{\bm X} \| H_t \|^2 \right) \,.
	\end{equation*}
\end{lemma}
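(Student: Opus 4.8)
The plan is to adapt the classical stochastic-recursion argument of \cite{bach2013non} (their Lemma~1) and \cite{dieuleveut2016nonparametric} (their Lemma~11) to our adaptive step-size setting. The starting point is the recursion $\alpha_t^{\tt X} = [I - \gamma_t \varphi(\bm x_t)\otimes\varphi(\bm x_t)]\alpha_{t-1}^{\tt X} + \gamma_t H_{t-1}$ from Eq.~\eqref{axtada}, together with the key conditional-expectation identity $\mathbb{E}_{\bm x_t}[\alpha_t^{\tt X}\mid \alpha_{t-1}^{\tt X}] = (I-\gamma_t\Sigma_m)\alpha_{t-1}^{\tt X} + \gamma_t\mathbb{E}_{\bm x_t}[H_{t-1}]$, and here $\mathbb{E}_{\bm x_t}[H_{t-1}] = 0$ since $\mathbb{E}_{\bm x_t}[\varphi(\bm x_t)\otimes\varphi(\bm x_t)] = \Sigma_m$ and ${\eta}^{{\tt bX}}_{t-1}$ is deterministic given $\bm W$. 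First I would expand $\mathbb{E}\|\alpha_t^{\tt X}\|^2$ (all expectations conditional on $\bm W$) using this recursion: taking conditional expectation on $\bm x_t$ and then full expectation, the cross term $\langle (I-\gamma_t\Sigma_m)\alpha_{t-1}^{\tt X}, \gamma_t\mathbb{E}[H_{t-1}]\rangle$ vanishes, and we are left with a bound of the shape
\begin{equation*}
	\mathbb{E}\|\alpha_t^{\tt X}\|^2 \leqslant \mathbb{E}\langle \alpha_{t-1}^{\tt X}, (I-\gamma_t T^{\tt W})\circ \text{(identity)}\,\alpha_{t-1}^{\tt X}\rangle + \gamma_t^2\,\mathbb{E}\|H_{t-1}\|^2\,.
\end{equation*}
Using Assumption~\ref{assump:bound_fourthmoment} to control $\mathbb{E}_{\bm x}[\varphi(\bm x)^{\!\top}\alpha\,\varphi(\bm x)\otimes\varphi(\bm x)] \preccurlyeq r'\mathrm{Tr}(\Sigma_m\,\alpha\otimes\alpha)\Sigma_m$, together with the step-size condition $\gamma_0 < 1/(r'\mathrm{Tr}(\Sigma_m))$, the operator $(I-\gamma_t T^{\tt W})$ acting on a rank-one PSD operator contracts, yielding
\begin{equation*}
	\mathbb{E}\|\alpha_t^{\tt X}\|^2 \leqslant \mathbb{E}\|\alpha_{t-1}^{\tt X}\|^2 - 2\gamma_t(1-\tfrac12 r'\gamma_t\mathrm{Tr}(\Sigma_m))\,\mathbb{E}\langle\alpha_{t-1}^{\tt X},\Sigma_m\alpha_{t-1}^{\tt X}\rangle + \gamma_t^2\mathbb{E}\|H_{t-1}\|^2\,,
\end{equation*}
and since $\gamma_t\leqslant\gamma_0$ we may lower-bound $1-\tfrac12 r'\gamma_t\mathrm{Tr}(\Sigma_m) \geqslant \tfrac12(2-r'\gamma_0\mathrm{Tr}(\Sigma_m)) \geqslant \tfrac12(1-r'\gamma_0\mathrm{Tr}(\Sigma_m))$ loosely, or keep the tighter constant $[1-\gamma_0 r'\mathrm{Tr}(\Sigma_m)]$ after rearranging.

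Next I would rearrange this one-step inequality to isolate the ``signal'' term: dividing by $2\gamma_t[1-\gamma_0 r'\mathrm{Tr}(\Sigma_m)]$,
\begin{equation*}
	\mathbb{E}\langle\alpha_{t-1}^{\tt X},\Sigma_m\alpha_{t-1}^{\tt X}\rangle \leqslant \frac{1}{2\gamma_t[1-\gamma_0 r'\mathrm{Tr}(\Sigma_m)]}\Big(\mathbb{E}\|\alpha_{t-1}^{\tt X}\|^2 - \mathbb{E}\|\alpha_t^{\tt X}\|^2 + \gamma_t^2\mathbb{E}\|H_{t-1}\|^2\Big)\,.
\end{equation*}
Now sum over $t = 1,\dots,n$ (re-indexing so $H_{t-1}$ appears with $\gamma_{t}$, i.e. $H_t$ with $\gamma_{t+1}$), and use Abel summation / telescoping on the $\mathbb{E}\|\alpha_t^{\tt X}\|^2$ terms: $\sum_{t}\big(\frac{1}{\gamma_{t+1}}\mathbb{E}\|\alpha_t^{\tt X}\|^2 - \frac{1}{\gamma_{t+1}}\mathbb{E}\|\alpha_{t+1}^{\tt X}\|^2\big)$ regroups, after shifting indices and using $\alpha_0^{\tt X}=0$, into $\sum_{k=1}^{n-1}\mathbb{E}\|\alpha_k^{\tt X}\|^2\big(\frac{1}{\gamma_{k+1}}-\frac{1}{\gamma_k}\big)$ plus a negative boundary term $-\frac{1}{\gamma_n}\mathbb{E}\|\alpha_n^{\tt X}\|^2 \leqslant 0$ that we drop. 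Finally, by Jensen's inequality applied to the average, $\mathbb{E}_{\bm X}\langle\bar\alpha_n^{\tt X},\Sigma_m\bar\alpha_n^{\tt X}\rangle = \mathbb{E}_{\bm X}\|\Sigma_m^{1/2}\bar\alpha_n^{\tt X}\|^2 \leqslant \frac1n\sum_{t=0}^{n-1}\mathbb{E}_{\bm X}\langle\alpha_t^{\tt X},\Sigma_m\alpha_t^{\tt X}\rangle$, which, combined with the telescoped sum and the factor $\frac{1}{2n[1-\gamma_0 r'\mathrm{Tr}(\Sigma_m)]}$, gives exactly the claimed bound with the two contributions $\sum_{k=1}^{n-1}\mathbb{E}\|\alpha_k^{\tt X}\|^2(\frac{1}{\gamma_{k+1}}-\frac{1}{\gamma_k})$ and $2\sum_{t=0}^{n-1}\gamma_{t+1}\mathbb{E}_{\bm X}\|H_t\|^2$.

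The main obstacle I anticipate is handling the expansion of $\mathbb{E}\langle\alpha_{t-1}^{\tt X}, (I-\gamma_t T^{\tt W})\circ I\,\alpha_{t-1}^{\tt X}\rangle$ cleanly enough that the fourth-moment Assumption~\ref{assump:bound_fourthmoment} produces precisely the constant $[1-\gamma_0 r'\mathrm{Tr}(\Sigma_m)]$ rather than something weaker; one must be careful that the term $\gamma_t^2\,\mathbb{E}_{\bm x}[\varphi(\bm x)^{\!\top}\alpha_{t-1}^{\tt X}\varphi(\bm x)\,\langle\varphi(\bm x),\alpha_{t-1}^{\tt X}\rangle]$ is bounded by $r'\gamma_t^2\mathrm{Tr}(\Sigma_m)\,\langle\alpha_{t-1}^{\tt X},\Sigma_m\alpha_{t-1}^{\tt X}\rangle$ and then absorbed into the $-2\gamma_t\langle\alpha_{t-1}^{\tt X},\Sigma_m\alpha_{t-1}^{\tt X}\rangle$ term, exploiting $\gamma_t\mathrm{Tr}(\Sigma_m)r' \leqslant \gamma_0\mathrm{Tr}(\Sigma_m)r' < 1$. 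A secondary bookkeeping subtlety is the off-by-one index alignment between the $\gamma_{t+1}$ in the $\|H_t\|^2$ sum and the $\gamma_t$ appearing in the one-step recursion; this is purely mechanical but must be done consistently so the telescoping and the final statement match. All expectations here are conditional on $\bm W$; the outer $\mathbb{E}_{\bm W}$ is applied afterward in Proposition~\ref{propb1}, and the event $\gamma_0 < 1/(r'\mathrm{Tr}(\Sigma_m))$ holds with exponentially high probability by Lemma~\ref{lemsubexp}.
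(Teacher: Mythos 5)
Your plan is exactly the route the paper takes: expand $\|\alpha_t^{\tt X}\|^2$ from the recursion $\alpha_t^{\tt X}=[I-\gamma_t\varphi(\bm x_t)\otimes\varphi(\bm x_t)]\alpha_{t-1}^{\tt X}+\gamma_t H_{t-1}$, take the conditional expectation over $\bm x_t$, invoke Assumption~\ref{assump:bound_fourthmoment} to bound the fourth-moment term by $r'\mathrm{Tr}(\Sigma_m)\langle\alpha_{t-1}^{\tt X},\Sigma_m\alpha_{t-1}^{\tt X}\rangle$ and absorb it into the $-2\gamma_t\langle\alpha_{t-1}^{\tt X},\Sigma_m\alpha_{t-1}^{\tt X}\rangle$ term using $\gamma_t\leqslant\gamma_0<1/(r'\mathrm{Tr}(\Sigma_m))$, then rearrange, sum, telescope (Abel summation with $\alpha_0^{\tt X}=0$, dropping the negative boundary term), and finish with convexity of $\langle\cdot,\Sigma_m\cdot\rangle$ applied to the average. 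This is the adaptation of the Bach--Moulines / Dieuleveut--Bach recursion lemma that the paper itself follows, so in structure your proposal and the paper's proof coincide.

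One step as you wrote it is not justified, though the repair is immediate. You assert that after conditioning the cross term vanishes because $\mathbb{E}_{\bm x_t}[H_{t-1}]=0$, and you therefore state the one-step inequality with coefficient $\gamma_t^2$ on $\mathbb{E}\|H_{t-1}\|^2$. But $H_{t-1}=[\Sigma_m-\varphi(\bm x_t)\otimes\varphi(\bm x_t)]\eta_{t-1}^{\tt bX}$ and the contraction factor $I-\gamma_t\varphi(\bm x_t)\otimes\varphi(\bm x_t)$ both depend on the \emph{same} sample $\bm x_t$, so the full cross term is $2\gamma_t\,\mathbb{E}\langle(I-\gamma_t\varphi(\bm x_t)\otimes\varphi(\bm x_t))\alpha_{t-1}^{\tt X},H_{t-1}\rangle=-2\gamma_t^2\,\mathbb{E}\langle[\varphi(\bm x_t)\otimes\varphi(\bm x_t)]\alpha_{t-1}^{\tt X},H_{t-1}\rangle$, which does not vanish in general. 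The paper sidesteps this by bounding the whole second-order contribution crudely, $\gamma_t^2\|H_{t-1}-[\varphi(\bm x_t)\otimes\varphi(\bm x_t)]\alpha_{t-1}^{\tt X}\|^2\leqslant 2\gamma_t^2\big(\|H_{t-1}\|^2+\|[\varphi(\bm x_t)\otimes\varphi(\bm x_t)]\alpha_{t-1}^{\tt X}\|^2\big)$ (equivalently, Young's inequality on the residual cross term), and this factor $2$ is precisely the origin of the $2\sum_{t=0}^{n-1}\gamma_{t+1}\mathbb{E}_{\bm X}\|H_t\|^2$ in the statement; it also makes the fourth-moment term enter as $2\gamma_t^2 r'\mathrm{Tr}(\Sigma_m)\langle\alpha_{t-1}^{\tt X},\Sigma_m\alpha_{t-1}^{\tt X}\rangle$, which still combines with $-2\gamma_t\langle\alpha_{t-1}^{\tt X},\Sigma_m\alpha_{t-1}^{\tt X}\rangle$ into $-2\gamma_t[1-\gamma_t r'\mathrm{Tr}(\Sigma_m)]\langle\alpha_{t-1}^{\tt X},\Sigma_m\alpha_{t-1}^{\tt X}\rangle$ under the stated step-size condition. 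With that correction, the rest of your argument (telescoping with weights $1/\gamma_{t+1}$, dropping $-\tfrac{1}{\gamma_n}\mathbb{E}\|\alpha_n^{\tt X}\|^2$, Jensen on the average) goes through and reproduces the claimed bound.
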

{\bf Remark:} We require $\| \Sigma_m \|_2 \neq \frac{1}{r' \gamma_0}$ to avoid the denominator to be zero, which naturally holds as the probability measure of the continuous random variable $\| \Sigma_m \|_2$ at a point is zero.

\begin{proof}
	According to the definition of ${\alpha}^{{\tt X}}_t$ in Eq.~\eqref{axtada}, define $H_{t-1} := [{\Sigma}_m - \varphi(\bm x_t) \otimes \varphi(\bm x_t) ] {\eta}^{{\tt bX}}_{t-1}$, we have
	\begin{equation*}
		\begin{split}
			\| {\alpha}^{{\tt X}}_t \|^2 & = \| {\alpha}^{{\tt X}}_{t-1} - \gamma_t ( [\varphi(\bm x_t) \otimes \varphi(\bm x_t)] {\alpha}^{{\tt W}}_{t-1} - H_{t-1}) \|^2 \\
			& =	\| {\alpha}^{{\tt X}}_{t-1} \|^2 + \gamma_t^2 \| H_{t-1} - [\varphi(\bm x_t) \otimes \varphi(\bm x_t)] {\alpha}^{{\tt X}}_{t-1} \|^2 + 2 \gamma_t \langle {\alpha}^{{\tt W}}_{t-1}, H_{t-1} - [\varphi(\bm x_t) \otimes \varphi(\bm x_t)] {\alpha}^{{\tt X}}_{t-1} \rangle \\
			& \leqslant \| {\alpha}^{{\tt X}}_{t-1} \|^2 + 2\gamma_t^2 \left( \| H_{t-1} \|^2 + \| [\varphi(\bm x_t) \otimes \varphi(\bm x_t)] {\alpha}^{{\tt X}}_{t-1} \|^2 \right) + 2 \gamma_t \langle {\alpha}^{{\tt X}}_{t-1}, H_{t-1} - [\varphi(\bm x_t) \otimes \varphi(\bm x_t)] {\alpha}^{{\tt X}}_{t-1} \rangle\,,
		\end{split}  
	\end{equation*}
	which implies (by taking the conditional expectation)
	\begin{equation}\label{condexaxtada}
		\begin{split}
			\mathbb{E}_{\bm X} [ \| {\alpha}^{{\tt W}}_t \|^2 | {\alpha}^{{\tt W}}_{t-1}] & \leqslant \| {\alpha}^{{\tt X}}_{t-1} \|^2 + 2\gamma_t^2 \| H_{t-1} \|^2 + 2\gamma_t^2 \langle {\alpha}^{{\tt X}}_{t-1}, \mathbb{E}_{\bm X} [\varphi(\bm x_t) \otimes \varphi(\bm x_t) \otimes \varphi(\bm x_t) \otimes \varphi(\bm x_t)] {\alpha}^{{\tt X}}_{t-1} \rangle\\
			& \quad - 2 \gamma_t \langle {\alpha}^{{\tt X}}_{t-1}, {\Sigma}_m {\alpha}^{{\tt X}}_{t-1} \rangle \\
			& \leqslant \| {\alpha}^{{\tt X}}_{t-1} \|^2 + 2\gamma_t^2 \| H_{t-1} \|^2 + 2\gamma_t^2 r' \mathrm{Tr}({\Sigma}_m) \langle {\alpha}^{{\tt X}}_{t-1}, {\Sigma}_m {\alpha}^{{\tt X}}_{t-1} \rangle - 2 \gamma_t \langle {\alpha}^{{\tt X}}_{t-1}, {\Sigma}_m {\alpha}^{{\tt X}}_{t-1} \rangle \\
			& = \| {\alpha}^{{\tt X}}_{t-1} \|^2 + 2\gamma_t^2 \| H_{t-1} \|^2 - 2 \gamma_t [1- \gamma_t r' \mathrm{Tr}({\Sigma}_m) ]\langle {\alpha}^{{\tt X}}_{t-1}, {\Sigma}_m {\alpha}^{{\tt X}}_{t-1} \rangle \,.
		\end{split}
	\end{equation}
	
	where the first inequality holds by $\mathbb{E}_{\bm X} [H_{t-1}] = 0$, and the second inequality satisfies by Assumption~\ref{assump:bound_fourthmoment}.
	
	By taking the expectation of Eq.~\eqref{condexaxtada}, we have
	\begin{equation*}
		\mathbb{E}_{\bm X} [\| {\alpha}^{{\tt X}}_t \|^2] \leqslant 	\mathbb{E}_{\bm X} [\| {\alpha}^{{\tt X}}_{t-1} \|^2] + 2\gamma_t^2 \mathbb{E}_{\bm X}  [\| H_{t-1} \|^2] - 2 \gamma_t [1- \gamma_t r' \mathrm{Tr}({\Sigma}_m) ] \mathbb{E}_{\bm X}\langle {\alpha}^{{\tt X}}_{t-1}, {\Sigma}_m {\alpha}^{{\tt X}}_{t-1} \rangle \,,
	\end{equation*}
	which indicates that
	\begin{equation*}
		\begin{split}
			\mathbb{E}_{\bm X} \big[ \langle \bar{\alpha}^{{\tt X}}_{n}, {\Sigma}_m \bar{\alpha}^{{\tt X}}_{n}  \rangle \big] \rangle  & \leqslant \frac{1}{n} \sum_{t=0}^{n-1} \mathbb{E}_{\bm X} \langle {\alpha}^{{\tt W}}_{t}, {\Sigma}_m {\alpha}^{{\tt W}}_{t} \rangle  \leqslant \frac{1}{2n[1-\gamma_0 r' \mathrm{Tr}(\Sigma_m)]} \Bigg( \sum_{k=1}^{n-1} \mathbb{E}_{\bm X} \| {\alpha}_k^{\tt X} \|^2 (\frac{1}{\gamma_{k+1}} - \frac{1}{\gamma_k}) \\ 
			& \quad + \frac{1}{2\gamma_1} \mathbb{E}_{\bm X} \| {\alpha}_0^{\tt X} \|^2 - \frac{1}{2\gamma_t} \mathbb{E}_{\bm X} \| {\alpha}_t^{\tt X} \|^2 + \sum_{t=0}^{n-1} \gamma_{t+1} \mathbb{E}_{\bm X} \| H_t \|^2 \Bigg)  \\
			& \leqslant \frac{1}{2n[1-\gamma_0 r' \mathrm{Tr}(\Sigma_m)]} \left( \sum_{k=1}^{n-1} \mathbb{E}_{\bm X} \| {\alpha}_k^{\tt X} \|^2 (\frac{1}{\gamma_{k+1}} - \frac{1}{\gamma_k}) + 2\sum_{t=0}^{n-1} \gamma_{t+1} \mathbb{E}_{\bm X} \| H_t \|^2 \right) \,,
		\end{split}
	\end{equation*}
	due to ${\alpha}^{{\tt W}}_0 = 0$.
\end{proof}

In the next, we present the error bounds for two respective terms in Lemma~\ref{lemstorecada}.

\begin{lemma}\label{lemaxtb2ada}
	Based on the definition of $\alpha^{{\tt X}}_t$ in Eq.~\eqref{axtadarec}, 
	under Assumption~\ref{assumdata},~\ref{assexist},~\ref{assumact},~\ref{assump:bound_fourthmoment} with $r' \geqslant 1$, if the step-size $\gamma_t := \gamma_0 t^{-\zeta}$ with $\zeta \in [0,1)$ satisfies
	\begin{equation*}
		\gamma_0 < \min \left\{ \frac{1}{r'\mathrm{Tr}(\Sigma_m)}, \frac{1}{c'\mathrm{Tr}(\Sigma_m)} \right\} \,,
	\end{equation*}
	where the constant $c'$ is defined in Eq.~\eqref{eqconstant}.
	Then, we have
	\begin{equation*}
		\sum_{k=1}^{n-1} \mathbb{E} \| {\alpha}_k^{\tt X} \|^2 (\frac{1}{\gamma_{k+1}} - \frac{1}{\gamma_k}) \lesssim \frac{\gamma_0 r' \mathrm{Tr}(\Sigma_m)}{1-\gamma_0 r' \mathrm{Tr}(\Sigma_m)} (n^{\zeta} - 1) \| f^* \|^2 \,.
	\end{equation*}
\end{lemma}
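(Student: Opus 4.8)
The plan is to control the summands $\mathbb{E}\|\alpha_k^{\tt X}\|^2$ uniformly in $k$ by an expression of order $\gamma_0 r'\,\mathrm{Tr}(\Sigma_m)\,n^{\zeta}\|f^*\|^2$ (up to the denominator factor), and then note that $\sum_{k=1}^{n-1}(\gamma_{k+1}^{-1}-\gamma_k^{-1})$ telescopes to $\gamma_n^{-1}-\gamma_1^{-1}\le \gamma_0^{-1} n^{\zeta}$, so that one gains an extra $n^{\zeta}$. First I would go back to the recursion~\eqref{axtada} for $\alpha_t^{\tt X}$ and the inequality~\eqref{condexaxtada} obtained in the proof of Lemma~\ref{lemstorecada}, take full expectations, and drop the (nonpositive) term $-2\gamma_t[1-\gamma_t r'\mathrm{Tr}(\Sigma_m)]\mathbb{E}\langle\alpha_{t-1}^{\tt X},\Sigma_m\alpha_{t-1}^{\tt X}\rangle$ after bounding it from above — more precisely, since $\gamma_t\le\gamma_0<1/(r'\mathrm{Tr}(\Sigma_m))$ the bracket is positive, so keeping or dropping it is a matter of which direction is useful; dropping it gives $\mathbb{E}\|\alpha_t^{\tt X}\|^2\le \mathbb{E}\|\alpha_{t-1}^{\tt X}\|^2+2\gamma_t^2\,\mathbb{E}\|H_{t-1}\|^2$. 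Then I would need the second-moment bound on $H_{t-1}=[\Sigma_m-\varphi(\bm x_t)\otimes\varphi(\bm x_t)]\eta_{t-1}^{\tt bX}$, which by Assumption~\ref{assump:bound_fourthmoment} applied to the PSD operator $\mathbb{E}_{\bm X}[\Sigma_m-\varphi(\bm x_t)\otimes\varphi(\bm x_t)]^2$ (as flagged in Appendix~\ref{app:psd}) satisfies $\mathbb{E}_{\bm X}\|H_{t-1}\|^2\lesssim r'\,\mathrm{Tr}(\Sigma_m)\,\langle\eta_{t-1}^{\tt bX},\Sigma_m\eta_{t-1}^{\tt bX}\rangle$ or the cruder $\lesssim r'\,\mathrm{Tr}(\Sigma_m)^2\|\eta_{t-1}^{\tt bX}\|^2$; combined with $\|\eta_{t-1}^{\tt bX}\|^2=\|\prod_{j=1}^{t-1}(I-\gamma_j\Sigma_m)f^*\|^2\le\|f^*\|^2$ (the product is a contraction under $\gamma_0\le 1/\mathrm{Tr}(\Sigma_m)$), this yields $\mathbb{E}_{\bm X}\|H_{t-1}\|^2\lesssim r'\,\mathrm{Tr}(\Sigma_m)\,\langle\eta_{t-1}^{\tt bX},\Sigma_m\eta_{t-1}^{\tt bX}\rangle$.

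Summing the telescoped recursion from $t=1$ to $k$ with $\alpha_0^{\tt X}=0$ gives $\mathbb{E}\|\alpha_k^{\tt X}\|^2\le 2\sum_{t=1}^{k}\gamma_t^2\,\mathbb{E}_{\bm X}\|H_{t-1}\|^2\lesssim \gamma_0 r'\,\mathrm{Tr}(\Sigma_m)\sum_{t=1}^{k}\gamma_t\langle\eta_{t-1}^{\tt bX},\Sigma_m\eta_{t-1}^{\tt bX}\rangle$, where I pulled one factor $\gamma_t\le\gamma_0$ out. The inner sum $\sum_{t\ge1}\gamma_t\langle\eta_{t-1}^{\tt bX},\Sigma_m\eta_{t-1}^{\tt bX}\rangle=\sum_{t\ge1}\gamma_t\sum_i\lambda_i(1-\gamma_1\lambda_i)^2\cdots$ is, after passing to eigenvalues $\lambda_i$ of $\Sigma_m$ and using $\sum_{t}\gamma_t\lambda_i\prod_{j<t}(1-\gamma_j\lambda_i)^2\le \tfrac12$ for each $i$ (a standard one-dimensional telescoping / integral estimate, cf.\ Appendix~\ref{app:int}), bounded by a constant times $\|f^*\|^2$ — independently of $\mathrm{Tr}(\Sigma_m)$. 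Hence $\mathbb{E}\|\alpha_k^{\tt X}\|^2\lesssim \gamma_0 r'\,\mathrm{Tr}(\Sigma_m)\|f^*\|^2$ uniformly in $k$. (If instead one needs the sharper route through Lemma~\ref{dinfvx}, one recognizes $\mathbb{E}_{\bm X}[\alpha_k^{\tt X}\otimes\alpha_k^{\tt X}]=\sum_{t=1}^{k}\prod_{i>t}(I-\gamma_i T^{\tt W})\circ\gamma_t^2\,\mathbb{E}_{\bm X}[H_{t-1}\otimes H_{t-1}]$ and applies the bound $D_t^{\tt v-X}\preccurlyeq\frac{\gamma_0 B}{1-\gamma_0 r'\mathrm{Tr}(\Sigma_m)}I$ with $B:=r'\mathrm{Tr}(\Sigma_m)$, which is exactly the $B$ value named in the remark after Lemma~\ref{dinfvx}; this gives the cleaner bound $\mathbb{E}\|\alpha_k^{\tt X}\|^2\lesssim\frac{\gamma_0 r'\mathrm{Tr}(\Sigma_m)}{1-\gamma_0 r'\mathrm{Tr}(\Sigma_m)}\|f^*\|^2$ directly, with the right denominator.)

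Finally I would assemble:
\begin{equation*}
\sum_{k=1}^{n-1}\mathbb{E}\|\alpha_k^{\tt X}\|^2\Big(\tfrac{1}{\gamma_{k+1}}-\tfrac{1}{\gamma_k}\Big)\;\lesssim\;\frac{\gamma_0 r'\,\mathrm{Tr}(\Sigma_m)}{1-\gamma_0 r'\,\mathrm{Tr}(\Sigma_m)}\|f^*\|^2\sum_{k=1}^{n-1}\Big(\tfrac{1}{\gamma_{k+1}}-\tfrac{1}{\gamma_k}\Big),
\end{equation*}
and the telescoping sum equals $\gamma_n^{-1}-\gamma_1^{-1}=\gamma_0^{-1}(n^{\zeta}-1)\le\gamma_0^{-1}(n^{\zeta}-1)$, which produces exactly the claimed bound $\frac{\gamma_0 r'\mathrm{Tr}(\Sigma_m)}{1-\gamma_0 r'\mathrm{Tr}(\Sigma_m)}(n^{\zeta}-1)\|f^*\|^2$. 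The conditions on $\gamma_0$ in the statement (involving $c'$ from Eq.~\eqref{eqconstant}) enter only to keep $I-\gamma_t T^{\tt W}$-type operators well-behaved and to make the denominator $1-\gamma_0 r'\mathrm{Tr}(\Sigma_m)$ strictly positive, matching the hypotheses of Lemma~\ref{dinfvx}.

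The main obstacle I anticipate is the uniform-in-$k$ control of $\mathbb{E}\|\alpha_k^{\tt X}\|^2$: the naive recursion only gives a bound growing like $\sum_t\gamma_t^2\,\mathrm{Tr}(\Sigma_m)^2$ which would be of order $\gamma_0\mathrm{Tr}(\Sigma_m)^2 n^{1-2\zeta}$ — far too large. One must instead exploit that the "forcing" term $\|H_{t-1}\|^2$ is weighted by $\langle\eta_{t-1}^{\tt bX},\Sigma_m\eta_{t-1}^{\tt bX}\rangle$ and that $\sum_t\gamma_t\langle\eta_{t-1}^{\tt bX},\Sigma_m\eta_{t-1}^{\tt bX}\rangle$ is bounded (the per-eigenvalue telescoping), or alternatively run the operator-valued argument of Lemma~\ref{dinfvx} with the contraction factor $I-\gamma_t T^{\tt W}$ keeping the recursion stable; getting this summability right — and handling the random $\mathrm{Tr}(\Sigma_m)$ via its sub-exponential $\mathcal O(1)$ norm from Lemma~\ref{lemsubexp} when taking $\mathbb{E}_{\bm W}$ — is the crux.
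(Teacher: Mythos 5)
Your proposal is sound, and its parenthetical ``sharper route'' is essentially the paper's own proof: the paper forms the recursion $C_t^{\tt b-X}:=\mathbb{E}_{\bm X}[\alpha_t^{\tt X}\otimes\alpha_t^{\tt X}]=(I-\gamma_t T^{\tt W})\circ C_{t-1}^{\tt b-X}+\gamma_t^2(S^{\tt W}-\widetilde{S}^{\tt W})\circ[\eta_{t-1}^{\tt bX}\otimes\eta_{t-1}^{\tt bX}]$, bounds the forcing term via Assumption~\ref{assump:bound_fourthmoment} together with $\langle\eta_{t-1}^{\tt bX},\Sigma_m\eta_{t-1}^{\tt bX}\rangle\leqslant\mathrm{Tr}(\Sigma_m)\|f^*\|^2$, unrolls, and applies Lemma~\ref{dinfvx} with $B=r'\mathrm{Tr}(\Sigma_m)$ to obtain the uniform-in-$t$ bound $C_t^{\tt b-X}\preccurlyeq\frac{\gamma_0 r'\mathrm{Tr}(\Sigma_m)}{1-\gamma_0 r'\mathrm{Tr}(\Sigma_m)}(f^*\otimes f^*)$, and then sums against the weights $\gamma_{k+1}^{-1}-\gamma_k^{-1}$, exactly as in your assembly. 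Your primary route is a genuinely more elementary variant: starting from the inequality~\eqref{condexaxtada}, dropping the nonpositive quadratic term, bounding $\mathbb{E}_{\bm X}\|H_{t-1}\|^2\lesssim r'\mathrm{Tr}(\Sigma_m)\langle\eta_{t-1}^{\tt bX},\Sigma_m\eta_{t-1}^{\tt bX}\rangle$, and using the per-eigenvalue estimate $\sum_t\gamma_t\lambda_i\prod_{j<t}(1-\gamma_j\lambda_i)^2\leqslant\tfrac12$ (which is precisely the computation in Lemma~\ref{lemahtb2ada}) yields $\mathbb{E}\|\alpha_k^{\tt X}\|^2\lesssim\gamma_0 r'\mathrm{Tr}(\Sigma_m)\|f^*\|^2$ uniformly in $k$ without the PSD-operator machinery of Lemma~\ref{dinfvx} (and hence without needing the $c'$ condition), losing only the harmless factor $1/(1-\gamma_0 r'\mathrm{Tr}(\Sigma_m))\geqslant 1$.

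One bookkeeping caveat, which is not a gap in your argument: with the telescoping sum correctly evaluated as $\sum_{k=1}^{n-1}(\gamma_{k+1}^{-1}-\gamma_k^{-1})=\gamma_0^{-1}(n^{\zeta}-1)$, your assembled bound is $\frac{r'\mathrm{Tr}(\Sigma_m)}{1-\gamma_0 r'\mathrm{Tr}(\Sigma_m)}(n^{\zeta}-1)\|f^*\|^2$, i.e.\ the stated display divided by $\gamma_0$, so it does not ``exactly'' reproduce the constant as written; the paper's own final display makes the complementary slip by treating $\gamma_{k+1}^{-1}-\gamma_k^{-1}$ as $(k+1)^{\zeta}-k^{\zeta}$. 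Since $\gamma_0\lesssim 1/(r'\mathrm{Tr}(\widetilde{\Sigma}_m))\sim\mathcal{O}(1)$ is treated as a constant throughout, this factor is immaterial for the downstream $\mathcal{O}(n^{\zeta-1})$ bound on ${\tt B1}$, but you should not claim exact agreement with the displayed prefactor.
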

\begin{proof}
	Based on the definition of $\alpha^{{\tt X}}_t$ in Eq.~\eqref{axtada}, it can be reformulated as 
	\begin{equation}\label{axtadarec}
		\begin{split}
			\alpha^{{\tt X}}_t & = [I - \gamma_t \varphi(\bm x_t) \otimes \varphi(\bm x_t)]  	\alpha^{{\tt X}}_{t-1} + \gamma_t [\Sigma_m - \varphi(\bm x_t) \otimes \varphi(\bm x_t)] \prod_{j=1}^{k-1} (I- \gamma_j {\Sigma}_m) f^* \\
			&= \sum_{s=1}^t \gamma_s \prod_{i=s+1}^{t} [I - \gamma_i \varphi(\bm x_i) \otimes \varphi(\bm x_i)] [{\Sigma}_m - \varphi(\bm x_s) \otimes \varphi(\bm x_s)] \prod_{j=1}^{s-1} (I- \gamma_j {\Sigma}_m) f^* \,.
		\end{split}
	\end{equation}
	and accordingly
	\begin{equation}\label{axtboundada}
		\begin{split}
			C_t^{\tt b-X} &:= \mathbb{E}_{\bm X} [ {\alpha}_t^{\tt X} \otimes  {\alpha}_t^{\tt X} ] = (I - \gamma_t T^{\tt W}) \circ C_{t-1}^{\tt b-X} + \gamma_t^2 (S^{\tt W} - \widetilde{S}^{\tt W}) \circ [\eta_{t-1}^{\tt bX} \otimes \eta_{t-1}^{\tt bX}] \\
			& \preccurlyeq (I - \gamma_t T^{\tt W}) \circ C_{t-1}^{\tt b-X} + \gamma_t^2 S^{\tt W} \circ [\eta_{t-1}^{\tt bX} \otimes \eta_{t-1}^{\tt bX}]  \\
			& \preccurlyeq (I - \gamma_t T^{\tt W}) \circ C_{t-1}^{\tt b-X} + \gamma_t^2 r' \mathrm{Tr}\left[\prod_{s=1}^{t-1} (I - \gamma_s \Sigma_m)^2 \Sigma_m\right] \Sigma_m (f^* \otimes f^*) \quad \mbox{[using Assumption~\ref{assump:bound_fourthmoment}]} \\
			& \preccurlyeq (I - \gamma_t T^{\tt W}) \circ C_{t-1}^{\tt b-X} + \gamma_t^2 r' \mathrm{Tr}(\Sigma_m) \Sigma_m (f^* \otimes f^*) \quad \mbox{[using $\exp(-2\lambda_i \gamma_0 \frac{t^{1-\zeta}-1}{1-\zeta}) \leqslant 1$]} \\
			& = r' \mathrm{Tr}(\Sigma_m) \sum_{s=1}^t \prod_{i=s+1}^t \left(I - \gamma_i T^{\tt W} \right) \circ \gamma_s^2 \Sigma_m (f^* \otimes f^*) \\
			& \preccurlyeq  \frac{\gamma_0 r' \mathrm{Tr}(\Sigma_m)}{1-\gamma_0 r' \mathrm{Tr}(\Sigma_m)} (f^* \otimes f^*)\,. \quad \mbox{[using Lemma~\ref{dinfvx}]}
		\end{split}
	\end{equation}
	Accordingly, we have 
	\begin{equation*}
		\begin{split}
			\sum_{t=1}^{n-1} \mathbb{E}_{\bm X} \| {\alpha}_t^{\tt X} \|^2 (\frac{1}{\gamma_{t+1}} - \frac{1}{\gamma_t}) & = \sum_{t=1}^{n-1} \| C_t^{\tt b-X} \|_2 \left(\frac{1}{\gamma_{t+1}} - \frac{1}{\gamma_t} \right) \quad \mbox{[using Eq.~\eqref{axtboundada}]} \\
			& \leqslant \sum_{t=1}^{n-1} \frac{\gamma_0 r' \mathrm{Tr}(\Sigma_m)}{1-\gamma_0 r' \mathrm{Tr}(\Sigma_m)} [(t+1)^{\zeta} - t^{\zeta}] \| f^*\|^2 \\
			& \lesssim \frac{\gamma_0 r' \mathrm{Tr}(\Sigma_m)}{1-\gamma_0 r' \mathrm{Tr}(\Sigma_m)} (n^{\zeta} - 1) \| f^* \|^2\,,
		\end{split}
	\end{equation*}
	which concludes the proof.
\end{proof}

\begin{lemma}\label{lemahtb2ada}
	Denote $H_{t-1} := [{\Sigma}_m - \varphi(\bm x_t) \otimes \varphi(\bm x_t) ] {\eta}^{{\tt bX}}_{t-1}$, Assumption~\ref{assumdata},~\ref{assexist},~\ref{assumact},~\ref{assump:bound_fourthmoment} with $r' \geqslant 1$, 
	if the step-size $\gamma_t := \gamma_0 t^{-\zeta}$ with $\zeta \in [0,1)$ satisfies
	\begin{equation*}
		\gamma_0 \leqslant \frac{1}{\mathrm{Tr}(\Sigma_m)} \,,
	\end{equation*} we have
	\begin{equation*}
		\sum_{t=0}^{n-1} \gamma_{t+1} \mathbb{E}_{\bm X} \| H_t \|^2 \leqslant \frac{1}{2} \| f^* \|^2 r' \mathrm{Tr}(\Sigma_m)\,.
	\end{equation*}
\end{lemma}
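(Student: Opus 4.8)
The plan is to exploit that, for a \emph{fixed} $\bm W$, the iterate $\eta_t^{{\tt bX}}=\prod_{j=1}^{t}(I-\gamma_j\Sigma_m)f^*$ (see Eq.~\eqref{eq:bias_xada}) carries no data randomness at all, so the only $\bm X$-randomness in $H_t=[\Sigma_m-\varphi(\bm x_{t+1})\otimes\varphi(\bm x_{t+1})]\eta_t^{{\tt bX}}$ is the fresh sample $\bm x_{t+1}$. Hence $\mathbb{E}_{\bm X}\|H_t\|^2=\langle\eta_t^{{\tt bX}},\,\mathbb{E}_{\bm x}[(\Sigma_m-\varphi(\bm x)\otimes\varphi(\bm x))^2]\,\eta_t^{{\tt bX}}\rangle$. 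Expanding the square and using that $\Sigma_m=\mathbb{E}_{\bm x}[\varphi(\bm x)\otimes\varphi(\bm x)]$ is deterministic in $\bm x$, the cross terms collapse: $\mathbb{E}_{\bm x}[(\Sigma_m-\varphi(\bm x)\otimes\varphi(\bm x))^2]=\mathbb{E}_{\bm x}[\|\varphi(\bm x)\|_2^2\,\varphi(\bm x)\otimes\varphi(\bm x)]-\Sigma_m^2\preccurlyeq\mathbb{E}_{\bm x}[\|\varphi(\bm x)\|_2^2\,\varphi(\bm x)\otimes\varphi(\bm x)]$. I would then invoke the fourth-moment condition (Assumption~\ref{assump:bound_fourthmoment} with $A:=I$, read at the level of a fixed $\bm W$, exactly the special case established in Lemma~\ref{lemma:m2}) to obtain $\mathbb{E}_{\bm x}[\|\varphi(\bm x)\|_2^2\,\varphi(\bm x)\otimes\varphi(\bm x)]\preccurlyeq r'\,\mathrm{Tr}(\Sigma_m)\,\Sigma_m$, so that $\mathbb{E}_{\bm X}\|H_t\|^2\le r'\,\mathrm{Tr}(\Sigma_m)\,\langle\eta_t^{{\tt bX}},\Sigma_m\eta_t^{{\tt bX}}\rangle$.

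It then remains to bound $\sum_{t=0}^{n-1}\gamma_{t+1}\langle\eta_t^{{\tt bX}},\Sigma_m\eta_t^{{\tt bX}}\rangle$. Since $\Sigma_m$ and all the contractions $I-\gamma_j\Sigma_m$ commute, I would diagonalise $\Sigma_m=U\Lambda U^{\!\top}$, set $g:=U^{\!\top}f^*$, and reduce to the scalar sums $\sum_i g_i^2\,\lambda_i\sum_{t=0}^{n-1}\gamma_{t+1}\prod_{j=1}^{t}(1-\gamma_j\lambda_i)^2$, the modes with $\lambda_i=0$ contributing nothing. For each eigenvalue put $a_t:=\prod_{j=1}^{t}(1-\gamma_j\lambda_i)^2$; the step-size restriction $\gamma_0\le 1/\mathrm{Tr}(\Sigma_m)$ ensures $\gamma_j\lambda_i\le1$ for all $j$, so $a_t-a_{t+1}=\gamma_{t+1}\lambda_i(2-\gamma_{t+1}\lambda_i)a_t\ge\gamma_{t+1}\lambda_i a_t$, and telescoping gives $\lambda_i\sum_{t=0}^{n-1}\gamma_{t+1}a_t\le a_0-a_n\le1$; equivalently, as an operator inequality, $\sum_{t=0}^{n-1}\gamma_{t+1}\prod_{j=1}^{t}(I-\gamma_j\Sigma_m)^2\Sigma_m\preccurlyeq I$. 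Summing over $i$ and using $\sum_i g_i^2=\|f^*\|^2$ yields $\sum_{t=0}^{n-1}\gamma_{t+1}\mathbb{E}_{\bm X}\|H_t\|^2\le r'\,\mathrm{Tr}(\Sigma_m)\,\|f^*\|^2$, matching the claim up to the precise universal constant (the factor $\tfrac12$ is obtained by keeping the $2-\gamma_{t+1}\lambda_i$ factor, or by bounding the sum by Integral~1 of Appendix~\ref{app:int}; the constant is immaterial for the $\mathcal{O}(n^{\zeta-1})$ rate into which this feeds via Lemma~\ref{lemstorecada}).

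The argument is short, and the only genuinely delicate points are: (i) invoking Assumption~\ref{assump:bound_fourthmoment} for a fixed $\bm W$ rather than after $\mathbb{E}_{\bm W}$ — legitimate since, as noted in Appendix~\ref{app:assumptions}, for sub-Gaussian data the bound $\mathbb{E}_{\bm x}[(\varphi(\bm x)\otimes\varphi(\bm x))A(\varphi(\bm x)\otimes\varphi(\bm x))]\preccurlyeq r'\,\mathrm{Tr}(\Sigma_m A)\,\Sigma_m$ holds $\bm W$-almost surely; and (ii) checking that the geometric-type estimate $\lambda_i\sum_t\gamma_{t+1}\prod_{j\le t}(1-\gamma_j\lambda_i)^2\le\mathcal O(1)$ holds \emph{uniformly} over the eigenvalue $\lambda_i$ (including $\lambda_i$ tiny or zero, where the bare sum is $\mathcal O(n)$ but is killed by the matching prefactor $\lambda_i$, or the term vanishes), which is precisely what $\gamma_0\mathrm{Tr}(\Sigma_m)\le1$ guarantees. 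Notably, this lemma needs neither the deterministic surrogates $\eta_t^{{\tt bXW}}$ nor the PSD-operator calculus of Appendix~\ref{app:psd}; it is a pure one-sample variance bound combined with contraction of $I-\gamma_t\Sigma_m$.
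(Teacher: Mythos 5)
Your proposal is correct and follows essentially the same route as the paper's proof: isolate the fresh sample so that $\mathbb{E}_{\bm X}\|H_t\|^2=\langle \eta_t^{{\tt bX}},\mathbb{E}_{\bm x}[(\Sigma_m-\varphi(\bm x)\otimes\varphi(\bm x))^2]\,\eta_t^{{\tt bX}}\rangle$, bound that operator by $r'\mathrm{Tr}(\Sigma_m)\Sigma_m$ via Assumption~\ref{assump:bound_fourthmoment} read at fixed $\bm W$ (exactly how the paper invokes it), and then control the deterministic spectral sum $\sum_t\gamma_{t+1}\lambda_i\prod_{j\le t}(1-\gamma_j\lambda_i)^2$ uniformly in $\lambda_i$. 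The only divergence is the final elementary step — your telescoping of $a_t=\prod_j(1-\gamma_j\lambda_i)^2$ versus the paper's $1-x\le e^{-x}$ bound combined with the integral estimate of Appendix~\ref{app:int} — which costs at most a factor $2$ in the constant ($1$ instead of $\tfrac12$), immaterial for the resulting $\mathcal{O}(n^{\zeta-1})$ bias bound, as you correctly note.
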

\begin{proof}
	\begin{equation*}
		\begin{split}
			\sum_{t=0}^{n-1} \gamma_{t+1} \mathbb{E}_{\bm X} \| H_t \|^2 & = \sum_{t=0}^{n-1} \gamma_{t+1} \left\langle f^*, \prod_{j=1}^{t-1}(I - \gamma_j \Sigma_m) \mathbb{E}_{\bm X}[{\Sigma}_m - \varphi(\bm x_t) \otimes \varphi(\bm x_t) ]^2 \prod_{j=1}^{t-1}(I - \gamma_j \Sigma_m) f^*  \right\rangle \\
			& \leqslant \sum_{t=0}^{n-1} \gamma_{t+1} \left\langle f^*, r' \mathrm{Tr}(\Sigma_m) \Big[\prod_{j=1}^{t-1}(I - \gamma_j \Sigma_m) \Big]^2 \Sigma_m  f^* \right\rangle \quad \mbox{[using Assumption~\ref{assump:bound_fourthmoment}]} \\
			& \leqslant \| f^* \|^2 r' \mathrm{Tr}(\Sigma_m) \left\| \sum_{t=0}^{n-1} \gamma_{t+1} \Big[\prod_{j=1}^{t-1}(I - \gamma_j \Sigma_m) \Big]^2 \Sigma_m \right\|_2 \\
			& = \| f^* \|^2 r' \mathrm{Tr}(\Sigma_m) \max_{i \in \{1,2,\dots,m \}} \sum_{t=0}^{n-1} \gamma_{t+1} \prod_{j=1}^{t-1} (1-\gamma_j \lambda_i)^2 \lambda_i \\
			& \leqslant \| f^* \|^2 r' \mathrm{Tr}(\Sigma_m) \max_{i \in \{1,2,\dots,m \}} \gamma_0 \lambda_i \int_{0}^n u^{-\zeta} \exp \left(-2 \gamma_0 {\lambda}_i \frac{u^{1-\zeta} - 1}{1-\zeta} \right) \mathrm{d} u \\
			& \leqslant \frac{1}{2} \| f^* \|^2 r' \mathrm{Tr}(\Sigma_m)\,, \quad \mbox{[using Eq.~\eqref{Iifirst}]}
		\end{split}
	\end{equation*}
	which concludes the proof.
\end{proof}

Based on the above results, we are ready to prove Proposition~\ref{propb1}.
\begin{proof}
	According to Lemma~\ref{lemaxtb2ada}, we have
	\begin{equation*}
		\begin{split}
			\mathbb{E}_{\bm W} \frac{\sum_{k=1}^{n-1} \mathbb{E} \| {\alpha}_k^{\tt X} \|^2 (\frac{1}{\gamma_{k+1}} - \frac{1}{\gamma_k})}{2n[1-\gamma_0 r' \mathrm{Tr}(\Sigma_m)]} & \lesssim \mathbb{E}_{\bm W} \frac{ \gamma_0 r' \mathrm{Tr}(\Sigma_m)}{2n[1-\gamma_0 r' \mathrm{Tr}(\Sigma_m)]^2}  (n^{\zeta} - 1) \| f^* \|^2 \\
			& \leqslant \gamma_0 r' n^{\zeta-1} \sqrt{\mathbb{E}[\mathrm{Tr}(\Sigma_m)]^2} \frac{1}{\sqrt{\mathbb{E}[1- \gamma_0 r' \mathrm{Tr}(\Sigma_m)]^4}} \| f^* \|^2 \\
			& \lesssim \frac{\gamma_0 r' n^{\zeta-1}}{\sqrt{\mathbb{E}[1- \gamma_0 r' \mathrm{Tr}(\Sigma_m)]^4}} \| f^* \|^2 \\
			& \sim \mathcal{O}(n^{\zeta -1}) \,,
		\end{split}
	\end{equation*}
	where the second inequality holds by Cauchy-Schwarz inequality and the last inequality holds by Lemma~\ref{lemsubexp} with
	\begin{equation}\label{stepbound}
	\frac{1}{\mathbb{E}[1 - \gamma_0r' \mathrm{tr}(\Sigma_m)]^4} \leqslant \frac{1}{[1 - \gamma_0r' \mathbb{E}\mathrm{tr}(\Sigma_m)]^4} = \frac{1}{[1 - \gamma_0r' \mathrm{tr}(\widetilde{\Sigma}_m)]^4}  \sim \mathcal{O}(1)\,.
	\end{equation}
	
	According to Lemma~\ref{lemahtb2ada}, we have
	\begin{equation*}
		\begin{split}
			\mathbb{E}_{\bm W} \frac{2\sum_{t=0}^{n-1} \gamma_{t+1} \mathbb{E}_{\bm X} \| H_t \|^2}{2n[1-\gamma_0 r' \mathrm{Tr}(\Sigma_m)]} & \leqslant \mathbb{E}_{\bm W} \frac{ r'\mathrm{Tr}(\Sigma_m)}{2n[1-\gamma_0 r' \mathrm{Tr}(\Sigma_m)]} \| f^*\|^2 \\
			& \lesssim \frac{r'}{n} \sqrt{\mathbb{E}[\mathrm{Tr}(\Sigma_m)]^2} \frac{1}{\sqrt{\mathbb{E}[1- \gamma_0 r' \mathrm{Tr}(\Sigma_m)]^2}} \| f^* \|^2 \\
			& \lesssim \frac{r'}{n\sqrt{\mathbb{E}[1- \gamma_0 r' \mathrm{Tr}(\Sigma_m)]^2}} \| f^* \|^2 \quad \mbox{[using Lemma~\ref{lemsubexp}]} \\
			& \sim \mathcal{O} \left( \frac{1}{n} \right) \,.
		\end{split}
	\end{equation*}
	Accordingly, combining the above two equations, we have
	\begin{equation*}
		\begin{split}
			{\tt B1}:= \mathbb{E}_{\bm W} \mathbb{E}_{\bm X} [\langle \bar{\alpha}_n^{\tt X}, \Sigma_m \bar{\alpha}_n^{\tt X} \rangle ] & \leqslant \frac{1}{2n[1-\gamma_0 r' \mathrm{Tr}(\Sigma_m)]} \mathbb{E}_{\bm W}  \left( \sum_{k=1}^{n-1} \mathbb{E} \| {\alpha}_k^{\tt X} \|^2 (\frac{1}{\gamma_{k+1}} - \frac{1}{\gamma_k}) + 2\sum_{t=0}^{n-1} \gamma_{t+1} \mathbb{E}_{\bm X} \| H_t \|^2 \right) \\
			& \lesssim \frac{\gamma_0 r' n^{\zeta-1}}{\sqrt{\mathbb{E}[1- \gamma_0 r' \mathrm{Tr}(\Sigma_m)]^4}} \| f^* \|^2 \,,
		\end{split}
	\end{equation*}
	which concludes the proof.
\end{proof}

\subsection{Proof of Theorem~\ref{promainba}}
\label{sec:proofbiasfinal}
\begin{proof}
	Combining the above results for three terms ${\tt B1}$, ${\tt B2}$, ${\tt B3}$,
	if 	\begin{equation}\label{constepbias}
		\gamma_0 < \min \left\{ \frac{1}{\mathrm{Tr}(\widetilde{\Sigma}_m)},  \frac{1}{r' \mathrm{Tr}(\Sigma_m)}, \frac{1}{c' \mathrm{Tr}(\Sigma_m)} \right\} \sim \mathcal{O}(1) \,,
	\end{equation}
	where the constant $c$ is defined in Eq.~\eqref{eqconstant}.
	Then the ${\tt Bias}$ can be upper bounded by
	\begin{equation*}
		\begin{split}
			{\tt Bias}	& \leqslant \left( \sqrt{\tt B1} + \sqrt{\tt B2} + \sqrt{\tt B3} \right)^2 \leqslant 3 ({\tt B1} + {\tt B2} + {\tt B3})  \\
			& \lesssim \frac{\gamma_0 r' n^{\zeta-1}}{\sqrt{\mathbb{E}[1- \gamma_0 r' \mathrm{Tr}(\Sigma_m)]^4}} \| f^* \|^2 \\
			& \lesssim \gamma_0 r' n^{\zeta-1} \| f^* \|^2\,,
		\end{split}
	\end{equation*}
	where the last inequality holds by \cref{stepbound}.
	
	
	
\end{proof}


\section{Proof for Variance}
\label{app:variance}
In this section, we present the error bound for ${\tt Variance}$.
Recall the definition of $\eta_t^{{\tt vX}}$ in Eq.~\eqref{eq:var_xada} and $\eta_t^{{\tt vXW}}$ in Eq.~\eqref{eq:var_xwada}, and 
\begin{equation*}
	\bar{\eta}_n^{{\tt vX}} := \frac{1}{n} \sum_{t=0}^{n-1} \bar{ \eta}_t^{{\tt vX}},\qquad \bar{\eta}_n^{{\tt vXW}} := \frac{1}{n} \sum_{t=0}^{n-1} \bar{\eta}_t^{{\tt vXW}}\,,
\end{equation*}
by virtue of Minkowski inequality, ${\tt Variance}$ can be further decomposed as
\begin{equation}\label{vardecom}
	\begin{split}
		& \Big( \mathbb{E}_{\bm X, \bm W, \bm \varepsilon} \big[ \langle \bar{\eta}^{{\tt var}}_n, \Sigma_m \bar{\eta}^{{\tt var}}_n \rangle \big] \Big)^{\frac{1}{2}}   \!\!\leqslant\!\! \Big( \underbrace{ \mathbb{E}_{\bm X, \bm W,\bm \varepsilon} \big[ \langle \bar{\eta}^{{\tt var}}_n - \bar{\eta}^{{\tt vX}}_n, \Sigma_m ( \bar{\eta}^{{\tt var}}_n - \bar{\eta}^{{\tt vX}}_n ) \rangle \big] }_{ \triangleq \tt V1} \Big)^{\frac{1}{2}} + \Big( \mathbb{E}_{\bm X, \bm W, \bm \varepsilon} \big[ \langle \bar{\eta}^{{\tt vX}}_n, \Sigma_m \bar{\eta}^{{\tt vX}}_n \rangle \big] \Big)^{\frac{1}{2}} \\
		& \qquad \!\!\leqslant\!\! ({\tt V1})^{\frac{1}{2}} \!+\! \Big( \underbrace{ \mathbb{E}_{\bm X, \bm W, \bm \varepsilon} \big[ \langle \bar{\eta}^{{\tt vX}}_n \!-\! \bar{\eta}^{{\tt vXW}}_n, \Sigma_m ( \bar{\eta}^{{\tt vX}}_n \!-\! \bar{\eta}^{{\tt vXW}}_n ) \rangle \big] }_{ \triangleq \tt V2} \Big)^{\frac{1}{2}} \!\!+\! [ \underbrace{ \mathbb{E}_{\bm X, \bm W, \bm \varepsilon} \langle   \bar{\eta}^{{\tt vXW}}_n, {\Sigma}_m \bar{\eta}^{{\tt vXW}}_n \rangle}_{\triangleq {\tt V3}} ]^{\frac{1}{2}} \,.
	\end{split}
\end{equation}
Accordingly, the ${\tt Variance}$ can be decomposed as ${\tt Variance} \lesssim {\tt V1} + {\tt V2} + {\tt V3}$, and in the next we give the error bounds for them, respectively.

\subsection{Bound for  ${\tt V3}$}
\label{app:v3}

In this section, we aim to bound $ {\tt V3} := \mathbb{E}_{\bm X, \bm W, \bm \varepsilon} \langle   \bar{\eta}^{{\tt vXW}}_n, {\Sigma}_m \bar{\eta}^{{\tt vXW}}_n \rangle$.
Note that $\mathbb{E}_{\bm X, \bm \varepsilon} [{\eta}^{{\tt vXW}}_t | {\eta}^{{\tt vXW}}_{t-1}] = (I - \gamma_t \widetilde{\Sigma}_m) {\eta}^{{\tt vXW}}_{t-1}$, similar to Appendix~\ref{app:b2} for ${\tt B2}$, we have the following expression for ${\tt V3}$
\begin{equation}\label{v3expression}
	\begin{split}
		{\tt V3} &:=	\mathbb{E}_{\bm X, \bm W, \bm \varepsilon} \langle   \bar{\eta}^{{\tt vXW}}_n, {\Sigma}_m \bar{\eta}^{{\tt vXW}}_n \rangle = \mathbb{E}_{\bm W} [\mathbb{E}_{\bm X, \bm \varepsilon} \langle  \Sigma_m, \bar{\eta}^{{\tt vXW}}_n \otimes \bar{\eta}^{{\tt vXW}}_n \rangle] \\
		& = \frac{1}{n^2} \mathbb{E}_{\bm W} \left( \left\langle \Sigma_m, \sum_{0 \leqslant k \leqslant t \leqslant n-1}  \mathbb{E}_{\bm X, \bm \varepsilon} [{\eta}^{{\tt vXW}}_t \otimes {\eta}^{{\tt vXW}}_k] + \sum_{0 \leqslant k < t \leqslant n-1}  \mathbb{E}_{\bm X, \bm \varepsilon} [{\eta}^{{\tt vXW}}_t \otimes {\eta}^{{\tt vXW}}_k] \right\rangle \right) \\
		& \leqslant \frac{1}{n^2} \mathbb{E}_{\bm W} \left( \left\langle \Sigma_m, \sum_{0 \leqslant k \leqslant t \leqslant n-1}  \mathbb{E}_{\bm X, \bm \varepsilon} [{\eta}^{{\tt vXW}}_t \otimes {\eta}^{{\tt vXW}}_k] + \sum_{0 \leqslant k \leqslant t \leqslant n-1}  \mathbb{E}_{\bm X, \bm \varepsilon} [{\eta}^{{\tt vXW}}_t \otimes {\eta}^{{\tt vXW}}_k] \right\rangle \right) \\
		& =  \frac{2}{n^2} \sum_{t=0}^{n-1} \sum_{k=t}^{n-1} \mathbb{E}_{\bm W} \left\langle  \prod_{j=t}^{k-1}(I-\gamma_j \widetilde{\Sigma}_m)  {\Sigma}_m, \underbrace{\mathbb{E}_{\bm X, \bm \varepsilon} [{\eta}^{{\tt vXW}}_t \otimes {\eta}^{{\tt vXW}}_t]}_{:= C^{{\tt vXW}}_t} \right\rangle \,,
	\end{split}
\end{equation}
and thus we have the following error bound for ${\tt V3}$.

\begin{proposition}\label{propv3}
	Under Assumption~\ref{assumdata},~\ref{assumact},~\ref{assump:noise} with $\tau > 0$, if the step-size $\gamma_t := \gamma_0 t^{-\zeta}$ with $\zeta \in [0,1)$ satisfies
	$\gamma_0 \leqslant \frac{1}{\mathrm{Tr}(\widetilde{\Sigma}_m)}$, 
	then ${\tt V3}$ can be bounded by
	\begin{equation*}
		{\tt V3} \lesssim  \left\{
		\begin{array}{rcl}
			\begin{split}
				& \gamma_0\tau^2 \frac{m}{n^{1-\zeta}} ,~\quad \mbox{if $m \leqslant n$} \\
				& \gamma_0\tau^2 \left( n^{\zeta-1} + \frac{n}{m} \right) ,~\quad \mbox{if $m > n$}\,.
			\end{split}
		\end{array} \right.
	\end{equation*}
\end{proposition}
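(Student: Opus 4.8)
The plan is to work from the expression for ${\tt V3}$ derived in Eq.~\eqref{v3expression},
$$
{\tt V3} \leqslant \frac{2}{n^2} \sum_{t=0}^{n-1} \sum_{k=t}^{n-1} \mathbb{E}_{\bm W} \left\langle  \prod_{j=t}^{k-1}(I-\gamma_j \widetilde{\Sigma}_m)  {\Sigma}_m, C^{{\tt vXW}}_t \right\rangle,
$$
and to control it in two independent pieces: first a bound on the covariance operator $C^{{\tt vXW}}_t := \mathbb{E}_{\bm X,\bm\varepsilon}[\eta_t^{{\tt vXW}} \otimes \eta_t^{{\tt vXW}}]$ (this is Lemma~\ref{lemcvtwtada}, which I may invoke), and second a summation/integral estimate for $\sum_k \prod_{j=t}^{k-1}(I-\gamma_j\widetilde\Sigma_m)$ that uses the two-eigenvalue structure of $\widetilde\Sigma_m$ from Lemma~\ref{thmH}. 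For the first piece, unrolling the recursion $\eta_t^{{\tt vXW}} = (I-\gamma_t\widetilde\Sigma_m)\eta_{t-1}^{{\tt vXW}} + \gamma_t\varepsilon_t\varphi(\bm x_t)$ with $\eta_0^{{\tt vXW}}=0$, taking expectations over the noise (cross terms vanish since $\mathbb{E}_{\bm\varepsilon}[\varepsilon_t\varepsilon_s]=0$ for $t\neq s$) and over $\bm X$, and using the noise condition $\Xi \preccurlyeq \tau^2\Sigma_m$ from Assumption~\ref{assump:noise}, gives $C^{{\tt vXW}}_t \preccurlyeq \tau^2 \sum_{s=1}^t \gamma_s^2 \prod_{i=s+1}^t(I-\gamma_i\widetilde\Sigma_m)^2 \Sigma_m$ after bounding $\varphi(\bm x_s)\otimes\varphi(\bm x_s)$ in expectation by $\Sigma_m$; passing to the deterministic $\widetilde\Sigma_m$ where possible and using Lemma~\ref{lemma:m2}/Lemma~\ref{trace1} to control the remaining $\mathbb{E}_{\bm W}(\Sigma_m^2)$ terms.

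Next I would diagonalize $\widetilde\Sigma_m = \widetilde U\widetilde\Lambda\widetilde U^{\!\top}$ with eigenvalues $\widetilde\lambda_1 \sim \mathcal O(1)$ (multiplicity $1$) and $\widetilde\lambda_2 \sim \mathcal O(1/m)$ (multiplicity $m-1$), and reduce everything to one-dimensional sums over the two eigenvalues. The inner sum $\sum_{k=t}^{n-1}\prod_{j=t}^{k-1}(1-\gamma_j\widetilde\lambda_\ell)$ is handled by the integral estimate in Eq.~\eqref{intut}: it is $\lesssim (n-t)\wedge \frac{n^\zeta}{\widetilde\lambda_\ell\gamma_0}$. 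Combining with the bound on $C^{{\tt vXW}}_t$ and summing over $t$ via the integral estimates in Eqs.~\eqref{Iifirst}, \eqref{intu2zetatu}, the $\widetilde\lambda_1$-eigendirection contributes $\mathcal O(\gamma_0\tau^2 n^{\zeta-1})$ (a single bounded eigenvalue behaves like the one-dimensional least-squares variance). The $\widetilde\lambda_2$-directions, of which there are $m-1$, each contribute an amount governed by whether $\frac{n^\zeta}{\widetilde\lambda_2\gamma_0} = \mathcal O(\tfrac{mn^\zeta}{\gamma_0})$ exceeds $n$ or not: when $m \leqslant n$ the "saturated" bound $n$ is not reached per direction in the relevant regime and summing $m$ copies of an $\mathcal O(\tfrac{\gamma_0\tau^2}{n^{1-\zeta}})$-type term yields $\mathcal O(\gamma_0\tau^2 m n^{\zeta-1})$; when $m > n$ the per-direction contribution is capped (the $I-\gamma_j\widetilde\Sigma_m$ factors along these directions are essentially non-contracting on the relevant timescale, so $\sum_k \prod \cdots \approx n-t$), and one gets the $\mathcal O(\gamma_0\tau^2 \frac{n}{m})$ term, plus the $\mathcal O(\gamma_0\tau^2 n^{\zeta-1})$ coming again from the $\widetilde\lambda_1$ direction. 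Adding the two eigenvalue contributions produces exactly the claimed case split.

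The main obstacle I anticipate is the case $m > n$: there the contraction factors $(1-\gamma_j\widetilde\lambda_2)$ along the small eigendirections are within $\mathcal O(1/m)$ of $1$, so none of the geometric-series tricks that work for $\widetilde\lambda_1$ apply, and one must instead track the accumulation $\sum_{s=1}^t\gamma_s^2$ carefully (this is where the $\frac{n}{m}$ term is born, since the total accumulated "noise energy" in each flat direction is $\sim \tau^2\gamma_0/m \cdot (\text{something like }n)$ and there are $m$ directions contributing with weight $\widetilde\lambda_2 \sim 1/m$ against $\Sigma_m$, but the averaging $\frac1{n^2}\sum_{t,k}$ only partially cancels). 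Keeping the polynomial-decay step size $\gamma_t=\gamma_0 t^{-\zeta}$ honest throughout — i.e., not accidentally using homogeneity — forces the use of the integral estimates in Appendix~\ref{app:int} rather than closed-form geometric sums, and checking that the $\mathbb{E}_{\bm W}$ over the random $\Sigma_m$ (appearing paired against the deterministic $\prod(I-\gamma_j\widetilde\Sigma_m)$) does not blow up requires the sub-exponential control of $\mathrm{Tr}(\Sigma_m)$, $\|\Sigma_m\|_2$ from Lemma~\ref{lemsubexp} together with Lemma~\ref{trace1}; I would isolate that as the one genuinely delicate estimate and handle the rest as routine integral bookkeeping.
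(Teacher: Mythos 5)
Your proposal is correct and follows essentially the same route as the paper's proof: starting from Eq.~\eqref{v3expression}, bounding $C^{{\tt vXW}}_t$ via Lemma~\ref{lemcvtwtada} and Assumption~\ref{assump:noise}, reducing to the two eigenvalues of $\widetilde{\Sigma}_m$ with the integral estimates \eqref{intut} and \eqref{intu2zetatu}, controlling the random $\Sigma_m$ factors through Lemma~\ref{trace1} (the paper does this by bounding the deterministic part in spectral norm against $\mathrm{Tr}(\mathbb{E}_{\bm W}[\Sigma_m^2\widetilde{\Sigma}_m^{-1}])\sim\mathcal{O}(1)$), and splitting cases according to whether $\widetilde{\lambda}_2\gtrsim 1/n$ (i.e., $m\leqslant n$) or the small directions saturate at $(n-t)$ and $\gamma_0^2 t$ (i.e., $m>n$), yielding the $mn^{\zeta-1}$ and $n^{\zeta-1}+n/m$ terms respectively. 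Your bookkeeping of the factor $m$ (summing $m$ near-flat directions each contributing $\mathcal{O}(n^{\zeta-1})$) differs only cosmetically from the paper's (a single spectral-norm factor $n^{\zeta}/\widetilde{\lambda}_2\sim mn^{\zeta}$ times an $\mathcal{O}(1)$ weighted trace), and both give the stated bound.
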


To prove Proposition~\ref{propv3}, we need the following lemma.
\begin{lemma}\label{lemcvtwtada}
	Denote $C^{{\tt vXW}}_t := \mathbb{E}_{\bm X, \bm \varepsilon} [{\eta}^{{\tt vXW}}_t \otimes {\eta}^{{\tt vXW}}_t]$, under Assumptions~\ref{assumdata},~\ref{assumact},~\ref{assump:noise} with $\tau > 0$, if $\gamma_0 \leqslant 1/ \mathrm{Tr}(\widetilde{\Sigma}_m)$, we have
	\begin{equation*}\label{Cvxwtada}
		C^{{\tt vXW}}_t \preccurlyeq \tau^2 \sum_{k=1}^t \gamma_k^2 \prod_{j=k+1}^t(I - \gamma_j \widetilde{\Sigma}_m)^2 \Sigma_m \,.
	\end{equation*}
\end{lemma}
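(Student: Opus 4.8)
\textbf{Proof plan for Lemma~\ref{lemcvtwtada}.}

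The plan is to derive a recursion for $C^{{\tt vXW}}_t := \mathbb{E}_{\bm X, \bm \varepsilon} [{\eta}^{{\tt vXW}}_t \otimes {\eta}^{{\tt vXW}}_t]$ from the update rule \eqref{eq:var_xwada}, namely ${\eta}^{{\tt vXW}}_t = (I - \gamma_t \widetilde{\Sigma}_m) {\eta}^{{\tt vXW}}_{t-1} + \gamma_t \varepsilon_t \varphi(\bm x_t)$, and then unroll it. First I would take the outer product and the conditional expectation with respect to the fresh sample $(\bm x_t, y_t)$ (equivalently $\varepsilon_t$) given ${\eta}^{{\tt vXW}}_{t-1}$. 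Since ${\eta}^{{\tt vXW}}_{t-1}$ depends only on $(\bm x_1,\varepsilon_1),\dots,(\bm x_{t-1},\varepsilon_{t-1})$, the cross terms $\mathbb{E}[\varepsilon_t \varphi(\bm x_t)] \otimes (\cdot)$ vanish because $\mathbb{E}_{\bm x,\varepsilon}[\varepsilon \varphi(\bm x)] = \bm 0$ by the optimality condition defining $f^*$ (the term $(I-\gamma_t\widetilde\Sigma_m)\eta^{\tt vXW}_{t-1}$ is deterministic given the past). This yields
\begin{equation*}
	C^{{\tt vXW}}_t = (I - \gamma_t \widetilde{\Sigma}_m) C^{{\tt vXW}}_{t-1} (I - \gamma_t \widetilde{\Sigma}_m) + \gamma_t^2 \, \mathbb{E}_{\bm x}[\varepsilon^2 \varphi(\bm x) \otimes \varphi(\bm x)] = (I - \gamma_t \widetilde{\Sigma}_m) C^{{\tt vXW}}_{t-1} (I - \gamma_t \widetilde{\Sigma}_m) + \gamma_t^2 \, \Xi\,,
\end{equation*}
where $\Xi = \mathbb{E}_{\bm x}[\varepsilon^2 \varphi(\bm x)\otimes\varphi(\bm x)]$. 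Then I apply the noise condition (Assumption~\ref{assump:noise}), $\Xi \preccurlyeq \tau^2 \Sigma_m$, to bound the additive term.

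The second step is to unroll the recursion from $t$ down to the initial condition ${\eta}^{{\tt vXW}}_0 = 0$, i.e.\ $C^{{\tt vXW}}_0 = 0$. Using that $I - \gamma_j \widetilde{\Sigma}_m$ is PSD and commutes with itself (all the $I-\gamma_j\widetilde\Sigma_m$ are simultaneously diagonalizable, sharing the eigenbasis of $\widetilde\Sigma_m$), and that conjugation by a PSD operator preserves the PSD order, I get
\begin{equation*}
	C^{{\tt vXW}}_t = \sum_{k=1}^t \gamma_k^2 \prod_{j=k+1}^t (I - \gamma_j \widetilde{\Sigma}_m)\, \Xi\, \prod_{j=k+1}^t (I - \gamma_j \widetilde{\Sigma}_m) \preccurlyeq \tau^2 \sum_{k=1}^t \gamma_k^2 \prod_{j=k+1}^t (I - \gamma_j \widetilde{\Sigma}_m)^2 \Sigma_m\,,
\end{equation*}
which is exactly the claimed bound (here $\prod_{j=k+1}^t(I-\gamma_j\widetilde\Sigma_m)\,\Sigma_m\,\prod_{j=k+1}^t(I-\gamma_j\widetilde\Sigma_m) = \prod_{j=k+1}^t(I-\gamma_j\widetilde\Sigma_m)^2 \Sigma_m$ since $\Sigma_m$ is random but $\widetilde\Sigma_m$ is deterministic — wait, $\Sigma_m$ and $\widetilde\Sigma_m$ need not commute, so I should keep the symmetric conjugation form; in fact the statement writes $\prod(I-\gamma_j\widetilde\Sigma_m)^2\Sigma_m$, so I will note that in the PSD-ordering sense $A \widetilde\Sigma_m$-conjugated is $\preccurlyeq$ the stated form, or more carefully that after taking $\mathbb{E}_{\bm W}$ later the products with $\widetilde\Sigma_m$ do commute with $\widetilde\Sigma_m$ and the $\Sigma_m$ can be replaced using its expectation — I would double-check the precise commutation claim when writing the final version, but morally the symmetric conjugation $\prod(I-\gamma_j\widetilde\Sigma_m)\Sigma_m\prod(I-\gamma_j\widetilde\Sigma_m)$ is what appears and can be rewritten using that $I-\gamma_j\widetilde\Sigma_m$ commute among themselves).

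The only genuinely delicate point is the vanishing of the cross terms, which relies on $\mathbb{E}_{(\bm x,\varepsilon)}[\varepsilon\varphi(\bm x)] = \bm 0$ together with the independence of $(\bm x_t,\varepsilon_t)$ from the past iterates — this is the usual martingale-difference argument for SGD and is where the noise model $\varepsilon = y - f^*(\bm x)$ with $f^*$ the population minimizer is used. Everything else is a routine unrolling plus monotonicity of conjugation with respect to the PSD order, so no serious obstacle is expected; I would just be careful to state the contraction/PSD facts about $I - \gamma_t\widetilde\Sigma_m$ (valid under $\gamma_0 \leqslant 1/\mathrm{Tr}(\widetilde\Sigma_m)$, which forces $\gamma_t\widetilde\lambda_1 \leqslant 1$ via Lemma~\ref{thmH}) before invoking them.
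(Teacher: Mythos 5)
Your proposal is correct and follows essentially the same route as the paper: unroll $\eta_t^{{\tt vXW}} = \sum_{k=1}^t \gamma_k \prod_{j=k+1}^t (I-\gamma_j\widetilde\Sigma_m)\varepsilon_k\varphi(\bm x_k)$, kill the cross terms via $\mathbb{E}[\varepsilon\varphi(\bm x)]=0$ and independence across iterations, and bound the diagonal terms with Assumption~\ref{assump:noise}, $\Xi\preccurlyeq\tau^2\Sigma_m$, under conjugation by the PSD contractions $I-\gamma_j\widetilde\Sigma_m$. Your caveat about $\Sigma_m$ and $\widetilde\Sigma_m$ not commuting is well taken — the honest form of the bound is the symmetric conjugation $\prod_{j}(I-\gamma_j\widetilde\Sigma_m)\,\Sigma_m\,\prod_{j}(I-\gamma_j\widetilde\Sigma_m)$, which is how the lemma is actually used downstream, so your version is if anything slightly more careful than the paper's.
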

\begin{proof}
	Recall the definition of ${\eta}^{{\tt vXW}}_t$ in Eq.~\eqref{eq:var_xwada}, it can be further represented as
	\begin{equation*}
		\eta_t^{{\tt vXW}} = (I - \gamma_t \widetilde{\Sigma}_m ) \eta_{t-1}^{{\tt vXW}} + \gamma_t \varepsilon_k \varphi(\bm x_k) = \sum_{k=1}^t \prod_{j=k+1}^t (I - \gamma_j \widetilde{\Sigma}_m) \gamma_k \varepsilon_k \varphi(\bm x_k)  \quad \mbox{with $\eta_0^{{\tt vXW}} = 0$}\,.
	\end{equation*}
	Accordingly, $C^{{\tt vXW}}_t$ admits (with $C^{{\tt vXW}}_0 = 0$)
	\begin{equation*}
		\begin{split}
			C^{{\tt vXW}}_t & = \sum_{k=1}^t \prod_{j=k+1}^t (I - \gamma_j \widetilde{\Sigma}_m)^2 \gamma^2_k \Xi  \preccurlyeq \tau^2 \sum_{k=1}^{t} \gamma_k^2 \prod_{j=k+1}^{t} (I - \gamma_j \widetilde{\Sigma}_m)^2 \Sigma_m \quad \mbox{[using Assumption~\ref{assump:noise}]} \\
		\end{split}
	\end{equation*}
	where we use $\mathbb{E}[\varepsilon_i \varepsilon_j] =0 $ for $i\neq j$.
\end{proof}

In the next, we are ready to bound ${\tt V3}$ in Proposition~\ref{propv3}.
\begin{proof}[Proof of Proposition~\ref{propv3}]
	Note that $\widetilde{\lambda}_1 \sim \mathcal{O}(1)$ and $\widetilde{\lambda}_2 \sim \mathcal{O}(1/m)$ in Lemma~\ref{lemsubexp}, we take the upper bound of the integral in Eq.~\eqref{intut} to $\frac{n^\zeta}{\widetilde{\lambda}_1 \gamma_0}$ for $\widetilde{\lambda}_1$.
	However, according to the order of $\widetilde{\lambda}_2$, if $\widetilde{\lambda}_2 \lesssim 1/n$, the exact upper bound is tight.
	Based on this, we first consider that $m \leqslant n$ case such that $\widetilde{\lambda}_2 \gtrsim 1/n$, and then focus on the $m \geqslant n$ case.
	Taking $\frac{n^\zeta}{\widetilde{\lambda}_i \gamma_0}$ in Eq.~\eqref{intut} and $\frac{\gamma_0}{\widetilde{\lambda}_i}$ in Eq.~\eqref{intu2zetatu}, we have
	\begin{equation}\label{V3bound1ada}
		\begin{split}
			{\tt V3} &:=	\mathbb{E}_{\bm X, \bm W, \bm \varepsilon} \langle   \bar{\eta}^{{\tt vXW}}_n, {\Sigma}_m \bar{\eta}^{{\tt vXW}}_n \rangle = \mathbb{E}_{\bm X, \bm W, \bm \varepsilon} \langle  \Sigma_m, \bar{\eta}^{{\tt vXW}}_n \otimes \bar{\eta}^{{\tt vXW}}_n \rangle \\
			& \leqslant  \frac{2}{n^2} \sum_{t=0}^{n-1} \sum_{k=t}^{n-1} \mathbb{E}_{\bm W} \left\langle  \prod_{j=t}^{k-1}(I-\gamma_j \widetilde{\Sigma}_m)  {\Sigma}_m, \underbrace{\mathbb{E}_{\bm X, \bm \varepsilon} [\bar{\eta}^{{\tt vXW}}_t \otimes \bar{\eta}^{{\tt vXW}}_t]}_{:= C^{{\tt vXW}}_t} \right\rangle \quad \mbox{[using Eq.~\eqref{v3expression}]} \\
			& \leqslant \frac{2\tau^2}{n^2} \sum_{t=0}^{n-1} \sum_{k=t}^{n-1} \mathbb{E}_{\bm W} \left\langle  \prod_{j=t}^{k-1}(I-\gamma_j \widetilde{\Sigma}_m)   {\Sigma}_m, \sum_{s=1}^{t} \gamma_s^2 \prod_{j=s+1}^{t} (I - \gamma_j \widetilde{\Sigma}_m)^2 \Sigma_m \right\rangle \quad \mbox{[using Lemma~\ref{lemcvtwtada}]} \\
			& \leqslant \frac{2\tau^2}{n^2} \sum_{t=0}^{n-1} \sum_{k=t}^{n-1}  \left\| \prod_{j=t}^{k-1}(I-\gamma_j \widetilde{\Sigma}_m)   \widetilde{\Sigma}_m \sum_{s=1}^{t} \gamma_s^2 \prod_{j=s+1}^{t} (I - \gamma_j \widetilde{\Sigma}_m)^2  \right\|_2 \mathrm{Tr}\left( \mathbb{E}_{\bm W}[\Sigma_m^2 \widetilde{\Sigma}_m^{-1}] \right) \\
			& \lesssim  \frac{2\tau^2}{n^2} \sum_{t=0}^{n-1} \sum_{k=t}^{n-1} \max_{i \in \{ 1,2,\dots,m\}} \left\| \prod_{j=t}^{k-1}(1-\gamma_j \widetilde{\lambda}_i)   \widetilde{\lambda}_i \sum_{s=1}^{t} \gamma_s^2 \prod_{j=s+1}^{t} (1 - \gamma_j \widetilde{\lambda}_i)^2 \right\|_2 \quad \mbox{[using Lemma~\ref{trace1}]} \\
			& \leqslant \frac{2\tau^2}{n^2} \sum_{t=0}^{n-1} \sum_{k=t}^{n-1} \max_{i \in \{ 1,2,\dots,m\}} \left\| \widetilde{\lambda}_i \exp \left(- \widetilde{\lambda}_i \gamma_0 \frac{k^{1-\zeta}-t^{1-\zeta}}{1-\zeta} \right) \sum_{s=1}^{t} \gamma_s^2 \exp \left(-2 \widetilde{\lambda}_i \gamma_0 \frac{(t+1)^{1-\zeta}-(s+1)^{1-\zeta}}{1-\zeta} \right) \right\|_2 \\
			& \lesssim \frac{\tau^2}{n^2} \sum_{t=0}^{n-1} \max_{i \in \{ 1,2,\dots,m\}} \left[  \widetilde{\lambda}_i \frac{n^\zeta}{\widetilde{\lambda}_i \gamma_0} \left( \frac{\gamma_0}{\widetilde{\lambda}_i} + \gamma_t^2 \right) \right] \quad \mbox{[using Eqs.~\eqref{intut},~\eqref{intu2zetatu}]} \\
			& \leqslant \frac{\tau^2}{n^2} \left[ n^{1+\zeta}m + n^{\zeta} \mathrm{Tr}(\widetilde{\Sigma}_m) \gamma_0 \int_0^n t^{-2\zeta} \mathrm{d}t\right] \\
			& \lesssim \gamma_0\tau^2 \frac{m}{n^{1-\zeta}}\,, \quad \mbox{[using Lemma~\ref{lemsubexp}]}
		\end{split}
	\end{equation}
	where the last equality holds that $\int_0^n t^{-2\zeta} \mathrm{d}t \leqslant n$ for any $\zeta \in [0,1)$.
	
	If $\widetilde{\lambda}_2 \lesssim 1/n$, that means, $m > n$ in the over-parameterized regime, we have 
	\begin{equation*}
		\begin{split}
			{\tt V3} & \lesssim \frac{2\tau^2}{n^2} \sum_{t=0}^{n-1}  \left[  \widetilde{\lambda}_1 \frac{n^\zeta}{\widetilde{\lambda}_1 \gamma_0} \left( \frac{\gamma_0}{\widetilde{\lambda}_1} + \gamma_t^2 \right) +  \widetilde{\lambda}_2 (n-t) t \right] \\
			& \lesssim \frac{\gamma_0\tau^2}{n^2} \Big( n^{1+\zeta} +  \widetilde{\lambda}_2 \frac{n(n-1)(n+1)}{6}\Big) \quad \mbox{[since $\lambda_1 \sim \mathcal{O}(1)$]} \\
			& \lesssim \gamma_0\tau^2 \left( n^{\zeta-1} + \frac{n}{m} \right) \,,
		\end{split}
	\end{equation*}
	which concludes the proof.
	
\end{proof}

\subsection{Bound for  ${\tt V2}$}
\label{app:v2}

Here we aim to bound ${\tt V2}$
\begin{equation*}
	{\tt V2} := \mathbb{E}_{\bm X, \bm W, \bm \varepsilon} \big[ \langle \bar{\eta}^{{\tt vX}}_n \!-\! \bar{\eta}^{{\tt vXW}}_n, \Sigma_m ( \bar{\eta}^{{\tt vX}}_n \!-\! \bar{\eta}^{{\tt vXW}}_n ) \rangle \big]\,.
\end{equation*}
Recall the definition of ${\eta}^{{\tt vX}}_t$ and ${\eta}^{{\tt vXW}}_t$ in Eqs.~\eqref{eq:var_xada} and \eqref{eq:var_xwada}, we have
\begin{equation*}
	\eta_t^{{\tt vXW}} = (I - \gamma_t \widetilde{\Sigma}_m ) \eta_{t-1}^{{\tt vXW}} + \gamma_t \varepsilon_k \varphi(\bm x_k) = \sum_{k=1}^t \prod_{j=k+1}^t (I - \gamma_j \widetilde{\Sigma}_m) \gamma_k \varepsilon_k \varphi(\bm x_k)  \quad \mbox{with $\eta_0^{{\tt vXW}} = 0$}\,,
\end{equation*}
and accordingly, we define
\begin{equation*}
	\begin{split}
		{\alpha}^{{\tt vX-W}}_t & := {\eta}^{{\tt vX}}_t - {\eta}^{{\tt vXW}}_t = (I - \gamma_t \Sigma_m) {\alpha}^{{\tt vX-W}}_{t-1} + \gamma_t (\widetilde{\Sigma}_m - \Sigma_m) {\eta}^{{\tt vXW}}_{t-1}\,, \quad \mbox{with $\alpha^{{\tt vX-W}}_0 = 0$} \\
		& = \sum_{s=1}^t \prod_{i=s+1}^{t} (I - \gamma_i \Sigma_m) \gamma_s (\widetilde{\Sigma}_m - \Sigma_m) \sum_{k=1}^{s-1} \prod_{j=k+1}^{s-1} (I - \gamma_j \widetilde{\Sigma}_m ) \gamma_k \varepsilon_k \varphi(\bm x_k)\,.
	\end{split}
\end{equation*}

\begin{proposition}\label{propv2}
	Under Assumptions~\ref{assumdata},~\ref{assumact},~\ref{assump:noise} with $\tau > 0$, if the step-size $\gamma_t := \gamma_0 t^{-\zeta}$ with $\zeta \in [0,1)$ satisfies
	\begin{equation}\label{gammacexp}
		\gamma_0 \leqslant \frac{1}{\mathrm{Tr}(\Sigma_m)} \,,
	\end{equation}
	then ${\tt V2}$ can be bounded by
	\begin{equation*}
		{\tt V2} \lesssim  \left\{
		\begin{array}{rcl}
			\begin{split}
				& \gamma_0\tau^2 \frac{m}{n^{1-\zeta}} ,~\quad \mbox{if $m \leqslant n$} \\
				& \gamma_0\tau^2 ,~\quad \mbox{if $m > n$}\,.
			\end{split}
		\end{array} \right.
	\end{equation*}
\end{proposition}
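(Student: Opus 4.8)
The plan is to mimic the structure used for ${\tt B2}$ (Proposition~\ref{propb2}) but carrying the noise instead of the initialization $f^*$. Recall ${\tt V2} = \frac{1}{n^2}\mathbb{E}_{\bm X,\bm W,\bm \varepsilon}\big\|\Sigma_m^{1/2}\sum_{t=0}^{n-1}\alpha_t^{{\tt vX-W}}\big\|^2$, and that $\alpha_t^{{\tt vX-W}}$ satisfies the recursion $\alpha_t^{{\tt vX-W}}=(I-\gamma_t\Sigma_m)\alpha_{t-1}^{{\tt vX-W}}+\gamma_t(\widetilde\Sigma_m-\Sigma_m)\eta_{t-1}^{{\tt vXW}}$ with $\alpha_0^{{\tt vX-W}}=0$. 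First I would introduce the covariance operator $C_t^{{\tt vX-W}}:=\mathbb{E}_{\bm X,\bm \varepsilon}[\alpha_t^{{\tt vX-W}}\otimes\alpha_t^{{\tt vX-W}}]$ (for fixed $\bm W$) and derive a PSD recursion for it. Because $\mathbb{E}_{\bm X,\bm \varepsilon}[\eta_{t-1}^{{\tt vXW}}]=0$ and the innovation $\gamma_t(\widetilde\Sigma_m-\Sigma_m)\eta_{t-1}^{{\tt vXW}}$ is a deterministic (in $\bm X$) linear image of $\eta_{t-1}^{{\tt vXW}}$, the cross terms do not vanish in the same clean way as for ${\tt B1}$; instead I would expand $\alpha_t^{{\tt vX-W}}$ explicitly as the double sum $\sum_{s=1}^t\prod_{i=s+1}^t(I-\gamma_i\Sigma_m)\gamma_s(\widetilde\Sigma_m-\Sigma_m)\sum_{k=1}^{s-1}\prod_{j=k+1}^{s-1}(I-\gamma_j\widetilde\Sigma_m)\gamma_k\varepsilon_k\varphi(\bm x_k)$, take $\mathbb{E}_{\bm \varepsilon}$ using $\mathbb{E}[\varepsilon_k\varepsilon_\ell]=0$ for $k\ne\ell$ and Assumption~\ref{assump:noise} ($\Xi\preccurlyeq\tau^2\Sigma_m$), and then $\mathbb{E}_{\bm X}$. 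This reduces everything to spectral sums against the shared eigenbasis-ish structure: $\Sigma_m$ and $\widetilde\Sigma_m$ do not commute in general, so I would bound $\|\widetilde\Sigma_m-\Sigma_m\|_2$ by Lemma~\ref{lemsubexp} (sub-exponential, $\mathcal{O}(1)$ norm) and extract it, leaving products of contractions $\prod(I-\gamma_i\Sigma_m)$ and $\prod(I-\gamma_j\widetilde\Sigma_m)$ that I control via the integral estimates of Appendix~\ref{app:int} (Eqs.~\eqref{Iexp}, \eqref{intut}, \eqref{intu2zetatu}).

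The key estimate I expect to prove as a lemma (analogue of Lemma~\ref{lemcvx-wada} cited in the roadmap) is $\|C_t^{{\tt vX-W}}\|_2\lesssim \tau^2\|\Sigma_m^2\|_2\|\widetilde\Sigma_m\|_2 \cdot (\text{something like }t^{\zeta}\text{ or a constant})$, more precisely a bound of the form $C_t^{{\tt vX-W}}\preccurlyeq c\,\tau^2\gamma_0\,\|\Sigma_m-\widetilde\Sigma_m\|_2^2\big(\sum_{s}\gamma_s^2\prod_{i>s}(I-\gamma_i\Sigma_m)^2\big)\circ\big(\text{trace of }\eta^{{\tt vXW}}\text{ covariance}\big)$. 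Using Lemma~\ref{lemcvtwtada} to bound $C^{{\tt vXW}}_s\preccurlyeq\tau^2\sum_k\gamma_k^2\prod_{j>k}(I-\gamma_j\widetilde\Sigma_m)^2\Sigma_m$, and then $\operatorname{Tr}$ of this against the two-eigenvalue structure of $\widetilde\Sigma_m$ (Lemma~\ref{thmH}: one eigenvalue $\widetilde\lambda_1\sim\mathcal{O}(1)$ with multiplicity $1$, one eigenvalue $\widetilde\lambda_2\sim\mathcal{O}(1/m)$ with multiplicity $m-1$), I would get the $\widetilde\lambda_1$-block contributing $\mathcal{O}(1)$ and the $\widetilde\lambda_2$-block contributing a factor that scales like $m\cdot\widetilde\lambda_2\cdot(\text{time sums})\sim\mathcal{O}(1)$ times the appropriate power of $n$. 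Plugging back into ${\tt V2}=\frac1{n^2}\mathbb{E}_{\bm W}\|\Sigma_m^{1/2}\sum_t\alpha_t^{{\tt vX-W}}\|^2$, the outer double time-sum $\sum_{t,k}$ combined with the spectral/integral estimates yields: in the regime $m\le n$ (so $\widetilde\lambda_2\gtrsim 1/n$) one uses the $\frac{n^\zeta}{\widetilde\lambda_i\gamma_0}$ branch of Eq.~\eqref{intut} and gets $\mathcal{O}(\gamma_0\tau^2 m n^{\zeta-1})$; in the regime $m>n$ (so $\widetilde\lambda_2\lesssim 1/n$) the naive $(n-t)t/n^2$ counting for the small-eigenvalue block saturates at $\mathcal{O}(1)$, giving $\mathcal{O}(\gamma_0\tau^2)$ — exactly the two cases in the statement. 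Throughout I would use the condition $\gamma_0\le 1/\operatorname{Tr}(\Sigma_m)$ (holding with exponentially high probability after replacing $\operatorname{Tr}(\Sigma_m)$ by a constant times $\operatorname{Tr}(\widetilde\Sigma_m)$ via Lemma~\ref{lemsubexp}) to guarantee $I-\gamma_t\Sigma_m$ and $I-\gamma_t\widetilde\Sigma_m$ are contractions, and control the $\mathbb{E}_{\bm W}$ of the resulting random constants by Cauchy–Schwarz plus the sub-exponential bounds, exactly as in Eq.~\eqref{stepbound}.

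The main obstacle, and where I would spend the most care, is the non-commutativity of $\Sigma_m$ and $\widetilde\Sigma_m$ inside the nested products $\prod_{i=s+1}^t(I-\gamma_i\Sigma_m)$ and $\prod_{j=k+1}^{s-1}(I-\gamma_j\widetilde\Sigma_m)$ appearing in the same expression for $\alpha_t^{{\tt vX-W}}$. Unlike ${\tt B2}$, where after spectral decomposition of both operators one compares scalar sequences $\varUpsilon_i$ (Lemma~\ref{lemintb2}), here the difference operator $\widetilde\Sigma_m-\Sigma_m$ genuinely mixes the two eigenbases. My plan to get around this is to \emph{not} diagonalize simultaneously: instead bound $\widetilde\Sigma_m-\Sigma_m$ in operator norm and pull it out as the scalar $\|\Sigma_m-\widetilde\Sigma_m\|_2$, at the cost of using $\|\Sigma_m^{1/2}\prod(I-\gamma_i\Sigma_m)\cdots\|_2\le\|\Sigma_m^{1/2}\|_2\prod\|I-\gamma_i\Sigma_m\|_2\cdots$, which is lossy but still tight enough since $\|\Sigma_m-\widetilde\Sigma_m\|_2$ and $\|\Sigma_m\|_2$ are both $\mathcal{O}(1)$ while the genuine decay in $m$ lives entirely in the $\widetilde\Sigma_m$-products acting on $\eta^{{\tt vXW}}$. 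A secondary subtlety is verifying that the $\mathbb{E}_{\bm W}$ of the random prefactor $\|\Sigma_m-\widetilde\Sigma_m\|_2^2\cdot(1-\gamma_0\operatorname{Tr}(\Sigma_m))^{-p}$ stays $\mathcal{O}(1)$; this follows from Lemma~\ref{lemsubexp} (finite high moments of sub-exponential variables) and Jensen, exactly as for ${\tt B1}$.
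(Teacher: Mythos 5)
Your overall architecture mirrors the paper's: the same double-sum expansion of $\alpha_t^{{\tt vX-W}}$, the same use of $\mathbb{E}[\varepsilon_k\varepsilon_\ell]=0$ and Assumption~\ref{assump:noise}, a uniform-in-$t$ bound on the covariance $C_t^{{\tt vX-W}}$ plugged into the outer double time-sum via $\mathbb{E}[\alpha_t\mid\alpha_{t-1}]=(I-\gamma_t\Sigma_m)\alpha_{t-1}$, the integral estimates of Appendix~\ref{app:int}, the $m\leqslant n$ versus $m>n$ branch of Eq.~\eqref{intut}, and Cauchy--Schwarz plus sub-exponential control for $\mathbb{E}_{\bm W}$. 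The gap is in the one step you flag as "lossy but still tight enough": it is not. If you pull $\widetilde{\Sigma}_m-\Sigma_m$ (and, implicitly, the noise factor $\Sigma_m$ from $\Xi\preccurlyeq\tau^2\Sigma_m$) out in operator norm, the remaining $\widetilde{\Sigma}_m$-contraction factor is
\begin{equation*}
\max_{p}\ \sum_{k=1}^{s-1}\gamma_k^2\prod_{j>k}(1-\gamma_j\widetilde{\lambda}_p)^2\ \lesssim\ \gamma_0^2+\Big(\frac{\gamma_0}{\widetilde{\lambda}_2}\wedge\gamma_0^2 s\Big)\ \sim\ \gamma_0\,m\wedge\gamma_0^2 s\,,
\end{equation*}
because the dominant direction is the eigenvalue $\widetilde{\lambda}_2\sim 1/m$ of multiplicity $m-1$ (Lemma~\ref{thmH}); once the operator that should supply a compensating $\widetilde{\lambda}_p$ has been extracted in norm, nothing cancels the $1/\widetilde{\lambda}_2$ coming from the time integral. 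So your claimed prefactor $c\,\tau^2\gamma_0$ for the "trace of $\eta^{{\tt vXW}}$ covariance" is unsupported: it is really of order $\tau^2(\gamma_0 m\wedge\gamma_0^2 n)$, and propagating this through the outer sum loses factors up to order $m$ (respectively $\gamma_0 n$), so neither $\gamma_0\tau^2 m n^{\zeta-1}$ for $m\leqslant n$ nor $\gamma_0\tau^2$ for $m>n$ follows. The multiplicity counting you invoke ("$m\cdot\widetilde{\lambda}_2\cdot(\text{time sums})\sim\mathcal{O}(1)$") does not rescue this, since after the norm pull-out the small-eigenvalue block enters through its largest (not trace-averaged) contribution.

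The paper's Lemma~\ref{lemcvx-wada} is engineered precisely to avoid this: in $(\widetilde{\Sigma}_m-\Sigma_m)\,C^{{\tt vXW}}_{s-1}\,(\widetilde{\Sigma}_m-\Sigma_m)$ it keeps each covariance attached to its own contraction product, bounding the middle factor by $\bigl\|\sum_k\gamma_k^2\prod_j(I-\gamma_j\widetilde{\Sigma}_m)^2\,\widetilde{\Sigma}_m\bigr\|_2\leqslant\gamma_0^2\widetilde{\lambda}_1+\gamma_0$ (the attached $\widetilde{\lambda}_p$ cancels the $1/\widetilde{\lambda}_p$ from Eq.~\eqref{intu2zetatu}), relocating the noise's $\Sigma_m$ to the $\Sigma_m$-products, which give $\gamma_0^2\|\Sigma_m\|_2+\gamma_0$, and collecting the resulting mismatch into the factor $\|\widetilde{\Sigma}_m-2\Sigma_m+\widetilde{\Sigma}_m^{-1}\Sigma_m^2\|_2$, whose $\bm W$-moments are $\mathcal{O}(1)$ by Lemmas~\ref{lemsubexp} and~\ref{trace1}. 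This yields $\|C_t^{{\tt vX-W}}\|_2\lesssim\tau^2\gamma_0^2(\gamma_0\|\Sigma_m\|_2+1)(\gamma_0\|\widetilde{\Sigma}_m\|_2+1)$ uniformly in $t$, which is what the final computation needs. To repair your proof you would have to either reproduce this pairing/regrouping (including something playing the role of Lemma~\ref{trace1} for the non-commuting mismatch $\widetilde{\Sigma}_m^{-1}\Sigma_m^2$) or find a substitute device with the same effect; the rest of your outline would then go through essentially as in the paper.
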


To prove Proposition~\ref{propv2}, we need the following lemma.

\begin{lemma}\label{lemcvx-wada}
	Denote $C^{{\tt vX-W}}_t := \mathbb{E}_{\bm X, \bm \varepsilon} [{\alpha}^{{\tt vX-W}}_t \otimes {\alpha}^{{\tt vX-W}}_t]$, under Assumptions~\ref{assumdata},~\ref{assumact},~\ref{assump:noise} with $\tau > 0$, if the step-size $\gamma_t := \gamma_0 t^{-\zeta}$ with $\zeta \in [0,1)$ satisfies
	\begin{equation*}
		\gamma_0 \leqslant \min \left\{ \frac{1}{\mathrm{Tr}(\Sigma_m)}, \frac{1}{\mathrm{Tr}(\widetilde{\Sigma}_m)} \right\}\,,
	\end{equation*} we have
	\begin{equation*}\label{Cvx-wtada}
		\| C^{{\tt vX-W}}_t \|_2 \lesssim \tau^2 \gamma_0^2 \left( \gamma_0 \| \Sigma_m \|_2 + 1 \right) \left( \gamma_0 \| \widetilde{\Sigma}_m \|_2 + 1 \right) \,.
	\end{equation*}
\end{lemma}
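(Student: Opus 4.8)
The plan is to start from an explicit closed form for $\alpha^{{\tt vX-W}}_t = \eta^{{\tt vX}}_t-\eta^{{\tt vXW}}_t$. Unrolling the recursions \eqref{eq:var_xada} and \eqref{eq:var_xwada} from zero initialization gives $\eta^{{\tt vX}}_t=\sum_{k=1}^t\prod_{i=k+1}^t(I-\gamma_i\Sigma_m)\,\gamma_k\varepsilon_k\varphi(\bm x_k)$ and $\eta^{{\tt vXW}}_t=\sum_{k=1}^t\prod_{i=k+1}^t(I-\gamma_i\widetilde\Sigma_m)\,\gamma_k\varepsilon_k\varphi(\bm x_k)$, so that $\alpha^{{\tt vX-W}}_t=\sum_{k=1}^t\gamma_k\,\Delta_{t,k}\,\varepsilon_k\varphi(\bm x_k)$ with $\Delta_{t,k}:=\prod_{i=k+1}^t(I-\gamma_i\Sigma_m)-\prod_{i=k+1}^t(I-\gamma_i\widetilde\Sigma_m)$; this is consistent with the one-step recursion for $\alpha^{{\tt vX-W}}_t$ displayed in the excerpt. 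Since $\gamma_0\le\min\{1/\mathrm{Tr}(\Sigma_m),1/\mathrm{Tr}(\widetilde\Sigma_m)\}$, every factor $I-\gamma_i\Sigma_m$ and $I-\gamma_i\widetilde\Sigma_m$ is a contraction in spectral norm, a fact used throughout.

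Second, I would reduce $C^{{\tt vX-W}}_t$ to a diagonal sum. By the optimality of $f^*=J_m\bm\theta^*$ one has $\mathbb{E}[\varepsilon\varphi(\bm x)]=\bm 0$, and since $(\bm x_k,\varepsilon_k)$ are i.i.d.\ across $k$, all cross terms $k\neq k'$ in $\mathbb{E}_{\bm X,\bm\varepsilon}[\alpha^{{\tt vX-W}}_t\otimes\alpha^{{\tt vX-W}}_t]$ vanish, leaving $C^{{\tt vX-W}}_t=\sum_{k=1}^t\gamma_k^2\,\Delta_{t,k}\,\Xi\,\Delta_{t,k}^{\!\top}$; Assumption~\ref{assump:noise} then gives $C^{{\tt vX-W}}_t\preccurlyeq\tau^2\sum_{k=1}^t\gamma_k^2\,\Delta_{t,k}\,\Sigma_m\,\Delta_{t,k}^{\!\top}$. (Equivalently, one may run the joint covariance recursion of the stacked iterate $(\alpha^{{\tt vX-W}}_t,\eta^{{\tt vXW}}_t)$, whose transition operator is block upper-triangular with off-diagonal block $\gamma_t(\widetilde\Sigma_m-\Sigma_m)$ and whose driving noise enters only the $\eta^{{\tt vXW}}$ block; the top-left block of the fixed point is exactly this sum. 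This second route, in the spirit of the PSD-operator recursion used for Lemma~\ref{dinfvx}, is an alternative to the closed form.)

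Third, I would bound $\big\|\sum_{k}\gamma_k^2\Delta_{t,k}\Sigma_m\Delta_{t,k}^{\!\top}\big\|_2$ using the product-difference (telescoping) identity
\[
\Delta_{t,k}=-\sum_{j=k+1}^t\gamma_j\Big(\prod_{i=k+1}^{j-1}(I-\gamma_i\Sigma_m)\Big)(\Sigma_m-\widetilde\Sigma_m)\Big(\prod_{i=j+1}^t(I-\gamma_i\widetilde\Sigma_m)\Big),
\]
combined with the integral estimates of Appendix~\ref{app:int} — in particular \eqref{intu2zetatu}, which bounds $\sum_{k}\gamma_k^2\prod_{i=k+1}^t(1-\gamma_i\lambda)^2$ by $(\gamma_0/\lambda)\wedge(\gamma_0^2 t)$, and \eqref{Iifirst} — the elementary bound $\|\Sigma_m-\widetilde\Sigma_m\|_2\le\|\Sigma_m\|_2+\|\widetilde\Sigma_m\|_2$, and the sub-exponential $\mathcal O(1)$ norm controls from Lemma~\ref{lemsubexp}. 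The mechanism is to commute the $\Sigma_m^{1/2}$ weight past the $\Sigma_m$-product (they share eigenvectors), turning $\Sigma_m^{1/2}\prod(I-\gamma_i\Sigma_m)$ into a factor of size $\lesssim(\sum_i\gamma_i)^{-1/2}$, while absorbing the middle $(\Sigma_m-\widetilde\Sigma_m)$ and the $\widetilde\Sigma_m$-product through their contraction/decay; collecting the surviving powers of $\gamma_0$ and the leading $\|\Sigma_m\|_2,\|\widetilde\Sigma_m\|_2$ is meant to yield $\tau^2\gamma_0^2(\gamma_0\|\Sigma_m\|_2+1)(\gamma_0\|\widetilde\Sigma_m\|_2+1)$.

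The hard part is precisely this last step. Because $\Sigma_m$ and $\widetilde\Sigma_m$ do not commute, a naive split that bounds $\Delta_{t,k}\Sigma_m\Delta_{t,k}^{\!\top}$ by the sum of the two squared single-step products sandwiching $\Sigma_m$ discards the cancellation between the two products and produces a bound that grows with $t$, and with $m$ through the $\mathcal O(1/m)$ eigenvalues of $\widetilde\Sigma_m$. Keeping the difference structure intact — either via the telescoping identity with each half paired against the correct spectral weight, or via the coupled $(\alpha^{{\tt vX-W}},\eta^{{\tt vXW}})$ covariance recursion solved in PSD order — is what forces the final bound to be free of $t$ and $m$; carrying the bookkeeping of the nested sums $\sum_k\sum_j$ through the adaptive-step-size integral estimates while only paying the contraction factor once is the main technical burden.
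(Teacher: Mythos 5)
Your reduction steps are fine — unrolling both recursions, writing $\alpha^{{\tt vX-W}}_t=\sum_{k=1}^t\gamma_k\,\Delta_{t,k}\,\varepsilon_k\varphi(\bm x_k)$, killing the $k\neq k'$ cross terms, and invoking Assumption~\ref{assump:noise} to get $C^{{\tt vX-W}}_t\preccurlyeq\tau^2\sum_k\gamma_k^2\,\Delta_{t,k}\Sigma_m\Delta_{t,k}^{\!\top}$ is correct (and, if anything, a cleaner bookkeeping than the paper's, which organizes the same object through the $\eta^{{\tt vXW}}_{s-1}$ sum). But the proof stops exactly where the lemma lives. The whole content of the statement is the uniform-in-$t$ and uniform-in-$m$ estimate $\|C^{{\tt vX-W}}_t\|_2\lesssim\tau^2\gamma_0^2(\gamma_0\|\Sigma_m\|_2+1)(\gamma_0\|\widetilde\Sigma_m\|_2+1)$, and for that you give only a mechanism sketch ("commute $\Sigma_m^{1/2}$ past the product\dots is meant to yield\dots") and then explicitly concede in your last paragraph that carrying the nested sums through the adaptive-step-size integrals is an unexecuted "main technical burden". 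In particular, in your own (exact) form the object is $\sum_k\gamma_k^2\|\Sigma_m^{1/2}M_{t,k}\|_2^2$ with $M_{t,k}=\sum_{s>k}\gamma_s\prod_{i=s+1}^t(I-\gamma_i\Sigma_m)(\widetilde\Sigma_m-\Sigma_m)\prod_{j=k+1}^{s-1}(I-\gamma_j\widetilde\Sigma_m)$, which has a single $\gamma_k^2$ and only first powers of $\gamma_s$ inside the square; you never show how the two separate factors $(\gamma_0\|\Sigma_m\|_2+1)$ and $(\gamma_0\|\widetilde\Sigma_m\|_2+1)$ are extracted from this. That is a genuine gap, not a routine verification.

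Moreover, your diagnosis of what makes the step work points in a different direction from the argument the paper actually runs, and would likely lead you astray. The paper never uses cancellation between the two semigroup products. It keeps the unrolled form with two separate $\gamma^2$-sums (over the outer index and over the inner noise index), inserts $\Xi\preccurlyeq\tau^2\Sigma_m$, and then splits the spectral norm into a product of three norms in which each decaying product is paired with its \emph{matching} spectral weight: $\max_q\sum_s\gamma_s^2\lambda_q\prod_{i=s+1}^t(1-\gamma_i\lambda_q)^2\leqslant\gamma_0^2\|\Sigma_m\|_2+\gamma_0$ and $\max_p\widetilde\lambda_p\sum_k\gamma_k^2\prod_j(1-\gamma_j\widetilde\lambda_p)^2\leqslant\gamma_0^2\|\widetilde\Sigma_m\|_2+\gamma_0$, both via the integral estimates \eqref{intut} and \eqref{intu2zetatu}, with the leftover middle factor $\|\widetilde\Sigma_m-2\Sigma_m+\widetilde\Sigma_m^{-1}\Sigma_m^2\|_2$ carried along and controlled in expectation (Lemmas~\ref{lemsubexp} and~\ref{trace1}) inside the proof of Proposition~\ref{propv2}. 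The $t$- and $m$-independence comes from the elementary fact that $\lambda\,(\gamma_0/\lambda\wedge\gamma_0^2t)\leqslant\gamma_0$ uniformly in $\lambda$ — i.e.\ from pairing each product with its own weight — not from preserving the difference structure of $\Delta_{t,k}$; so your claim that only the intact cancellation can prevent growth in $t$ and $m$ is not what the bound requires. To complete your route you would still have to supply the decisive weighting argument (or, in your $M_{t,k}$ form, an extra step such as Cauchy–Schwarz over the $s$-sum against the $\Sigma_m$- and $\widetilde\Sigma_m$-weighted decays) before the claimed bound follows.
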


\begin{proof}
	According to the definition of $C^{{\tt vX-W}}_t$, it admits the following expression
	\begin{equation*}
		\begin{split}
			C^{{\tt vX-W}}_t 
			& = \sum_{s=1}^t \prod_{i=s+1}^t (I - \gamma_i \Sigma_m) \gamma_s^2 (\widetilde{\Sigma}_m - \Sigma_m) \sum_{k=1}^{s-1} \prod_{j=k+1}^{s-1} (I - \gamma_j \widetilde{\Sigma}_m)^2 \gamma_k^2 \Xi (\widetilde{\Sigma}_m - \Sigma_m) (I - \gamma_i \Sigma_m) \\
			& \preccurlyeq \sum_{s=1}^t \prod_{i=s+1}^t (I - \gamma_i \Sigma_m) \gamma_s^2 (\widetilde{\Sigma}_m - \Sigma_m) \sum_{k=1}^{s-1} \prod_{j=k+1}^{s-1} (I - \gamma_j \widetilde{\Sigma}_m)^2 \gamma_k^2 \Xi (\widetilde{\Sigma}_m - \Sigma_m) (I - \gamma_i \Sigma_m) \\
			& \preccurlyeq \tau^2 \sum_{s=1}^t \prod_{i=s+1}^t (I - \gamma_i \Sigma_m) \gamma_s^2 (\widetilde{\Sigma}_m - \Sigma_m) \sum_{k=1}^{s-1} \prod_{j=k+1}^{s-1} (I - \gamma_j \widetilde{\Sigma}_m)^2 \gamma_k^2 \Sigma_m (\widetilde{\Sigma}_m - \Sigma_m) (I - \gamma_i \Sigma_m) \,,
		\end{split}
	\end{equation*}
	where the first equality holds by $\mathbb{E}[\varepsilon_i \varepsilon_j] =0 $ for $i\neq j$ and the second inequality holds by Assumption~\ref{assump:noise}.
	
	Accordingly, $\| C^{{\tt vX-W}}_t \|_2$ can be upper bounded by
	\begin{equation}\label{trcvx-w1}
		\begin{split}
			\| C^{{\tt vX-W}}_t \|_2 & \leqslant \tau^2 \sum_{s=1}^t \gamma_s^2  \left\| \prod_{i=s+1}^t (I - \gamma_i \Sigma_m)^2 \Sigma_m (\widetilde{\Sigma}_m - \Sigma_m)^2 \sum_{k=1}^{s-1} \gamma_k^2 \prod_{j=k+1}^{s-1} (I - \gamma_j \widetilde{\Sigma}_m)^2 \right\|_2 \notag \\
			& \leqslant \tau^2 \sum_{s=1}^t \gamma_s^2 \left\| \prod_{i=s+1}^t (I - \gamma_i \Sigma_m)^2 \Sigma_m \right\|_2 \left\| \sum_{k=1}^{s-1} \gamma_k^2 \prod_{j=k+1}^{s-1} (I - \gamma_j \widetilde{\Sigma}_m)^2 \widetilde{\Sigma}_m \right\|_2 \left\| \widetilde{\Sigma}_m -2 \Sigma_m + \widetilde{\Sigma}_m^{-1} \Sigma^2_m \right\|_2 \notag \\
			& \lesssim \tau^2 \sum_{s=1}^t \max_{q \in \{ 1,2,\dots,m \}} \gamma_s^2 \exp \left(-2\lambda_q \sum_{i=s+1}^t \gamma_i \right)\lambda_q \sum_{k=1}^{s-1} \gamma_k^2 \max_{p \in \{1,2 \}} \exp \left(- 2 \widetilde{\lambda}_p \sum_{j=k+1}^{s-1} \gamma_j \right) \widetilde{\lambda}_p \\
			& \qquad \left\| \widetilde{\Sigma}_m -2 \Sigma_m + \widetilde{\Sigma}_m^{-1} \Sigma^2_m \right\|_2  \,.
		\end{split}
	\end{equation}
	
	Similar to Eq.~\eqref{Iisecond}, we have the following estimation
	\begin{equation*}
		\begin{split}
			\sum_{k=1}^{s-1} \gamma_k^2  \prod_{j=k+1}^{s-1} (1 - \gamma_j \widetilde{\lambda}_p)^2 
			& \leqslant  \sum_{k=1}^{s-1} \gamma_k^2 \exp\left(- 2 \widetilde{\lambda}_p \sum_{j=k+1}^{s-1} \gamma_j \right)  \\
			& \leqslant \gamma_{s-1}^2 + \gamma_0^2 \int_1^{s-1} u^{-2\zeta} \exp \bigg( -2\widetilde{\lambda}_p \gamma_0  \frac{s^{1-\zeta} - (u+1)^{1-\zeta}}{1-\zeta} \bigg) \mathrm{d}u \\
			& \leqslant  \gamma_0^2 + \left( \frac{\gamma_0}{\widetilde{\lambda}_p} \wedge \gamma_0^2 s \right) \,,
		\end{split}
	\end{equation*}
	which implies 
	\begin{equation}\label{maxtildelambda}
		\max_{ p= 1,2 }~~  \widetilde{\lambda}_p \sum_{k=1}^{s-1} \gamma_k^2  \prod_{j=k+1}^{s-1} (1 - \gamma_j \widetilde{\lambda}_p)^2 \leqslant \gamma_0^2 \widetilde{\lambda}_1 + \gamma_0  \leqslant \gamma_0^2 \widetilde{\Sigma}_m + \gamma_0\,.
	\end{equation}
	
	Similar to Eq.~\eqref{Iisecond}, we have the following estimation
	\begin{equation*}
		\begin{split}
			\sum_{s=1}^t \gamma_s^2 \exp \left(-2\lambda_q \sum_{i=s+1}^t \gamma_i \right) & \leqslant  \sum_{s=1}^{t} \gamma_s^2 \exp \bigg( -2{\lambda}_q \gamma_0  \frac{(t+1)^{1-\zeta} - (s+1)^{1-\zeta}}{1-\zeta} \bigg) \\
			& \leqslant \gamma_t^2 + \gamma_0^2 \int_1^t  u^{-2\zeta} \exp \bigg( -2{\lambda}_q \gamma_0  \frac{(t+1)^{1-\zeta} - (u+1)^{1-\zeta}}{1-\zeta} \bigg) \mathrm{d}u \\
			& \leqslant \gamma_0^2 + \left( \frac{\gamma_0}{{\lambda}_q} \wedge \gamma_0^2 t \right)\,,
		\end{split}
	\end{equation*}
	which implies 
	\begin{equation}\label{maxlambda}
		\max_{q \in \{1,2,\dots,m \}} \sum_{s=1}^t \gamma_s^2 \lambda_q \exp \left(-2\lambda_q \sum_{i=s+1}^t \gamma_i \right) = \gamma_0^2 \| \Sigma_m \|_2 + \gamma_0 \,.
	\end{equation}
	Combining the above two equations~\eqref{maxtildelambda} and~\eqref{maxlambda}, we have
	\begin{equation*}
		\begin{split}
			\| C^{{\tt vX-W}}_t\|_2 & \lesssim \tau^2 \gamma_0^2 \left( \gamma_0 \| \Sigma_m \|_2 + 1 \right) \left( \gamma_0 \| \widetilde{\Sigma}_m \|_2 + 1 \right) \,.
		\end{split}
	\end{equation*}
\end{proof}

\begin{proof}
	[Proof of Proposition~\ref{propv2}]
	By virtue of $\mathbb{E}_{\bm X, \bm \varepsilon} [{\alpha}^{{\tt vX-W}}_t| {\alpha}^{{\tt vX-W}}_{t-1}] = (I - \gamma_t \Sigma_m) {\alpha}^{{\tt vX-W}}_{t-1}$ and Lemma~\ref{lemcvx-wada}, ${\tt V2}$ can be bounded by
	\begin{equation*}
		\begin{split}
			{\tt V2} &= \mathbb{E}_{\bm X, \bm W, \bm \varepsilon} \big[ \langle \bar{\eta}^{{\tt vX}}_n \!-\! \bar{\eta}^{{\tt vXW}}_n, \Sigma_m ( \bar{\eta}^{{\tt vX}}_n \!-\! \bar{\eta}^{{\tt vXW}}_n ) \rangle \big] = \mathbb{E}_{\bm W} \langle  \Sigma_m, \mathbb{E}_{\bm X, \bm \varepsilon} [\bar{\alpha}^{{\tt vX-W}}_n \otimes \bar{\alpha}^{{\tt vX-W}}_n ] \rangle \\
			& \leqslant  \frac{2}{n^2} \sum_{t=0}^{n-1} \sum_{k=t}^{n-1} \mathbb{E}_{\bm W} \left\langle  \prod_{j=t}^{k-1}(I-\gamma_j {\Sigma}_m)  {\Sigma}_m, \underbrace{\mathbb{E}_{\bm X, \bm \varepsilon} [{\eta}^{{\tt vX-W}}_t \otimes {\eta}^{{\tt vX-W}}_t]}_{:= C^{{\tt vX-W}}_t} \right\rangle \\
			&  \lesssim \frac{\tau^2\gamma_0^2}{n^2} \| \widetilde{\Sigma}_m \|_2  \mathbb{E}_{\bm W} \left( \left\| \widetilde{\Sigma}_m -2 \Sigma_m + \widetilde{\Sigma}_m^{-1} \Sigma^2_m \right\|_2  \left[ \| \Sigma_m\|_2 \gamma_0 + 1 \right] \mathrm{Tr}\left[\sum_{t=0}^{n-1} \sum_{k=t}^{n-1} \prod_{j=t}^{k-1}(I-\gamma_j {\Sigma}_m)  {\Sigma}_m \right]  \right) \\
			& \lesssim \frac{\tau^2\gamma_0^2}{n^2} \| \widetilde{\Sigma}_m \|_2  \mathbb{E}_{\bm W} \left[ \| \Sigma_m \|_2 \left\| \widetilde{\Sigma}_m -2 \Sigma_m + \widetilde{\Sigma}_m^{-1} \Sigma^2_m \right\|_2 \sum_{i=1}^m \sum_{t=0}^{n-1} \lambda_i \left( \frac{n^{\zeta}}{\lambda_i \gamma_0} \wedge (n-t) \right) \right]\,. \quad \mbox{[using Eq.~\eqref{intut}]}
		\end{split}
	\end{equation*}
	
	In the $m \leqslant n$ case, we choose $n^{\zeta}/(\lambda_i \gamma_0)$, and thus
	\begin{equation*}
		\begin{split}
			{\tt V2} & \lesssim \frac{\tau^2m\gamma_0^2}{n^2} \| \widetilde{\Sigma}_m \|_2 \mathbb{E}_{\bm W} \left[ \| \Sigma_m \|_2 \left\| \widetilde{\Sigma}_m -2 \Sigma_m + \widetilde{\Sigma}_m^{-1} \Sigma^2_m \right\|_2 \right] \frac{n^{1+\zeta}}{\gamma_0} \\
			& \leqslant \tau^2\gamma_0\frac{m\| \widetilde{\Sigma}_m \|_2 }{n^{1-\zeta}} \sqrt{\mathbb{E}_{\bm W} \| \Sigma_m \|_2^2} \sqrt{\mathbb{E}_{\bm W} \left\| \widetilde{\Sigma}_m -2 \Sigma_m + \widetilde{\Sigma}_m^{-1} \Sigma^2_m \right\|_2^2}  \quad \mbox{[using Cauchy–Schwarz inequality]}\\
			& \lesssim \tau^2\gamma_0\frac{m}{n^{1-\zeta}}\,. \quad \mbox{[using Lemma~\ref{lemsubexp} and ~\ref{trace1}]}\\
		\end{split}
	\end{equation*}
	
	If $m > n$, we have
	\begin{equation*}
		\begin{split}
			{\tt V2} & \lesssim \frac{2\tau^2\gamma_0^2}{n^2} \| \widetilde{\Sigma}_m \|_2 \mathbb{E}_{\bm W} \left( [\mathrm{Tr}(\Sigma_m)]^2 \left\| \widetilde{\Sigma}_m -2 \Sigma_m + \widetilde{\Sigma}_m^{-1} \Sigma^2_m \right\|_2 \right) \sum_{t=0}^{n-1}t \\
			& \leqslant \tau^2\gamma_0 \| \widetilde{\Sigma}_m \|_2 \sqrt{\mathbb{E}_{\bm W} [ \mathrm{Tr}(\Sigma_m)]^2} \sqrt{\mathbb{E}_{\bm W} \left\| \widetilde{\Sigma}_m -2 \Sigma_m + \widetilde{\Sigma}_m^{-1} \Sigma^2_m \right\|_2^2}  \\
			& \lesssim \tau^2\gamma_0\,, \quad \mbox{[using Lemmas~\ref{lemsubexp} and ~\ref{trace1}]}\\
		\end{split}
	\end{equation*}
	which concludes the proof.
\end{proof}

\subsection{Bound for ${\tt V1}$}
\label{app:v1}

Here we aim to bound ${\tt V1}$
\begin{equation*}
	{\tt V1} := \mathbb{E}_{\bm X, \bm W, \bm \varepsilon} \big[ \langle \bar{\eta}^{{\tt var}}_n \!-\! \bar{\eta}^{{\tt vX}}_n, \Sigma_m ( \bar{\eta}^{{\tt var}}_n \!-\! \bar{\eta}^{{\tt vX}}_n ) \rangle \big] \,.
\end{equation*}
Recall the definition of ${\eta}^{{\tt var}}_t$ in Eq.~\eqref{eq:variance_iterates} and ${\eta}^{{\tt vX}}_t$ in Eq.~\eqref{eq:var_xada},
we define
\begin{equation*}
	\begin{split}
		{\alpha}^{{\tt v-X}}_t & := {\eta}^{{\tt var}}_t - {\eta}^{{\tt vX}}_t = [I - \gamma_t \varphi(\bm x_t)\otimes \varphi(\bm x_t)] {\alpha}^{{\tt v-X}}_{t-1} + \gamma_t [\Sigma_m - \varphi(\bm x_t)\otimes \varphi(\bm x_t)] {\eta}^{{\tt vX}}_{t-1}\,, \quad \mbox{with $\alpha^{{\tt v-X}}_0 = 0$}\,. \\
		& = [I - \gamma_t \varphi(\bm x_t)\otimes \varphi(\bm x_t) ] {\alpha}^{{\tt v-X}}_{t-1} + \gamma_t [\Sigma_m - \varphi(\bm x_t)\otimes \varphi(\bm x_t)] \sum_{k=1}^{t-1} \prod_{j=k+1}^{t-1} (I - \gamma_j {\Sigma}_m ) \gamma_k \varepsilon_k \varphi(\bm x_k)\\
		& = \sum_{s=1}^t \prod_{i=s+1}^t \gamma_s [I - \gamma_i \varphi(\bm x_i)\otimes \varphi(\bm x_i) ] [\Sigma_m - \varphi(\bm x_t)\otimes \varphi(\bm x_t)] \sum_{k=1}^{s-1} \prod_{j=k+1}^{s-1} (I - \gamma_j {\Sigma}_m ) \gamma_k \varepsilon_k \varphi(\bm x_k)\,,
	\end{split}
\end{equation*}
and thus the error bound for ${\tt V1}$ is given by the following proposition.

\begin{proposition}\label{propv1}
	Under Assumption~\ref{assumdata},~\ref{assexist},~\ref{assumact},~\ref{assump:bound_fourthmoment} with $r' \geqslant 1$, and Assumption~\ref{assump:noise} with $\tau > 0$, if the step-size $\gamma_t := \gamma_0 t^{-\zeta}$ with $\zeta \in [0,1)$ satisfies
	\begin{equation*}
		\gamma_0 < \min \left\{ \frac{1}{r'\mathrm{Tr}(\Sigma_m)}, \frac{1}{2\mathrm{Tr}(\Sigma_m)} \right\} \,,
	\end{equation*}
	then ${\tt V1}$ can be bounded by
	\begin{equation*}
		{\tt V1} \lesssim  \frac{\tau^2r'\gamma_0^2}{\sqrt{\mathbb{E}[1- \gamma_0 r' \mathrm{Tr}(\Sigma_m)]^2}} \left\{
		\begin{array}{rcl}
			\begin{split}
				& \frac{m}{n^{1-\zeta}} ,~\quad \mbox{if $m \leqslant n$} \\
				& 1 ,~\quad \mbox{if $m > n$}\,.
			\end{split}
		\end{array} \right.
	\end{equation*}
\end{proposition}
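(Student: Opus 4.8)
\textbf{Proof proposal for Proposition~\ref{propv1}.}
The plan is to reuse almost verbatim the machinery built for ${\tt B1}$ — the stochastic recursion Lemma~\ref{lemstorecada}, its two consequences Lemmas~\ref{lemaxtb2ada}--\ref{lemahtb2ada}, and the proof of Proposition~\ref{propb1} — the only structural change being that the ``driving'' sequence is now the \emph{random} SGD iterate $\eta^{{\tt vX}}_{t}$ rather than the deterministic $\eta^{{\tt bX}}_{t}$. With $\alpha^{{\tt v-X}}_t := \eta^{{\tt var}}_t - \eta^{{\tt vX}}_t$ as in Appendix~\ref{app:v1}, the fresh noise increment $\gamma_t\varepsilon_t\varphi(\bm x_t)$ cancels in the difference, so $\alpha^{{\tt v-X}}_t = [I-\gamma_t\varphi(\bm x_t)\otimes\varphi(\bm x_t)]\alpha^{{\tt v-X}}_{t-1} + \gamma_t H^{{\tt v}}_{t-1}$ with $H^{{\tt v}}_{t-1} := [\Sigma_m - \varphi(\bm x_t)\otimes\varphi(\bm x_t)]\eta^{{\tt vX}}_{t-1}$; since $\eta^{{\tt vX}}_{t-1}$ is independent of $(\bm x_t,\varepsilon_t)$ by the one-pass structure, $\mathbb{E}_{\bm x_t}[H^{{\tt v}}_{t-1}]=0$ and $\mathbb{E}_{\bm x_t,\varepsilon_t}[\alpha^{{\tt v-X}}_t\mid\alpha^{{\tt v-X}}_{t-1}] = (I-\gamma_t\Sigma_m)\alpha^{{\tt v-X}}_{t-1}$. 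These are exactly the two facts (together with Assumption~\ref{assump:bound_fourthmoment}) used in the proof of Lemma~\ref{lemstorecada}, so that argument goes through and gives
\[
\mathbb{E}_{\bm X,\bm\varepsilon}\langle\bar\alpha^{{\tt v-X}}_n,\Sigma_m\bar\alpha^{{\tt v-X}}_n\rangle \lesssim \frac{1}{n[1-\gamma_0 r'\mathrm{Tr}(\Sigma_m)]}\Big(\sum_{k=1}^{n-1}\mathbb{E}_{\bm X,\bm\varepsilon}\|\alpha^{{\tt v-X}}_k\|^2\big(\tfrac{1}{\gamma_{k+1}}-\tfrac{1}{\gamma_k}\big) + \sum_{t=0}^{n-1}\gamma_{t+1}\mathbb{E}_{\bm X,\bm\varepsilon}\|H^{{\tt v}}_t\|^2\Big),
\]
and ${\tt V1}=\mathbb{E}_{\bm W}$ of the left-hand side.

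Next I would control the two sums through covariance operators. Set $C^{{\tt v-X}}_t := \mathbb{E}_{\bm X,\bm\varepsilon}[\alpha^{{\tt v-X}}_t\otimes\alpha^{{\tt v-X}}_t]$ and $C^{{\tt vX}}_t := \mathbb{E}_{\bm X,\bm\varepsilon}[\eta^{{\tt vX}}_t\otimes\eta^{{\tt vX}}_t]$. Mirroring Lemma~\ref{lemcvtwtada} with $\Sigma_m$ in place of $\widetilde\Sigma_m$ (using Assumption~\ref{assump:noise}), $C^{{\tt vX}}_t\preccurlyeq\tau^2\sum_{k\le t}\gamma_k^2\prod_{j>k}(I-\gamma_j\Sigma_m)^2\Sigma_m$; pairing with $\Sigma_m$ and using the integral estimates of Appendix~\ref{app:int} yields the uniform-in-$s$ bound $\mathrm{Tr}(\Sigma_m C^{{\tt vX}}_{s})\lesssim\tau^2\gamma_0[\mathrm{Tr}(\Sigma_m)+\gamma_0\mathrm{Tr}(\Sigma_m^2)]$. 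Unrolling $C^{{\tt v-X}}_t$ exactly as in Eq.~\eqref{axtboundada} (discard the negative $\widetilde S^{\tt W}$ part, apply Assumption~\ref{assump:bound_fourthmoment}) gives $C^{{\tt v-X}}_t\preccurlyeq\sum_{s\le t}\prod_{i>s}(I-\gamma_i T^{\tt W})\circ\gamma_s^2\, r'\mathrm{Tr}(\Sigma_m C^{{\tt vX}}_{s-1})\,\Sigma_m$, which is precisely the operator $D^{{\tt v-X}}_t$ of Lemma~\ref{dinfvx} with scalar $B := \tau^2 r'\gamma_0[\mathrm{Tr}(\Sigma_m)+\gamma_0\mathrm{Tr}(\Sigma_m^2)]$; Lemma~\ref{dinfvx} then gives $C^{{\tt v-X}}_t\preccurlyeq\frac{\gamma_0 B}{1-\gamma_0 r'\mathrm{Tr}(\Sigma_m)}I$. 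For the $H^{{\tt v}}$ term, following Lemma~\ref{lemahtb2ada}, $\mathbb{E}_{\bm x}[(\Sigma_m-\varphi(\bm x)\otimes\varphi(\bm x))^2] = \mathbb{E}[(\varphi(\bm x)\otimes\varphi(\bm x))^2]-\Sigma_m^2\preccurlyeq r'\mathrm{Tr}(\Sigma_m)\Sigma_m$ (Assumption~\ref{assump:bound_fourthmoment} with $A=I$), so $\mathbb{E}_{\bm X,\bm\varepsilon}\|H^{{\tt v}}_t\|^2\le r'\mathrm{Tr}(\Sigma_m)\,\mathrm{Tr}(\Sigma_m C^{{\tt vX}}_t)$, which after weighting by $\gamma_{t+1}$ and summing is of lower order than the first sum.

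The two regimes come out of how one converts these operator bounds into the scalar sums, exactly as in the ${\tt V3}$ computation (Eq.~\eqref{V3bound1ada}). When $m\le n$ I would just use $C^{{\tt v-X}}_t\preccurlyeq\frac{\gamma_0 B}{1-\gamma_0 r'\mathrm{Tr}(\Sigma_m)}I$, so $\mathbb{E}\|\alpha^{{\tt v-X}}_t\|^2=\mathrm{Tr}(C^{{\tt v-X}}_t)\le\frac{\gamma_0 B\, m}{1-\gamma_0 r'\mathrm{Tr}(\Sigma_m)}$ — this is where the factor $m$ enters — whence $\sum_k\mathrm{Tr}(C^{{\tt v-X}}_k)(\tfrac{1}{\gamma_{k+1}}-\tfrac{1}{\gamma_k})\lesssim\frac{B m n^{\zeta}}{1-\gamma_0 r'\mathrm{Tr}(\Sigma_m)}$, giving the $m/n^{1-\zeta}$ rate. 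When $m>n$ that bound is too lossy; instead I would use the finite-horizon form $\mathrm{Tr}(C^{{\tt v-X}}_t)\le\sum_{s\le t}\gamma_s^2 r'\mathrm{Tr}(\Sigma_m C^{{\tt vX}}_{s-1})\,\mathrm{Tr}\!\big(\prod_{i>s}(I-\gamma_iT^{\tt W})\circ\Sigma_m\big)$, bound the last trace by $\mathrm{Tr}(\Sigma_m)=\mathcal{O}(1)$, and exploit that $m>n\ge t$ forces the $\gamma_0^2 t$-branch of every $\tfrac{\gamma_0}{\lambda}\wedge\gamma_0^2 t$ estimate, so that $\sum_{s\le t}\gamma_s^2$ is summable enough to leave $\mathcal{O}(\tau^2 r'\gamma_0^2)$ after telescoping and dividing by $n$. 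Finally I would take $\mathbb{E}_{\bm W}$, pull the random scalars $\mathrm{Tr}(\Sigma_m),\mathrm{Tr}(\Sigma_m^2),[1-\gamma_0 r'\mathrm{Tr}(\Sigma_m)]^{-1}$ out by Cauchy--Schwarz and bound them by their $\mathcal{O}(1)$ sub-exponential norms via Lemmas~\ref{lemsubexp},~\ref{trace1} and the estimate~\eqref{stepbound}, producing the stated bound $\frac{\tau^2r'\gamma_0^2}{\sqrt{\mathbb{E}[1- \gamma_0 r' \mathrm{Tr}(\Sigma_m)]^2}}\{m/n^{1-\zeta}$ if $m\le n$; $1$ if $m>n\}$.

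The main obstacle is precisely the coupling absent from ${\tt B1}$: the driving term $\eta^{{\tt vX}}_{t-1}$ is itself a random SGD iterate, so before a single scalar $B$ can be fed into Lemma~\ref{dinfvx} one must first establish the uniform-in-$s$ second-moment estimate $\mathrm{Tr}(\Sigma_m C^{{\tt vX}}_{s})=\mathcal{O}(\tau^2\gamma_0)$, and this needs a separate round of non-homogeneous integral estimates for the decaying step size (the constant-step, homogeneous-Markov-chain device of \cite{zou2021benign} being unavailable). The second delicate point is the bookkeeping of the $\wedge$ in those integral estimates, chosen regime-dependently, so that $m\le n$ yields the sharp $m/n^{1-\zeta}$ behaviour while $m>n$ collapses to $\mathcal{O}(1)$, rather than a single uniform and lossy bound.
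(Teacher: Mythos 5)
Your reconstruction of the core operator estimate is exactly the paper's Lemma~\ref{lemcv-xada}: the same uniform-in-$s$ bound $\mathrm{Tr}(\Sigma_m C^{{\tt vX}}_{s})\lesssim \tau^2\gamma_0[\mathrm{Tr}(\Sigma_m)+\gamma_0\mathrm{Tr}(\Sigma_m^2)]$, fed as the scalar $B$ into Lemma~\ref{dinfvx} to get $C^{{\tt v-X}}_t\preccurlyeq \frac{\gamma_0 B}{1-\gamma_0 r'\mathrm{Tr}(\Sigma_m)}I$. Where you genuinely differ is the outer assembly: the paper does \emph{not} reuse Lemma~\ref{lemstorecada} for ${\tt V1}$; it expands ${\tt V1}\leq\frac{2}{n^2}\sum_{t}\sum_{k\geq t}\mathbb{E}_{\bm W}\langle\prod_{j=t}^{k-1}(I-\gamma_j\Sigma_m)\Sigma_m,\,C^{{\tt v-X}}_t\rangle$ via the conditional contraction $\mathbb{E}[\alpha^{{\tt v-X}}_k\mid\alpha^{{\tt v-X}}_t]$ and then evaluates $\sum_i\sum_t\lambda_i\big(\frac{n^\zeta}{\lambda_i\gamma_0}\wedge(n-t)\big)$, choosing the first branch for $m\leq n$ and the $(n-t)$ branch for $m>n$, exactly as in Eq.~\eqref{V3bound1ada}. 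Your route — extending the ${\tt B1}$ telescoping recursion to a driving term $\eta^{{\tt vX}}_{t-1}$ that is random but independent of $(\bm x_t,\varepsilon_t)$, so the cross term still has zero conditional mean and Assumption~\ref{assump:bound_fourthmoment} still applies — is a legitimate alternative assembly and, with the same final Cauchy--Schwarz/sub-exponential-norm step, delivers the stated rates (for $m\leq n$ it produces a single factor $\gamma_0$, which is also what the paper's own computation yields in that branch). Both routes inherit the paper's cavalier handling of the $\mathcal{O}(\gamma_t^2)$ cross terms in the recursion for $C^{{\tt v-X}}_t$, so I do not count that against you.

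Two details in your plan need repair. First, the claim that the $H^{{\tt v}}$-sum is ``of lower order than the first sum'' fails at $\zeta=0$: there $\frac{1}{\gamma_{k+1}}-\frac{1}{\gamma_k}=0$, the telescoping sum vanishes identically, and the $H^{{\tt v}}$-term, of size $\tau^2 r'\gamma_0^2 n^{-\zeta}$, \emph{is} the whole bound; the target $m n^{\zeta-1}$ rate then survives only because Assumption~\ref{assumdata} keeps $m/n$ bounded below, so you should compare $\tau^2 r'\gamma_0^2 n^{-\zeta}\lesssim\tau^2 r'\gamma_0^2\, m n^{\zeta-1}$ explicitly rather than discard the term. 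Second, and more substantively, in the $m>n$ branch your instruction to take the $\gamma_0^2 t$-branch of ``every $\frac{\gamma_0}{\lambda}\wedge\gamma_0^2 t$ estimate'' is wrong where those wedges actually live, namely inside $\mathrm{Tr}(\Sigma_m C^{{\tt vX}}_{s-1})$: that choice gives $\mathrm{Tr}(\Sigma_m C^{{\tt vX}}_{s-1})\lesssim\tau^2\gamma_0^2 s\,\mathrm{Tr}(\Sigma_m^2)$, hence $\mathrm{Tr}(C^{{\tt v-X}}_t)\lesssim\tau^2 r'\gamma_0^4 t^{2-2\zeta}$, and after the weights $\frac{1}{\gamma_{k+1}}-\frac{1}{\gamma_k}\sim\zeta k^{\zeta-1}/\gamma_0$ and division by $n$ the bound grows like $n^{1-\zeta}$ for any $\zeta\in(0,1)$, not $\mathcal{O}(1)$. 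The correct execution of your own plan is to keep the uniform $\frac{\gamma_0}{\lambda_i}$-branch bound $\mathrm{Tr}(\Sigma_m C^{{\tt vX}}_s)\lesssim\tau^2\gamma_0$ (which you already established), so that $\mathrm{Tr}(C^{{\tt v-X}}_t)\lesssim\tau^2 r'\gamma_0\,\mathrm{Tr}(\Sigma_m)\sum_{s\leq t}\gamma_s^2$ and the horizon enters only through $\sum_s\gamma_s^2$ and the $1/\gamma$ increments; this indeed collapses to $\mathcal{O}(\tau^2 r'\gamma_0^2)$ for $m>n$, matching the proposition.
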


To prove Proposition~\ref{propv1}, we need the following lemma.
Define $C^{{\tt v-X}}_t := \mathbb{E}_{\bm X, \bm \varepsilon} [{\alpha}^{{\tt v-X}}_t \otimes {\alpha}^{{\tt v-X}}_t]$, we have the following lemma that is useful to bound $C^{{\tt v-X}}_t$.

\begin{lemma}\label{lemcv-xada}
	Denote $C^{{\tt v-X}}_t := \mathbb{E}_{\bm X, \bm \varepsilon} [{\alpha}^{{\tt v-X}}_t \otimes {\alpha}^{{\tt v-X}}_t]$, under Assumptions~\ref{assumdata},~\ref{assexist},~\ref{assumact},~\ref{assump:bound_fourthmoment} with $r' \geqslant 1$, and Assumption~\ref{assump:noise} with $\tau > 0$, if the step-size $\gamma_t := \gamma_0 t^{-\zeta}$ with $\zeta \in [0,1)$ satisfies
	\begin{equation*}
		\gamma_0 < \min \left\{ \frac{1}{r'\mathrm{Tr}(\Sigma_m)}, \frac{1}{c'\mathrm{Tr}(\Sigma_m)} \right\} \,,
	\end{equation*}
	where $c'$ is defined in Eq.~\eqref{eqconstant}.
	Then, we have
	\begin{equation*}\label{Cv-xtada}
		C^{{\tt v-X}}_t \preccurlyeq \frac{\gamma_0^2 r' \tau^2 [\mathrm{Tr}(\Sigma_m)  + \gamma_0 \mathrm{Tr}(\Sigma_m^2)] }{1-\gamma_0 r' \mathrm{Tr}(\Sigma_m)} I \,.
	\end{equation*}
\end{lemma}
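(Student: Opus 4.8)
The plan is to treat $C^{{\tt v-X}}_t$ through the PSD–operator covariance calculus of \cref{app:psd}, exactly as ${\tt B1}$ is treated in \cref{lemstorecada} and \cref{lemaxtb2ada}. The structural observation is that $\alpha^{{\tt v-X}}_t$ obeys an SGD–type recursion whose increment $\gamma_t[\Sigma_m-\varphi(\bm x_t)\otimes\varphi(\bm x_t)]\eta^{{\tt vX}}_{t-1}$ is conditionally centred given the $\sigma$-algebra $\mathcal{F}_{t-1}$ generated by all randomness up to step $t-1$, since $\mathbb{E}_{\bm x_t}[\Sigma_m-\varphi(\bm x_t)\otimes\varphi(\bm x_t)]=0$ and $\bm x_t$ is independent of $\mathcal{F}_{t-1}$; moreover $\alpha^{{\tt v-X}}_t$ does not involve $\varepsilon_t$, the fresh term $\gamma_t\varepsilon_t\varphi(\bm x_t)$ cancelling between $\eta^{{\tt var}}_t$ in \eqref{eq:variance_iterates} and $\eta^{{\tt vX}}_t$ in \eqref{eq:var_xada}, so $\alpha^{{\tt v-X}}_t$ is measurable with respect to $\sigma(\mathcal{F}_{t-1},\bm x_t)$. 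Hence the analysis mirrors that of ${\tt B1}$, with the \emph{random} sequence $\eta^{{\tt vX}}_t$ playing the role of the deterministic $\eta^{{\tt bX}}_t$. Concretely I would: (i) derive a covariance recursion for $C^{{\tt v-X}}_t$; (ii) bound its driving term using \cref{assump:noise} and \cref{assump:bound_fourthmoment}; (iii) close the recursion with \cref{dinfvx}, applied with the scalar $B:=\tau^2 r'\gamma_0[\mathrm{Tr}(\Sigma_m)+\gamma_0\mathrm{Tr}(\Sigma_m^2)]$ announced in the remark following that lemma.

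For step (i): expanding $\alpha^{{\tt v-X}}_t\otimes\alpha^{{\tt v-X}}_t$ from the recursion for $\alpha^{{\tt v-X}}_t$ stated just before \cref{propv1} and taking $\mathbb{E}_{\bm x_t}[\,\cdot\mid\mathcal{F}_{t-1}]$, the first–order cross term vanishes by the centring property, and the identities $\mathbb{E}_{\bm x}\big[(I-\gamma_t\varphi(\bm x)\otimes\varphi(\bm x))A(I-\gamma_t\varphi(\bm x)\otimes\varphi(\bm x))\big]=(I-\gamma_t T^{\tt W})\circ A$ and $\mathbb{E}_{\bm x}\big[(\Sigma_m-\varphi(\bm x)\otimes\varphi(\bm x))A(\Sigma_m-\varphi(\bm x)\otimes\varphi(\bm x))\big]=(S^{\tt W}-\widetilde S^{\tt W})\circ A$, together with $u\otimes v+v\otimes u\preccurlyeq u\otimes u+v\otimes v$ and the monotonicity of $S^{\tt W}-\widetilde S^{\tt W}$ on PSD operators, yield, after taking the full expectation over $\bm X,\bm\varepsilon$ (legitimate since, given $\bm W$, $C^{{\tt v-X}}_{t-1}$ and $\mathbb{E}_{\bm X,\bm\varepsilon}[\eta^{{\tt vX}}_{t-1}\otimes\eta^{{\tt vX}}_{t-1}]$ are deterministic operators),
\begin{equation*}
	C^{{\tt v-X}}_t\ \preccurlyeq\ (I-\gamma_t\widehat T^{\tt W})\circ C^{{\tt v-X}}_{t-1}\ +\ 2\gamma_t^2\,S^{\tt W}\circ\mathbb{E}_{\bm X,\bm\varepsilon}\!\big[\eta^{{\tt vX}}_{t-1}\otimes\eta^{{\tt vX}}_{t-1}\big],\qquad C^{{\tt v-X}}_0=0,
\end{equation*}
where $\widehat T^{\tt W}$ has the same form as $T^{\tt W}$ with $S^{\tt W}$ replaced by a constant multiple of $S^{\tt W}$, arising from folding the $\alpha^{{\tt v-X}}_{t-1}$–self part of the residual $\mathcal{O}(\gamma_t^2)$ cross terms back into the homogeneous factor; this multiplicative constant is absorbed into the step–size conditions, the contraction/summation argument in the proof of \cref{dinfvx} carrying over verbatim to $\widehat T^{\tt W}$ under $\gamma_0<\min\{1/(r'\mathrm{Tr}(\Sigma_m)),1/(c'\mathrm{Tr}(\Sigma_m))\}$. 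This is the exact analogue of \eqref{axtboundada}.

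For steps (ii)–(iii): since $\eta^{{\tt vX}}_{t-1}=\sum_{k=1}^{t-1}\prod_{j=k+1}^{t-1}(I-\gamma_j\Sigma_m)\gamma_k\varepsilon_k\varphi(\bm x_k)$ and $\mathbb{E}[\varepsilon_k\varepsilon_{k'}]=0$ for $k\neq k'$, \cref{assump:noise} gives $\mathbb{E}_{\bm X,\bm\varepsilon}[\eta^{{\tt vX}}_{t-1}\otimes\eta^{{\tt vX}}_{t-1}]\preccurlyeq\tau^2\sum_{k=1}^{t-1}\gamma_k^2\prod_{j=k+1}^{t-1}(I-\gamma_j\Sigma_m)^2\Sigma_m$; then the conditional–on–$\bm W$ form of \cref{assump:bound_fourthmoment}, namely $S^{\tt W}\circ A\preccurlyeq r'\mathrm{Tr}(\Sigma_m A)\Sigma_m$ for PSD $A$, applied to this PSD operator, together with the integral estimates of \cref{app:int} in the eigenbasis of $\Sigma_m$ (of the type $\sum_k\gamma_k^2\prod_j(1-\gamma_j\lambda)^2\lesssim(\gamma_0/\lambda)\wedge\gamma_0^2 t$, whence $\mathrm{Tr}\big(\Sigma_m\cdot\tau^2\sum_k\gamma_k^2\prod_j(I-\gamma_j\Sigma_m)^2\Sigma_m\big)\lesssim\tau^2[\gamma_0\mathrm{Tr}(\Sigma_m)+\gamma_0^2\mathrm{Tr}(\Sigma_m^2)]$), shows the driving term is $\preccurlyeq\gamma_t^2 B\Sigma_m$ with $B:=\tau^2 r'[\gamma_0\mathrm{Tr}(\Sigma_m)+\gamma_0^2\mathrm{Tr}(\Sigma_m^2)]$ up to an absolute constant. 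The recursion then unrolls to $C^{{\tt v-X}}_t\preccurlyeq D^{{\tt v-X}}_t$ with $D^{{\tt v-X}}_t$ as in \eqref{defdvxt}, and \cref{dinfvx} delivers
\begin{equation*}
	C^{{\tt v-X}}_t\ \preccurlyeq\ \frac{\gamma_0 B}{1-\gamma_0 r'\mathrm{Tr}(\Sigma_m)}\,I\ =\ \frac{\gamma_0^2 r'\tau^2\,[\mathrm{Tr}(\Sigma_m)+\gamma_0\mathrm{Tr}(\Sigma_m^2)]}{1-\gamma_0 r'\mathrm{Tr}(\Sigma_m)}\,I,
\end{equation*}
which is the asserted bound.

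The main obstacle is item (i): the residual $\mathcal{O}(\gamma_t^2)$ cross terms between $\alpha^{{\tt v-X}}_{t-1}$ and $\eta^{{\tt vX}}_{t-1}$ must be split so that their $\alpha^{{\tt v-X}}_{t-1}$–part is absorbed into a homogeneous operator $\widehat T^{\tt W}$ for which the decay/summation estimate of \cref{dinfvx} still holds under the stated step size, while their $\eta^{{\tt vX}}_{t-1}$–part, together with the genuine noise term, is brought into exactly the scalar–times–$\Sigma_m$ shape consumed by \eqref{defdvxt}. Everything downstream — the noise and fourth–moment estimates and the integral computations of \cref{app:int} — is then routine.
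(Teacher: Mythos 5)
Your route is essentially the paper's: an operator-valued covariance recursion for $C^{{\tt v-X}}_t$, then Assumption~\ref{assump:noise} ($\Xi\preccurlyeq\tau^2\Sigma_m$), the conditional form of Assumption~\ref{assump:bound_fourthmoment} ($S^{\tt W}\circ A\preccurlyeq r'\mathrm{Tr}(\Sigma_m A)\Sigma_m$), the integral estimates giving $\mathrm{Tr}\big[\sum_k\gamma_k^2\prod_j(I-\gamma_j\Sigma_m)^2\Sigma_m^2\big]\lesssim\gamma_0\mathrm{Tr}(\Sigma_m)+\gamma_0^2\mathrm{Tr}(\Sigma_m^2)$, and finally Lemma~\ref{dinfvx} with $B=\tau^2 r'\gamma_0[\mathrm{Tr}(\Sigma_m)+\gamma_0\mathrm{Tr}(\Sigma_m^2)]$, exactly as in Eq.~\eqref{cv-xt}.

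The one place you diverge is the treatment of the within-step cross term, and there you give up more than necessary. The cross term $\gamma_t\,\mathbb{E}\big[(I-\gamma_t\varphi_t\otimes\varphi_t)\,(\alpha^{{\tt v-X}}_{t-1}\otimes\eta^{{\tt vX}}_{t-1})\,(\Sigma_m-\varphi_t\otimes\varphi_t)\big]+\text{transpose}$ is linear in $\mathbb{E}_{\bm X,\bm\varepsilon}[\alpha^{{\tt v-X}}_{t-1}\otimes\eta^{{\tt vX}}_{t-1}]$ once you condition on $\bm x_t$, and this mixed second moment is identically zero: $\mathbb{E}[\eta^{{\tt var}}_t\otimes\eta^{{\tt vX}}_t]$ and $\mathbb{E}[\eta^{{\tt vX}}_t\otimes\eta^{{\tt vX}}_t]$ satisfy the same recursion $M_t=(I-\gamma_t\Sigma_m)M_{t-1}(I-\gamma_t\Sigma_m)+\gamma_t^2\Xi$ (the linear-in-$\varepsilon_t$ terms drop out since $\mathbb{E}[\varepsilon\varphi(\bm x)]=0$ and $\mathbb{E}[\eta^{{\tt var}}_{t-1}]=\mathbb{E}[\eta^{{\tt vX}}_{t-1}]=0$), so their difference vanishes for all $t$. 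Hence the recursion $C^{{\tt v-X}}_t=(I-\gamma_t T^{\tt W})\circ C^{{\tt v-X}}_{t-1}+\gamma_t^2(S^{\tt W}-\widetilde S^{\tt W})\circ\mathbb{E}[\eta^{{\tt vX}}_{t-1}\otimes\eta^{{\tt vX}}_{t-1}]$ holds exactly, with no factor $2$ and no modified operator, which is what produces the stated constants and step-size threshold. Your alternative — absorbing the $\mathcal{O}(\gamma_t^2)$ residual via $u\otimes v+v\otimes u\preccurlyeq u\otimes u+v\otimes v$ into a modified $\widehat T^{\tt W}$ with $S^{\tt W}$ replaced by $2S^{\tt W}$ and a doubled driving term — does go through, but Lemma~\ref{dinfvx} does not carry over ``verbatim'': the trace-contraction step now needs $2\gamma_0 r'\mathrm{Tr}(\Sigma_m)\leqslant 1$, and the geometric-series step yields $\gamma_0 B/(1-2\gamma_0 r'\mathrm{Tr}(\Sigma_m))$, so you only recover the claimed bound up to absolute constants and under a constant-factor smaller step size than the lemma states (harmless for the downstream $\lesssim$ bounds on ${\tt V1}$, but not the statement as written). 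Using the exact cancellation removes both discrepancies and the ``main obstacle'' you flag in item (i).
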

\begin{proof}
	According to the definition of $C^{{\tt v-X}}_t$, it admits the following expression
	\begin{equation}\label{cv-xt}
		\begin{split}
			C^{{\tt v-X}}_t & = \sum_{s=1}^t \prod_{i=s+1}^t \gamma^2_s \mathbb{E}_{\bm x}[I - \gamma_i \varphi(\bm x_i)\otimes \varphi(\bm x_i) ]^2 \mathbb{E}_{\bm x}[\Sigma_m - \varphi(\bm x_t)\otimes \varphi(\bm x_t)]^2 \sum_{k=1}^{s-1} \prod_{j=k+1}^{s-1} (I - \gamma_j {\Sigma}_m )^2 \gamma^2_k \Xi \\
			& = (I - \gamma_t T^{\tt W}) \circ C^{{\tt v-X}}_{t-1} + \gamma_t^2 (S^{\tt W} - \widetilde{S}^{\tt W}) \circ  \sum_{k=1}^{t-1} \prod_{j=k+1}^{t-1} (I - \gamma_j {\Sigma}_m)^2 \gamma^2_k \Xi \quad \mbox{[using PSD operators]}\\
			& \preccurlyeq (I - \gamma_t T^{\tt W}) \circ C^{{\tt v-X}}_{t-1} + \gamma_t^2 S^{\tt W} \circ  \sum_{k=1}^{t-1} \prod_{j=k+1}^{t-1} (I - \gamma_j {\Sigma}_m)^2 \gamma^2_k \Xi \quad \mbox{[using $S^{\tt W} \succcurlyeq \widetilde{S}^{\tt W}$]} \\
			& \preccurlyeq (I - \gamma_t T^{\tt W}) \circ C^{{\tt v-X}}_{t-1} + \tau^2 \gamma_t^2 S^{\tt W} \circ  \sum_{k=1}^{t-1} \prod_{j=k+1}^{t-1} (I - \gamma_j {\Sigma}_m)^2 \gamma^2_k \Sigma_m \quad \mbox{[using Assumption~\ref{assump:noise}]} \\
			& \preccurlyeq (I - \gamma_t T^{\tt W}) \circ C^{{\tt v-X}}_{t-1} + \tau^2 \gamma_t^2 r' \mathrm{Tr} \left[ \sum_{k=1}^{t-1} \prod_{j=k+1}^{t-1} (I - \gamma_j {\Sigma}_m)^2 \gamma^2_k \Sigma_m^2 \right] \Sigma_m \,. \quad \mbox{[using Assumption~\ref{assump:bound_fourthmoment}]}
		\end{split}
	\end{equation}
	
	Similar to Eq.~\eqref{Iisecond}, we have the following estimation
	\begin{equation*}
		\begin{split}
			\mathrm{Tr}\left[\sum_{k=1}^{t-1} \prod_{j=k+1}^{t-1} (I - \gamma_j {\Sigma}_m)^2 \Sigma_m^2 \gamma_k^2 \right] & =  \sum_{i=1}^m {\lambda}_i^2 \sum_{k=1}^{t-1} \gamma_k^2  \prod_{j=k+1}^{t-1} (1 - \gamma_j {\lambda}_i)^2 
			\leqslant \sum_{i=1}^m {\lambda}_i^2 \sum_{k=1}^{t-1} \gamma_k^2 \exp\left(- 2 {\lambda}_i \sum_{j=k+1}^{s-1} \gamma_j \right)  \\
			& \leqslant \gamma_0^2 \sum_{i=1}^m \lambda_i^2 \left[ 1 + \int_1^{t-1} u^{-2\zeta} \exp \bigg( -2{\lambda}_i \gamma_0  \frac{t^{1-\zeta} - (u+1)^{1-\zeta}}{1-\zeta} \bigg) \mathrm{d}u \right] \\
			& \leqslant \gamma_0^2 \mathrm{Tr}(\Sigma_m^2) + \sum_{i=1}^m \lambda_i^2 \left( \frac{\gamma_0}{\lambda_i} \wedge \gamma_0^2 t \right) \quad \mbox{[using Eq.~\eqref{intu2zetatu}]} \\
			&  \leqslant \gamma_0^2 \mathrm{Tr}(\Sigma_m^2) + \gamma_0 \mathrm{Tr}(\Sigma_m) \,,
		\end{split}
	\end{equation*}
	where we use the error bound $\frac{\gamma_0}{\lambda_i}$ instead of the exact one $\gamma_0^2 t$ for tight estimation.
	
	Taking the above equation back to Eq.~\eqref{cv-xt}, we have
	\begin{equation*}
		\begin{split}
			C^{{\tt v-X}}_t & \preccurlyeq (I - \gamma_t T^{\tt W}) \circ C^{{\tt v-X}}_{t-1} + \gamma_t^2  \tau^2 r'  \gamma_0  [\mathrm{Tr}(\Sigma_m) + \gamma_0 \mathrm{Tr}(\Sigma_m^2)] \Sigma_m \\
			& \preccurlyeq \tau^2 r'  \gamma_0  [\mathrm{Tr}(\Sigma_m) + \gamma_0 \mathrm{Tr}(\Sigma_m^2)] \sum_{s=1}^t \prod_{i=s+1}^t (I - \gamma_i T^{\tt W}) \circ \gamma_s^2 \Sigma_m \\
			& \preccurlyeq \frac{\gamma_0^2 r' \tau^2 [\mathrm{Tr}(\Sigma_m)  + \gamma_0 \mathrm{Tr}(\Sigma_m^2)] }{1-\gamma_0 r' \mathrm{Tr}(\Sigma_m)} I \,, \quad \mbox{[using Lemma~\ref{dinfvx}]} 
		\end{split}
	\end{equation*} 
	which concludes the proof.
\end{proof}

\begin{proof}
	[Proof of Proposition~\ref{propv1}]
	
	Accordingly, by virtue of $\mathbb{E}_{\bm X, \bm \varepsilon} [{\alpha}^{{\tt v-X}}_t| {\alpha}^{{\tt v-X}}_{t-1}] = (I - \gamma_t \Sigma_m) {\alpha}^{{\tt v-X}}_{t-1}$ and Lemma~\ref{lemcv-xada}, ${\tt V1}$ can be bounded by
	\begin{equation*}
		\begin{split}
			{\tt V1} &= \mathbb{E}_{\bm X, \bm W, \bm \varepsilon} \big[ \langle \bar{\eta}^{{\tt var}}_n \!-\! \bar{\eta}^{{\tt v-X}}_n, \Sigma_m ( \bar{\eta}^{{\tt var}}_n \!-\! \bar{\eta}^{{\tt v-X}}_n ) \rangle \big] = \mathbb{E}_{\bm W} \langle  \Sigma_m, \mathbb{E}_{\bm X, \bm \varepsilon} [\bar{\alpha}^{{\tt v-X}}_n \otimes \bar{\alpha}^{{\tt v-X}}_n ] \rangle \\
			& \leqslant  \frac{2}{n^2} \sum_{t=0}^{n-1} \sum_{k=t}^{n-1} \mathbb{E}_{\bm W} \left\langle  \prod_{j=t}^{k-1}(I-\gamma_j {\Sigma}_m)  {\Sigma}_m, \underbrace{\mathbb{E}_{\bm X, \bm \varepsilon} [{\eta}^{{\tt v-X}}_t \otimes {\eta}^{{\tt v-X}}_t]}_{:= C^{{\tt v-X}}_t} \right\rangle \\
			& \lesssim \frac{\tau^2\gamma_0^2r'}{n^2}  \mathbb{E}_{\bm W} \left[ \frac{ [\mathrm{Tr}(\Sigma_m)  + \gamma_0 \mathrm{Tr}(\Sigma_m^2)] }{1-\gamma_0 r' \mathrm{Tr}(\Sigma_m)}  \sum_{i=1}^m \sum_{t=0}^{n-1} \lambda_i \left( \frac{n^{\zeta}}{\lambda_i \gamma_0} \wedge (n-t) \right) \right]\,, \quad \mbox{[using Lemma~\ref{lemcv-xada}]} 
		\end{split}
	\end{equation*}
	where the last inequality follows the integral estimation in Eq.~\eqref{intut}.
	
	For $m \leqslant n$, we use $\frac{n^{\zeta}}{\lambda_i \gamma_0}$, and thus
	\begin{equation*}
		\begin{split}
			{\tt V1} & \lesssim \frac{\tau^2\gamma_0r'm}{n^{1-\zeta}}  \mathbb{E}_{\bm W} \left[ \frac{ [\mathrm{Tr}(\Sigma_m)  + \gamma_0 \mathrm{Tr}(\Sigma_m^2)] }{1-\gamma_0 r' \mathrm{Tr}(\Sigma_m)} \right]  \lesssim \frac{\tau^2r'\gamma_0}{\sqrt{\mathbb{E}[1- \gamma_0 r' \mathrm{Tr}(\Sigma_m)]^2}} \frac{m}{n^{1-\zeta}}\,,
		\end{split}
	\end{equation*}
	where we use the Cauchy–Schwarz inequality and $\mathrm{Tr}(\Sigma_m)$ as a nonnegative sub-exponential random variable with the sub-exponential norm $\mathcal{O}(1)$ in Lemma~\ref{lemsubexp}.
	
	For $m > n$, take $n-t$ and Eq.~\eqref{stepbound}, we have
	\begin{equation*}
		\begin{split}
			{\tt V1} & \lesssim \tau^2\gamma_0^2r' \mathbb{E}_{\bm W} \left[ \frac{ [\mathrm{Tr}(\Sigma_m)  + \gamma_0 \mathrm{Tr}(\Sigma_m^2)] }{1-\gamma_0 r' \mathrm{Tr}(\Sigma_m)} \right]  \lesssim \frac{\tau^2r'\gamma_0^2}{\sqrt{\mathbb{E}[1- \gamma_0 r' \mathrm{Tr}(\Sigma_m)]^2}} \sim \mathcal{O}(1) \,.
		\end{split}
	\end{equation*}
\end{proof}

\subsection{Proof of Theorem~\ref{promainvar}}
\begin{proof}
	Combining the above results for three terms ${\tt V1}$, ${\tt V2}$, ${\tt V3}$, we can directly obtain the result for ${\tt Variance}$.
	\begin{equation*}
		\begin{split}
			{\tt Variance}	& \leqslant \left( \sqrt{\tt V1} + \sqrt{\tt V2} + \sqrt{\tt V3} \right)^2 \leqslant 3 ({\tt V1} + {\tt V2} + {\tt V3})  \\
			& \lesssim \frac{\gamma_0 r' \tau^2}{\sqrt{\mathbb{E}[1- \gamma_0 r' \mathrm{Tr}(\Sigma_m)]^2}} \left\{ \begin{array}{rcl}
				\begin{split}
					&   mn^{\zeta-1} ,~\mbox{if $m \leqslant n$} \\
					&  1+ n^{\zeta-1} + \frac{n}{m}  ,~\mbox{if $m > n$}  \\
				\end{split}
			\end{array} \right. \\
			& \lesssim \gamma_0 r' \tau^2 \left\{ \begin{array}{rcl}
				\begin{split}
					&   mn^{\zeta-1} ,~\mbox{if $m \leqslant n$} \\
					&  1+ n^{\zeta-1} + \frac{n}{m}  ,~\mbox{if $m > n$}  \\
				\end{split}
			\end{array} \right. \\
			& \sim \left\{ \begin{array}{rcl}
				\begin{split}
					&  \mathcal{O}\left( m n^{\zeta -1} \right) ,~\mbox{if $m \leqslant n$} \\
					&  \mathcal{O}\left(  1+ n^{\zeta-1} + \frac{n}{m} \right) ,~\mbox{if $m > n$}   \\
				\end{split}
			\end{array} \right. 
		\end{split}
	\end{equation*}
	where we use Eq.~\eqref{stepbound} for the third inequality.
\end{proof}

\section{More experiments}
\label{app:experiment}

\if 0
\begin{figure*}[t]
	\centering
	\subfigure[a synthetic regression dataset]{\label{figapp:opt}
		\includegraphics[width=0.41\linewidth]{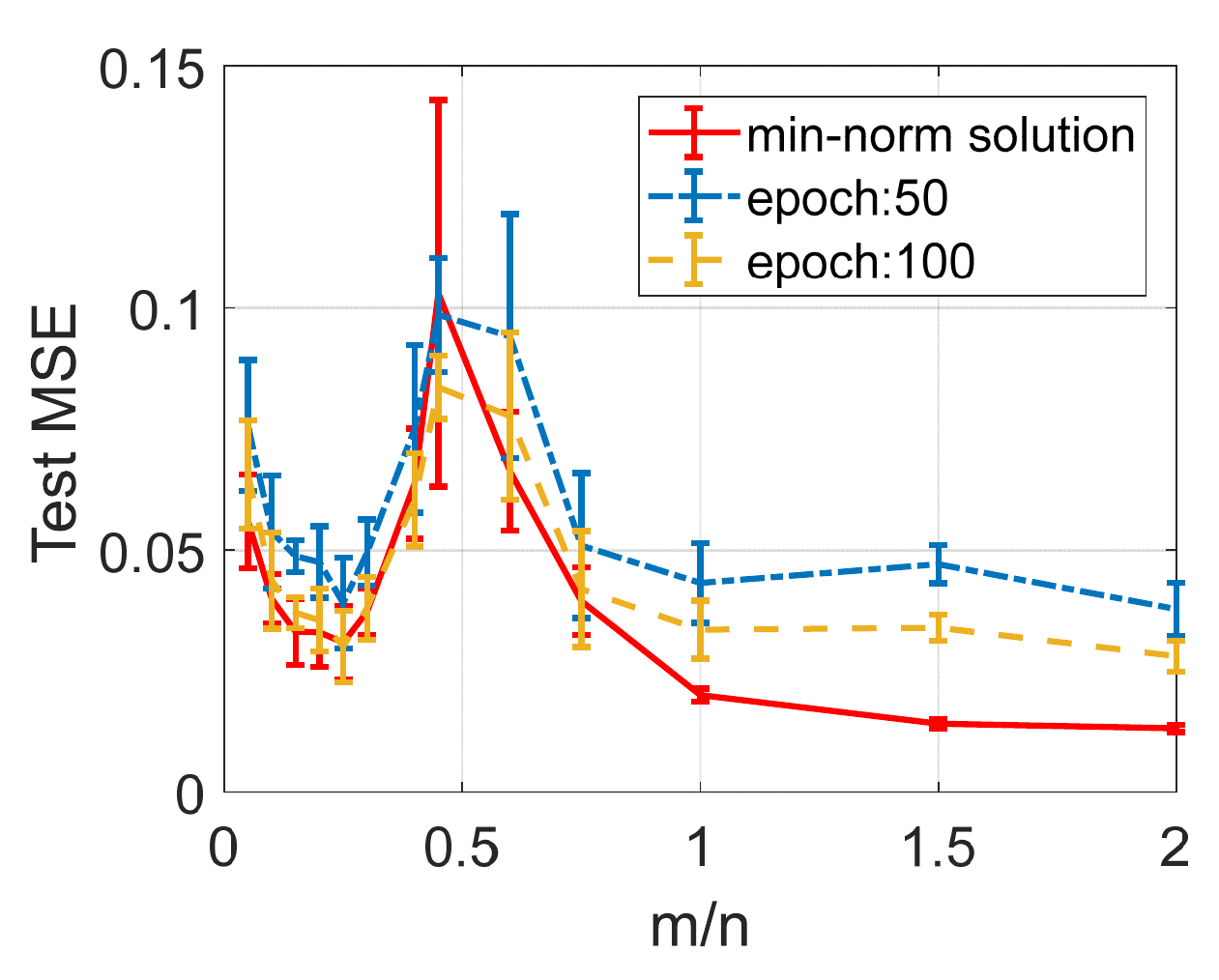}}
		\subfigure[ImageNet16 (class 1 \emph{vs.} 2)]{\label{figapp:imagenet16}
		\includegraphics[width=0.40\linewidth]{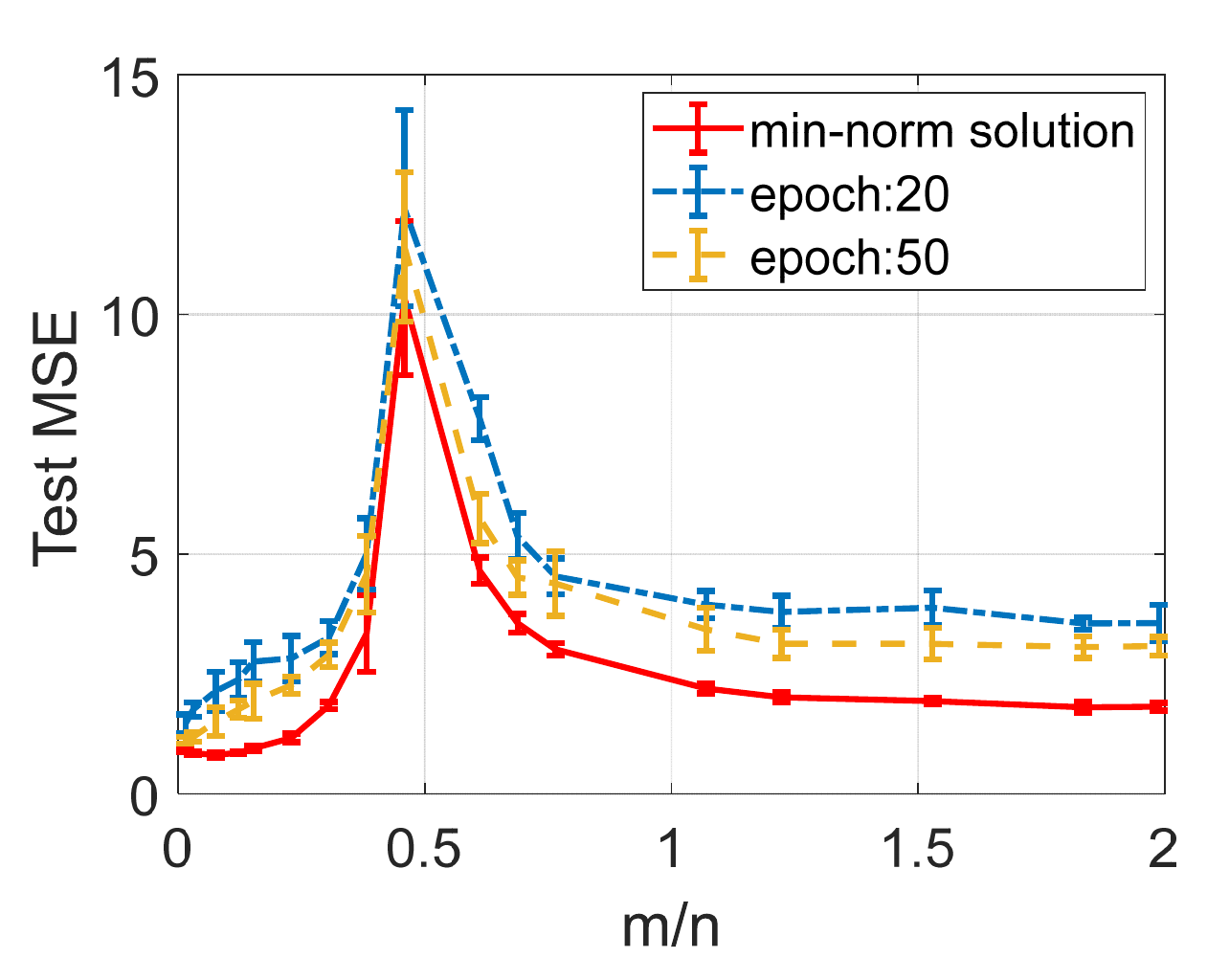}}
	\caption{Normalized MSE (mean$\pm$std.) of RF regression with different epochs on a synthetic regression dataset across the Gaussian kernel in (a) and {\color{red}Test MSE of RF regression on ImageNet16 (class 1 \emph{vs.} 2) under different epochs in (b).} }\label{fig-resepoch}
	\vspace{-0.2cm}
\end{figure*}

\begin{figure*}[t]
	\centering
	\subfigure[${\tt Bias}$]{\label{figapp:bias}
		\includegraphics[width=0.41\textwidth]{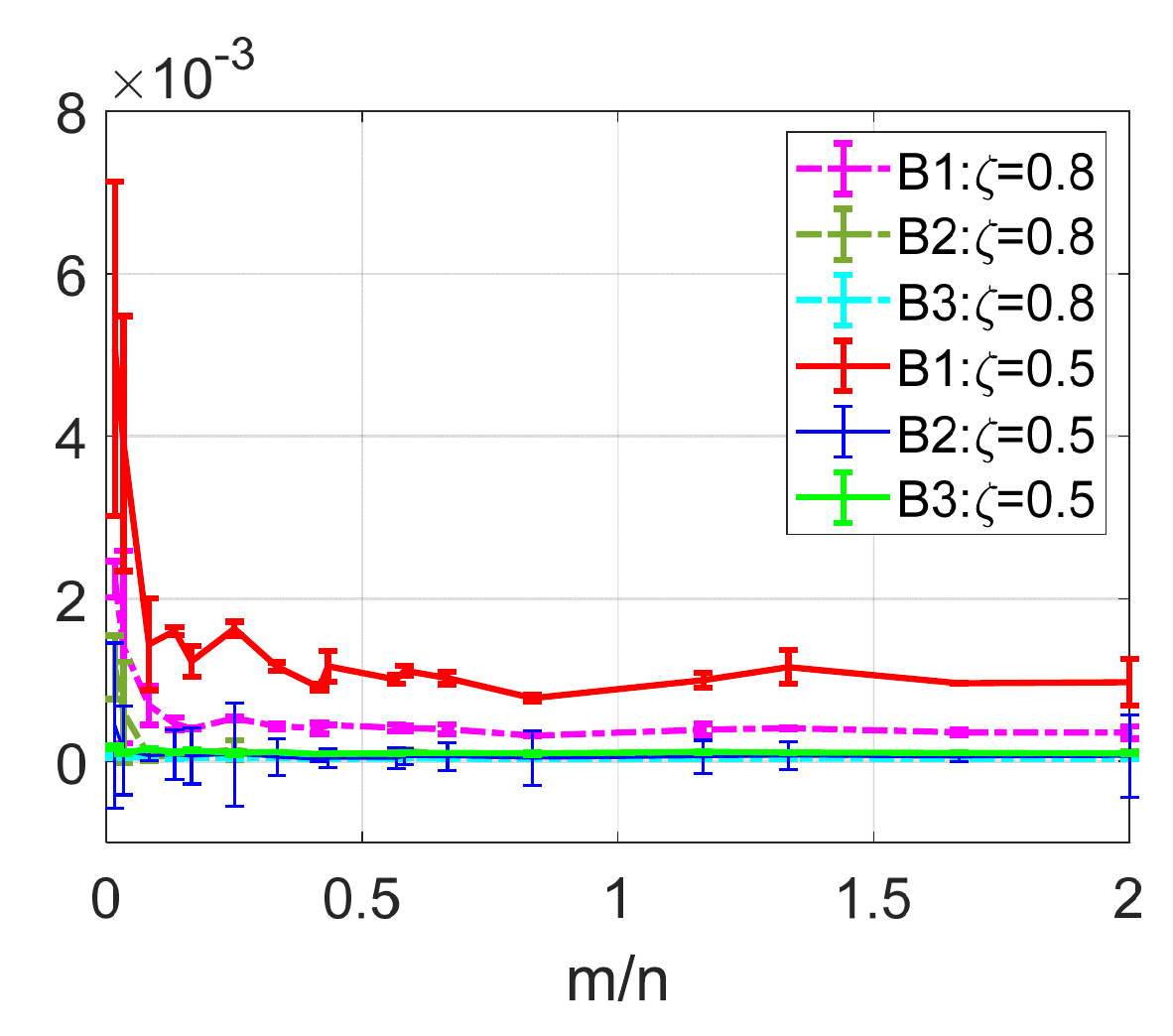}}
	\subfigure[${\tt Variance}$]{\label{figapp:variance}
		\includegraphics[width=0.41\textwidth]{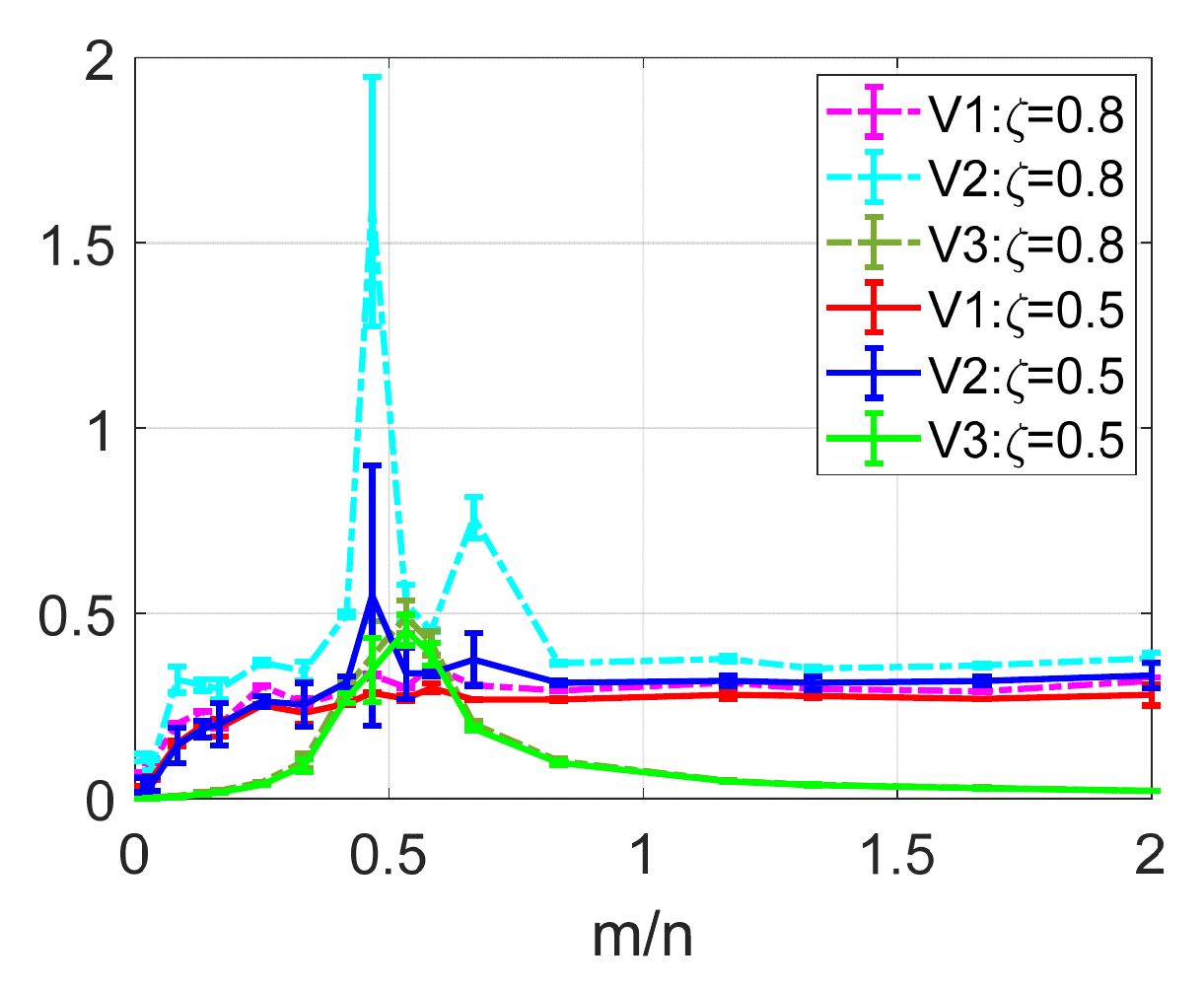}}
	\caption{Trends of ${\tt Bias}$ and ${\tt Variance}$ under different step-size are empirically given in (b) and (c), respectively.  }\label{fig-resapp}
	\vspace{-0.2cm}
\end{figure*}

\fi

\begin{figure*}[t]
	\centering
	\subfigure[a synthetic regression dataset]{\label{figapp:opt}
		\includegraphics[width=0.33\linewidth]{Figures/Visio-epoch.pdf}}
	\subfigure[${\tt Bias}$]{\label{figapp:bias}
		\includegraphics[width=0.31\textwidth]{Figures/Visio-biasvalues.pdf}}
	\subfigure[${\tt Variance}$]{\label{figapp:variance}
		\includegraphics[width=0.313\linewidth]{Figures/Visio-variancevalues.pdf}}
	\caption{Normalized MSE (mean$\pm$std.) of RF regression with different epochs on a synthetic regression dataset across the Gaussian kernel in (a); trends of ${\tt Bias}$ and ${\tt Variance}$ under different step-size are empirically given in (b) and (c), respectively.  }\label{fig-resapp}
	\vspace{-0.2cm}
\end{figure*}

In this section, we provide additional experimental results to support our theory.

\subsection{Results on a regression dataset}
We conduct the RF regression via averaged SGD and minimum solution under different initialization schemes and different epochs on a synthetic regression dataset across the Gaussian kernel.

{\bf data generation:}
Apart from the commonly used MNIST in the double descent topic \cite{liao2020random,derezinski2020exact}, we also add a synthetic regression dataset via normalized MSE in Figure~\ref{figapp:opt} for fully supporting our work.
The data are generated from a normal Gaussian distribution with the training data ranging from $n=10$ to $n=400$, the test data being $200$, and the feature dimension $d=50$.
The label is generated by $y = f_{\rho}(\bm x) + \epsilon$, 
where the $\epsilon$ is a Gaussian noise with the variance $0.01$.
The target function $f^*$ is generated by a Laplace kernel $k(\bm x, \bm x') = \exp\left(-\frac{\| \bm x - \bm x' \|_2}{d}\right)$, to ensure $f^* \in \mathcal{H}$. 
To be specific, for any a data point $\bm x \in \mathbb{R}^d$, its target function is $f^*(\bm x) = [k(\bm x, \bm x_1), k(\bm x, \bm x_2), \cdots, k(\bm x, \bm x_n)] \bm w$, where $\bm w \in \mathbb{R}^n$ is a standard random Gaussian vector as a sign.
We remark that the reason why we do not choose the Gaussian kernel as the target function is to avoid the data and model induced by a same (type) kernel.

{\bf experimental settings:}
We follow Figure~\ref{fig:opt} with the same experiment settings, i.e., conducting RF regression via averaged SGD and minimum-norm solution under the Gaussian kernel.
In our experiment, the initial step-size is set to $\gamma_0=1$ with $\zeta=0.5$.
Nevertheless, we take 
\emph{constant initialization} (i.e., set the initialization point as a constant vector) and \emph{different epochs} (i.e., 50 and 100) for broad comparison.

Fig.~\ref{figapp:opt} shows that, first, under this regression dataset with \emph{constant initialization}, we still observe a phase transition between the two sides of the interpolation threshold at $2m = n$ when min-norm solution and averaged SGD are employed, which leads to the double descent phenomenon.
Second, averaged SGD with more epochs result in a better generalization performance, but is still slightly inferior to that with min-norm solution.
We need remark that, when employing gradient descent, under mild conditions, the solution converges to the minimum norm solution, as suggested by \cite{mei2019generalization}.
Nevertheless, whether this result holds for SGD is unclear, depending on the choice of the ground truth, step-size, etc \cite{smith2021origin,vaswani2019fast}.
Studying the property of converged solution is indeed beyond the scope of this paper.

\if 0
\subsection{Results on a subset of ImageNet16}
{\color{red} Here we conduct our RF regression model on a downsampled version of
ImageNet dataset, i.e., ImageNet16\footnote{This dataset can be downloaded from \url{https://www.image-net.org/}.} \cite{chrabaszcz2017downsampled} with feature dimension $d = 16 \times 16 \times 3$, due to huge computational expense of the original ImageNet benchmark.
We choose class 1 vs. class 2 in our experiments, of which both of them have the respective 1300 data samples.
In our experiments, we randomly choose half of them for training, and the rest for test, evaluated with 10 times.
The experimental setting is the same with Figure~\ref{fig:opt} settings, as well as the above synthetic regression dataset.

Fig.~\ref{figapp:imagenet16} shows the test MSE of the minimum-norm solution, averaged SGD under 20 epochs, and 50 epochs, respectively. 
It can be found that, we still observe a phase transition at $2m=n$ of these methods.
Minimum-norm solution achieves the best performance and more epochs in SGD are able to yield better generalization performance.
We also notice that, when compared to the above synthetic regression dataset, ImageNet16 indeed is much difficult to handle, and it is reasonable that all of these methods achieves larger test MSE.
}

\fi

\subsection{Different step-size on Bias and Variance}
Following Section~\ref{sec:expbiasvar}, we also evaluate our error bounds for ${\tt Bias}$ and ${\tt Variance}$ under different step-sizes on the MNIST dataset.
Figure~\ref{figapp:bias} on bias and \ref{figapp:variance} on variance coincides with the results of Section~\ref{sec:expbiasvar}: monotonically decreasing bias and unimodal variance (phase transition of ${\tt V3}$ and non-decreasing ${\tt V1}$ and ${\tt V2}$) under different step-size. 
We remark that, the estimated error bounds are normalized for better illustration, and accordingly we cannot directly compare the value of these components under different step-size. 

\end{document}